\newcommand{\Rmnum}[1]{\expandafter\@slowromancap\romannumeral #1@}
\renewcommand{\algorithmicrequire}{\textbf{Input:}}
\newtheorem{theorem}{Theorem}
\newtheorem{lemma}{Lemma}
\newtheorem{proposition}{Proposition}
\newtheorem{definition}{Definition}
\newtheorem{assumption}{Assumption}
\pgfplotsset{compat=newest}
\pgfplotsset{plot coordinates/math parser=false,trim axis left}
\newlength\figureheight
\newlength\figurewidth
\newcommand{\eins}{\boldsymbol{1}}
\newcommand{\argmax}{\operatornamewithlimits{arg \, max}}
\newcommand{\argmin}{\operatornamewithlimits{arg \, min}}
\newcommand{\supp}{\mathrm{supp}}
\author[]{Hongwei Wen}
\author[]{Annika Betken}
\author[]{Hanyuan Hang}
\date{\today}
\affil[]{Faculty of Electrical Engineering, Mathematics and Computer Science \\ 
University of Twente, The Netherlands \\
{}
}
\begin{document}

\title{Class Probability Matching Using Kernel Methods for \\ Label Shift Adaptation}

%\author{\name Hanyuan Hang \email h.hang@utwente.nl \\
%\addr Department of Applied Mathematics \\
%University of Twente \\
%7522 NB Enschede, The Netherlands
%}

%\editor{}

\maketitle

%\tableofcontents
%
%\newpage

\allowdisplaybreaks

\begin{abstract}In domain adaptation, covariate shift and label shift problems are two distinct and complementary tasks. In covariate shift adaptation where the differences in data distribution arise from variations in feature probabilities, existing approaches naturally address this problem based on \textit{feature probability matching} (\textit{FPM}). However, for label shift adaptation where the differences in data distribution stem solely from variations in class probability, current methods still use FPM on the $d$-dimensional feature space to estimate the class probability ratio on the one-dimensional label space. 
	To address label shift adaptation more naturally and effectively, inspired by a new representation of the source domain's class probability, we propose a new framework called  \textit{class probability matching} (\textit{CPM}) which matches two class probability functions on the one-dimensional label space to estimate the class probability ratio, fundamentally different from FPM operating on the $d$-dimensional feature space. Furthermore, by
	incorporating the kernel logistic regression into the CPM framework to estimate the conditional probability, we propose an algorithm called \textit{class probability matching using kernel methods} (\textit{CPMKM}) for label shift adaptation. From the theoretical perspective, we establish the optimal convergence rates of CPMKM with respect to the cross-entropy loss for multi-class label shift adaptation. From the experimental perspective, comparisons on real datasets demonstrate that CPMKM outperforms existing FPM-based and maximum-likelihood-based algorithms.
\end{abstract}

\section{Introduction} \label{sec::Introduction}

The current success of machine learning relies on the availability of a large amount of labeled data. However, high-quality labeled data are often in short supply. 
Therefore, we need to borrow labeled data or extract knowledge from some related domains to help a machine learning algorithm achieve better performance in the domain of interest, which is called \textit{domain adaptation} \cite{mansour2009domain,ying2018transfer,zhang2020collaborative}. 
Domain adaptation can be applied to a wide range of areas such as image analysis \cite{long2017deep}, natural language processing \cite{devlin2018bert}, medical diagnosis \cite{xu2011survey} and recommendation systems \cite{pan2010transfer}.  
A typical protocol of domain adaptation involves two domains of data: a large amount of labeled data from a source distribution $P$ and unlabeled data from a target distribution $Q$ on the product space $\mathcal{X} \times \mathcal{Y}$. The task is to conduct classification in the target domain based on both domains of data.

In domain adaptation, \textit{covariate shift} \cite{shimodaira2000improving, sugiyama2007covariate, kanamori2009least, kpotufe2021marginal} and \textit{label shift} \cite{saerens2002adjusting, storkey2009training, tasche2017fisher, maity2022minimax} are common sources of performance degradation when adapting models to new domains. Understanding these shifts helps diagnose why a model might perform poorly in a target domain and provides insights into which adaptation techniques are most appropriate.
As the names suggest, these two shifts are distinct and complementary tasks. 
On the one hand, covariate shift indicates the distribution shift of the covariate $x$, that is, the distribution of the covariate $x$ varies ($q(x) \neq p(x)$) while the conditional probabilities remain the same ($q(y|x) = p(y|x)$). 
On the other hand, label shift indicates 
the distribution shift of the label $y$,
that is, the distribution of the label $y$ changes ($q(y) \neq p(y)$) while the class-conditional probabilities do not change $(q(x|y) = p(x|y))$. 
\cite{scholkopf2012causal} points out that 
covariate shift corresponds to causal learning (predicting effects) whereas label shift corresponds to anticausal learning (predicting causes).

Under these two distribution shift assumptions, the most common starting point for existing methods used in domain adaptation is to estimate the joint probability ratio $q(x, y)/p(x, y)$ between the source and target domains. 
Since the data from the source distribution $p(x,y)$ are observed, if we have knowledge of the probability ratio $q(x,y)/p(x,y)$, then the information from the source domain can be transferred to the target domain to facilitate predictions.
According to the conditional probability formula, one finds that under covariate shift, the joint probability ratio is equal to the feature probability ratio $q(x)/p(x)$, since
\begin{align} \label{eq::featureratio}
	\frac{q(x, y)}{p(x, y)} 
	= \frac{q(y|x) \cdot q(x)}{p(y|x) \cdot p(x)} 
	= \frac{q(x)}{p(x)},
\end{align}
whereas, in contrast, under label shift, the joint probability ratio becomes the class probability ratio $q(y)/p(y)$, since
\begin{align} \label{eq::labelratio}
	\frac{q(x, y)}{p(x, y)} 
	= \frac{q(x|y) \cdot q(y)}{p(x|y) \cdot p(y)} 
	= \frac{q(y)}{p(y)}. 
\end{align}
Equation \eqref{eq::featureratio} reveals that the difference in the source and target domain distributions under covariate shift arises solely from the difference in feature probability, whereas Equation
\eqref{eq::labelratio} indicates that
under label shift it stems from the difference in label probabilities.

For these reasons, when addressing the covariate shift problem, as stated in \eqref{eq::featureratio}, the goal is to estimate the feature probability ratio. Therefore,
existing methods, as described e.g. in  \cite{huang2006correcting}, typically start from matching feature probabilities and thus  this type of methods can be referred to as \textit{feature probability matching} (\textit{FPM}) methods. 
FPM constructs a matching equation between the feature probability $q(x)$ and the weighted feature probability $p(x)$ from the source domain to estimate the feature probability ratio.
In order to implement FPM, \textit{kernel mean matching} (\textit{KMM}) \cite{huang2006correcting} minimizes the distance between the kernel mean of reweighted source data and target data. Moreover, \cite{stojanov2019low} first use a mapping function to reduce the feature to a low-dimensional representation and then use KMM for estimating the ratio of the representation. Furthermore, \cite{martin2023double}  adaptively estimate the feature probability ratio $ p(x)/q(x)$ or $q(x)/p(x)$ according to their values.

On the other hand, for label shift problems, as stated in \eqref{eq::labelratio}, the goal is to estimate the class probability ratio, which is a probability ratio on label $Y$ rather than feature $X$. Despite this, existing matching methods still follow the FPM framework, starting from $q(x)$ and obtaining the label probability ratio through matching the feature probability $q(x)$ and the weighted class-conditional feature probability $p(x|y)$. For example, \cite{zhang2013domain} borrows the kernel mean matching method used in covariate shift problem \cite{huang2006correcting}, while \cite{guo2020ltf} trains generative adversarial networks to implicitly learn the feature distribution. However, for large-scale datasets, the computational cost of these two methods can be extremely high. To reduce the computational complexity, \cite{lipton2018detecting,azizzadenesheli2019regularized,tian23a} introduce a mapping function $h$ to transform the feature variable $X$ into a low-dimensional variable $h(X)$. Then, moment matching is applied to match the probability of the transformed feature $q(h(x))$ with the weighted class-conditional probability $p(h(x)|y)$ to obtain the class probability ratio. However, the optimal choice of the mapping function $h$ remains uncertain.

Under such background, we establish a new representation for the label probability $p(y)$ by utilizing the representation of $q(x)$ in FPM and then introduce a new \textit{class probability matching} (\textit{CPM}) framework to estimate the class probability ratio for label shift adaptation. 
In contrast to FPM, which matches two distributions on a $d$-dimensional feature space $\mathcal{X}$, CPM is a more straightforward and natural idea for label shift adaptation, since it only requires matching two distributions on a one-dimensional label space $\mathcal{Y}$. 
More specifically, CPM only needs to solve an equation system due to the discreteness of the label space, which effectively avoids potential issues associated with FPM in the feature space. Since CPM requires information about the conditional probability $p(y|x)$, we apply truncated \textit{kernel logistic regression} (\textit{KLR}) to estimate it in the source domain, where KLR is truncated downwards to ensure its CE loss bounded. By incorporating CPM with truncated KLR, we propose a new algorithm named \textit{class probability matching using kernel methods} (\textit{CPMKM}) for label shift adaptation. Specifically, the initial step is to estimate the class probability ratio based on the KLR estimator, while the subsequent step is to obtain the corresponding classifier for the target domain.

The contributions of this paper are summarized as follows.

\textit{(i)}
Starting from a representation of the class probability $p(y)$, we construct the new matching framework CPM for estimating the class probability ratio $q(y)/p(y)$, which avoids potential issues associated with FPM methods. More specifically, we first use the law of total probability to establish a representation of $p(y)$. Then by taking full advantage of the representation of $q(x)$ in FPM, the feature probability ratio $p(x)/q(x)$ in the representation of $p(y)$ can be expressed as the reciprocal of a linear combination of the conditional probability function $p(y|x)$, where the coefficient in front of $p(y|x)$ is precisely the class probability ratio $q(y)/p(y)$. Taking a step further, we obtain a new representation for $p(y)$, which is the expectation of a function concerning $p(y|x)$ and $q(y)/p(y)$ with respect to the probability measure $Q_X$. Based on this new representation, we introduce the CPM that aligns two distributions of the one-dimensional label variable $Y$ for label shift adaptation. In this way, our CPM effectively avoids potential issues associated with FPM methods which aim to match two distributions in the $d$-dimensional feature space. Finally, by incorporating KLR into the CPM framework to estimate the conditional probability, we obtain our new algorithm CPMKM for label shift adaptation.

\textit{(ii)} 
From the theoretical perspective, we establish the optimal convergence rates for CPMKM for label shift adaptation by establishing the optimal rates for truncated KLR, which to the best of our knowledge, is the first convergence result of KLR w.r.t.~the unbounded CE loss.
More precisely, we first show that the excess CE risk of CPMKM depends on the excess CE risk of the truncated KLR and the class probability ratio estimation error. 
Under the linear independence assumption,
we show the identifiability of CPM and that the class probability ratio estimation error depends on both the excess CE risk of the truncated KLR and the sample size in the target domain. 
Therefore, to establish the convergence rates of CPMKM, it suffices to derive the convergence rates of the truncated KLR, which can be achieved by establishing a new oracle inequality for the truncated estimator w.r.t.~the CE loss. 
To cope with the unboundedness of the CE loss, we decompose the CE loss into an upper part and a lower part depending on whether the true conditional probability $p(y|x)$ is greater or less than a certain value. 
The upper part of the CE loss is bounded for $p(y|x)$ and thus the concentration inequality can be applied to the loss difference for analyzing the excess risk on this part. 
On the other hand, since the CE loss of $p(y|x)$ in the lower part is unbounded, we apply the concentration inequality to the loss of the truncated estimator rather than the loss difference for analysis on this part.
As a result, we succeed in establishing a new oracle inequality with a finite sample error for the truncated KLR w.r.t.~the CE loss.
By deriving the approximation error of the truncated KLR, we are able to obtain its optimal convergence rates. 
Finally, by utilizing the convergence rates of the truncated KLR, we obtain the optimal convergence rates for CPMKM for label shift adaptation.

\textit{(iii)}
Through numerical experiments under various label shift scenarios, we find that our CPMKM outperforms existing FPM-based methods and maximum-likelihood-based approaches in both the class probability estimation error and the classification accuracy in the target domain, especially for the dataset with a large number of classes. Furthermore, we explore the effect of the sample size on the performance of compared methods. Specifically, with a fixed number of source domain data, we observe an initial performance improvement as the sample size of unlabeled target domain data increases, followed by a stabilization phase. This trend verifies the convergence rates established for label shift adaptation.

The remainder of this paper is organized as follows.
In Section \ref{sec::preliminary}, we formulate the domain adaptation problem, state the label shift assumption, and revisit the FPM framework.
In Section \ref{sec::Methodology}, we develop the new matching framework CPM that directly matches on the the label $Y$ to estimate the class probability ratio $q(y)/p(y)$.
By incorporating KLR with the matching framework CPM, we propose the algorithm CPMKM for label shift adaptation. 
In Section \ref{sec::TheoreticalResults}, we establish the convergence rates of CPMKM and provide some comments and discussions concerning theoretical results.
In Section \ref{sec::ErrorAnalysis}, we present the error analysis for CPMKM.
In Section \ref{sec::Experiments}, we conduct some numerical experiments to illustrate the superiority of our proposed CPMKM over compared methods.
All the proofs of Sections \ref{sec::TheoreticalResults} and \ref{sec::ErrorAnalysis} can be found in Section \ref{sec::Proofs}.
We conclude this paper in Section \ref{sec::Conclusion}.

\section{Preliminaries} \label{sec::preliminary}

\subsection{Notations}

For $1 \leq p < \infty$, the $L_p$-norm of $x = (x_1, \ldots, x_d)$ is defined as $\|x\|_p := (|x_1|^p + \ldots + |x_d|^p)^{1/p}$, and the $L_{\infty}$-norm is defined as $\| x \|_{\infty} := \max_{i=1,\ldots,d} |x_i|$.
For any $x \in \mathbb{R}^d$ and $r > 0$, we explicitly denote $B_r(x) := B(x,r) := \{ x' \in \mathbb{R}^d : \|x' - x\|_2 \leq r \}$ as the closed ball centered at $x$ with radius $r$. In addition, denote $\mu(A)$ as the Lebesgue measure of the set $A \subset \mathbb{R}^d$.
We use the notation $a_n \lesssim b_n$ to denote that there exists a constant $c \in (0, 1]$ such that $a_n \leq c^{-1} b_n$, for all $n \in \mathbb{N}$. 
Similarly, $a_n \gtrsim b_n$ denotes that there exists some constant $c \in (0, 1]$ such that $a_n \geq c b_n$. 
In addition, the notation $a_n \asymp b_n$ means that there exists some positive constant $c \in (0, 1]$, such that $c b_n\leq a_n \leq c^{-1} b_n$, for all $n \in \mathbb{N}$.
In addition, the cardinality of a set $A$ is denoted by $\#(A)$.
For any integer $M \in \mathbb{N}$ denote $[M] := \{1, 2, \ldots, M\}$. For any $a,b \in \mathbb{R}$, we denote $a \wedge b := \min\{a,b\}$ and $a \vee b := \max\{a,b\}$ as the smaller and larger value of $a$ and $b$, respectively. Denote the $(M-1)$-dimensional simplex as $\Delta^{M-1} := \{\theta \in \mathbb{R}^M: \sum_{m=1}^M \theta_m = 1, \theta_m \geq 0, m\in [M]\}$.  

For the domain adaptation problem, let the input space $\mathcal{X} \subset \mathbb{R}^d$ and the output space $\mathcal{Y} = [M]$. Moreover, let $P$ and $Q$ be the source and target distribution defined on $\mathcal{X} \times \mathcal{Y}$, respectively. 
For the target domain, we denote $q(y) := Q_Y(y) := Q(Y = y)$, $y \in [M]$ as the class probability and $q(x) := Q_X(x) := Q(X = x)$ as the marginal probability. 
It is well-known that the conditional probability $q(y|x) := Q(Y = y | X = x)$ is the \textit{optimal predictor} and the corresponding \textit{optimal classifier} on the target domain is 
\begin{align}\label{OptimalClassifier}
	h_q(x) := \argmax_{y \in [M]} q(y|x).
\end{align}
Finally, let $q(x|y) := Q(X = x | Y = y)$ denote the class-conditional probability. 
The notations for the source domain 
such as $p(y)$, $p(x)$, $p(y|x)$, $h_p(x)$, and $p(x|y)$,
can be defined analogously.

\subsection{Label Shift Adaptation}

In this paper, we aim to solve the domain adaptation problem under the label shift setting \cite{saerens2002adjusting,storkey2009training}, where the class-conditional probabilities of $P$ and $Q$ are the same whereas the class probabilities  differ. 
\begin{assumption}[\textbf{Label shift}]\label{ass:labelshift}
	Let $P$ and $Q$ be two probability distributions defined on $\mathcal{X} \times \mathcal{Y}$. 
	We say that $P$ and $Q$ satisfy the label shift assumption if 
	$p(x|y) = q(x|y)$
	and 
	$p(y) \neq q(y)$.
\end{assumption}

The label shift assumption is made from the perspective of the label variable $Y$, the distribution of features remains the same for a fixed class, but there is a shift in the overall distribution of labels across different domains. 
We give an example to illustrate the label shift problems.
For example, imagine that the feature $x$ represents symptoms of a disease and that the label $y$ indicates whether a person has been infected with the disease.
Moreover, assume that the distributions $P$ and $Q$ correspond to the joint distribution of symptoms and infections in different hospitals, e.g. in different locations, that adopt distinct prevention and control measures so that the disease prevalence differs, i.e. $p(y) \neq q(y)$.
At the same time, it is reasonable to assume that the symptoms of the disease and the mechanism that symptoms caused by diseases are the same in both places, i.e. $p(x|y) = q(x|y)$. 
To make a diagnostic model based on data from one of the hospitals working in the other hospital it becomes therefore crucial to study label shift adaptation.

In domain adaptation problems, the labeled data from the target domain is not accessible, i.e., we only observe labeled data points $D_p :=(X_i,Y_i)_{i=1}^{n_p} \in\mathcal{X}\times\mathcal{Y}$ from the source distribution $P$ and unlabeled data points $D_q^u :=(X_i)_{i=n_p+1}^{n_p+n_q}$ from the target distribution $Q$. 
Based on the observations $D := (D_p,  D_q^u)$, our goal is to find a classifier $\widehat{h}_q$ for the target domain. Note that, for this, it suffices to find the estimators $\widehat{q}(y|x)$ such that the induced classifier is given by
\begin{align}\label{eq::fhat}
	\widehat{h}_q(x) 
	:= \argmax_{y\in[M]} \widehat{q}(y|x).
\end{align}

In order to evaluate the quality of the estimated predictor $\widehat{q}(y|x)$, we employ the commonly used cross-entropy loss $L_{\mathrm{CE}}(y, \widehat{q}(\cdot|x)):= -\log \widehat{q}(y|x)$. Then, the corresponding risk is given by $\mathcal{R}_{L_{\mathrm{CE}},Q}(\widehat{q}(y|x)) := \int_{\mathcal{X} \times \mathcal{Y}} L_{\mathrm{CE}}(y, \widehat{q}(\cdot|x)) \, dq(x,y)$ and the minimal risk is defined as $\mathcal{R}_{L_{\mathrm{CE}},Q}^* := \min_{\widehat{q}(y|x)} \mathcal{R}_{L_{\mathrm{CE}},Q}(\widehat{q}(y|x))$. It is well-known that the conditional probability $q(y|x)$ is the optimal predictor which achieves the minimal risk w.r.t.~$L_{\mathrm{CE}}$ and $Q$, i.e., $\mathcal{R}_{L_{\mathrm{CE}},Q}^* = \mathcal{R}_{L_{\mathrm{CE}},Q}(q(y|x)) = - \mathbb{E}_{x \sim q} \sum_{y \in [M]} q(y|x)\log q(y|x)$.

\subsection{Feature Probability Matching} \label{subsec::FPM}

As discussed in the introduction, under the covariate shift assumption, 
as indicated in \eqref{eq::featureratio},
the joint probability ratio $q(x,y)/p(x,y)$ transforms into the feature probability ratio 
\begin{align*}
	w^*(x) := q(x)/p(x).
\end{align*}
In other words, the difference between the distributions of the source and target domain is solely derived from the difference in feature probabilities in this case. Consequently, if we can obtain the feature probability ratio $w^*(x)$, information from the source domain can be transferred for the prediction on the target domain.
A direct method to estimate $w^*(x)$ is separately estimating the feature probabilities $p(x)$ and $q(x)$ in both domains and then calculating their ratio. However, estimating density explicitly in the feature space is difficult for high-dimensional data. To address this challenge, existing methods typically employ matching probability density functions in the feature space, a technique known as \textit{feature probability matching} (\textit{FPM}).
Specifically, they noticed that $q(x)$ can be represented as
\begin{align}\label{eq::qXreprecovariate}
	q(x) 
	= (q(x) / p(x)) \cdot p(x) 
	= w^*(x) p(x).
\end{align}
Therefore, $w^*(x)$ can be obtained by finding a weight function $w : \mathcal{X} \to \mathbb{R}$ satisfying the equation given by
\begin{align}\label{eq::matchXcovariate}
	q(x) 
	= w(x) p(x) 
	=: p^w(x).
\end{align}
By constructing the matching equation \eqref{eq::matchXcovariate}, FPM relates the feature probability $q(x)$ on the target domain to the weighted form of feature probability $p(x)$ on the source domain. Consequently, the estimation problem of $w^*(x)$ is transformed into solving the matching equation \eqref{eq::matchXcovariate} concerning feature probability.
Since the difference between the source and target domains in covariate shift only arises from the difference between feature probability density functions $p(x)$ and $q(x)$ as shown in \eqref{eq::featureratio}. 
Therefore, FPM as in \eqref{eq::matchXcovariate} ingeniously addresses the issue of covariate shift.

On the other hand, under the label shift assumption, as indicated in \eqref{eq::labelratio}, the joint probability ratio $q(x,y)/p(x,y)$ transforms into the class probability ratio 
\begin{align}\label{eq::wstar}
	w^*(y) 
	:= q(y)/p(y), 
\end{align}
which is solely related to the one-dimensional label distributions of the source and target domains.
However, for estimating the class probability ratio $w^*(y)$, many existing works still start from the representation of $q(x)$ to construct the feature matching equation.
Specifically, they noted that, 
different from \eqref{eq::qXreprecovariate},
$q(x)$ can be alternatively represented as 
\begin{align}\label{eq::qXrepre}
	q(x) 
	& = \sum_{y=1}^M q(y) q(x|y) 
	\qquad \qquad \qquad \, \, \, 
	(\text{law of total probability})
	\nonumber\\
	& = \sum_{y=1}^M w^*(y) p(y) q(x|y) 
	\qquad \qquad
	(\text{definition of $w^*(y)$})
	\nonumber\\
	& = \sum_{y=1}^M w^*(y) p(y) p(x|y) 
	\qquad \qquad
	(\text{label shift assumption}),
\end{align}
i.e., $q(x)$ is a linear combination of $\{ p(y) p(x|y) \}_{y\in [M]}$. 
If $\{ p(y) p(x|y) \}_{y\in[M]}$ are linearly independent, then $\{ w^*(y) \}_{y\in [M]}$ are the unique coefficients in the linear combination. 
In this case, motivated by the representation \eqref{eq::qXrepre}, FPM aims to determine $w^*(y)$ by finding the weight function $w := (w(y))_{y\in [M]}$ satisfying 
\begin{align}\label{eq::matchX}
	q(x) 
	= \sum_{y=1}^M w(y) p(y) p(x|y) 
	=: p^w(x).
\end{align}
Equation \eqref{eq::matchX} determines the class probability ratio $w^*(y)$ by matching $q(x)$ with the weighted combination of $p(x|y)$. Combining \eqref{eq::matchX} and \eqref{eq::matchXcovariate}, we observe that the form of FPM under label shift is similar to its form under covariate shift, both starting from the representation of the feature probability $q(x)$.

However, the label shift problem, as a complementary issue to the covariate shift problem, has a fundamentally different learning goal. Specifically, in the label shift problem, as illustrated in \eqref{eq::labelratio}, the differences between the source and target domains arise solely from the distinct class probabilities $p(y)$ and $q(y)$ of the one-dimensional labels. In contrast, in the covariate shift problem, as depicted in \eqref{eq::featureratio}, the discrepancy between the source and target domains stems exclusively from the probability functions $p(x)$ and $q(x)$ on the $d$-dimensional feature space. Consequently, FPM is not an ingenious approach under the label shift assumption.
Furthermore, implementation challenges arise for FPM since \eqref{eq::matchX} operates on the $d$-dimensional feature space. Kernel methods \cite{zhang2013domain}, which are commonly used in FPM for high-dimensional datasets, result in high computational complexity when dealing with large-scale datasets. To address this, moment matching is applied to a transformed feature space with lower dimensionality. However, the optimal choice of the transformation map $h$ remains uncertain across different datasets \cite{lipton2018detecting,azizzadenesheli2019regularized,tian23a}.
Therefore, to tackle these issues, we propose a new matching framework for the label shift problem that operates from the representation of class probability $p(y)$ to estimate the class probability ratio $w^*(y)$ in Section \ref{subsec::CPM}, which enables us to estimate class probability ratio in a more natural and effective way.

\section{Methodology} \label{sec::Methodology}

In this section, we present our CPMKM algorithm for label shift adaptation. More precisely, in Section \ref{subsec::CPM}, we introduce the new framework CPM, which starts from the class probability function $p(y)$ on the source domain for label shift adaptation.
Then, in Section \ref{sec::MultiLogReg}, we formulate the truncated kernel logistic regression for the estimation of the conditional probability $p(y|x)$.
Finally, in Section \ref{subsec::CPMKM}, we incorporate the estimates from Section \ref{sec::MultiLogReg} into the CPM framework in Section \ref{subsec::CPM}, resulting in our main algorithm CPMKM for label shift adaptation.

\subsection{Class Probability Matching for Label Shift Adaptation}
\label{subsec::CPM}

As discussed in Section \ref{subsec::FPM}, constructing a matching equation regarding class probability is a natural and effective method in addressing the label shift problem.
Therefore, we start with the representation of the class probability $p(y)$ since 
$p(y)$ can be easily and directly estimated from the data. 
To be specific, by the law of total probability, we have
\begin{align}\label{eq::pyrepre}
	p(y) 
	= \int_{\mathcal{X}} p(x) p(y|x)\, dx 
	= \int_{\mathcal{X}} \frac{p(x)}{q(x)} q(x) p(y|x)\, dx, 
	\qquad 
	y \in [M].
\end{align}
At first glance, the term $p(x)/q(x)$ on the right-hand side of equation \eqref{eq::pyrepre} appears to be defined in the feature space of dimension $d$. However, under the label shift assumption, this term $p(x)/q(x)$ can actually be expressed in terms of the class probability ratio $w^*(y)$ and conditional probability $p(y|x)$, i.e.~
\begin{align}\label{eq::repre}
	\frac{p(x)}{q(x)} 
	& = \frac{p(x)}{\sum_{y=1}^M w^*(y) p(y) p(x|y)}  
	\qquad \qquad \,
	(\text{representation of $q(x)$ in \eqref{eq::qXrepre}}) 
	\nonumber\\
	& = \frac{1}{\sum_{y=1}^M w^*(y) p(y|x)} 
	\qquad \qquad \qquad 
	(\text{Bayes formula}).
\end{align}
The above expression indicates that, for any $x\in \mathcal{X}$, the feature probability ratio $p(x)/q(x)$ is expressed as the reciprocal of a linear combination of $p(y|x)$, and the coefficient in front of $p(y|x)$ is precisely the class probability ratio $w^*(y)$. It is worth pointing out that $p(x)$ and $q(x)$ are two density functions defined on $d$-dimensional continuous feature spaces, while $w^*(y)$ and $p(y|x)$ are defined on a one-dimensional discrete label space. Therefore, \eqref{eq::repre} ingeniously represents the density ratio on $d$-dimensional features using probabilities on a one-dimensional label.
By substituting the expression in \eqref{eq::repre} into \eqref{eq::pyrepre}, we get
\begin{align*}%\label{equ::cpm}
	p(y) 
	= \int_{\mathcal{X}} \frac{p(y|x)}{\sum_{m=1}^M w^*(m) p(m|x)}q(x) \, dx 
	= \mathbb{E}_{X\sim q} \frac{p(y|X)}{\sum_{m\in [M]} w^*(m) p(m|X)}, 
	\qquad 
	y \in [M].
\end{align*}
The above expression provides a new representation for the class probability $p(y)$. Building on this new representation, we introduce \textit{class probability matching} (CPM) for label shift adaptation. Specifically, we aim to find a weight vector $w = (w(m))_{m\in [M]}$ satisfying
\begin{align}\label{eq::equationsystem}
	p(y) 
	= \mathbb{E}_{X \sim q} \frac{p(y|X)}{\sum_{m=1}^M w(m) p(m|X)} 
	=: p_q^w(y),
	\qquad 
	y \in [M].
\end{align}
In contrast with FPM that matches two distributions on the $d$-dimensional feature $X$ in \eqref{eq::matchX}, CPM \eqref{eq::equationsystem} matches two distributions of the one-dimensional label variable $Y$. Since the label variable $Y$ only takes $M$ discrete values, CPM in \eqref{eq::equationsystem} only needs to match $M$ equations in the label space.  
This effectively avoids the potential issues associated with matching in the feature space mentioned in Section \ref{subsec::FPM}. 
Moreover, it is worth pointing out that CPM in \eqref{eq::equationsystem} is a natural way to deal with label shift problems. Specifically, from the formula of the joint probability ratio \eqref{eq::labelratio}, i.e., $q(x,y)/p(x,y)=w^*(y)$, the difference between the source and target distributions only stems from the difference in class probabilities. 
Therefore, directly constructing the matching equation on the representation of class probability $p(y)$ as in \eqref{eq::equationsystem} is a straightforward idea to obtain $w^*(y)$ for label shift adaptation.

\begin{figure*}[!h]
	\centering
	\captionsetup[subfigure]{justification=centering, captionskip=2pt} % , labelformat=empty
	\subfloat[][Feature Probability Matching] { 
		\includegraphics[width=0.45\textwidth, trim=0 80 0 100,clip]{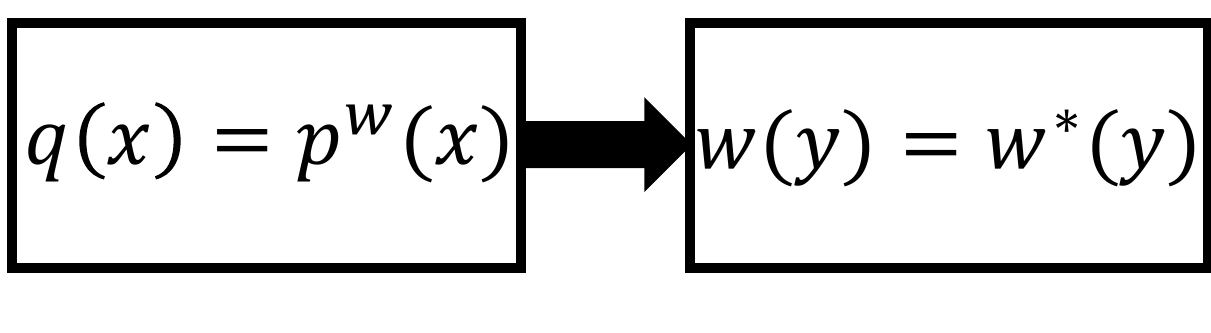} 
		\label{fig::FPM}
	} 
	\subfloat[][Class Probability Matching] { 
		\includegraphics[width=0.45\textwidth, trim=0 80 0 100,clip]{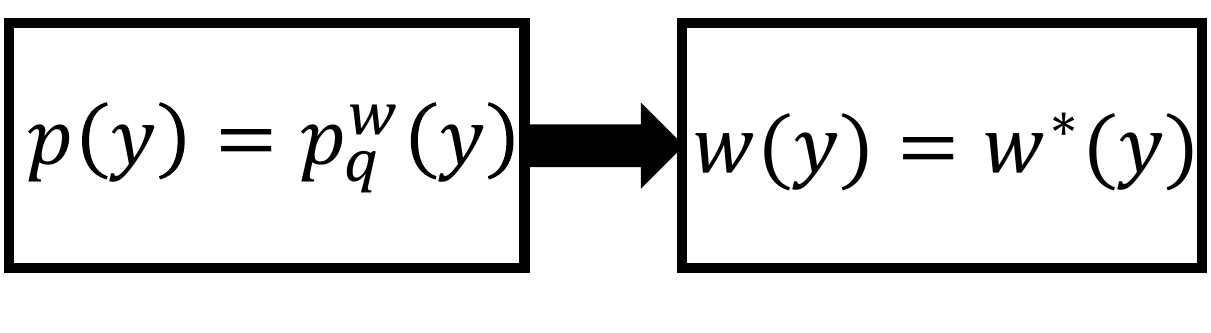} 
		\label{fig::CPM}
	} 
	\caption{Two distinct matching frameworks for label shift adaptation.}
\label{fig:FPMCPM}
\end{figure*}

%\begin{figure}[htbp]
%	\centering
%	\captionsetup{justification=centering}{
%		\centering
%		\begin{minipage}[b]{0.45\textwidth}
%			\centering
%			\includegraphics[width=\textwidth, trim=0 100 0 100,clip]{figures/plotFPM.pdf}
%			\label{fig::FPM}
%			\subcaption{Feature Probability Matching}
%		\end{minipage}
%	}
%	\hfill
%	\centering
%	\captionsetup{justification=centering}{
%		\centering
%		\begin{minipage}[b]{0.45\textwidth}
%			\centering
%			\includegraphics[width=\textwidth, trim=0 100 0 100,clip]{figures/plotCPM.pdf}
%			\label{fig::CPM}
%			\subcaption{Class Probability Matching}
%		\end{minipage}
%	}
%	\captionsetup{justification=centering}
%	\caption{Two distinct matching frameworks for label shift adaptation.}
%	\label{fig:FPMCPM}
%\end{figure}

Next, we discuss how to obtain a plug-in classifier for predicting the target domain using the class probability ratio $w^*(y)$.
Clearly, in order to obtain the plug-in classifier $h_q(x)$ in \eqref{OptimalClassifier} on the target domain, it suffices to estimate the conditional probability function $q(y|x)$. 
Notice that $q(y|x)$ can be represented as 
\begin{align}\label{eq::etaPshifttoQ}
	q(y|x) 
	& = \frac{q(y)q(x|y)}{\sum_{m=1}^M  q(m) q(x|m)} 
	\qquad \qquad \qquad 
	(\text{Bayes formula}) 
	\nonumber\\
	& = \frac{q(y) p(x|y)}{\sum_{m=1}^M q(m) p(x|m)}  
	\qquad \qquad \qquad 
	(\text{label shift assumption})
	\nonumber\\
	& = \frac{q(y)p(y|x)p(x)/p(y)}{\sum_{m=1}^M q(m) p(m|x)p(x)/p(m)}
	\quad \, \,  \,
	(\text{conditional probability formula}) 
	\nonumber\\
	& = \frac{(q(y)/p(y))p(y|x)}{\sum_{m=1}^M (q(m)/p(m)) p(m|x)}
	\qquad \, \, \, \
	(\text{simplifying the fraction})
	\nonumber\\ 
	& = \frac{w^*(y) p(y|x)}{\sum_{m=1}^M  w^*(m) p(m|x)} 
	\qquad \qquad \quad \, \,
	(\text{definition of $w^*(y)$}).
\end{align}
The above equation indicates that the conditional probability function $q(y|x)$ on distribution Q can be expressed in terms of the conditional probability function $p(y|x)$ on distribution $P$ and the class probability ratio $w^*(y)$. Therefore, to estimate $q(y|x)$, we only need to estimate $w^*(y)$ and $p(y|x)$, respectively.
In Section \ref{sec::MultiLogReg}, we employ the kernel logistic regression (KLR) to estimate $p(y|x)$, and subsequently, in Section \ref{subsec::CPMKM}, we provide estimates for $w^*(y)$ and $q(y|x)$ to obtain the final plug-in classifier.

\subsection{Kernel Logistic Regression for Conditional Probability Estimation}
\label{sec::MultiLogReg}

The logistic regression model arises from the desire to model the conditional probabilities of the $M$ classes via linear functions in $x$, while at the same time ensuring that they sum to one and remain in $[0,1]$ \cite{hastie2009elements}.
More precisely, logistic regression assumes a linear relationship between the input features and the log-odds of the target variable, which implies that the decision boundaries consist of parts of several linear hyperplanes.
However, this assumption can be restrictive when the true decision boundary is non-linear.

To deal with this issue, in this subsection, we investigate the kernel logistic regression (KLR) \cite{zhu2005kernel} for conditional probability estimation, which implicitly maps the input features into a higher-dimensional space, allowing it to model non-linear decision boundaries. 
To be specific, let $f:=(f_m)_{m\in [M]}$ be the collection of score functions from reproducing kernel Hilbert space (RKHS) $ H$ induced by Gaussian kernel function $k(x,x'):=\exp(-\|x-x'\|_2/\gamma^2)$ for $x, x' \in \mathbb{R}^d$, and some bandwidth parameter $\gamma$, denoted as 
\begin{align} \label{def::SpaceF}
	\mathcal{F}
	:= \{ f := (f_m)_{m=1}^M: f_m \in  H, m \in [M-1], f_M = 0 \}. 
\end{align}
Then KLR uses
\begin{align}\label{eq::pfmx}
	p_f(m|x) 
	:= \frac{\exp(f_m(x))}{\sum_{j=1}^M \exp(f_j(x))}, 
	\qquad 
	m \in [M],
\end{align}
to model the conditional probability function $p(m|x)$.
To prevent the cross-entropy loss of $p_f(\cdot |x)$, i.e.~$L_{\mathrm{CE}}(y, p_f(\cdot|x))=-\log p_f(y|x)$ from exploding, we need to truncate $p_f(\cdot|x))$ downwards. 
To be specific, given $t \in (0,1/(2M))$, we define $p_f^t(\cdot|x)$ as \
\begin{align}\label{eq::etakdelta}
	p_f^t(m|x)
	:= 
	\begin{cases}
		t, & \text{ if } p_f(m|x)<t,
		\\
		\displaystyle p_f(m|x) - (p_f(m|x) - t)\frac{\sum_{j: p_f(j|x)<t }(t-p_f(j|x))}{\sum_{\ell:p_f(\ell|x)\geq t }(p_f(\ell|x)-t)}, & \text{ if } p_f(m|x)\geq t.
	\end{cases}
\end{align}
The conditional probability function $p_f(m|x)$ for $m$ less than $t$ is truncated at $t$, while those $p_f(m|x)$ greater than $t$ are proportionally adjusted to ensure that $\sum_{m\in [M]} p_f^t(m|x)=1$.
It is easily shown that for all $m\in [M]$ and $x\in \mathcal{X}$, there holds $p_f^t(m|x) \geq t$ and therefore the value of the CE loss is bounded, i.e., $L_{\mathrm{CE}}(y, p_f^t(\cdot|x)) = -\log p_f^t(y|x) \leq -\log t$.

Now, given a regularization parameter $\lambda>0$, the kernel logistic regression estimator $f_{D_p}$ and the optimal bandwidth parameter $\gamma^*$ are obtained through
\begin{align}\label{eq::KLR}
	(f_{D_p}, \gamma^*) 
	:= \argmin_{f\in \mathcal{F}, \gamma>0} 
	\lambda \sum_{j=1}^{M-1} \|f_j\|_H^2 + \frac{1}{n_p} \sum_{i=1}^{n_p} L_{\mathrm{CE}}(y_i, p_f^t(\cdot|x_i)),
\end{align}
where $\|f_j\|_{H}$ denotes the RKHS norm of $f_j$. 
Then the truncated KLR is given by 
\begin{align}\label{eq::hatetaP}
	\widehat{p}(m|x) := p_{f_{D_p}}^t(m|x), \quad m\in [M].
\end{align}

It is worth pointing out that \eqref{eq::etakdelta} provides a new approach to deal with the issues of unboundedness of the CE loss. In the existing literature, \cite{bos2022convergence} employs the truncated CE loss, which directly truncates the CE loss to a specific threshold value if the CE loss exceeds this threshold. However, when the truncated CE loss is small, its true CE loss may be extremely large. In contrast, by operating the truncation on $p_f(m|x)$, the CE loss of the truncated conditional probability $p_f^t(\cdot|x)$ is always upper bounded, which leads to a good and stable estimation of $p(m|x)$.

\subsection{Class Probability Matching Using Kernel Methods for Label Shift Adaptation} \label{subsec::CPMKM}

In this section, based on our proposed framework named \textit{class probability matching} in \eqref{eq::equationsystem} from Section \ref{subsec::CPM}, we introduce an algorithm named \textit{class probability matching using kernel methods} (\textit{CPMKM}) for label shift adaptation. 
The proposed algorithm consists of two parts: the initial phase is to estimate the class probability ratio $w^*$, while the subsequent step is to obtain the corresponding classifier for the target domain.

\paragraph{Estimating the Class Probability Ratio $\boldsymbol{w}^*$.}
On the one hand, based on the source domain data $D_p$, the left-hand side of \eqref{eq::equationsystem}
can be estimated by 
\begin{align}\label{eq::directCounting}
	\widehat{p}(y)
	:= \frac{1}{n_p} \sum_{i=1}^{n_p} \eins \{ Y_i = y \},
	\qquad 
	y \in [M],
\end{align}
where $\eins\{ Y_i = y \}$ denotes the indicator function which takes $1$ if $Y_i = y$ and otherwise is $0$.

On the other hand, with the aid of target domain samples $D_q^u$, the right-hand side of \eqref{eq::equationsystem} can be approximated by 
\begin{align*}
	\frac{1}{n_q} \sum_{X_i \in D_q^u} 
	\frac{p(y|X_i)}{\sum_{m=1}^M w(m) p(m|X_i)}. 
\end{align*}
In order to estimate the class membership probabilities $p(k|X_i)$ of target domain data $X_i \in D_q^u$ in the source domain, we first fit a truncated KLR with source data $D_p$ to get $\widehat{p}(y|x)$ in \eqref{eq::hatetaP} as an estimator of the predictor $p(y|x)$.
Then by using the estimator $\widehat{p}(y|x)$ to predict the class membership probabilities $\widehat{p}(y|X_i)$ for all target domain samples $X_i \in D_q^u$, $p_q^w(y)$ in \eqref{eq::equationsystem} can be estimated by 
\begin{align}\label{eq::hatpihatQw}
	\widehat{p}_q^w(y)
	:= \frac{1}{n_q} \sum_{X_i \in  D_q^u} \frac{\widehat{p}(y|X_i)}{\sum_{m=1}^M w(m) \widehat{p}(m|X_i)},
	\qquad y \in [M].
\end{align} 
Since the probability ratio $w^* = (w^*(y))_{y\in [M]}$ is a solution of matching on $Y$ in \eqref{eq::equationsystem}, $w^*$ can be estimated by matching 
the estimates $\widehat{p}(y)$ in \eqref{eq::directCounting} and 
$\widehat{p}_q^w(y)$ in \eqref{eq::hatpihatQw}.
In order to match $\widehat{p}(y)$ and $\widehat{p}_q^w(y)$, we have to 
find the solution $\widehat{w}:=(\widehat{w}(y))_{y\in[M]}$ to the following minimization problem
\begin{align}\label{eq::whatminimizer}
	\widehat{w} := \argmin_{w\in \mathbb{R}^M, w\geq 0} \sum_{y=1}^M \bigl| \widehat{p}(y) -  \widehat{p}_q^w(y) \bigr|^2,
\end{align}  
which can be solved by the Limited-memory Broyden-Fletcher-Goldfarb-Shanno with Box constraints (L-BFGS-B) algorithm \cite{liu1989limited}.

\paragraph{The Classifier for the Target Domain.}
By using the representation of $q(y|x)$ in \eqref{eq::etaPshifttoQ},
the KLR estimator $\widehat{p}(y|x)$ in \eqref{eq::hatetaP},
and the estimation of the class probability ratio $\widehat{w}$ in \eqref{eq::whatminimizer}, 
the conditional probability in the target domain can be estimated by 
\begin{align}\label{eq::hatetaQ}
	\widehat{q}(y|x) 
	= \frac{\widehat{w}_y \widehat{p}(y|x)}{\sum_{m=1}^M \widehat{w}(m) \widehat{p}(m|x)},
\end{align}
which leads to the plug-in classifier
$\widehat{h}_q(x)$ as in \eqref{eq::fhat}.

The above procedures for label shift adaptation can be summarized in Algorithm \ref{alg::estimatebeta}.

\begin{algorithm}[h]
	\caption{Class Probability Matching Using Kernel Methods (CPMKM)}
	\label{alg::estimatebeta}
	\begin{algorithmic}
		% \STATE{\bfseries Input:}
		\renewcommand{\algorithmicrequire}{\textbf{Input:}}
		\renewcommand{\algorithmicensure}{\textbf{Output:}}
		\REQUIRE Source domain samples $D_p := (X_i, Y_i)_{i=1}^{n_p}$;
		\\
		\quad \,\,\,\,\,\,\,\,
		Target domain samples $D_q^u :=(X_i)_{i=n_p+1}^{n_p+n_q}$;\\
		Compute the class probability estimation $\widehat{p}(y)$ of the source domain in \eqref{eq::directCounting};\\
		Fit the KLR estimator $\widehat{p}(y|x)$ with the source domain samples $D_p$;\\
		Compute $\widehat{p}(y|X_i)$ on target domain samples $X_i \in D_q^u$;\\ 
		Compute $\widehat{p}_q^w(y)$ in \eqref{eq::hatpihatQw} for $y\in [M]$;
		\\
		Obtain $\widehat{w}$ by solving the minimization problem in \eqref{eq::whatminimizer};\\
		Compute the conditional probability in the target domain $\widehat{q}(y|x)$ in \eqref{eq::hatetaQ};\\
		Obtain the plug-in classifier $\widehat{h}_{q}$ in \eqref{eq::fhat}.
		% \STATE {\bfseries Output:}
		\ENSURE Predicted labels $\{ \widehat{h}_{q}(X_i) \}_{i=n_p+1}^{n_p+n_q}$.
	\end{algorithmic}
\end{algorithm}

\section{Theoretical Results} \label{sec::TheoreticalResults}

In this section, we establish the convergence rates of the CPMKM and comparing methods for discussion.
In Section \ref{subsec::RatesSource}, in order to obtain the convergence rates for the CPMKM predictor $\widehat{q}(y|x)$ in the target domain, we begin by establishing the convergence rate of the truncated KLR predictor $\widehat{p}(y|x)$ in the source domain.
Building upon the above results, we establish the convergence rate of the CPMKM predictor under mild assumptions in Section \ref{subsec::RatesTarget}. In addition, we derive the lower bound of the label shift problem, which matches the rates achieved by CPMKM, thereby demonstrating its minimax optimality. In Section \ref{subsec::Comparison}, we make some comments and discussions to show our distinction from the existing work.

\subsection{Convergence Rates of KLR in the Source Domain}\label{subsec::RatesSource}

Before we proceed, we need to introduce the following restrictions on the distribution $P$ to characterize which properties of a distribution most influence the performance of KLR.

\begin{assumption}\label{ass::predictor}
	We make the following assumptions on probability distributions $P$. 
	\begin{enumerate}
		\item[\textbf{(i)}] \textbf{[H\"{o}lder Smoothness]} 
		Assume that for any $x, x' \in \mathcal{X}$, there exists a H\"older constant $c_{\alpha} \in (0,\infty)$ and $\alpha\in [0,1]$ such that $|p(m|x') - p(m|x)| \leq c_{\alpha} \|x' - x\|_2^{\alpha}$ for all $m \in [M]$.
		\item[\textbf{(ii)}] \textbf{[Small Value Bound]}
		Assume that for all $t \in (0, 1]$, there exists a constant $c_{\beta} > 0$ such that $P_X(p(m|X) \leq t) \leq c_{\beta} t^{\beta}$ for all $m \in [M]$.
	\end{enumerate}
\end{assumption}

The smoothness assumption \textit{(i)} on the conditional probability function $p(m|x)$ is a common assumption adopted for classification \cite{chaudhuri2014rates, doring2017rate, xue2018achieving, khim2020multiclass}. 
In fact, previous work \cite{kpotufe2021marginal, maity2022minimax, cai2021transfer} adopted it to study the transfer learning or domain adaptation for nonparametric classification under covariate shift, label shift, and posterior shift respectively.
By \textit{(i)} we see that when $\alpha$ is small, the conditional probability function $p(m|x)$ fluctuates more sharply, which results in the difficulty of estimating $p(m|x)$ accurately and thus leads to a slower convergence rates.
% It is worth pointing out that the smoothness assumption endows our model with a global constraints, whereas the margin assumption only reflects the behavior of $p(y|x)$ near the decision boundary. 
The small value bound assumption \ref{ass::predictor} \textit{(ii)}, taken from \cite{bos2022convergence}, quantifies the size of the set in which the conditional probabilities $p(m|x)$ are small. As the conditional probability $p(m|x)$ approaches zero, the value of $-\log p(m|x)$ grows towards infinity at an increasing rate. Therefore, the accuracy of estimating small conditional probabilities has a crucial impact on the value of the CE loss. Hence, the multi-class classification problem w.r.t.~the CE loss exhibits faster convergence rates when the probability of the region with small conditional probabilities is low (i.e. when $\beta$ is large).

\begin{theorem}\label{thm::convergencerate}
	Let Assumption \ref{ass::predictor} hold and KLR for conditional density estimation $\widehat{p}(y|x)$ as in \eqref{eq::hatetaP}. If we choose
	$\lambda \asymp n_p^{-1}$,
	$\gamma \asymp n_p^{-1/((1+\beta\wedge 1)\alpha+d)}$, and
	$t \asymp n_p^{-\alpha/((1+\beta\wedge 1)\alpha+d)}$,
	then there exists some $N \in \mathbb{N}$ such that for any $n_p \geq N$ and for any $\xi \in (0,1/2)$, there holds
	\begin{align*}
		\mathcal{R}_{L_{\mathrm{CE}},P}(\widehat{p}(y|x))-\mathcal{R}_{L_{\mathrm{CE}},P}^*
		\lesssim n_p^{-\frac{(1+\beta\wedge 1)\alpha}{(1+\beta\wedge 1) \alpha + d}+\xi}
	\end{align*}
	with probability $P^{n_p}$ at least $1-1/n_p$.
\end{theorem}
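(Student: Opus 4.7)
The plan is to prove Theorem 1 via a new oracle inequality tailored to the downward-truncated conditional probability in \eqref{eq::etakdelta}, then balance the resulting sample and approximation errors through the stated choices of $\lambda$, $\gamma$, and $t$. I would begin with the usual oracle decomposition for the regularized ERM in \eqref{eq::KLR}. Picking a near-minimizer $f^{\ast}\in\mathcal{F}$ of the regularized population CE risk, one writes
\begin{align*}
\mathcal{R}_{L_{\mathrm{CE}},P}(\widehat{p})-\mathcal{R}_{L_{\mathrm{CE}},P}^{\ast}
\;\leq\; \bigl(\mathcal{R}_{L_{\mathrm{CE}},P}(\widehat{p})-\mathcal{R}_{L_{\mathrm{CE}},P}(p_{f^{\ast}}^{t})\bigr) + A(\lambda,\gamma,t),
\end{align*}
where $A(\lambda,\gamma,t):=\lambda\sum_{j<M}\|f^{\ast}_j\|_H^2 + \mathcal{R}_{L_{\mathrm{CE}},P}(p_{f^{\ast}}^{t}) - \mathcal{R}_{L_{\mathrm{CE}},P}^{\ast}$ is the regularized approximation error and the first term is the stochastic error, to be controlled by an empirical-process argument on the loss class $\{(x,y)\mapsto L_{\mathrm{CE}}(y,p_{f}^{t}(\cdot|x))\colon f\in\mathcal{F},\ \lambda\sum_j\|f_j\|_H^2 \leq R\}$.

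The main obstacle, and the reason a new argument is needed, is that $L_{\mathrm{CE}}(y,p(\cdot|x))=-\log p(y|x)$ is unbounded when $p(y|x)\downarrow 0$, so one cannot form uniformly bounded loss differences needed for standard Talagrand/Bernstein-type concentration on Gaussian-RKHS balls. Following the split indicated in the introduction, I would decompose $\mathcal{X}=\mathcal{X}_t\cup\mathcal{X}_t^{c}$ with $\mathcal{X}_t:=\{x:\min_{m\in[M]}p(m|x)\geq t\}$. On $\mathcal{X}_t$ both $-\log p(y|x)$ and $-\log p_f^{t}(y|x)$ lie in $[0,\log(1/t)]$, so the loss difference is deterministically bounded, and a variance bound of the form $\mathbb{E}(\ell_f-\ell_{p})^2\lesssim \log(1/t)\cdot(\mathcal{R}_{L_{\mathrm{CE}},P}(p_f^t)-\mathcal{R}_{L_{\mathrm{CE}},P}^{\ast})$ (which follows from a pointwise comparison of log-likelihoods to squared-Hellinger distance) lets one apply Bernstein's inequality combined with the entropy bound for Gaussian-RKHS balls from the literature, yielding a fast-rate sample error of order $(\lambda n_p)^{-1}\log(1/t)$ up to logarithmic factors. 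On $\mathcal{X}_t^{c}$, where $-\log p(y|x)$ is unbounded, I would instead apply concentration to the loss of the truncated estimator $L_{\mathrm{CE}}(y,p_{f}^{t}(\cdot|x))\in[0,\log(1/t)]$ alone, so its integral is dominated by $\log(1/t)\cdot P_X(\mathcal{X}_t^{c})$; by the small-value bound in Assumption \ref{ass::predictor}\textit{(ii)}, $P_X(\mathcal{X}_t^{c})\lesssim t^{\beta\wedge 1}$, giving a contribution of order $t^{\beta\wedge 1}\log(1/t)$ to the excess risk, which also bounds the corresponding part of the baseline risk $\mathcal{R}_{L_{\mathrm{CE}},P}^{\ast}$ up to the same order.

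For the approximation side I would use Hölder smoothness of $p(m|\cdot)$ from Assumption \ref{ass::predictor}\textit{(i)} together with standard Gaussian-kernel convolution estimates: taking $f^{\ast}_m$ to be a smoothed version of $\log p(m|\cdot)-\log p(M|\cdot)$ (clipped away from $-\infty$ at level $\log t$, which is compatible with the downward truncation \eqref{eq::etakdelta}) yields pointwise error $\gamma^{2\alpha}$ with RKHS-norm blow-up of order $\gamma^{-d}$, so $A(\lambda,\gamma,t)\lesssim \lambda\gamma^{-d}+\gamma^{2\alpha}+t^{\beta\wedge 1}\log(1/t)$, where the last term again arises because the clipping only perturbs $p$ on $\mathcal{X}_t^{c}$. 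Plugging in $\lambda\asymp n_p^{-1}$, $\gamma\asymp n_p^{-1/((1+\beta\wedge 1)\alpha+d)}$, and $t\asymp n_p^{-\alpha/((1+\beta\wedge 1)\alpha+d)}$ simultaneously equates the three error contributions $\lambda\gamma^{-d}$, $\gamma^{2\alpha}$ and $t^{\beta\wedge 1}$ up to logarithms, and the $\xi$ in the final exponent absorbs those logarithms as well as the extra polynomial slack required to convert the in-expectation bound into a high-probability bound with confidence $1-1/n_p$ via a peeling argument over the regularized ball radius. I expect the hardest technical step to be the $\mathcal{X}_t^{c}$ analysis, since there one loses access to loss-difference concentration and must use the truncation level $t$ to trade off estimator variance against the $t^{\beta\wedge 1}$ bias — and it is precisely this trade-off that produces the $(1+\beta\wedge 1)$ factor in the exponent of the final rate.
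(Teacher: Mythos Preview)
Your high-level strategy matches the paper's, but there is a genuine gap that prevents the argument from reaching the stated rate. The problem is the term $t^{\beta\wedge 1}\log(1/t)$ you obtain from the bad region $\mathcal{X}_t^c=\{x:\min_m p(m|x)<t\}$ (both in the sample-error part and as the ``clipping'' contribution to $A(\lambda,\gamma,t)$). With the prescribed $t\asymp n_p^{-\alpha/((1+\beta\wedge 1)\alpha+d)}$ this term has order $n_p^{-\alpha(\beta\wedge 1)/((1+\beta\wedge 1)\alpha+d)}$, which is strictly larger than the target $n_p^{-(1+\beta\wedge 1)\alpha/((1+\beta\wedge 1)\alpha+d)}$ for every $\beta>0$ (you would need $\beta\wedge 1\geq 1+\beta\wedge 1$). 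Consequently your three contributions $\lambda\gamma^{-d}$, $\gamma^{2\alpha}$, $t^{\beta\wedge 1}$ are not equated by the stated choices---the last one dominates and yields a suboptimal rate. Your ``$\gamma^{2\alpha}$'' for the good-region approximation is also unjustified: the pointwise Gaussian-convolution error for $\alpha$-H\"older targets is $\gamma^\alpha$, and the KL excess carries a $1/p_f$ weight, so the crude bound on $\mathcal{X}_t$ is only $\gamma^{2\alpha}/t=\gamma^\alpha$.

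The paper avoids this by two devices you are missing. First, in the oracle inequality the split is on $(x,y)$ pairs via $\{p(y|x)<\delta\}$ rather than on $x$ via $\min_m p(m|x)$; since the risk integrand is then itself weighted by $p(y|x)<\delta$, one gets $\mathbb{E}_P\bigl[\boldsymbol{1}\{p(Y|X)<\delta\}(-\log p_f^t(Y|X))\bigr]\leq M\delta(-\log t)$ directly (Lemma~\ref{lem::Epsmall}), without invoking the small value bound at all, and taking $\delta=t^2$ makes this term $(-\log t)\,t^2$, which is harmless. Second, for the approximation error the paper does not split off a bad region: it bounds the excess CE risk by the $\chi^2$-type quantity $\sum_m(p(m|x)-p_{f_0}(m|x))^2/p_{f_0}(m|x)$ (Proposition~\ref{prop::riskbound}), uses $|p-p_{f_0}|\lesssim\gamma^\alpha$ to ensure $p_{f_0}(m|x)\gtrsim p(m|x)$ wherever $p(m|x)\gtrsim\gamma^\alpha$, and then integrates $\gamma^{2\alpha}/p(m|x)$ against $P_X$ via the small value bound (Proposition~\ref{prop::excessriskupperbpund}). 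This refined integration is precisely what yields the correct approximation order $\gamma^{\alpha(1+\beta\wedge 1)}$ and hence the $(1+\beta\wedge 1)$ factor in the final exponent; your bound $\log(1/t)\,P_X(\mathcal{X}_t^c)$ throws this refinement away.
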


Notice that the CE loss measures the accuracy of the conditional probability estimator $\widehat{p}(y|x)$, while the classification loss measures how well we classify the samples. 
Since our CPMKM uses the conditional probability estimation $\widehat{p}(m|x)$ for estimating the class probability ratio and building the classifier in the target domain, 
Theorem \ref{thm::convergencerate} establishes the convergence rates of truncated KLR w.r.t.~the CE loss instead of the classification loss. 
It is worth pointing out that the theoretical results of the excess risk w.r.t.~the CE loss of $\widehat{p}(m|x)$ supply the key to analyzing the prediction error of CPMKM in the target domain.

The following theorem presents the lower bound result of multi-class classification under Assumption \ref{ass::predictor}.

\begin{theorem}\label{thm::lowerKLR}
	Let $\mathcal{F}^{D_p}$ be the set of all measurable predictors $f: \mathcal{X} \to \Delta^{M-1}$ and $\mathcal{P}$ be a collection of all distribution $P$ which satisfies Assumptions \ref{ass::predictor}. In addition, let a learning algorithm that accepts data $D_p$ and outputs a predictor be denoted as $\mathcal{A}: (\mathcal{X} \times \mathcal{Y})^{n_p} \to \mathcal{F}^{D_p}$. Then we have
	\begin{align*}
		\inf_{\mathcal{A}: (\mathcal{X} \times \mathcal{Y})^{n_p} \to \mathcal{F}^{D_p}} \sup_{P \in \mathcal{P}}  \mathcal{R}_{L_{\mathrm{CE}}, P}(\mathcal{A}(D_p)) - \mathcal{R}_{L_{\mathrm{CE}}, P}^*
		\gtrsim n_p^{-\frac{(1+\beta\wedge 1)\alpha}{(1+\beta\wedge 1)\alpha+d}}
	\end{align*}
	with probability $P^{n_p}$ at least $(3 - 2 \sqrt{2}) / 8$.
\end{theorem}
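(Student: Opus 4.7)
The plan is to establish the lower bound by reducing minimax estimation to multiple hypothesis testing via Assouad's lemma, applied to a finite packing of joint distributions in $\mathcal{P}$ that saturates both conditions of Assumption \ref{ass::predictor}. It suffices to treat the binary case $M=2$ since embedding yields the general $M$-class lower bound; the probability constant $(3-2\sqrt{2})/8$ is the standard Assouad/Pinsker constant.

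\textbf{Packing construction.} Fix a smooth non-negative compactly supported bump $\phi\in C^\infty_c(\mathbb{R}^d)$ with $\mathrm{supp}(\phi)\subset B(0,1)$ and $\|\phi\|_\infty=1$. For parameters $h\in(0,1)$, $\rho\in(0,1/2]$, and $m\in\mathbb{N}$ to be tuned, partition a reference subregion of $\mathcal{X}$ into $m$ disjoint cubes $B_1,\ldots,B_m$ of side length $h$ centered at $x_1,\ldots,x_m$, and for each $\sigma=(\sigma_1,\ldots,\sigma_m)\in\{0,1\}^m$ define a binary conditional probability
\begin{equation*}
	p_\sigma(1\mid x) := p_0(x) + c_1 h^\alpha \sum_{k=1}^m \sigma_k\,\phi\bigl((x-x_k)/h\bigr),
\end{equation*}
where $c_1>0$ is a small constant and $p_0:\mathcal{X}\to[0,1/2]$ is a smooth base function. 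The marginal $P_X$ and the base $p_0$ are designed so that every $p_\sigma$ is $\alpha$-H\"older with constant $c_\alpha$ and saturates the small value bound with constant $c_\beta$. A canonical choice takes $p_0$ to be a power function in one coordinate producing $P_X(p_0(X)\le t)\asymp t^{\beta\wedge 1}$, with the bumps placed inside the layer $\{x:p_0(x)\asymp\rho\}$.

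\textbf{Key Bernoulli estimates and Assouad.} The Bernoulli KL identity $\mathrm{KL}(\mathrm{Ber}(\rho)\|\mathrm{Ber}(\rho'))\asymp(\rho-\rho')^2/\rho$, valid when both parameters are small and their difference is at most $\rho$, gives, after integration over a bump of volume $h^d$, a per-bit KL and CE risk gap of order $h^d\cdot h^{2\alpha}/\rho$. Hence for neighboring $\sigma,\sigma'$ differing in only one coordinate,
\begin{equation*}
    \mathrm{KL}(P_\sigma\|P_{\sigma'})\lesssim h^d\cdot h^{2\alpha}/\rho,
\end{equation*}
and analogously, for any $\sigma\neq\sigma'$, the excess CE risk of the Bayes predictor of $P_{\sigma'}$ under the truth $P_\sigma$ is at least a constant times $\mathrm{dist}_H(\sigma,\sigma')\cdot h^d\cdot h^{2\alpha}/\rho$, where $\mathrm{dist}_H$ denotes Hamming distance. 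Assouad's lemma combined with Pinsker's inequality then yields
\begin{equation*}
    \inf_{\mathcal{A}}\sup_{P\in\mathcal{P}}\bigl[\mathcal{R}_{L_{\mathrm{CE}},P}(\mathcal{A}(D_p))-\mathcal{R}_{L_{\mathrm{CE}},P}^*\bigr]\gtrsim m\cdot h^d\cdot h^{2\alpha}/\rho
\end{equation*}
with probability at least $(3-2\sqrt{2})/8$, provided $n_p\cdot h^d\cdot h^{2\alpha}/\rho$ is bounded by a universal constant.

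\textbf{Calibration and main obstacle.} Calibrating $\rho\asymp h^{(1-\beta\wedge 1)\alpha}$ makes the per-bit KL of order $h^{d+(1+\beta\wedge 1)\alpha}$, while the small value bound permits $m\asymp\rho^{\beta\wedge 1}/h^d$; the final balancing $h\asymp n_p^{-1/((1+\beta\wedge 1)\alpha+d)}$ is then designed to yield the claimed lower bound rate $n_p^{-(1+\beta\wedge 1)\alpha/((1+\beta\wedge 1)\alpha+d)}$. The principal technical challenge is the simultaneous achievement of H\"older smoothness with constant $c_\alpha$, saturation of the small value bound with constant $c_\beta$, and the sharp Bernoulli KL scaling $h^{2\alpha}/\rho$ per unit volume on each bump, together with the precise interrelation between $\rho$, $h$, $m$, and $n_p$ needed to extract the claimed exponent. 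A naive construction with constant base $p_0$ fails to exploit the small value bound and delivers only the standard H\"older regression rate $n_p^{-\alpha/(\alpha+d)}$; the improvement rests on placing the bumps in a small-$p$ layer where the Bernoulli KL per unit volume is enlarged by the factor $1/\rho$, and on potentially refining the plain Assouad cube via a Gilbert-Varshamov subset plus Fano's inequality to extract the sharp denominator exponent. The extension from binary to $M$-class classification is a standard embedding.
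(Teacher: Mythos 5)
Your overall strategy (a hypercube of bump perturbations placed in a region where the conditional probability is small, analyzed by a testing lower bound) is the right one, and it shares the paper's key idea that the Bernoulli KL per unit mass is inflated by a factor $1/\rho$ when the perturbation sits at level $\rho$. However, your concrete calibration does not deliver the claimed exponent, and the gap is exactly at the point you flag as the "principal technical challenge." With per-bump marginal mass $\asymp h^d$ (your construction spreads the small-value mass uniformly over the layer $\{p_0\asymp\rho\}$), the small value bound forces $m h^d\lesssim\rho^{\beta}$, so the Assouad total separation is $m\cdot h^d h^{2\alpha}/\rho\asymp\rho^{\beta-1}h^{2\alpha}$. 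Plugging in your own choices $\rho\asymp h^{(1-\beta\wedge1)\alpha}$ and $m\asymp\rho^{\beta\wedge1}/h^d$ gives a total of order $h^{\alpha(1+2\beta-\beta^2)}$ for $0<\beta<1$, not $h^{\alpha(1+\beta)}$; combined with $h\asymp n_p^{-1/((1+\beta)\alpha+d)}$ this yields a strictly weaker bound than claimed. Moreover, no choice of $\rho$ repairs this within your setup: optimizing $\rho^{\beta-1}h^{2\alpha}$ subject to the testing constraint $n_p h^d h^{2\alpha}/\rho\lesssim1$ and $\rho\gtrsim h^\alpha$ gives at best $n_p^{-(1+\beta)\alpha/(\alpha+d)}\vee n_p^{-2\alpha/(2\alpha+d)}$, which for $0<\beta<1$ falls short of the target $n_p^{-(1+\beta)\alpha/((1+\beta)\alpha+d)}$. (Also, your base function with $P_X(p_0\le t)\asymp t^{\beta\wedge1}$ would violate Assumption 2(ii) when $\beta>1$, though that case can be handled by a constant-level construction.)

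The missing ingredient, which is how the paper closes this gap, is to decouple the per-bump mass from the cell volume by using a locally singular marginal: on each small-value ball the density is taken as $p(x)\asymp r^{d+(\alpha-1)\beta}\|x-x_i\|_2^{\beta-d}$, so that each bump carries mass $\asymp r^{d+\alpha\beta}$ while the number of bumps remains $\asymp r^{-d}$ and the conditional probability of the minority class sits at level $\asymp r^{\alpha}$ (not $h^{(1-\beta)\alpha}$); the small value bound is then verified pointwise with the correct exponent $\beta$ for all $\beta>0$. With this, the per-hypothesis KL is $\asymp r^{(1+\beta\wedge1)\alpha}$, a Varshamov--Gilbert subset gives $\log L\asymp r^{-d}$ well-separated hypotheses, and Tsybakov's multiple-testing bound (which also supplies the constant $(3-2\sqrt2)/8$) with $n_p r^{(1+\beta\wedge1)\alpha}\lesssim r^{-d}$ forces $r\asymp n_p^{-1/((1+\beta\wedge1)\alpha+d)}$ and separation $\asymp r^{(1+\beta\wedge1)\alpha}$, i.e.\ the claimed rate. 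One further point to be careful about: the excess CE risk (a KL divergence) is not a semi-metric, so the pairwise separation cannot be asserted directly as you do; the paper routes it through the Hellinger distance via $\mathrm{KL}\ge2H^2$ before applying the testing machinery, and the paper also adds a second family of bumps at level $1/2$ to cover the $r^{2\alpha}$ part of the separation uniformly in $\beta$.
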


Theorem \ref{thm::lowerKLR} together with Theorem \ref{thm::convergencerate} illustrates that the convergence rates of KLR shown in Theorem \ref{thm::convergencerate} is minimax optimal up to an arbitrary small order $\xi$.

\subsection{Convergence Rates of CPMKM for Predicting in the Target Domain} \label{subsec::RatesTarget}

In this section, we establish the convergence rates of CPMKM under the label shift setting (Assumption \ref{ass:labelshift}) and some regular assumptions. In addition to Assumption \ref{ass::predictor} concerning the conditional probability $p(y|x)$, we also need some commonly used assumptions on the marginal distributions of $P$ and $Q$ for establishing the convergence rates of CPMKM.

\begin{assumption}\label{ass::marginal}
	We make the following assumptions on the marginal distributions of the source distribution $P$ and target distribution $Q$. 
	\begin{enumerate}
		\item[\textbf{(i)}] \textbf{[Non-zero Class Probability]}
		Assume that the class probabilities $p(m) > 0$ holds for all $m \in [M]$.
		\item[\textbf{(ii)}] \textbf{[Strong Density Assumption]}
		Assume that for any $x$ with $q(x) > 0$, there exists an $r_0 >0$ and $c_{-} > 0$ such that for any $0 \leq r < r_0$, there holds $P(B(x,r)) \geq c_{-} \mu(B(x,r))$.
		\item[\textbf{(iii)}] \textbf{[Marginal Ratio Assumption]} Assume that for any $x \in \mathcal{X}$ with $q(x) > 0$, there exist some constant $\underline{c} > 0$ such that $q(x)/p(x) \geq \underline{c}$.
	\end{enumerate}
\end{assumption}

Notice that \textit{(i)} only assumes that $p(y) > 0$, $\forall y\in[M]$, but it does not necessarily require that $q(y) > 0$, $\forall y\in[M]$, which turns out to be more realistic, see also \cite{zhang2013domain,lipton2018detecting}.
Assumption \ref{ass::marginal} \textit{(ii)} is a weaker version of the commonly used strong density assumption \cite{audibert2007fast} which assumes that $p(x)$ is lower bounded for all $x \in \supp(P_X)$.  
Assumption \ref{ass::marginal}  \textit{(iii)} assumes that the marginal density ratio $q(x)/p(x)$ is bounded from below for all $x$ with $q(x) > 0$. This assumption can be derived by Definition 4 in \cite{maity2022minimax}, which assumes that $q(x)$ is lower bounded for all $x$ satisfying $q(x) > 0$ and $p(x)$ is upper bounded for any $x$.

\begin{assumption}[\textbf{Linear Independence}] \label{ass::LinearIndependence} 
	We assume that the class-conditional probability density functions $\{q(\cdot|y): y\in [M]\}$ are linearly independent.  
\end{assumption}

In other words, if the equation $\sum_{y=1}^M \alpha_y q(x|y) = 0$ 
holds for all $x\in \mathcal{X}$ with $\alpha_y \in \mathbb{R}$, $y\in [M]$, then we have $\alpha_y = 0$ for all $y\in [M]$.
In fact, Assumption \ref{ass::LinearIndependence} is a standard and widely-used assumption in the label shift adaptation problem, e.g. \cite{zhang2013domain, iyer2014maximum}.
Assumption \ref{ass::LinearIndependence} guarantees the identifiability of the class probability $q(y)$ if $q(x)$ and $q(x|y)$ are known. 
To be specific, the equation $\sum_{y=1}^M \theta_y q(x|y) = q(x)$ holds for all $x\in \mathcal{X}$ with $\theta := (\theta_y)_{y\in [M]} \in \Delta^{M-1}$ if and only if $\theta_y = q(y)$, $y\in [M]$. Further discussion can be found in Section 2.1 of \cite{gong2016domain, iyer2014maximum}.

In addition to the above assumptions, we also need the following regularity assumption, which is taken from Condition 1 in \cite{garg2020unified}.

\begin{assumption}[Regularity]
	\label{ass::regularity}
	Assume that for sufficient large sample size $n_p$ and $n_q$, for any $x \in \mathrm{supp}(Q_X)$, i.e., for any $x$ satisfying $q(x) > 0$, there exists some universal constant $c_R > 0$ such that
	\begin{align*}
		\sum_{m=1}^M \widehat{w}(m) \widehat{p}(m|x) \geq c_R, 
		\qquad
		\sum_{m=1}^M w^*(m) \widehat{p}(m|x) \geq c_R.
	\end{align*}
\end{assumption}

By \eqref{eq::repre}, Assumption \ref{ass:labelshift} and \ref{ass::marginal}  \textit{(iii)}, we obtain $\sum_{m=1}^M w^*(m) p(m|x) = q(x)/p(x) \geq \underline{c}$.
If $\widehat{p}(m|x)$ is a good estimate of $p(m|x)$, then $\widehat{w}$ in \eqref{eq::whatminimizer} and the true class probability ratio $w^*$ are close.
Consequently, $\sum_{m=1}^M \widehat{w}(m) \widehat{p}(m|x)$ and $\sum_{m=1}^M w^*(m) \widehat{p}(m|x)$ can also be lower-bounded by a constant.
Further discussion about the justification of Assumption \ref{ass::regularity} can be found in \cite{garg2020unified}.

In what follows, we present the convergence rate of the conditional probability estimator in the target domain $\widehat{q}(m|x)$ in \eqref{eq::hatetaQ}.

\begin{theorem}\label{thm::rateQ}
	Let Assumptions \ref{ass:labelshift}, \ref{ass::predictor}, \ref{ass::marginal}, \ref{ass::LinearIndependence}, and \ref{ass::regularity} hold. Moreover, let $\widehat{q}(y|x)$ be the estimator as in \eqref{eq::hatetaQ}. Then there exists an $N' \in \mathbb{N}$ such that for any $n_p \wedge n_q \geq N'$ and for any $\xi > 0$, there holds
	\begin{align}\label{eq::rateCPMKM}
		\mathcal{R}_{L_{\mathrm{CE}},Q}(\widehat{q}(y|x)) -\mathcal{R}_{L_{\mathrm{CE}},Q}^* 
		\lesssim n_p^{-\frac{(1+\beta\wedge 1)\alpha}{(1+\beta\wedge 1)\alpha+d}+\xi} + \log n_q / n_q
	\end{align}
	with probability $P^{n_p} \otimes Q_X^{n_q}$ at least $1-1/n_p-1/n_q$.
\end{theorem}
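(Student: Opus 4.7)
The plan is to reduce the target-domain excess CE risk of $\widehat{q}(y|x)$ to the source-domain excess CE risk of $\widehat{p}(y|x)$, for which Theorem \ref{thm::convergencerate} is already available, plus an estimation-error term for the weights $\widehat{w}-w^*$. First, using the representation \eqref{eq::etaPshifttoQ} for $q(y|x)$ together with the analogous formula \eqref{eq::hatetaQ} for $\widehat{q}(y|x)$, I would write
\begin{align*}
\log\frac{q(y|x)}{\widehat{q}(y|x)}
= \log\frac{w^*(y)}{\widehat{w}(y)} + \log\frac{p(y|x)}{\widehat{p}(y|x)} - \log\frac{\sum_m w^*(m)p(m|x)}{\sum_m \widehat{w}(m)\widehat{p}(m|x)}.
\end{align*}
Taking expectation under $Q$ and using label shift ($q(x|y)=p(x|y)$) together with Assumption \ref{ass::marginal}(iii), the middle term becomes a $P_X$-integral of a reweighted log-ratio and is controlled by $\mathcal{R}_{L_{\mathrm{CE}},P}(\widehat{p})-\mathcal{R}^*_{L_{\mathrm{CE}},P}$ up to the constant $\underline{c}^{-1}$. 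Assumption \ref{ass::regularity} keeps the denominators in the first and third terms away from zero and renders them Lipschitz in $\widehat{w}-w^*$. In this way the target risk is dominated by the source CE risk of $\widehat{p}$ plus $\|\widehat{w}-w^*\|$.

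Second, I would bound $\|\widehat{w}-w^*\|$ in two moves. The optimality of $\widehat{w}$ in \eqref{eq::whatminimizer} gives
\begin{align*}
\sum_{y=1}^M \bigl|\widehat{p}(y)-\widehat{p}_q^{\widehat{w}}(y)\bigr|^2 \leq \sum_{y=1}^M \bigl|\widehat{p}(y)-\widehat{p}_q^{w^*}(y)\bigr|^2,
\end{align*}
and the right-hand side splits into three pieces: $\widehat{p}(y)-p(y)$, a multinomial deviation of order $\sqrt{\log n_p/n_p}$ hence negligible after squaring; $\widehat{p}_q^{w^*}(y)-p_q^{w^*}(y)$, a $Q_X$-empirical average which is $O(\log n_q/n_q)$ by Bernstein once Assumption \ref{ass::regularity} bounds the denominators; and $p_q^{w^*}(y) - p(y)$ evaluated with $\widehat{p}$ in place of $p$, whose gap is controlled by the CE risk of $\widehat{p}$ via the Lipschitz property of $w\mapsto p_q^w$ in its $p(\cdot|x)$-argument. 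This yields $\sum_y|\widehat{p}(y)-\widehat{p}_q^{\widehat{w}}(y)|^2 \lesssim (\text{CE risk of }\widehat{p}) + \log n_q/n_q$.

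Third, the linear-independence Assumption \ref{ass::LinearIndependence} is exactly what is needed to invert the nonlinear matching system near $w^*$: writing $p_q^w(y)=\mathbb{E}_{X\sim q}[p(y|X)/\sum_m w(m) p(m|X)]$, the Jacobian at $w^*$ is the matrix $\bigl(\mathbb{E}_{X\sim q}[p(y|X)p(m|X)/(\sum_\ell w^*(\ell)p(\ell|X))^2]\bigr)_{y,m}$, which under Assumption \ref{ass::LinearIndependence} combined with label shift is positive definite. Hence the residuals in the matching system upper-bound $\|\widehat{w}-w^*\|$ linearly, giving $\|\widehat{w}-w^*\|^2 \lesssim (\text{CE risk of }\widehat{p}) + \log n_q/n_q$. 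Substituting the source-domain rate from Theorem \ref{thm::convergencerate} into the decomposition of the first paragraph then yields \eqref{eq::rateCPMKM}, with the two failure events absorbed into the $1/n_p+1/n_q$ probability budget.

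The main obstacle is translating the qualitative identifiability in Assumption \ref{ass::LinearIndependence} into a quantitative lower bound on the Jacobian of $w\mapsto p_q^w$ at $w^*$; without it one only obtains consistency rather than a convergence rate for $\widehat{w}$. A secondary, more technical, hurdle is controlling $\widehat{p}_q^w(y)-p_q^w(y)$ \emph{uniformly} in $w$ rather than at the fixed $w^*$, since $\widehat{w}$ is data-dependent; this will require a covering argument on a compact feasible set for $w$ and crucially uses Assumption \ref{ass::regularity} to keep the denominators away from zero throughout that covering.
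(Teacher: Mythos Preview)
Your overall strategy---reduce the $Q$-risk to the $P$-risk of $\widehat{p}$ plus a weight-estimation error, then control $\|\widehat{w}-w^*\|$ via the optimality of $\widehat{w}$ and a Jacobian/linear-independence argument---matches the paper's (Propositions~\ref{prop::excesshattildediff}, \ref{prop::excesstildetruediff}, \ref{prop::decompweighterror}, Lemma~\ref{lem::weightsigma}). Your first hurdle is resolved in Lemma~\ref{lem::weightsigma} by bounding the matching matrix below by $(p_{\min}/2)\,\Sigma_p$ with $\Sigma_p=\mathbb{E}_{x\sim p}[p(\cdot|x)p(\cdot|x)^\top]$, whose minimum eigenvalue then enters as a fixed constant.

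The part that needs repair is your risk decomposition. Two issues. First, Assumption~\ref{ass::marginal}\textit{(iii)} gives $q(x)/p(x)\geq\underline{c}$, which is the wrong direction for converting a $Q_X$-expectation into a $P_X$-expectation in an upper bound; the paper instead uses $q(x)/p(x)\leq 1/p_{\min}$, a consequence of label shift and $p(y)>0$. Second, and more importantly, your middle term equals $\sum_y w^*(y)\,\mathbb{E}_{x\sim p}\bigl[p(y|x)\log(p(y|x)/\widehat{p}(y|x))\bigr]$, a class-reweighted sum whose individual summands need not be nonnegative, so it is not directly dominated by the source CE excess risk. Relatedly, your first term $\log(w^*(y)/\widehat{w}(y))$ requires $\widehat{w}(y)$ bounded away from zero for each $y$ with $q(y)>0$, which Assumption~\ref{ass::regularity} (a bound only on the \emph{sum} $\sum_m\widehat{w}(m)\widehat{p}(m|x)$) does not provide. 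The paper avoids both problems by inserting the intermediate $\widetilde{q}(y|x)=w^*(y)\widehat{p}(y|x)/\sum_m w^*(m)\widehat{p}(m|x)$ and splitting as in~\eqref{eq::ExcessRiskQdecomp}: then $\mathcal{R}_Q(\widetilde{q})-\mathcal{R}_Q^*$ is a pointwise-nonnegative KL, so the measure change is licit, and a convexity lemma (Lemma~\ref{lem::etatransforminq}) converts it to the source CE risk; the remaining $|\mathcal{R}_Q(\widehat{q})-\mathcal{R}_Q(\widetilde{q})|$ is handled by a mean-value argument (Lemma~\ref{lem::meanvalue}) that produces a factor $\widehat{p}(m|x)$ in the numerator cancelling the $\widetilde{q}(m|x)\gtrsim\widehat{p}(m|x)$ in the denominator, so no individual lower bound on $\widehat{w}(y)$ is needed.

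On your second hurdle: the paper does \emph{not} run a covering argument. It applies Bernstein only at $w\in\{w^*,\widehat{w}\}$ (Lemma~\ref{lem::integralerror}), treating $\widehat{w}$ as if it were fixed when concentrating the $D_q^u$-average. Your concern about circularity is legitimate; the covering you propose would make that step rigorous.
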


Theorem \ref{thm::rateQ} shows that 
up to the arbitrarily small constant $\xi$, we can see from \eqref{eq::rateCPMKM} that the convergence rate of CPMKM depends on the larger term of $n_p^{-(1+\beta\wedge 1)\alpha/((1+\beta\wedge 1)\alpha+d)}$ and $n_q^{-1}$. 
In practical applications, we usually have fixed large sample size $n_p$ in the source domain and gradually increased sample size $n_q$ over time in the target domain.
In this case, as the sample size $n_q$ increases from zero to the order of $n_p^{\alpha(1+\beta\wedge 1)/(\alpha(1+\beta\wedge 1)+d)}$, the convergence rate \eqref{eq::rateCPMKM} becomes faster and finally reaches the order of $n_p^{-\alpha(1+\beta\wedge 1)/(\alpha(1+\beta\wedge 1)+d)}$. 
However, even if the order of $n_q$ continues to increase, the order of the convergence rate remains the same. This implies that a certain amount of target domain sample is enough for CPMKM, and beyond a certain threshold, more unlabeled samples from the target domain can no longer improve the performance.

In the following, we establish the lower bound on the convergence rate of the excess risk in the label shift problem for an arbitrary learning algorithm with access to $n_p$ labeled source data and $n_q$ unlabeled target data.

\begin{theorem}\label{thm::lower}
	Let $\mathcal{F}^D$ be the set of all measurable predictors $f: \mathcal{X} \to \Delta^{M-1}$ built upon $D$ and $\mathcal{T}$ be a collection of pairs of distribution $(P, Q)$ which satisfies Assumptions \ref{ass:labelshift} and \ref{ass::predictor}. In addition, let a learning algorithm that accepts data $D:=(D_p, D_q^u)$ and outputs a predictor $f$ be denoted as $\mathcal{A}: (\mathcal{X} \times \mathcal{Y})^{n_p}\otimes \mathcal{X}^{n_q} \to \mathcal{F}^D$. Then we have
	\begin{align*}
		\inf_{\mathcal{A}: (\mathcal{X} \times \mathcal{Y})^{n_p} \otimes \mathcal{X}^{n_q} \to \mathcal{F}^D} \sup_{(P,Q) \in \mathcal{T}}  \mathcal{R}_{L_{\mathrm{CE}}, Q}(\mathcal{A}(D)) - \mathcal{R}_{L_{\mathrm{CE}}, Q}^*
		\gtrsim n_p^{-\frac{(1+\beta\wedge 1)\alpha}{(1+\beta\wedge 1)\alpha+d}} + n_q^{-1}
	\end{align*}
	with probability $P^{n_p} \otimes Q_X^{n_q}$ at least $(3 - 2 \sqrt{2}) / 8$.
\end{theorem}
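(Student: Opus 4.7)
The plan is to establish the two terms in the stated lower bound separately via Le Cam's two-point method and then combine them using $\max(a,b) \geq (a+b)/2$. The first term $n_p^{-(1+\beta\wedge 1)\alpha/((1+\beta\wedge 1)\alpha+d)}$ reflects the nonparametric difficulty of estimating $p(y|x)$ from labeled source samples, while the second term $n_q^{-1}$ reflects the parametric difficulty of estimating the $M$-dimensional class probability ratio $w^*$ from $n_q$ unlabeled target samples.

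For the first term I would reduce to Theorem \ref{thm::lowerKLR} by embedding its lower-bound family into the label-shift collection $\mathcal{T}$. Concretely, for each source distribution $P$ in the hard family used to prove Theorem \ref{thm::lowerKLR}, attach a single target $Q$ by fixing a class-probability profile $q(y)\neq p(y)$ and setting $q(x|y)=p(x|y)$, so $(P,Q)\in\mathcal{T}$. The lower-bound construction of Theorem \ref{thm::lowerKLR} can be arranged to have a common marginal $p(x)$ across hypotheses (the hypotheses differ only via localized perturbations of $p(y|x)$ on equal-mass bumps), so the induced target marginal $q(x)=\sum_y q(y)p(x|y)$ also agrees across hypotheses, making $D_q^u$ completely uninformative for testing between them. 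Since the CE risk under $Q$ and under $P$ differ only by a bounded multiplicative factor (guaranteed by Assumption \ref{ass::marginal}), any minimax lower bound on source-domain excess CE risk transfers to target-domain excess CE risk, yielding the first term.

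For the second term I would fix a simple source $P$ with a smooth bounded $p(y|x)$ satisfying Assumption \ref{ass::predictor} (e.g.~$M=2$ with two smooth class-conditionals that are linearly independent) and construct two target hypotheses $Q_1, Q_2$, both label-shifted from $P$ via $q_k(x|y) = p(x|y)$, differing only through a perturbation $q_1(y) - q_2(y)$ of order $\epsilon\asymp n_q^{-1/2}$. Then $P^{n_p}$ is identical under the two hypotheses and
\begin{align*}
	\mathrm{KL}\bigl(P^{n_p}\otimes Q_{1,X}^{n_q} \,\big\|\, P^{n_p}\otimes Q_{2,X}^{n_q}\bigr)
	= n_q \cdot \mathrm{KL}(Q_{1,X}\|Q_{2,X})
	\lesssim n_q \epsilon^2 \lesssim 1.
\end{align*}
Next I would establish a quadratic lower bound for the excess CE risk in terms of the ratio deviation, i.e.~
\begin{align*}
	\mathcal{R}_{L_{\mathrm{CE}},Q_k}(\widehat{q}) - \mathcal{R}_{L_{\mathrm{CE}},Q_k}^*
	\gtrsim \|w_k^* - w_{3-k}^*\|_2^2 \gtrsim \epsilon^2 \gtrsim n_q^{-1}
\end{align*}
whenever $\widehat{q}$ is the predictor induced by the wrong class-ratio vector. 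The linear independence Assumption \ref{ass::LinearIndependence} provides the quantitative curvature needed: it guarantees that a second-order expansion of $\mathbb{E}_{x\sim q_k}\mathrm{KL}(q_k(\cdot|x)\|q_{3-k}(\cdot|x))$ around $w_k^*$ has a positive-definite Hessian bounded below by a constant. Pinsker's inequality controls $\|P_1^n - P_2^n\|_{\mathrm{TV}}$ by the bounded KL, and the standard Le Cam two-point inequality then produces the stated failure probability $(3-2\sqrt{2})/8$.

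The main obstacle will be the quadratic lower bound on the excess CE risk in terms of $\|w_k^* - w_{3-k}^*\|$. Since $q(y|x)=w^*(y)p(y|x)/\sum_m w^*(m)p(m|x)$ is a nonlinear normalized function of $w^*$, a careful Taylor expansion of the KL divergence and a precise use of Assumption \ref{ass::LinearIndependence} (combined with Assumption \ref{ass::marginal}\textbf{(iii)} to keep the denominator bounded away from zero) are needed to convert perturbations in $w^*$ into perturbations in $q(y|x)$ of comparable magnitude in the $L_2(Q_X)$ sense. Once this curvature estimate is in place, the two separately established lower bounds combine via $\max(a,b)\geq (a+b)/2$ to produce the claimed overall rate $n_p^{-(1+\beta\wedge 1)\alpha/((1+\beta\wedge 1)\alpha+d)} + n_q^{-1}$.
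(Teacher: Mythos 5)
Your overall strategy coincides with the paper's: the paper also proves the two terms separately (the $n_p$-term by recycling the family of Theorem \ref{thm::lowerKLR} and making the unlabeled target sample carry no information, the $n_q^{-1}$-term by a two-point argument in which only the class probabilities are shifted by $\theta\asymp n_q^{-1/2}$, with KL of the observed data bounded by a small constant and excess-risk separation $\gtrsim n_q^{-1}$), and then combines the two bounds. However, your first part contains a genuine gap: you attach to each hard source $P^{\sigma}$ a target with a \emph{fixed} $q(y)\neq p(y)$ and $q(x|y)=p^{\sigma}(x|y)$, and claim that the induced target marginal $q(x)=\sum_y q(y)p^{\sigma}(x|y)$ agrees across hypotheses. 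It does not: in the construction the hypotheses share $p(x)$ and differ through $p^{\sigma}(y|x)$, so $p^{\sigma}(x|y)=p^{\sigma}(y|x)p(x)/p^{\sigma}(y)$ varies with $\sigma$, and $\sum_y q(y)p^{\sigma}(x|y)=p(x)\sum_y \bigl(q(y)/p^{\sigma}(y)\bigr)p^{\sigma}(y|x)$ is $\sigma$-independent only if $q(y)/p^{\sigma}(y)$ is constant in $y$, i.e.\ $q=p^{\sigma}$ (for $M=2$ this is forced; for $M\geq 3$ one would have to confine the perturbations to directions orthogonal to the ratio vector, which you have not arranged). With your choice the unlabeled target sample is informative at the same per-sample order as a labeled source sample, the KL budget becomes of order $(n_p+n_q)r^{(1+\beta\wedge 1)\alpha}$, and recalibrating $r$ only yields a $(n_p+n_q)^{-(1+\beta\wedge 1)\alpha/((1+\beta\wedge 1)\alpha+d)}$ bound, strictly weaker than the claimed $n_p$-rate when $n_q\gg n_p$. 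The paper's fix is simply to set $Q^{\sigma}:=P^{\sigma}$, so that $Q_X^{\sigma}\equiv P_X$ and $\mathrm{KL}(\Pi_j,\Pi_0)=n_p\,\mathrm{KL}(P^j,P^0)$; this also removes the need for your ``bounded multiplicative factor'' risk-transfer step, which as stated glosses over the fact that for $q(y)\neq p(y)$ the target conditional $q(y|x)$ differs from $p(y|x)$, so the $2s$-separation would have to be recomputed directly in the target semi-metric rather than borrowed from the source.

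For the $n_q^{-1}$ term your plan is essentially the paper's, with one difference of execution: the paper does not derive the quadratic (curvature) lower bound from Assumption \ref{ass::LinearIndependence} in the abstract; it writes down an explicit one-dimensional two-class pair (three-region class-conditionals with a small discriminative mass $\delta$ and $\theta=1/(16\sqrt{n_q})$), verifies that the pair lies in $\mathcal{T}$ (in particular the small value bound of Assumption \ref{ass::predictor}), bounds the KL of the product measures by $3/128$, and computes the excess-risk separation $\gtrsim n_q^{-1}$ by direct calculation before applying Proposition \ref{prop::lower}. Since Assumption \ref{ass::LinearIndependence} is not part of the class $\mathcal{T}$, you can only use it as a property of a concrete construction anyway, so your ``main obstacle'' (the curvature estimate) ultimately has to be discharged exactly by such an explicit computation; once your first part is repaired as above, the combination via $\max(a,b)\geq (a+b)/2$ is fine.
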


Theorems \ref{thm::rateQ} and \ref{thm::lower} show that our CPMKM is able to achieve the minimax optimal rates for the label shift adaptation problem.

\subsection{Comments and Discussions} \label{subsec::Comparison}

\subsubsection{Comments on Convergence Rates of KLR for Conditional Probability Estimation}

Note that the logarithmic function is unbounded. Therefore, if the KLR estimator $p_{f_D}(\cdot|x)$ is close to zero, the CE risk can become arbitrarily large. 
To address this issue, \cite{bos2022convergence} propose the truncated CE risk. Specifically, given a pre-specified threshold $B>0$, the difference between the CE loss of KLR estimator $p_{f_D}(\cdot|x)$ and that of the true probability $p(\cdot|x)$, i.e., $L_{\mathrm{CE}}(y, p_{f_D}(\cdot|x)) - L_{\mathrm{CE}}(y, p(\cdot|x)) = \log (p(y|x)/p_{f_D}(y|x))$ is truncated by $B$. 
Then the truncated excess risk of KLR estimator $p_{f_D}(\cdot|x)$ is 
\begin{align}\label{eq::trunrisk}
	\mathbb{E}_{(X,Y)\sim P}\big(\log (p(Y|X)/p_{f_D}(Y|X)) \wedge B\big),
\end{align}
However, even if the truncated CE risk of the KLR estimator is small, its true CE risk may be extremely large
since $\log (p(y|x)/p_{f_D}(y|x))$ may be significantly larger than $B$.

The truncated CE risk can be decomposed into the sample error and approximation error, originating from the randomness of the data and the approximation capability of the function space.
In the error analysis of \cite{bos2022convergence}, the sample error bound is shown to increase as the threshold $B$ increases, while the approximation error is irrelevant to the threshold $B$ when $B$ is larger than some constant. Therefore, the upper bound of the truncated CE risk grows linearly with the threshold $B$ as in Theorem 3.3 of \cite{bos2022convergence}. As a result, the optimal convergence rate can be obtained with $B \asymp \log n_p$. 
However, if we take $B = \infty$ to convert the truncated CE risk to the true CE risk, the risk bound becomes infinity. In other words, no convergence rates of the conditional probability estimator w.r.t.~ the CE loss can be obtained.

In contrast to truncating the CE risk, we truncate the KLR estimator $p_{f_D}(y|x)$ in \eqref{eq::pfmx} downwards such that its truncated estimator $p^t_{f_D}(y|x)$ in \eqref{eq::etakdelta} is larger than a pre-defined threshold $t$. Therefore, the CE loss of $p^t_{f_D}(y|x)$ is upper bounded by $-\log t$, which enables us to directly analyze the excess risk of $p^t_{f_D}(y|x)$ w.r.t.~the CE loss, i.e.  
\begin{align*}
	\mathbb{E}_{(X,Y)\sim P}\big(\log (p(Y|X)/p^t_{f_D}(Y|X))\big).
\end{align*}

Then, we illustrate how to establish the optimal rates of the CE risk for our truncated estimator $p^t_{f_D}(y|x)$ in \eqref{eq::etakdelta}.
In our theoretical analysis, the effect of the truncation threshold $t$ on the excess risk of $p^t_{f_D}(y|x)$ is two-fold. 
On the one hand, as shown in Theorem \ref{thm::oracle} in Section \ref{sec::sampleerror}, the sample error bound of $p^t_{f_D}(y|x)$ decreases as the threshold $t$ increases.
This is because a larger $t$ leads to a smaller upper bound for the CE loss of the truncated estimator, which yields a smaller sample error bound. 
On the other hand, a sufficiently small $t$ enables the truncated estimator $p_{f_D}^t(y|x)$ to effectively approximate the conditional probability $p(y|x)$ that is close to zero, which leads to a small approximation error that only depends on the kernel bandwidth.
According to the trade-off between the sample error and approximation error, we are able to establish the optimal convergence rate of the CE risk for the truncated KLR estimator by choosing an appropriate threshold $t \asymp n_p^{-\alpha/((1+\beta\wedge 1)\alpha+d)}$ as in Theorem \ref{thm::convergencerate}. 
It is worth noting that this is the first convergence rate of KLR w.r.t.~the unbounded CE loss to the best of our knowledge, 
and therefore our result is stronger than that of the truncated CE loss established in the previous work \cite{bos2022convergence}.

\subsubsection{Comments on Convergence Rates for Label Shift Adaptation}

For the label shift adaptation problem in the context of binary classification, \cite{maity2022minimax} shows that some existing algorithms achieve the optimal convergence rates with respect to the misclassification ($0-1$) loss. To this end, they adopt the commonly-used margin condition \cite{tsybakov2004optimal,audibert2007fast} focusing on the region near the boundary, where the conditional probability $p(y|x)$ is close to $1/2$.

In this paper, we analyze the conditional probability estimator w.r.t.~the CE loss rather than the classifier w.r.t.~the misclassification loss. 
The reason for choosing to study the conditional probability estimator lies in its ability to provide us with a notion of confidence compared to classifiers that only predict labels of the test data \cite{bos2022convergence}. 
In fact, if the largest conditional class probability is close to one, then the class with the largest conditional probability is likely to be the true label. 
On the other hand, if the largest conditional class probabilities are close to each other, the prediction results of the classifier are not reliable.
In Theorem \ref{thm::rateQ}, optimal convergence rates of the conditional probability estimator $\widehat{q}(y|x)$ in the target domain are established w.r.t~the CE loss for label shift adaptation.
Therefore, our theoretical results are fundamentally different from that of \cite{maity2022minimax}.

Finally, it is worth noting that the convergence mentioned here is of type “with high probability”, which is stronger than the results of type “in expectation” in \cite{maity2022minimax}. This is because we use techniques from the approximation theory \cite{cucker2007learning} and arguments from the empirical process theory \cite{vandervaart1996weak,Kosorok2008introduction}.

\subsubsection{Comparison with FPM methods} \label{subsec::ComparisonX}

As discussed in Section \ref{subsec::FPM}, \textit{kernel mean matching} (\textit{KMM}) \cite{huang2006correcting}  estimates the class probability ratio for label shift by minimizing the distance between the kernel mean of reweighted source w3 and target data. 
Combining our analysis with the results of \cite{iyer2014maximum}, we are able to establish the same convergence rates of the KMM method as that of our CPMKM under the label shift assumption. 
However, the primary limitation of the KMM method is its computational inefficiency, particularly when applied to large-scale datasets. To be specific, in order to compute the distance of two feature probability $q(x)$ and $p^w(x)$ in \eqref{eq::matchX} in terms of their kernel embedding means, KMM requires the calculation of the inversion of Gram matrix of the source domain data, whose complexity is the order of the cube of $n_p$ by \cite{lipton2018detecting}. This thereby constrains the scalability of the method. 
To reduce the computational cost, \cite{lipton2018detecting, tian23a} introduced a mapping function $h$ to transform the feature variable $X$ into a low-dimensional variable $h(X)$ 
and match the probability of the transformed feature $q(h(x))$ with the weighted class-conditional probability $p(h(x)|y)$ to obtain the class probability ratio.
By combining our analysis with the results of \cite{garg2020unified}, 
we are also able to establish the same convergence rates for their method as ours under the additional assumption that $p(h(x)|y)$, $y\in[M]$ are linearly independent,
This is a stronger assumption than Assumption \ref{ass::LinearIndependence} which only requires the linear independence of $p(x|y)$, $y\in[M]$. 
Moreover, it is a challenging problem to determine the specific form of $h$ in \cite{lipton2018detecting, tian23a}.

In contrast, our CPMKM matches two class probabilities $p(y)$ and $p_q^w(y)$ defined on the discrete one-dimension label space $\mathcal{Y}$ as in \eqref{eq::equationsystem}, instead of matching two feature probabilities $q(x)$ and $p^w(x)$ as in \eqref{eq::matchX}.
As a result, it is not required to compute the inverse of the Gram matrix for kernel mean matching or apply additional mapping to reduce the dimensionality.
In fact, our CPMKM can estimate the class probability ratio by directly constructing the estimators $\widehat{p}(y)$ and $\widehat{p}_q^w(y)$ and minimizing their $L_2$-distance.

\section{Error Analysis}\label{sec::ErrorAnalysis}

In this section, we begin by conducting an error analysis on the kernel logistic regression, as discussed in Section \ref{sec::analysisKLR}. Specifically, we present the upper bounds for both the approximation error and sample error in subsections \ref{sec::approxerror} and \ref{sec::sampleerror}, respectively. 
Furthermore, we delve into the error analysis for the class probability ratio estimation and excess risk of the CPMKM in the target domain, which can be found in Section \ref{sec::priorestmation} and Section \ref{sec::analysisQresult}, respectively.

\subsection{Error Analysis for CPMKM in the Source Domain} \label{sec::analysisKLR}

\subsubsection{Bounding the Approximation Error} \label{sec::approxerror}

\begin{proposition}\label{prop::approx}
	Let Assumptions \ref{ass:labelshift} and \ref{ass::predictor} hold with the H\"older exponent $\alpha$, the bandwidth of the Gaussian kernel $\gamma \in (0,(2M)^{-1/\alpha})$ and the truncation threshold $t \leq  \gamma^{\alpha}$. Furthermore, let the function space $\mathcal{F}$ be defined as in \eqref{def::SpaceF}. Then there exists an $f_0 \in \mathcal{F}$ such that 
	\begin{align}\label{eq::approxt}
		\mathcal{R}_{L_{\mathrm{CE}},P}(p^t_{f_0}(y|x)) - \mathcal{R}_{L_{\mathrm{CE}},P}^* 
		\lesssim \gamma^{\alpha(1+\beta\wedge 1)}.
	\end{align}
\end{proposition}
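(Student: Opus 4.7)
The plan is to construct $f_0 \in \mathcal{F}$ whose truncated softmax $p^t_{f_0}(\cdot|x)$ is pointwise close to $p(\cdot|x)$, and then to bound the excess CE risk by integrating the pointwise Kullback--Leibler divergence against $P_X$. Since the excess CE risk coincides with $\mathbb{E}_{X\sim P}[\mathrm{KL}(p(\cdot|X)\,\|\,p^t_{f_0}(\cdot|X))]$, I would first invoke the $\chi^2$-type inequality
\begin{align*}
\mathrm{KL}\bigl(p(\cdot|x)\,\|\,p^t_{f_0}(\cdot|x)\bigr)
\leq \sum_{m=1}^M \frac{\bigl(p(m|x) - p^t_{f_0}(m|x)\bigr)^2}{p^t_{f_0}(m|x)},
\end{align*}
which follows from $\log u \leq u-1$ combined with $\sum_m(p(m|x) - p^t_{f_0}(m|x)) = 0$. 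The built-in lower bound $p^t_{f_0}(m|x) \geq t$ from \eqref{eq::etakdelta} controls the denominators, and Assumption~\ref{ass::predictor}\textit{(ii)} absorbs the contribution from regions where $p(m|x)$ is small.

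For the construction, I would apply the standard Gaussian RKHS approximation lemma (Steinwart and Christmann, Theorem 4.36; or Eberts and Steinwart, 2013) to the additively regularized log-odds target $g_m(x) := \log\bigl((p(m|x) + t)/(p(M|x) + t)\bigr)$ for $m \in [M-1]$, setting $f_{0,M} = 0$. A direct calculation shows that the ideal softmax induced by $g = (g_m)_m$ equals $p_g(m|x) = (p(m|x) + t)/(1+Mt)$, hence $|p_g(m|x) - p(m|x)| \leq Mt \lesssim \gamma^\alpha$ uniformly. The RKHS approximation lemma then supplies $f_{0,m} \in H$ with $\|f_{0,m} - g_m\|_\infty \lesssim \gamma^\alpha \|g_m\|_{C^{0,\alpha}}$, and propagating this error through the softmax nonlinearity together with the piecewise truncation in \eqref{eq::etakdelta} (using the monotonicity of $p \mapsto p^t$) produces $p^t_{f_0}(m|x) \geq t$ together with $|p^t_{f_0}(m|x) - p(m|x)| \lesssim \gamma^\alpha$ on the set $\{p(m|x) \gtrsim \gamma^\alpha\}$.

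Given these pointwise estimates, I would split each $\chi^2$-summand for fixed $m$ according to whether $p(m|X) \geq 2c_0\gamma^\alpha$ or not. On the large-probability side, the construction yields $p^t_{f_0}(m|x) \geq p(m|x)/2$, so each summand is at most $2\gamma^{2\alpha}/p(m|x)$; integrating against $P_X$ and using the layer-cake identity $\mathbb{E}_X[\eins\{p(m|X) \geq c\}/p(m|X)] \lesssim c^{(\beta\wedge 1)-1}$ (obtained via integration by parts from the small-value bound, with a logarithmic factor at $\beta = 1$ and a constant for $\beta > 1$) gives a contribution of order $\gamma^{\alpha(1+\beta\wedge 1)}$. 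On the small-probability side, the lower bound $p^t_{f_0}(m|x) \geq t \asymp \gamma^\alpha$ makes each summand at most $\gamma^\alpha$, while $P_X(p(m|X) \leq 2c_0\gamma^\alpha) \lesssim \gamma^{\alpha\beta}$ yields a contribution of order $\gamma^{\alpha(1+\beta)}$. Summing over $m \in [M]$ produces the claimed rate.

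The main technical obstacle is the propagation-of-error step, because the Hölder seminorm of $g_m$ scales like $1/t$, so a direct application of the approximation lemma yields $\|f_{0,m} - g_m\|_\infty \lesssim \gamma^\alpha/t$, which is only $O(1)$ when $t \asymp \gamma^\alpha$; a naive bound on the softmax Jacobian would translate this into a multiplicative factor $e^{O(1)}$ in $p_{f_0}/p_g$ that would destroy the $\gamma^\alpha$ approximation. The resolution hinges on exploiting both the downward truncation $p \mapsto p^t$ and the small-value bound to absorb the approximation error in the low-probability region: the approximation need only be sharp where $p(m|x)$ is not too small, and a careful two-scale argument (clipping $g_m$ at an intermediate level $s > t$ and handling the residual via the small-value bound) should permit the approximation error to be controlled separately on each regime. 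A final verification that the resulting $f_{0,m}$ lies in the Gaussian RKHS $H$ (though its explicit norm is not needed for this proposition) completes the construction.
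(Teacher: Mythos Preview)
Your high-level structure---the $\chi^2$ upper bound on KL, the split according to whether $p(m|x)$ is above or below a multiple of $\gamma^\alpha$, and the layer-cake integral $\mathbb{E}_X[\eins\{p(m|X)\ge c\}/p(m|X)]\lesssim c^{(\beta\wedge1)-1}$---matches the paper exactly (these are Propositions~\ref{prop::riskbound} and~\ref{prop::excessriskupperbpund}). The construction is also of the same flavor: the paper uses the truncation $p^\tau(m|x)$ at level $\tau=\gamma^\alpha$ rather than your additive regularization $p(m|x)+t$, takes log-ratios, and convolves with the Gaussian kernel to land in $H$.

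Where you diverge is in resolving the obstacle you correctly flag. The paper does \emph{not} use a two-scale clipping argument. Instead it proves a pointwise-refined convolution bound: writing $g_m^\tau=\log p^\tau(m|\cdot)$, one has $|K*g_m^\tau(x)-g_m^\tau(x)|\le c_1\gamma^\alpha/p^\tau(m|x)$ (Lemma~\ref{lem::lipg1} and Proposition~\ref{prop::convolution}), so the error in log-odds is only $O(1)$ globally but is genuinely small wherever $p^\tau(m|x)$ is bounded away from zero. The decisive step (Proposition~\ref{prop::etaPapprox}) is then an exact expansion of the softmax difference: one obtains terms of the form
\[
\bigl|\exp(\delta_j-\delta_m)-1\bigr|\cdot p^\tau(m|x)\,p^\tau(j|x),\qquad \delta_m:=\widetilde f_m^\tau-f_m^{*\tau},
\]
and since $|\delta_j-\delta_m|\lesssim\gamma^\alpha\bigl(p^\tau(j|x)^{-1}+p^\tau(m|x)^{-1}\bigr)$, the product collapses to $\gamma^\alpha\bigl(p^\tau(m|x)+p^\tau(j|x)\bigr)\lesssim\gamma^\alpha$. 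This softmax cancellation yields a \emph{uniform} bound $|p(m|x)-p_{\widetilde f^\tau}(m|x)|\le c_2\gamma^\alpha$ on all of $\mathcal X$, with no further splitting needed at the approximation stage.

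Your proposed two-scale clipping at level $s>t$ does not obviously recover the rate: clipping at $s$ gives uniform log-odds error $\gamma^\alpha/s$ and hence probability error $\gamma^\alpha/s$, so the $\chi^2$ contribution on $\{p(m|x)\gtrsim s\}$ integrates to $(\gamma^\alpha/s)^2 s^{(\beta\wedge1)-1}$, which only matches $\gamma^{\alpha(1+\beta\wedge1)}$ when $s\asymp\gamma^\alpha$---precisely the regime where the uniform log-odds error degenerates to $O(1)$. The pointwise bound plus softmax cancellation is what breaks this circularity.
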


Proposition \ref{prop::approx} shows that when $t\leq \gamma^{\alpha}$, there exists a score function $f_0$ such that the approximation error of $p^t_{f_0}(y|x)$ is bounded by $\gamma^{\alpha(1+\beta\wedge 1)}$, which is independent of $t$. 
This shows that a sufficiently small $t$ not only makes the CE loss of $p_{f_0}^t(y|x)$ bounded by $-\log t$, but also ensures the upper bound of the approximation error of $p_{f_0}^t(y |x)$.

\subsubsection{Bounding the Sample Error} \label{sec::sampleerror}

The existing oracle inequalities require either the supremum bound of the loss function, see e.g., Theorem 7.16 in \cite{steinwart2008support}, or the boundedness of the absolute difference of the loss between the estimator and the Bayes function, see e.g., Theorem 7.2 in \cite{steinwart2008support} and Theorem 3.5 in \cite{bos2022convergence}. However, the CE loss is an unbounded loss function that does not satisfy the above two boundedness conditions. 
To cope with the unboundedness of the CE loss, we investigate the truncated conditional probability estimator $p_f^t(y|x)$ as in \eqref{eq::etakdelta}, which is always larger than the threshold $t$ and thus the CE loss of  $p_f^t(y|x)$ is bounded by $-\log t$ for any $f\in \mathcal{F}$. However, since the true probability $p(y|x)$ can be arbitrarily close to zero, its CE loss can be extremely large and  violates the boundedness condition, making the existing oracle inequalities inapplicable.

Therefore, in this paper, to analyze the excess CE risk of the truncated KLR, we decompose the unbounded CE loss into an upper part and a lower part depending on whether the true conditional probability is greater or less than a certain value $\delta \in (0,1)$. More precisely, these two parts of the CE loss are respectively defined as
\begin{align}
	(L^u_{\mathrm{CE}} \circ p_f^t)(x, y) 
	& := L^u_{\mathrm{CE}}(y, p_f^t(\cdot|x)) := \eins\{p(y|x) \geq \delta\}(-\log p^t_f(y|x)),
	\label{eq::Llarge}
	\\
	(L^l_{\mathrm{CE}} \circ p_f^t)(x, y) 
	& := L^l_{\mathrm{CE}}(y, p_f^t(\cdot|x)) 
	:= \eins\{p(y|x) < \delta\}(-\log p^t_f(y|x)).
	\label{eq::Lsmall}
\end{align}
Since the upper part of the CE loss of $p(y|x)$ is bounded by $-\log \delta$, the supremum bound of $L^u_{\mathrm{CE}}$ in \eqref{eq::Llarge} is finite and thus the excess CE risk on this part can be analyzed by applying the concentration inequality to the loss difference $L^u_{\mathrm{CE}} \circ p_f^t - L^u_{\mathrm{CE}} \circ p$ for any $f\in \mathcal{F}$. On the other hand, although the lower part of the CE loss difference $L^l_{\mathrm{CE}} \circ p_f^t - L^l_{\mathrm{CE}} \circ p$
is unbounded due to the unbounded term $L^l_{\mathrm{CE}} \circ p$, we can apply the concentration inequality to the loss of the truncated estimator $L^l_{\mathrm{CE}}\circ p^t_f$ rather than to the loss difference $L^l_{\mathrm{CE}}\circ p^t_f - L^l_{\mathrm{CE}}\circ p$ for analysis on this part. 
As a result, we manage to establish a new oracle inequality with a finite sample error bound for the truncated KLR w.r.t.~the CE loss as presented in the following theorem.

\begin{theorem}\label{thm::oracle}
	Let $f_{D_p}$ be defined as in \eqref{eq::KLR}, the truncated KLR estimator $\widehat{p}(y|x)$ be defined as in \eqref{eq::hatetaP}, and  
	$p^t_{f}(y|x)$ be defined as in \eqref{eq::etakdelta} with the truncation threshold $t < 1/(2M)$. Furthermore, let $\mathcal{F}$ be defined as in \eqref{def::SpaceF}.
	For any $f_0 \in \mathcal{F}$, $\xi \in (0,1/2)$ and $\zeta>0$, there holds  
	\begin{align}\label{eq::orcleKLR}
		&\lambda \|f_{D_p}\|^2_{H} + \mathcal{R}_{L_{\mathrm{CE}}, P}(\widehat{p}(y|x))-\mathcal{R}_{L_{\mathrm{CE}}, P}^* 
		\nonumber\\
		&\lesssim (\lambda\|f_0\|^2_{H} + \mathcal{R}_{L_{\mathrm{CE}}, P}(p^t_{f_0}(y|x)) - \mathcal{R}_{L_{\mathrm{CE}}, P}^*) + (-\log t)\cdot (t^2  + \lambda^{-\xi} \gamma^{-d}  n_p^{-1} + \zeta/n_p)
	\end{align}
	with probability at least $1-4e^{-\zeta}$. 
\end{theorem}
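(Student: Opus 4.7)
The plan is a start-from-optimality oracle inequality argument, supplemented by a nonstandard decomposition of the cross-entropy loss that circumvents its unboundedness. Since $f_{D_p}$ minimizes the regularized empirical CE risk over $\mathcal{F}$, for any $f_0 \in \mathcal{F}$ we first have
\begin{align*}
\lambda \sum_{j=1}^{M-1}\|f_{D_p,j}\|_H^2 + \mathcal{R}_{L_{\mathrm{CE}},D_p}(\widehat{p}(y|x)) \leq \lambda \sum_{j=1}^{M-1}\|f_{0,j}\|_H^2 + \mathcal{R}_{L_{\mathrm{CE}},D_p}(p_{f_0}^t(y|x)).
\end{align*}
Adding and subtracting the population CE risks rewrites the excess risk of $\widehat{p}$ as the sum of the excess risk of $p_{f_0}^t$, the norm-penalty difference, and two empirical-process deviations of the form $(\mathcal{R}_{L_{\mathrm{CE}},P} - \mathcal{R}_{L_{\mathrm{CE}},D_p})(p_f^t)$. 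Since the regularizer controls the RKHS norm by $\sum_j \|f_{D_p,j}\|_H^2 \lesssim \lambda^{-1}(-\log t)$, these deviations can be bounded uniformly over the ball $B_H(R)$ with $R \asymp \lambda^{-1/2}(\log(1/t))^{1/2}$, after which a standard peeling argument yields a localized bound.

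To handle the unboundedness of $-\log(\cdot)$, the key idea is to split the loss at some threshold $\delta \in (t,1)$ into the upper and lower parts defined in \eqref{eq::Llarge}--\eqref{eq::Lsmall} and treat them asymmetrically. On the upper part, both $-\log p_f^t(y|x)$ and $-\log p(y|x)$ are bounded by $-\log t$, so the loss difference $L^u_{\mathrm{CE}} \circ p_f^t - L^u_{\mathrm{CE}} \circ p$ is uniformly bounded and admits a variance bound proportional to its expectation. A Bernstein/Talagrand-type concentration inequality applied uniformly over $\{p_f^t : f \in B_H(R)\}$, combined with the entropy estimate for balls in the Gaussian-kernel RKHS that yields an $O(\gamma^{-d})$ covering exponent up to a $\lambda^{-\xi}$ slack, produces the deviation contribution of order $(-\log t)\cdot (\lambda^{-\xi} \gamma^{-d} n_p^{-1} + \zeta/n_p)$ appearing in \eqref{eq::orcleKLR}.

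On the lower part, $L^l_{\mathrm{CE}} \circ p$ is unbounded because $p(y|x)$ can be arbitrarily small, so concentration cannot be applied to the loss difference. Instead, concentration will be applied directly to $L^l_{\mathrm{CE}} \circ p_f^t$, which is bounded by $\eins\{p(y|x) < \delta\}\cdot(-\log t)$, while the deterministic piece $\mathbb{E}_P[L^l_{\mathrm{CE}} \circ p_f^t]$ is crudely bounded by $(-\log t)\cdot P_X(p(y|X) < \delta)$. Choosing $\delta$ of order $t$ yields the residual $(-\log t)\cdot t^2$ contribution in \eqref{eq::orcleKLR}, while the counterpart $\mathbb{E}_P[L^l_{\mathrm{CE}} \circ p]$ appears in the Bayes risk and is absorbed accordingly. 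Assembling the two parts via the standard quadratic-inversion trick $ab \leq \epsilon a^2 + b^2/(4\epsilon)$ to convert a $\sqrt{\text{variance}/n}$-type bound into the claimed form then closes the argument.

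The main obstacle is the lower-part bookkeeping: applying concentration to the raw truncated loss rather than to the loss difference sacrifices the variance/excess-risk self-bounding that usually delivers sharp oracle inequalities, so special care is needed to keep the leading deviation at the $O(\lambda^{-\xi}\gamma^{-d}/n_p)$ rate while ensuring that the extra $t^2$ contribution does not dominate once $t$ is later chosen polynomially small. A secondary technical point is establishing a covering-number bound for the class $\{p_f^t(y|x) : f \in B_H(R)\}$, since the truncation map \eqref{eq::etakdelta} is only piecewise smooth; however, Lipschitz continuity of this map composed with the softmax reduces the entropy estimate to the classical one for Gaussian RKHS balls.
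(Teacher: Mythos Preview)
Your proposal is essentially the paper's proof: both start from the empirical optimality of $f_{D_p}$, split $L_{\mathrm{CE}}$ at a threshold $\delta$ into the upper and lower parts \eqref{eq::Llarge}--\eqref{eq::Lsmall}, apply Bernstein/Talagrand with peeling to the \emph{loss difference} $L^u_{\mathrm{CE}}\circ p_f^t - L^u_{\mathrm{CE}}\circ p$ on the upper part (where both terms are bounded) and to the \emph{raw truncated loss} $L^l_{\mathrm{CE}}\circ p_f^t$ on the lower part (the unbounded $L^l_{\mathrm{CE}}\circ p$ cancels between the $f_0$ and $f_{D_p}$ deviations, which is what you call ``absorbed in the Bayes risk''), and use the Gaussian-RKHS entropy bound to produce the $\lambda^{-\xi}\gamma^{-d}n_p^{-1}$ complexity term.

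One arithmetic slip: your bound $\mathbb{E}_P[L^l_{\mathrm{CE}}\circ p_f^t]\leq(-\log t)\,P(p(Y|X)<\delta)\leq (-\log t)\,M\delta$ is correct, but choosing $\delta\asymp t$ gives a residual $(-\log t)\cdot t$, not $(-\log t)\cdot t^2$. The paper takes $\delta=t^2$ (Lemma~\ref{lem::Epsmall} gives $\mathbb{E}_P(L^l_{\mathrm{CE}}\circ p_f^t)\leq -M\delta\log t$), which yields the stated $t^2$ term; since $-\log(t\delta)=-3\log t$ this only changes constants in the upper-part sup bound.
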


Theorem \ref{thm::oracle} shows that the excess CE risk of $\widehat{p}(y|x)$ is bounded by the sum of the approximation error and sample error, which correspond to the two terms on the right-hand side of \eqref{eq::orcleKLR}.
Since the approximation error bound is presented by Proposition \ref{prop::approx}, the excess CE risk of $\widehat{p}(y|x)$ can be obtained in Theorem \ref{thm::convergencerate} by using Theorem \ref{thm::oracle}.
Furthermore, \eqref{eq::orcleKLR} implies that the truncation on the conditional probability estimator is necessary for a finite sample error bound w.r.t.~the CE loss.
Moreover, the sample error increases with $-\log t$, which implies that a larger truncation threshold $t$ yields a smaller sample error bound.

It is worth noting that the oracle inequality established in \cite[Theorem 3.5]{bos2022convergence} for the truncated CE risk in \eqref{eq::trunrisk} can not be generalized to the CE risk. 
Since their sample error bound grows linearly with the truncation threshold $B$, the sample error bound becomes infinite for the CE risk. 
By contrast, the oracle inequality in Theorem \ref{thm::oracle} is established w.r.t.~the CE loss, which is essentially different from that in \cite{bos2022convergence}.

\subsection{Error Analysis of CPMKM for Class Probability Ratio Estimation}
\label{sec::priorestmation}

The following theorem demonstrates the uniqueness of the solution to our class probability matching on label $Y$.

\begin{theorem}[\textbf{Identifiability}] \label{lem::hold}
	Let Assumptions \ref{ass:labelshift} and \ref{ass::LinearIndependence} hold. Moreover, let the probability ratio $w^*(y)$ be as in \eqref{eq::wstar}. Then the equation system \eqref{eq::equationsystem} holds if and only if $w(y) = w^*(y)$, $y\in [M]$.
\end{theorem}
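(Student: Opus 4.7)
The plan is to handle the two directions of the equivalence separately: the ``if'' direction by direct substitution, and the ``only if'' direction via the equality case of Cauchy--Schwarz combined with Assumption~\ref{ass::LinearIndependence}. For the ``if'' direction, I would plug $w = w^*$ into the right-hand side of \eqref{eq::equationsystem}. Bayes' formula together with label shift gives, as essentially already observed in the derivation of \eqref{eq::repre},
\begin{align*}
	\sum_{m=1}^M w^*(m) p(m|x)
	= \sum_{m=1}^M \frac{q(m)}{p(m)} \cdot \frac{p(m) p(x|m)}{p(x)}
	= \frac{1}{p(x)} \sum_{m=1}^M q(m) q(x|m)
	= \frac{q(x)}{p(x)},
\end{align*}
so the integrand in \eqref{eq::equationsystem} collapses to $p(y|x) p(x)$, whose integral over $\mathcal{X}$ is $p(y)$ by the law of total probability.

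For the ``only if'' direction, suppose $w$ satisfies \eqref{eq::equationsystem}; implicitly I take $p(m) > 0$ for all $m$ (Assumption~\ref{ass::marginal}~(i)) so that $w^*$ is well defined. The key device is the auxiliary function
\begin{align*}
	\tilde q(x) := p(x) \sum_{m=1}^M w(m) p(m|x) = \sum_{m=1}^M w(m) p(m)\, q(x|m),
\end{align*}
where the second equality uses label shift. Two scalar identities follow by pairing \eqref{eq::equationsystem} against suitable weights: multiplying by $w(y)$ and summing over $y$ collapses the numerator to $\sum_m w(m) p(m|x)$ and yields $\sum_y w(y) p(y) = 1$, i.e.\ $\int \tilde q\, dx = 1$; multiplying instead by $w^*(y)$ and summing, and then using $\sum_m w^*(m) p(m|x) = q(x)/p(x)$ from the ``if'' calculation, yields $\int q(x)^2/\tilde q(x)\, dx = 1$. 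Cauchy--Schwarz then gives
\begin{align*}
	1 = \Big(\int q(x)\, dx\Big)^2
	= \Big(\int \tfrac{q(x)}{\sqrt{\tilde q(x)}} \cdot \sqrt{\tilde q(x)}\, dx\Big)^2
	\leq \int \frac{q(x)^2}{\tilde q(x)}\, dx \cdot \int \tilde q(x)\, dx = 1,
\end{align*}
so equality holds in Cauchy--Schwarz, forcing $q = \tilde q$ almost everywhere. Combining with $q(x) = \sum_m w^*(m) p(m) q(x|m)$ I obtain $\sum_{m=1}^M (w(m) - w^*(m))\, p(m)\, q(x|m) = 0$ for a.e.\ $x$, and Assumption~\ref{ass::LinearIndependence} then forces $w(m) = w^*(m)$ for every $m \in [M]$.

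The main obstacle I anticipate is measure-theoretic bookkeeping around the set $\{\tilde q = 0\}$: rewriting and integrating against $1/\tilde q$ tacitly requires $\tilde q > 0$ wherever $q > 0$, which is enforced a posteriori by the finiteness $\int q^2/\tilde q = 1$, and the Cauchy--Schwarz equality case delivers $q = c\tilde q$ almost everywhere with $c = 1$ from normalization. After these routine technicalities, the argument is essentially algebraic, and Assumption~\ref{ass::LinearIndependence} is applied in the standard almost-everywhere sense appropriate to densities.
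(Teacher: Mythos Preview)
Your proof is correct. The ``if'' direction matches the paper's essentially verbatim. For the ``only if'' direction, the paper takes a somewhat different route: it rewrites \eqref{eq::equationsystem} via Bayes' formula into
\[
\int_{\mathcal{X}} p(x|y)\,\frac{\sum_{m}(q(m)-w(m)p(m))\,p(x|m)}{\sum_{m} w(m)p(m)\,p(x|m)}\,dx = 0,
\]
then multiplies by the error weight $(q(y)-w(y)p(y))$ and sums over $y$ to obtain directly
\[
\int_{\mathcal{X}} \frac{\bigl(\sum_{m}(q(m)-w(m)p(m))\,q(x|m)\bigr)^2}{\sum_{m} w(m)p(m)\,q(x|m)}\,dx = 0,
\]
whence the nonnegative integrand vanishes and Assumption~\ref{ass::LinearIndependence} finishes. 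In your notation this integral is exactly $\int (q-\tilde q)^2/\tilde q\,dx$, so both arguments land on the same pointwise identity $q=\tilde q$; you reach it via the equality case of Cauchy--Schwarz after extracting the two normalizations $\int \tilde q = 1$ and $\int q^2/\tilde q = 1$ (by testing \eqref{eq::equationsystem} against $w$ and $w^*$ respectively), whereas the paper manufactures a perfect square by testing against the residual $w^*-w$. Your packaging is arguably more conceptual, isolating the conclusion as an extremal inequality, while the paper's is slightly more direct and avoids invoking the Cauchy--Schwarz equality case and the attendant a.e.\ bookkeeping on $\{\tilde q=0\}$ that you flag.
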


The following Proposition \ref{prop::decompweighterror} shows the upper bound of $L_2$-norm error of class probability ratio estimation is associated with the excess risk of KLR.

\begin{proposition}\label{prop::decompweighterror}
	Let Assumptions \ref{ass:labelshift}, \ref{ass::marginal}, \ref{ass::LinearIndependence} and \ref{ass::regularity} hold. Furthermore, let the conditional probability estimator $\widehat{p}(y|x)$ be as in \eqref{eq::hatetaP}. 
	Moreover, let $w^*:=(w^*(y))_{y\in [M]}$ be as in \eqref{eq::wstar} and
	$\widehat{w}$ be the solution to \eqref{eq::whatminimizer}. 
	Then with probability at least $1-1/n_p-1/n_q$, there holds
	\begin{align*}%\label{eq::decompweighterrorOur}
		\|\widehat{w} - w^*\|_2^2
		\lesssim \mathcal{R}_{L_{\mathrm{CE}},P}(\widehat{p}(y|x)) - \mathcal{R}_{L_{\mathrm{CE}},P}^* + \log n_q / n_q + \log n_p / n_p .
	\end{align*} 
\end{proposition}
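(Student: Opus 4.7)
The plan is to combine three ingredients: optimality of $\widehat{w}$, concentration of the empirical class-probability estimates, and a quantitative identifiability statement that converts closeness of $\widehat{p}_q^{\widehat{w}}$ to $p$ into closeness of $\widehat{w}$ to $w^*$. Write $A(w) := \sum_{y=1}^M (\widehat{p}(y) - \widehat{p}_q^w(y))^2$. Since $w^* \geq 0$ is feasible in \eqref{eq::whatminimizer}, the optimality of $\widehat{w}$ gives $A(\widehat{w}) \leq A(w^*)$. Using $p(y) = p_q^{w^*}(y)$ from Theorem \ref{lem::hold} and the triangle inequality, $A(w^*) \leq 2 \sum_y (\widehat{p}(y) - p(y))^2 + 2 \sum_y (\widehat{p}_q^{w^*}(y) - p_q^{w^*}(y))^2$, so the whole argument reduces to controlling these two concentration terms and then inverting a Jacobian to recover $\|\widehat{w} - w^*\|_2$.

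The source-side term is handled by Hoeffding's inequality coordinatewise, yielding $\sum_y (\widehat{p}(y) - p(y))^2 \lesssim \log n_p / n_p$ with probability at least $1 - 1/n_p$. For the target-side term I would introduce the intermediate quantity $\widetilde{p}_q^{w^*}(y) := n_q^{-1} \sum_i p(y|X_i) / \sum_m w^*(m) p(m|X_i)$. Assumption \ref{ass::marginal} \textit{(iii)} together with \eqref{eq::repre} gives $\sum_m w^*(m) p(m|x) = q(x)/p(x) \geq \underline{c}$, so the summand is uniformly bounded and Hoeffding delivers $|\widetilde{p}_q^{w^*}(y) - p_q^{w^*}(y)|^2 \lesssim \log n_q / n_q$. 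The plug-in gap $\widehat{p}_q^{w^*}(y) - \widetilde{p}_q^{w^*}(y)$ is bounded pointwise via the identity $|a/b - c/d| \leq |a - c|/d + |a|\,|b - d|/(bd)$: the denominator lower bounds from Assumption \ref{ass::regularity} and Assumption \ref{ass::marginal} \textit{(iii)} reduce each summand to a constant multiple of $\|\widehat{p}(\cdot|X_i) - p(\cdot|X_i)\|_1$. Under label shift, $q(x)/p(x) = \sum_y w^*(y) p(y|x) \leq \|w^*\|_\infty < \infty$ (finite because $p(y) > 0$ for all $y \in [M]$ by Assumption \ref{ass::marginal} \textit{(i)}), so $Q_X$-expectations are dominated by $P_X$-expectations, and Pinsker's inequality $\|\widehat{p}(\cdot|x) - p(\cdot|x)\|_1^2 \leq 2 D_{\mathrm{KL}}(p(\cdot|x) \| \widehat{p}(\cdot|x))$ identifies the resulting $P_X$-expectation with twice the excess CE risk $\mathcal{R}_{L_{\mathrm{CE}},P}(\widehat{p}(y|x)) - \mathcal{R}_{L_{\mathrm{CE}},P}^*$.

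To invert back to $\|\widehat{w} - w^*\|_2^2$ I would differentiate $w \mapsto p_q^w(y)$ at $w^*$ and obtain $J_{y,k} = -\mathbb{E}_Q[p(y|X) p(k|X) / (\sum_m w^*(m) p(m|X))^2] = -\mathbb{E}_P[p(y|X) p(k|X) p(X)/q(X)]$, a weighted Gram matrix of the functions $\{p(\cdot|y)\}_{y\in[M]}$. Assumption \ref{ass::LinearIndependence} on $\{q(\cdot|y)\}_y = \{p(\cdot|y)\}_y$ transfers to linear independence of $\{p(y|\cdot)\}_y$ through $p(x|y) = p(y|x) p(x)/p(y)$ together with $p(y) > 0$, so $J$ is strictly negative definite. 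A first-order Taylor expansion then yields $\sum_y (p_q^w(y) - p_q^{w^*}(y))^2 \gtrsim \|w - w^*\|_2^2$ on a neighborhood $U$ of $w^*$. Taking $w = \widehat{w}$ and decomposing $p_q^{\widehat{w}}(y) - p_q^{w^*}(y) = [p_q^{\widehat{w}}(y) - \widehat{p}_q^{\widehat{w}}(y)] + [\widehat{p}_q^{\widehat{w}}(y) - \widehat{p}(y)] + [\widehat{p}(y) - p(y)]$, the middle piece is controlled by $A(\widehat{w}) \leq A(w^*)$ and the outer two by the same concentration arguments as above, closing the loop.

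The hardest step is the local-to-global upgrade: the Jacobian argument only certifies the quadratic lower bound on the neighborhood $U$, so I must verify that $\widehat{w}$ actually lies in $U$ for $n_p \wedge n_q$ large enough. I would obtain this by a bootstrap: the bound $A(\widehat{w}) \leq A(w^*) \to 0$ combined with continuity of $w \mapsto p_q^w$ and identifiability (Theorem \ref{lem::hold}) forces $\widehat{w} \to w^*$ in probability, placing $\widehat{w}$ inside any prescribed $U$ with high probability for sufficiently large samples. Assumption \ref{ass::regularity} supports every step above by keeping the denominators $\sum_m \widehat{w}(m) \widehat{p}(m|x)$ and $\sum_m w^*(m) \widehat{p}(m|x)$ uniformly bounded below, so the Lipschitz constants appearing in the plug-in arguments stay uniformly controlled.
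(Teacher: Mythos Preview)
Your overall structure---optimality of $\widehat w$, concentration of $\widehat p(y)$, plug-in control via the excess CE risk, and an invertibility step---matches the paper's. The concentration and plug-in pieces are handled essentially the same way (the paper uses Bernstein rather than Hoeffding, and a direct inequality $\|\widehat p(\cdot|x)-p(\cdot|x)\|_2^2\le D_{\mathrm{KL}}$ in Lemma~\ref{lem::equilvalent} rather than Pinsker, but these are cosmetic).

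The substantive difference is the invertibility step. You linearize $w\mapsto p_q^w$ at $w^*$ via a first-order Taylor expansion and then need a local-to-global bootstrap to place $\widehat w$ near $w^*$. The paper instead exhibits an \emph{exact} linear identity (Lemma~\ref{lem::weightsigma}): writing $q(x)=\sum_m w^*(m)p(m)p(x|m)$ and using $\frac{q(x)}{\sum_m \widehat w(m)p(m)p(x|m)}-1=\frac{\sum_m(w^*(m)-\widehat w(m))p(m)p(x|m)}{\sum_m \widehat w(m)p(m)p(x|m)}$, one obtains
\[
p_q^{\widehat w}(y)-p(y)=\sum_{j=1}^M c_{yj}\,(w^*(j)-\widehat w(j)),\qquad c_{yj}=\int_{\mathcal X}\frac{p(j)p(y)\,q(x|j)q(x|y)}{\sum_m \widehat w(m)p(m)q(x|m)}\,dx,
\]
valid for \emph{every} $\widehat w$, not just those near $w^*$. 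The matrix $\mathcal C=(c_{yj})$ is symmetric positive definite under Assumption~\ref{ass::LinearIndependence}, and its minimum eigenvalue is bounded below by $(p_{\min}/2)\,\sigma_{\min}(\Sigma_p)$ with $\Sigma_p=\mathbb E_{x\sim p}[p(\cdot|x)p(\cdot|x)^\top]$, using only the crude upper bound $\sum_m \widehat w(m)p(m|x)\le \widehat w_{\max}\le 2/p_{\min}$. This gives $\|\widehat w-w^*\|_2\lesssim\|p_q^{\widehat w}-p\|_2$ globally, so no bootstrap is needed.

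Your bootstrap sketch is the one place with a genuine gap. From $A(\widehat w)\to 0$ and identifiability you want $\widehat w\to w^*$, but the feasible set $\{w\ge 0\}$ is not compact, and ``continuity plus uniqueness of the solution'' does not by itself yield continuity of the inverse on an unbounded domain. You would need either an a~priori bound on $\|\widehat w\|$ (e.g.\ a coercivity argument: $\widehat p_q^{w}(y)\to 0$ as $\|w\|\to\infty$ while $\widehat p_q^{\widehat w}\approx\widehat p>0$ forces $\widehat w$ bounded) or the paper's exact identity, which sidesteps the issue entirely. The exact-identity route is both shorter and cleaner; replacing the Taylor-plus-bootstrap step with it would close your argument.
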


\subsection{Error Analysis for CPMKM in the Target Domain} \label{sec::analysisQresult}

To derive the excess risk of $\widehat{q}(y|x)$ in \eqref{eq::hatetaQ}, let us define 
\begin{align}\label{eq::tildeetaQ}
	\widetilde{q}(m|x)
	= \frac{w^*(m) \widehat{p}(m|x)}{\sum_{j=1}^M  w^*(j) \widehat{p}(j|x)}.
\end{align}
Then we are able to make the error decomposition for the excess risk of $\widehat{q}(y|x)$ as 
\begin{align}\label{eq::ExcessRiskQdecomp}
	\begin{split}
		\mathcal{R}_{L_{\mathrm{CE}},Q}(\widehat{q}(y|x)) - \mathcal{R}_{L_{\mathrm{CE}},Q}^* 
		& \leq
		\bigl| \mathcal{R}_{L_{\mathrm{CE}},Q}(\widehat{q}(y|x)) - \mathcal{R}_{L_{\mathrm{CE}},Q}(\widetilde{q}(y|x)) \bigr|
		\\
		& \phantom{=}
		+ \mathcal{R}_{L_{\mathrm{CE}},Q}(\widetilde{q}(y|x)) - \mathcal{R}_{L_{\mathrm{CE}},Q}^*. 
	\end{split}
\end{align}

The following Propositions \ref{prop::excesshattildediff} and \ref{prop::excesstildetruediff} provide the upper bound of these two terms in the right-hand side of \eqref{eq::ExcessRiskQdecomp}, respectively.

\begin{proposition}\label{prop::excesshattildediff}
	Let Assumptions \ref{ass:labelshift} hold. Moreover, let $\widehat{q}(y|x)$ and $\widetilde{q}(y|x)$ be defined as in \eqref{eq::hatetaQ} and \eqref{eq::tildeetaQ}, respectively. Then we have 
	\begin{align*}
		\big|\mathcal{R}_{L_{\mathrm{CE}},Q}(\widehat{q}(y|x))-\mathcal{R}_{L_{\mathrm{CE}},Q}(\widetilde{q}(y|x))\big|  
		\lesssim \mathcal{R}_{L_{\mathrm{CE}},P}(\widehat{p}(y|x)) - \mathcal{R}_{L_{\mathrm{CE}},P}^* + \|w^* - \widehat{w}\|_2^2.
	\end{align*}
\end{proposition}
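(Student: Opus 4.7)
The plan is to exploit the fact that $\widehat{q}(y|x)$ and $\widetilde{q}(y|x)$ share the same functional form, differing only in the weight vector used. Define $g(w) := \mathcal{R}_{L_{\mathrm{CE}},Q}(q_w)$, where
\begin{align*}
q_w(y|x) := \frac{w(y)\widehat{p}(y|x)}{Z_w(x)}, \qquad Z_w(x) := \sum_m w(m)\widehat{p}(m|x),
\end{align*}
so that $\mathcal{R}_{L_{\mathrm{CE}},Q}(\widehat{q}) - \mathcal{R}_{L_{\mathrm{CE}},Q}(\widetilde{q}) = g(\widehat{w}) - g(w^*)$. A direct computation yields $g(w) = -\sum_y q(y)\log w(y) + \mathbb{E}_{X\sim Q_X}[\log Z_w(X)] + C_0$, where $C_0$ is independent of $w$. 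Since $g$ is smooth on the positive orthant, a second-order Taylor expansion around $w^*$ gives
\begin{align*}
g(\widehat{w}) - g(w^*) = \nabla g(w^*)^{\top}(\widehat{w}-w^*) + \tfrac{1}{2}(\widehat{w}-w^*)^{\top}\nabla^2 g(\xi)(\widehat{w}-w^*)
\end{align*}
for some $\xi$ on the segment between $w^*$ and $\widehat{w}$, and the task reduces to controlling each Taylor term.

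For the gradient, differentiation gives $(\nabla g(w^*))_k = -q(k)/w^*(k) + \mathbb{E}_{X\sim Q_X}[\widehat{p}(k|X)/Z_{w^*}(X)]$. Using $q(k) = w^*(k)p(k)$ the first term simplifies to $-p(k)$, and the identifiability result (Theorem \ref{lem::hold}) yields the exact matching identity $p(k) = \mathbb{E}_{X\sim Q_X}[p(k|X)/\sum_m w^*(m)p(m|X)]$. Subtracting these two expressions exhibits the gradient as a pure ``plug-in error'':
\begin{align*}
(\nabla g(w^*))_k = \mathbb{E}_{X\sim Q_X}\Biggl[\frac{\widehat{p}(k|X)}{\sum_m w^*(m)\widehat{p}(m|X)} - \frac{p(k|X)}{\sum_m w^*(m)p(m|X)}\Biggr].
\end{align*}
Writing this difference over a common denominator and applying the triangle inequality, together with the lower bound $\sum_m w^*(m)\widehat{p}(m|X)\geq c_R$ from Assumption \ref{ass::regularity} and $\sum_m w^*(m)p(m|X) = q(X)/p(X) \geq \underline{c}$ from Assumption \ref{ass::marginal}\,(iii), I obtain $|(\nabla g(w^*))_k| \lesssim \mathbb{E}_{X\sim Q_X}[\|\widehat{p}(\cdot|X) - p(\cdot|X)\|_1]$.

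Then I apply Pinsker's inequality $\|\widehat{p}(\cdot|x) - p(\cdot|x)\|_1 \leq \sqrt{2\,\mathrm{KL}(p(\cdot|x)\|\widehat{p}(\cdot|x))}$ and Jensen's inequality, and convert the $Q_X$-expectation into a $P_X$-expectation using the label-shift consequence $q(x)/p(x) = \sum_y w^*(y)p(y|x) \leq 1/\min_y p(y)$, to arrive at $\|\nabla g(w^*)\|_2 \lesssim \sqrt{\mathcal{R}_{L_{\mathrm{CE}},P}(\widehat{p}(y|x)) - \mathcal{R}_{L_{\mathrm{CE}},P}^*}$. Cauchy--Schwarz followed by the AM--GM inequality $ab \leq \tfrac{1}{2}a^2 + \tfrac{1}{2}b^2$ bounds the first-order Taylor contribution by $\mathcal{R}_{L_{\mathrm{CE}},P}(\widehat{p}(y|x)) - \mathcal{R}_{L_{\mathrm{CE}},P}^* + \|\widehat{w}-w^*\|_2^2$. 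For the quadratic term, $\partial^2 g/(\partial w(k)\partial w(l)) = q(k)\delta_{kl}/w(k)^2 - \mathbb{E}_X[\widehat{p}(k|X)\widehat{p}(l|X)/Z_w(X)^2]$, and under the uniform lower bounds on the coordinates of $w$ and on $Z_w$ along the segment $[w^*,\widehat{w}]$, the operator norm of the Hessian is $O(1)$, so this term is $\lesssim \|\widehat{w}-w^*\|_2^2$ as well. Adding the two pieces delivers the claimed bound.

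The main obstacle is the gradient estimate: I must use the exact matching identity of Theorem \ref{lem::hold} to cancel the leading $O(1)$ pieces of $\nabla g(w^*)$, leaving only a difference of the form $\widehat{p}-p$ which can be turned into a square root of the source-domain excess CE risk via Pinsker's inequality, and then I must transfer the resulting expectation from $Q_X$ to $P_X$ via the label-shift-induced ratio bound on $q/p$. A subsidiary technical issue is justifying that the lower bounds on $w^*(y), \widehat{w}(y)$ and on $Z_w$ hold uniformly along the interpolating segment, so that the Taylor remainder is meaningful; this is where Assumptions \ref{ass::marginal} and \ref{ass::regularity} enter and where the finite-sample arguments of Proposition \ref{prop::decompweighterror} are implicitly invoked.
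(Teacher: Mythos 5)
Your reduction to a Taylor expansion of $g(w):=\mathcal{R}_{L_{\mathrm{CE}},Q}(q_w)$ in the weight variable is a genuinely different route from the paper, which never expands in $w$ but instead works pointwise in $x$ on the conditional probabilities, using the mean-value bound of Lemma \ref{lem::meanvalue} together with the comparisons \eqref{eq::etaQhattildediff} and \eqref{eq::tildeetaQdiff}. For target classes with $q(k)>0$ your first-order analysis (cancellation via the identity of Theorem \ref{lem::hold}, then Pinsker/Jensen, the ratio bound $q(x)/p(x)\le 1/\min_m p(m)$, and Cauchy--Schwarz plus AM--GM) is sound and plays the same role as the simplex-constraint cancellation inside Lemma \ref{lem::meanvalue}. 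The genuine gap is the case $q(k)=0$, which the paper explicitly allows (see the remark following Assumption \ref{ass::marginal} and the experimental setup, where many target classes have zero mass): there $w^*(k)=0$, the simplification $-q(k)/w^*(k)=-p(k)$ is a $0/0$ step, and in fact $(\nabla g(w^*))_k=\mathbb{E}_{X\sim Q_X}\bigl[\widehat{p}(k|X)/Z_{w^*}(X)\bigr]$, which by the very matching identity you invoke equals $p(k)$ up to a plug-in error --- an $O(1)$ quantity with nothing to cancel it. The first-order contribution in such a coordinate is then of order $p(k)\,|\widehat{w}(k)-w^*(k)|$, i.e.\ linear rather than quadratic in the weight error, and no AM--GM step converts it into the claimed $\|w^*-\widehat{w}\|_2^2$. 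This is not an artifact of the expansion: when $q(\cdot|x)$ lies on the boundary of the simplex, the KL divergence to a perturbed distribution is locally linear, so with $\widehat{p}=p$ and $\widehat{w}(k)=\varepsilon>0$ the left-hand side genuinely scales like $\varepsilon$, not $\varepsilon^2$. Your argument therefore only establishes the proposition when all coordinates of $w^*$ are bounded away from zero, whereas the paper's proof uses $q(m|x)=0$ to drop those classes from the target risk and controls the remaining ones pointwise.

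A second, smaller gap is the Hessian step. You assert that $\nabla^2 g$ has $O(1)$ operator norm along the segment $[w^*,\widehat{w}]$, but its diagonal contains $q(k)/w(k)^2$, and none of the cited assumptions lower-bounds the individual coordinates $\widehat{w}(k)$: Assumption \ref{ass::regularity} only controls the weighted sums $\sum_m\widehat{w}(m)\widehat{p}(m|x)$ (which, by convexity in $w$, does give $Z_w(x)\ge c_R$ along the segment --- that part of your argument is fine). If $\widehat{w}(k)$ is close to zero for a class with $q(k)>0$, the quadratic-form bound degenerates, so the claim needs an additional argument (a per-coordinate lower bound on $\widehat{w}$, or a different organization of the remainder) that the proposal does not supply. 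Finally, note that Proposition \ref{prop::excesshattildediff} is stated and proved as a deterministic inequality; implicitly invoking the finite-sample, high-probability arguments of Proposition \ref{prop::decompweighterror} to control the segment would change the nature of the statement.
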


\begin{proposition}\label{prop::excesstildetruediff}
	Let Assumptions \ref{ass:labelshift} hold. Moreover, let $\widetilde{q}(y|x)$ and $\widehat{p}(y|x)$ be as in \eqref{eq::tildeetaQ} and \eqref{eq::hatetaP}, respectively. Then we have 
	\begin{align*}
		\mathcal{R}_{L_{\mathrm{CE}},Q}(\widetilde{q}(y|x))-\mathcal{R}_{L_{\mathrm{CE}},Q}^*
		\lesssim \mathcal{R}_{L_{\mathrm{CE}},P}(\widehat{p}(y|x))-\mathcal{R}_{L_{\mathrm{CE}},P}^*.
	\end{align*}
\end{proposition}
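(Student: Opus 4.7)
My plan is to establish the pointwise inequality
$$\sum_y q(y|x) \log\frac{q(y|x)}{\widetilde{q}(y|x)} \leq \frac{\|w^*\|_\infty}{A(x)} \sum_y p(y|x) \log\frac{p(y|x)}{\widehat{p}(y|x)},$$
where $A(x) := \sum_m w^*(m) p(m|x)$, and then integrate against $Q_X$ so that the label-shift identity $q(x) = A(x) p(x)$ (from \eqref{eq::repre}) causes the $A(x)$ factor to cancel exactly, leaving $\|w^*\|_\infty$ times the source-domain excess CE risk.

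To set up the pointwise bound, I would first rewrite the target-domain excess CE risk as $\mathbb{E}_{X \sim Q}\bigl[\sum_y q(y|X) \log(q(y|X)/\widetilde{q}(y|X))\bigr]$, i.e., as an expected KL divergence. Plugging in the shared tilt form $q(y|x) = w^*(y) p(y|x)/A(x)$ (from \eqref{eq::etaPshifttoQ}) and $\widetilde{q}(y|x) = w^*(y) \widehat{p}(y|x)/\widehat{A}(x)$ with $\widehat{A}(x) := \sum_m w^*(m) \widehat{p}(m|x)$, the common factor $w^*(y)$ cancels inside the logarithm, yielding
\begin{align*}
\sum_y q(y|x) \log\frac{q(y|x)}{\widetilde{q}(y|x)} = \frac{1}{A(x)} \sum_y w^*(y) p(y|x) \log\frac{p(y|x)}{\widehat{p}(y|x)} + \log\frac{\widehat{A}(x)}{A(x)}.
\end{align*}
To dominate the $w^*$-weighted sum, I would split $w^*(y) = \|w^*\|_\infty - (\|w^*\|_\infty - w^*(y))$; the leading piece contributes $\|w^*\|_\infty \sum_y p(y|x) \log(p(y|x)/\widehat{p}(y|x))$. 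The residual carries nonnegative coefficients $\|w^*\|_\infty - w^*(y) \geq 0$ and admits the elementary bound $p(y|x) \log(p(y|x)/\widehat{p}(y|x)) \geq p(y|x) - \widehat{p}(y|x)$ (a rewriting of $\log t \geq 1 - 1/t$); summing in $y$ produces the correction $A(x) - \widehat{A}(x)$. After dividing by $A(x)$ this combines with $\log(\widehat{A}(x)/A(x))$ into $1 - \widehat{A}(x)/A(x) + \log(\widehat{A}(x)/A(x)) \leq 0$ by the companion inequality $\log t \leq t - 1$, and can therefore be discarded. Integrating the resulting pointwise bound against $q(x)\,dx$ and substituting $q(x) = A(x) p(x)$ then finishes the argument.

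The main obstacle is this splitting step, since the $w^*$-weighted sum is not itself a KL divergence and naively pulling out $\|w^*\|_\infty$ fails because individual log-ratios $\log(p(y|x)/\widehat{p}(y|x))$ can have either sign. The crucial observation is that the residual lower bound from $\log t \geq 1 - 1/t$ produces exactly the correction $A(x) - \widehat{A}(x)$, which meshes with the normalizing term $\log(\widehat{A}(x)/A(x))$ to give a nonpositive quantity via $\log t \leq t - 1$, so it drops out cleanly. It is worth noting that, because of the exact cancellation of $A(x)$ under the final integration, no further assumption beyond Assumption \ref{ass:labelshift} (such as a lower bound on $A$ from Assumption \ref{ass::marginal}) is actually needed.
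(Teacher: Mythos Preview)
Your proposal is correct and in fact proceeds along a cleaner route than the paper. Both arguments hinge on the same pointwise inequality
\[
\sum_y q(y|x)\log\frac{q(y|x)}{\widetilde{q}(y|x)} \;\le\; \frac{\|w^*\|_\infty}{A(x)}\sum_y p(y|x)\log\frac{p(y|x)}{\widehat{p}(y|x)},
\]
which is exactly the content of the paper's Lemma~\ref{lem::etatransforminq} (with $a_m=w^*(m)$, $c_m=p(m|x)$, $z_m=\widehat{p}(m|x)$). The paper proves that lemma by a Lagrange-multiplier optimization, whereas you obtain it by the split $w^*(y)=\|w^*\|_\infty-(\|w^*\|_\infty-w^*(y))$ together with two applications of $\log t\le t-1$; your residual $1-\widehat A/A+\log(\widehat A/A)\le 0$ is precisely what forces the minimum in the paper's variational argument to sit at $z=c$. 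The second difference is in the integration step: the paper first replaces $\mathbb{E}_{X\sim q}$ by $\mathbb{E}_{X\sim p}$ via the crude bound $q(x)/p(x)\le 1/p_{\min}(y)$ from \eqref{eq::qxoverpx} (which tacitly uses Assumption~\ref{ass::marginal}\textit{(i)} even though only Assumption~\ref{ass:labelshift} is stated) and then applies the lemma, collecting a constant of order $w^*_{\max}/(p_{\min}\cdot w^*_{\min})$. You instead integrate directly against $q(x)\,dx$ and use the exact label-shift identity $q(x)=A(x)p(x)$ so that the $A(x)$ in the denominator cancels, yielding the sharper constant $\|w^*\|_\infty$ and genuinely needing only Assumption~\ref{ass:labelshift}. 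Your observation that no lower bound on $A(x)$ is required is therefore well taken.
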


From Propositions \ref{prop::excesshattildediff}, \ref{prop::excesstildetruediff}, and the error decomposition \eqref{eq::ExcessRiskQdecomp}, we can see that the excess CE risk of $\widehat{q}(y|x)$ depends on the excess CE risk of $\widehat{p}(y|x)$ and the error of class probability ratio estimation $\|w^* - \widehat{w}\|_2^2$, which have been analyzed in Sections \ref{sec::analysisKLR} and \ref{sec::priorestmation}, respectively. As a result, we are able to establish the convergence rates of $\widehat{q}(y|x)$ in the target domain, as presented in Theorem \ref{thm::rateQ}.

\section{Experiments} \label{sec::Experiments}

In this section, we conduct numerical experiments to show the performance of our proposed CPMCM for label shift adaptation. In Section \ref{subsec::datasets}, we introduce the real-world datasets and the procedure for generating samples in the source domain and target domain. The compared methods and the evaluation metrics are presented in Section \ref{subsec::methods} and \ref{subsec::metrics}, respectively. The experimental results of different methods under various label shift scenarios are presented in Section \ref{subsec::expresults} to show the empirical superiority of CPMKM over other methods. Moreover, we verify the convergence rates of CPMKM for the label shift adaptation through experiments with different sample sizes.

\subsection{Datasets} \label{subsec::datasets}

We use multi-class benchmark datasets {\tt Dionis} and {\tt Volkert} collected from the OpenML Science Platform \cite{vanschoren2014openml} as well as datasets {\tt Covertype} and {\tt Gas Sensor} from the UCI Machine Learning Repository \cite{Dua:2019}. 
Based on the benchmark datasets, we construct the labeled data from the source domain and the unlabeled data from the target domain under the label shift setting as follows. First, we resample $n_p$ samples from the original dataset according to the uniform class probability $p(y)$ on all classes to form the source domain data $D_p$. 
In order to generate the unlabeled data for the target domain, $D_q^u$, we first resample $n_q$ samples from the remaining dataset according to the class probability $q(y)$. Subsequently, we remove the labels of these $n_q$ selected samples to create the unlabeled target domain data $D_q^u$. Additionally, following the same procedure of generating $D_q^u$, we generate $n_t$ unlabeled test data $D_t$ for evaluating the classification accuracy in the target domain. For repeating experiments of each method, we randomly sub-sample ten different $D_p$ with different random seeds and then train ten different models respectively.
For each model trained with the source domain data $D_p$, we randomly sub-sample ten target domain data $D_q^u$ and test data $D_t$ with different random seeds for each label shift adaptation task.
Therefore, the total number of repetition is $100$.

To generate the class probability of the target domain $q(y)$ with a significant shift from the  uniform class probability $p(y)$ in the source domain, we randomly choose a subset of $M_q$ classes from the total $M$ classes. The class probabilities $q(y)$ for the remaining $M-M_q$ classes are set to be zero. Subsequently, we generate class probabilities for the selected $M_q$ classes using the Dirichlet distribution with parameter $\alpha$. Obviously, the severity of the label shift increases with a smaller $\alpha$. In this paper, we explore Dirichlet shifts with $\alpha \in \{1, 2, 5, 10\}$.
The detailed descriptions of datasets 
and the label shift adaptation setups
are presented as follows and listed in Table \ref{table::DatasetsDescription}.

\begin{table}[htbp]
	\centering
	\captionsetup{justification=centering}
	\caption{Data Descriptions and Label Shift Adaptation Setups}
	\vspace{-2mm}
	\begin{tabular}{c||rrrrr|rrrr}
		\toprule
		Dataset  & $n$ & $d$ & $M$ & $n_{\tt max}$ & $n_{\tt min}$ & $n_p$ & $n_q$ & $n_t$ & $M_q$ \\
		\midrule
		{\tt Dionis} & $416188$ & $61$ & $355$ & $2469$ & $878$ & $14200$ & $7100$ & $17750$ & $100$ \\
		{\tt Volkert} & $58310$ & $181$ & $10$ & $12806$ & $1361$ & $10000$ & $5000$ & $25000$ & $4$ \\
		{\tt Covertype} & $581012$ & $54$ & $7$ & $283301$ & $2724$ & $7000$ & $7000$ & $21000$ & $5$ \\
		{\tt Gas Sensor} & $13910$ & $128$ & $6$ & $3009$ & $1,641$ & $3000$ & $1000$ & $3000$ & $4$
		\\
		\bottomrule
	\end{tabular}
	\label{table::DatasetsDescription}
\end{table}

\subsection{Comparison Methods} \label{subsec::methods}

We consider the following methods for label shift adaptation.
\begin{itemize}
	\item \textbf{KMM} \cite{zhang2013domain}. The estimation of class probability ratio involves a constrained optimization with automatic hyper-parameter selection. 
	\item \textbf{BBSE} \cite{lipton2018detecting}. The proposed method can use arbitrary black box predictors for class probability ratio estimation. Here we use the KLR classifier with rbf kernel as the black box prediction. 
	\item \textbf{RLLS} \cite{azizzadenesheli2019regularized}.
	Similar to BBSE, RLLS can also use arbitrary black box predictors for class probability ratio estimation. However, different from BBSE, RLLS learn the ratio in a regularized way to compensate for the high estimation error in the low target sample scenarios.
	\item \textbf{ELSA} \cite{tian23a}. The proposed method conducts ratio estimation by using a moment-matching framework based on the geometry of the influence function under a semiparametric model.
	\item \textbf{MLLS} \cite{alexandari2020maximum, garg2020unified}. Different from matching-based methods KMM, BBSE, and RLLS, MLLS estimates the class probability ratio by maximizing the log likelihood. 
	\item \textbf{CPMKM} (Ours): We use the class probability matching framework proposed in Section \ref{sec::Methodology} equipped with kernel logistic regression for estimating the ratio.
\end{itemize}
To implement the compared method KMM, we use the code in \url{http://people.tuebingen.mpg.de/kzhang/Code-TarS.zip}.
To implement the compared methods BBSE, RLLS and MLLS, we use the code in \url{https://github.com/kundajelab/labelshiftexperiments} provided by \cite{alexandari2020maximum}.
We implement the compared method ELSA according to the algorithmic description in \cite{tian23a}.
For all methods, we reweight the KLR predictor $\widehat{p}(y|x)$ by using the ratio estimation $\widehat{w}$ to get the predictor $\widehat{q}(y|x)$ in the target domain via \eqref{eq::hatetaQ} and plug-in classifier by \eqref{eq::fhat}.

We mention that in the process of fitting KLR to the source domain $D_p$, we select two hyper-parameters for KLR, including the regularization parameter $C$ and the kernel coefficient $\gamma$, by using five-fold cross-validation.
The criteria of five-hold cross-validation is the CE loss instead of the commonly-used mis-classification loss since our primary goal is to estimate $p(y|x)$ well. 
To be specific, the hyper-parameter $C$ is selected from 7 numbers spaced evenly on a log scale from $10^{-6}$ to $10^0$, and $\gamma$ is selected from 7 numbers spaced evenly on a log scale from $2^{-6}$ to $2^{0}$. Moreover, the truncation parameter is set as $t = 10^{-8}$.

\subsection{Evaluation Metrics} \label{subsec::metrics}

We consider the following two metrics for the evaluations of label shift adaptation problems.
To be specific, the first one {\tt ACC} is the classification accuracy in the target domain.
The second one {\tt MSE} is used to measure the estimation error of the class probability $q(y)$ in the target domain. Note that a larger {\tt ACC} and a smaller {\tt MSE} indicate better performance. 
\begin{itemize}
	\item[\textit{(i)}] {\tt ACC} is the accuracy of classifier in the target domain evaluated on the test data $D_t$. Mathematically speaking, {\tt ACC} equals $n_t^{-1} \sum_{(X_i,Y_i) \in D_t} \eins\{\widehat{h}_q(X_i) = Y_i\}$. 
	\item[\textit{(ii)}] {\tt MSE} is the mean squared error between the true class probability $q(y)$ and the estimated class probability $\widehat{q}(y)$. Since different methods have different estimation $\widehat{p}(y)$, their class probability ratio estimation $\widehat{w}$ satisfies different normalization condition $\sum_{m\in [M]} w(m) \widehat{p}(m) = 1$. Therefore, for a fair comparison, we choose to compare the MSE for $q(y)$ instead of comparing the MSE for class probability ratio $w^*$. For each method, we use normalized $\widehat{w}(y)\widehat{p}(y)$ as the estimate of $q(y)$, $y\in [M]$. 
\end{itemize}

\subsection{Experimental Results} \label{subsec::expresults}

Tables \ref{tab::res_ACC} and \ref{tab::res_MSE} present the results of the compared methods for different label shift scenarios on four datasets. From Tables \ref{tab::res_ACC} and \ref{tab::res_MSE} we find that the proposed CPMKM method has the lowest estimation error of the class probability $q(y)$ and highest classification accuracy in most cases, which shows that matching method on $Y$ is superior to compared methods in performance. It is worth noting that on the {\tt Dionis} dataset with the most number of classes, the advantage of our CPMKM over the compared method is the most notable.

Moreover, to explore the effect of different sample sizes on the performance of compared methods, we fix the sample size $n_p$ in the source domain and vary the sample size $n_q$ in the target domain. We conduct experiments on the dataset {\tt Dionis} since it has the largest number of classes. Figure \ref{fig:mean and std of nets} presents the performance of the compared methods for the fixed source domain sample size $n_p = 14200$ and $n_p = 7100$ under Dirichlet parameter $\alpha = 10$. 

From Figure \ref{fig:mean and std of nets}, we can see that if we fix $n_p = 14200$, for all methods, the accuracy improves and the MSE decreases as the target domain sample size $n_q$ increases from $880$ to $3550$.

\begin{table}[H]
	\centering
	\captionsetup{justification=centering} 
	\caption{Comparison of {\tt ACC} Performance with Other Methods.}
	\resizebox{0.77\textwidth}{!}{
		\label{tab::res_ACC}
		\begin{tabular}{c|cccccc}
			\toprule
			Dataset & {KMM} & {BBSE} & {RLLS} & {ELSA} & {MLLS} & {CPMKM (Ours)} \\
			\hline
			& \multicolumn{6}{c}{$\alpha=1$} \\
			\hline
			\multirow{2}{*}{{\tt Dionis}} & 76.20 & 82.37 & 82.45 & 81.58 & 82.36 & \textbf{82.81}  \\
			& (2.13) & (1.45) & (1.40) & (2.91) & (1.71) & \textbf{(1.40)} \\
			\hline
			\multirow{2}{*}{{\tt Volkert}} & 70.19 & 69.41 & 69.50 & 70.03 & 70.18 & \textbf{70.57} \\
			& (2.80) & (1.47) & (1.37) & (1.38) & (1.41) & \textbf{(1.20)} \\
			\hline
			\multirow{2}{*}{{\tt Covertype}} & \textbf{82.42} & 81.66 & 81.70 & 82.23 & 82.41 & 82.25 \\
			& \textbf{(1.93)} & (2.79) & (2.71) & (2.91) & (2.98) & (2.95) \\
			\hline
			\multirow{2}{*}{{\tt Gas Sensor}} & 95.19 & 96.35 & 96.39 & 96.31 & 96.35 & \textbf{96.43} \\
			& (1.77) & (0.91) & (0.86) & (0.91) & (0.92) & \textbf{(0.97)} \\
			\midrule
			& \multicolumn{6}{c}{$\alpha=2$} \\
			\hline
			\multirow{2}{*}{{\tt Dionis}} & 76.67 & 82.18 & 82.23 & 81.39 & 82.09 & \textbf{82.59} \\
			& (2.08) & (1.53) & (1.48) & (2.84) & (1.56) & \textbf{(1.36)} \\
			\hline
			\multirow{2}{*}{{\tt Volkert}} & 70.58 & 69.99 & 70.09 & 70.69 & 70.78 & \textbf{70.96} \\
			& (2.76) & (1.38) & (1.29) & (1.31) & (1.37) & \textbf{(1.33)} \\
			\hline
			\multirow{2}{*}{{\tt Covertype}} & \textbf{79.50} & 78.94 & 78.99 & 79.43 & 79.50 & 79.45 \\
			& \textbf{(2.61)} & (1.90) & (1.83) & (1.98) & (2.07) & (2.06) \\
			\hline
			\multirow{2}{*}{{\tt Gas Sensor}} & 95.87 & 96.16 & 96.18 & 96.16 & 96.18 & \textbf{96.21} \\
			& (1.48) & (1.09) & (1.00) & (1.01) & (1.01) & \textbf{(1.10)} \\
			\midrule
			
			& \multicolumn{6}{c}{$\alpha=5$} \\
			\hline
			\multirow{2}{*}{{\tt Dionis}} & 77.80 & 81.88 & 81.90 & 81.18 & 81.78 & \textbf{82.49} \\
			& (1.71) & (1.21) & (1.12) & (5.72) & (1.42) & \textbf{(1.15)} \\
			\hline
			\multirow{2}{*}{{\tt Volkert}} & 69.42 & 68.33 & 68.53 & 69.08 & 69.16 & \textbf{69.66} \\
			& (1.89) & (1.04) & (0.94) & (0.94) & (0.95) & \textbf{(0.82)} \\
			\hline
			\multirow{2}{*}{{\tt Covertype}} & 78.46 & 78.01 & 78.05 & 78.32 & \textbf{78.46} & 78.35 \\
			& (2.02) & (1.90) & (1.83) & (1.98) & \textbf{(2.07)} & (2.06) \\
			\hline
			\multirow{2}{*}{{\tt Gas Sensor}} & 96.34 & 96.37 & 96.39 & 96.40 & 96.38 & \textbf{96.42} \\
			& (0.98) & (1.00) & (1.01) & (1.04) & (1.05) & \textbf{(1.26)} \\
			\midrule
			& \multicolumn{6}{c}{$\alpha=10$} \\
			\hline
			\multirow{2}{*}{{\tt Dionis}} & 78.13 & 81.66 & 81.70 & 81.03 & 81.75 & \textbf{82.33} \\
			& (1.20) & (0.89) & (0.90) & (2.23) & (1.19) & \textbf{(0.95)} \\
			\hline
			\multirow{2}{*}{{\tt Volkert}} & 68.68 & 67.84 & 68.01 & 68.52 & 68.65 & \textbf{69.04} \\
			& (2.06) & (1.07) & (0.97) & (0.98) & (0.99) & \textbf{(1.03)} \\
			\hline
			\multirow{2}{*}{{\tt Covertype}} & 78.04 & 77.69 & 77.73 & 77.92 & \textbf{78.05} & 78.02 \\
			& (1.87) & (1.74) & (1.70) & (1.83) & \textbf{(1.90)} & (1.87) \\
			\hline
			\multirow{2}{*}{{\tt Gas Sensor}} & 96.28 & 96.37 & 96.38 & 96.39 & 96.40 & \textbf{96.42} \\
			& (1.08) & (1.02) & (0.99) & (0.99) & (1.01) & \textbf{(1.04)} \\
			\bottomrule
		\end{tabular}
	}
	\begin{tablenotes}
		\footnotesize
		\item[*] For each dataset and each $\alpha$, the best result is marked in \textbf{bold}.
	\end{tablenotes}
\end{table}

\begin{table}[H]
	\centering
	\captionsetup{justification=centering} 
	\caption{Comparison of {\tt MSE} Performance with Other Methods.}
	\resizebox{0.87\textwidth}{!}{
		\label{tab::res_MSE}
		\begin{tabular}{c|cccccc}
			\toprule
			Dataset & {KMM} & {BBSE} & {RLLS} & {ELSA} & {MLLS} & {CPMKM (Ours)} \\
			\hline
			& \multicolumn{6}{c}{$\alpha=1$} \\
			\hline
			\multirow{2}{*}{{\tt Dionis}} & 5.30e-6 & 2.54e-6 & 2.49e-6 & 3.57e-6 & 3.00e-6 & \textbf{2.39e-6} \\
			& (1.81e-6) & (8.26e-7) & (8.12e-7) & (1.43e-6) & (1.49e-6) & \textbf{(9.24e-7)} \\
			\hline
			\multirow{2}{*}{{\tt Volkert}} & 5.76e-4 & 1.03e-3 & 1.00e-3 & 7.22e-4 & 4.07e-4 & \textbf{3.36e-4} \\
			& (2.32e-4) & (4.26e-4) & (2.31e-4) & (3.44e-4) & (2.14e-4) & \textbf{(2.34e-4)} \\
			\hline
			\multirow{2}{*}{{\tt Covertype}} & \textbf{4.66e-4} & 1.13-3 & 1.02e-3 & 6.90e-4 & 4.73e-4 & 5.42e-4 \\
			& \textbf{(6.39e-4)} & (1.25e-3) & (1.10e-3) & (9.92e-4) & (8.81e-4) & (1.05e-3) \\
			\hline
			\multirow{2}{*}{{\tt Gas Sensor}} & 9.31e-3 & 9.16e-3 & 9.15e-3 & 9.12e-3 &  9.10e-3 & \textbf{9.09e-3} \\
			& (1.15e-4) & (1.04e-2) & (1.04e-2) & (1.04e-2) & (1.03e-2) & \textbf{(1.03e-2)} \\
			\midrule
			& \multicolumn{6}{c}{$\alpha=2$} \\
			\hline
			\multirow{2}{*}{{\tt Dionis}}  & 4.30e-6 & 2.47e-6 & 2.44e-6 & 2.93e-6 & 2.51e-6 & \textbf{2.28e-6} \\
			& (1.07e-6) & (6.55e-7) & (6.46e-7) & (9.60e-6) & (1.19e-6) & (6.73e-7) \\
			\hline
			\multirow{2}{*}{{\tt Volkert}} & 5.31e-4 & 9.86e-4 & 7.94e-4 & 6.67e-4 & 3.74e-4 & \textbf{3.23e-4} \\
			& (2.89e-4) & (4.04e-4) & (3.01e-4) & (3.15e-4) & (1.84e-4) & \textbf{(2.68e-4)} \\
			\hline
			\multirow{2}{*}{{\tt Covertype}} & 5.47e-4 & 9.41e-4 & 9.28e-4 & 6.63e-4 & \textbf{5.06e-4} & 5.90e-4 \\
			& (9.42e-4) & (1.24e-3) & (1.08e-3) & (9.92e-4) & \textbf{(8.81e-4)} & (1.02e-3) \\
			\hline
			\multirow{2}{*}{{\tt Gas Sensor}} & 8.23e-3 & 8.21e-3 & 8.20e-3 & 8.19e-3 & 8.19e-3 & \textbf{8.18e-3}\\
			& (1.02e-2) & (1.03e-2) & (1.03e-2) & (1.03e-2) & (1.02e-2) & \textbf{(1.02e-2)}\\
			\midrule
			& \multicolumn{6}{c}{$\alpha=5$} \\
			\midrule
			\multirow{2}{*}{{\tt Dionis}} & 3.55e-6 & 2.15e-6 & 2.10e-6 & 2.63e-6 & 2.44e-6 & \textbf{1.93e-6} \\
			& (6.93e-7) & (5.96e-7) & (5.87e-7) & (5.34e-6) & (1.05e-6) & \textbf{(5.97e-7)} \\
			\hline
			\multirow{2}{*}{{\tt Volkert}} & 5.43e-4 & 1.02e-3 & 8.54e-4 & 7.33e-4 & 4.11e-4& \textbf{3.21e-4} \\
			& (2.47e-4) & (3.85e-4) & (3.39e-4) & (3.10e-4) & (1.90e-4) & \textbf{(1.78e-4)} \\
			\hline
			\multirow{2}{*}{{\tt Covertype}} & 3.05e-4 & 5.32e-4 & 5.12e-4 & 3.86e-4 & \textbf{2.93e-4} & 3.32e-4 \\
			& (4.99e-4) & (6.48e-4) & (6.06e-4) & (5.37e-4) & \textbf{(4.65e-4)} & (5.39e-4) \\
			\hline
			\multirow{2}{*}{{\tt Gas Sensor}} & 6.43e-3 & 6.39e-3 & 6.39e-3 & 6.39e-3 & 6.38e-3 & \textbf{6.38e-3} \\
			& (8.04e-3)  & (8.16e-3) & (8.15e-3) & (8.16e-3) & (8.16e-3) & \textbf{(8.15e-3)} \\
			\midrule
			& \multicolumn{6}{c}{$\alpha=10$} \\
			\hline
			\multirow{2}{*}{{\tt Dionis}} & 3.33e-6 & 1.90e-6 & 1.86e-6 & 2.46e-6 & 2.04e-6 & \textbf{1.68e-6} \\
			& (5.89e-7) & (2.87e-7) & (2.78e-7) & (1.29e-6) & (6.04e-7) & \textbf{(2.86e-7)} \\
			\hline
			\multirow{2}{*}{{\tt Volkert}} & 6.41e-4 & 1.01e-3 & 8.52e-4 & 7.56e-4 & 4.05e-4 & \textbf{3.22e-4} \\
			& (2.68e-4) & (3.70e-4) & (3.28e-4) & (3.24e-4) & (2.12e-4) & \textbf{(1.47e-4)} \\
			\hline
			\multirow{2}{*}{{\tt Covertype}} & 2.35e-4 & 3.56e-4 & 3.47e-4 & 2.71e-4 & \textbf{2.16e-4} & 2.30e-4 \\
			& (3.05e-4) & (3.16e-4) & (3.04e-4) & (2.77e-4) & \textbf{(2.81e-4)} & (2.86e-4) \\
			\hline
			\multirow{2}{*}{{\tt Gas Sensor}} & 3.58e-3 & 3.52e-3 & 3.51e-3 & 3.51e-3 & 3.50e-3 & \textbf{3.50e-3} \\
			& (4.52e-3) & (4.50e-3) & (4.49e-3) & (4.49e-3) & (4.48e-3) & \textbf{(4.48e-3)} \\
			\bottomrule
		\end{tabular}
	}
	\begin{tablenotes}
		\footnotesize
		\item[*] For each dataset and each $\alpha$, the best result is marked in \textbf{bold}.
	\end{tablenotes}
\end{table}

If $n_q$ continues to increase from $3550$ to $14420$, we find that the performance almost keeps steady. This trend verifies the convergence rates established in Theorem \ref{thm::rateQ}, which shows that if we fix $n_p$ and gradually increase $n_q$ from zero to infinity, the convergence rate becomes faster firstly and then keep constant after $n_q$ reaches a threshold smaller than $n_p$. Moreover, our CPMKM outperforms other compared methods across different sample sizes $n_q$ both on classification accuracy and class probability estimation. 
Since KMM performs significantly worse than other methods on {\tt Dionis}, its curve is excluded from Figure \ref{fig:mean and std of nets} to enhance clarity in presenting the distinctions among the performance curves of various methods.

\begin{figure*}[!h]
	\centering
	\captionsetup[subfigure]{justification=centering, captionskip=2pt} % , labelformat=empty
	\subfloat[][{\tt ACC} for $n_p = 14200$] { 
		\includegraphics[width=0.49\linewidth]{./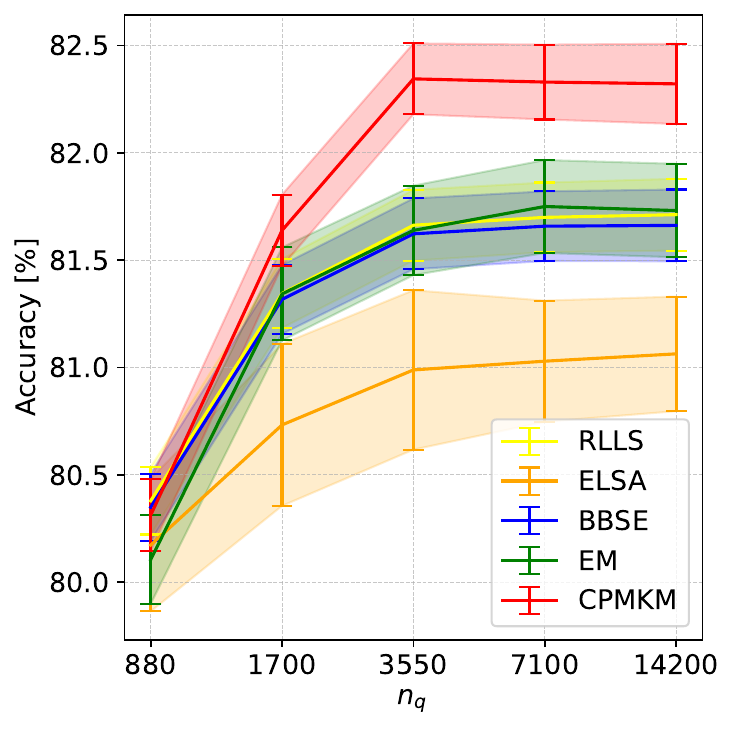} 
			\label{fig:mean and std of net14}
	} 
	\subfloat[][{\tt MSE} for $n_p = 14200$] { 
		\includegraphics[width=0.49\linewidth]{./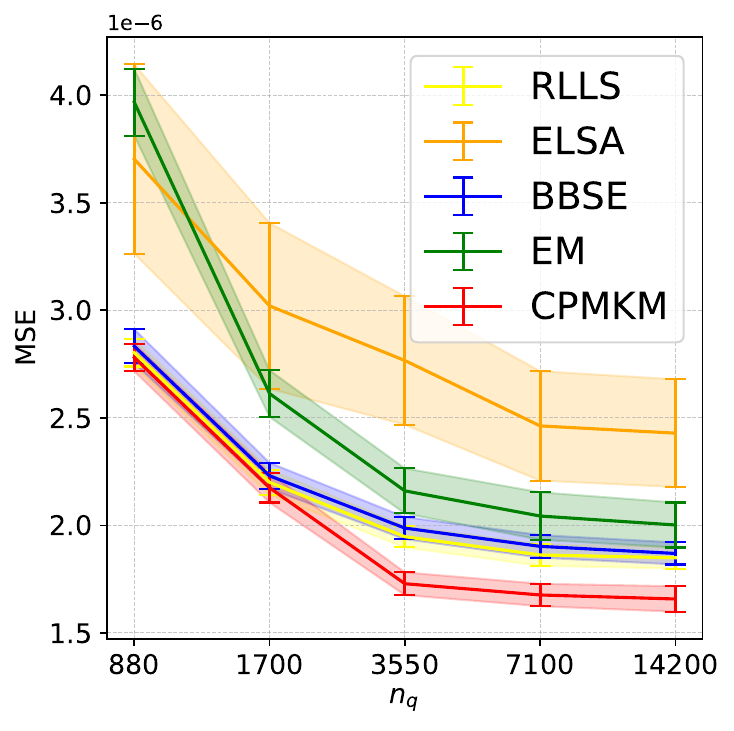}
		\label{fig:mean and std of net24}
	} 
\\
	\subfloat[][{\tt ACC} for $n_p = 7100$] { 
	\includegraphics[width=0.49\linewidth]{./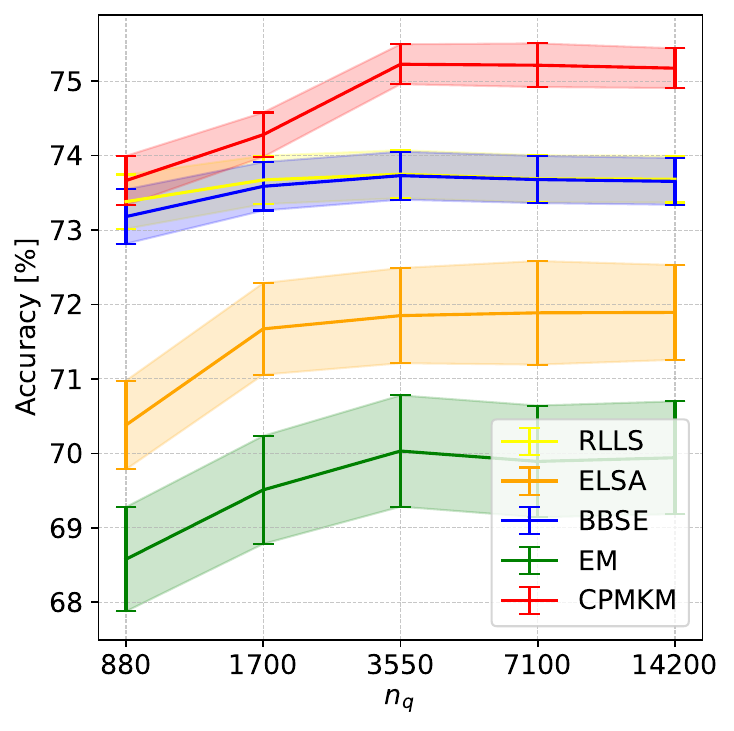} 
	\label{fig:mean and std of net34}
} 
\subfloat[][{\tt MSE} for $n_p = 7100$] { 
	\includegraphics[width=0.49\linewidth]{./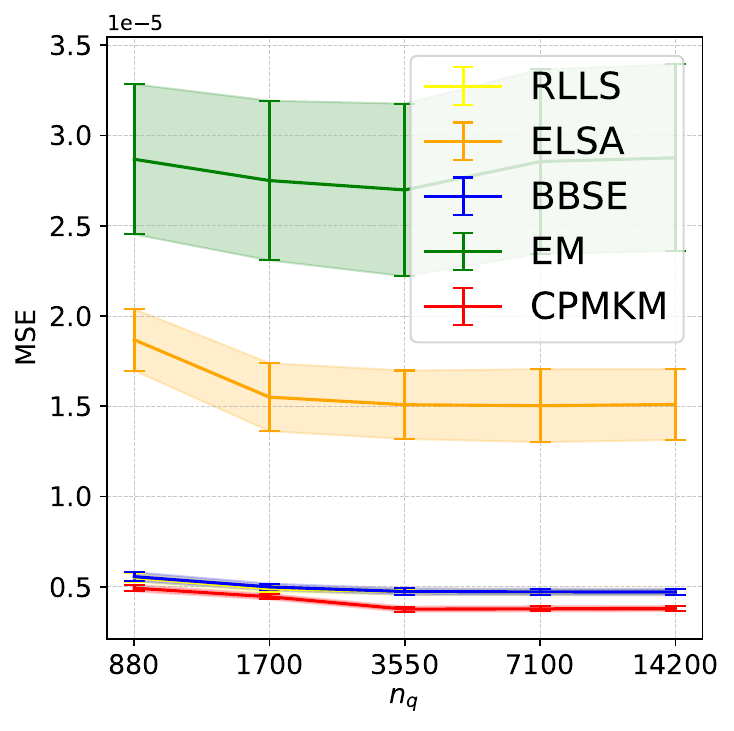}
	\label{fig:mean and std of net44}
} 
	\caption{Performance on {\tt Dionis} dataaset under Dirichlet shift $\alpha = 10$ for varied $n_q$.}
	\label{fig:mean and std of nets}
\end{figure*}

%\begin{figure*}[!h]
%	\begin{minipage}{\textwidth}
%		\centering
%		\begin{subfigure}[b]{0.45\textwidth}
%			\centering
%			\includegraphics[scale=0.5]{./figures/fig_exp_dionis_alpha10_nsrc14200_acc.pdf}
%			\captionsetup{justification=centering} 
%			\caption{{\tt ACC} for $n_p = 14200$}   
%			\label{fig:mean and std of net14}
%		\end{subfigure}
%		\qquad
%		\begin{subfigure}[b]{0.45\textwidth} 
%			\centering 
%			\includegraphics[scale=0.5]{./figures/fig_exp_dionis_alpha10_nsrc14200_mse.pdf}
%			\captionsetup{justification=centering}
%			\caption{{\tt MSE} for $n_p = 14200$}   
%			\label{fig:mean and std of net24}
%		\end{subfigure}
%		
%		\vspace{6mm}
%		
%		\begin{subfigure}[b]{0.45\textwidth}
%			\centering 
%			\includegraphics[scale=0.5]{./figures/fig_exp_dionis_alpha10_nsrc07100_acc.pdf}
%			\captionsetup{justification=centering}
%			\caption{{\tt ACC} for $n_p = 7100$}     
%			\label{fig:mean and std of net34}
%		\end{subfigure}
%		\qquad
%		\begin{subfigure}[b]{0.45\textwidth}
%			\centering  \includegraphics[scale=0.5]{./figures/fig_exp_dionis_alpha10_nsrc07100_mse.pdf}
%			\captionsetup{justification=centering}
%			\caption{{\tt MSE} for $n_p = 7100$}    
%			\label{fig:mean and std of net44}
%		\end{subfigure}
%		\captionsetup{justification=centering}
%		\caption{Performance on {\tt Dionis} dataaset under Dirichlet shift $\alpha = 10$ for varied $n_q$.}
%		\label{fig:mean and std of nets}
%	\end{minipage}
%\end{figure*}

\section{Proofs} \label{sec::Proofs}

In this section, we present the proofs related to Section \ref{sec::analysisKLR}, \ref{subsec::RatesSource}, \ref{sec::priorestmation}, \ref{sec::analysisQresult}, and \ref{subsec::RatesTarget} in Sections \ref{sec::proofKLR}-\ref{sec::proofRate}, respectively. To be specific, Section \ref{sec::proofKLR}
presents the proofs related to the error analysis for truncated KLR in the source domain. The proofs of the convergence rates of KLR and the lower bound of multi classification w.r.t.~the CE loss are presented in Secction \ref{sec::proofrateP}. 
Section \ref{sec::proofprior} provides the proof for bounding the estimation error of class probability ratio. In Section \ref{sec::proofanalysisQ}, we prove that the excess risk of CPMKM depends on the estimation error of class probability ratio and the excess risk of the truncated KLR in the source domain. Finally, the proofs of the convergence rates of CPMKM and the lower bound of label shift adaptation are provided in Section \ref{sec::proofRate}.

\subsection{Proofs Related to Section \ref{sec::analysisKLR}} \label{sec::proofKLR}

\subsubsection{Proofs Related to Section \ref{sec::approxerror}}

In order to prove Proposition \ref{prop::approx}, we need to construct a score function $f_0 \in \mathcal{F}$. To this end, we need to introduce the following notations. Given $\tau \in (0, 1/(2M))$ and $m\in [M]$, we define the truncated conditional probability function by
\begin{align}\label{eq::etaPdelta}
	p^\tau(m|x)
	:= 
	\begin{cases}
		\tau, & \text{ if } p(m|x)< \tau,
		\\
		\displaystyle p(m|x) - \frac{p(m|x) - \tau}{\sum_{\ell:p(\ell|x)\geq \tau}(p(\ell|x)-\tau)} 
		\sum_{j:p(j|x)< \tau}(\tau-
		p(j|x)), & \text{ if } p(m|x)\geq \tau.
	\end{cases}
\end{align}
Then the corresponding truncated score function is defined by
\begin{align}\label{eq::fdelta}
	f_m^{*\tau}(x)
	:= \log \frac{p^\tau(m|x)}{p^\tau(M|x)}.
\end{align}
For any fixed $\gamma > 0$, we define the function $K : \mathbb{R}^d \to \mathbb{R}$ by
\begin{align}\label{Conv}
	K(x) :=
	\biggl( \frac{2}{ \gamma^2 \pi} \biggr)^{d/2} 
	\exp \biggl( - \frac{2\|x\|_2^2}{\gamma^2} \biggr).
\end{align}
Then we define the convolution of $f_m^{*\tau}$ and $K$ as 
\begin{align}\label{eq::ftildem}
	\widetilde{f}_m^\tau(x) 
	:= (K * f_m^{*\tau})(x) 
	:= \int_{\mathbb{R}^d} K(x - z) f_m^{*\tau}(z) \, dz, 
	\qquad 
	m \in [M],
\end{align}
and the score function $\widetilde{f}^\tau:= (\widetilde{f}_m^\tau)_{m\in[M]}$.
In the following, we show that the constructed score function $f_0 := \widetilde{f}^{\tau}$ with $\tau := \gamma^{\alpha}$ lies in the function space $\mathcal{F}$ and $f_0$ satisfies the approximation error bound $\mathcal{R}_{L_{\mathrm{CE}},P}(p^t_{f_0}(y|x)) - \mathcal{R}_{L_{\mathrm{CE}},P}^* 
\lesssim \gamma^{\alpha(1+\beta\wedge 1)}$
as in \eqref{eq::approxt} of Proposition \ref{prop::approx}. To this end, we need to introduce the following notations. Denote 
\begin{align}\label{eq::g1delta}
	g_m^\tau(x)
	:= \log p^\tau(m|x).
\end{align}
Then $f_m^{*\tau}(x)$ in \eqref{eq::fdelta} can be expressed as 
\begin{align}\label{eq::gk-gK}
	f_m^{*\tau}(x)
	= g_m^\tau(x) - g_M^\tau(x).
\end{align}

To analyze the approximation error of $\widetilde{f}^{\tau}$, we first need the following lemma.

\begin{lemma}\label{lem::lipg1}
	Let Assumptions \ref{ass:labelshift} and \ref{ass::predictor} hold. 
	Moreover, let $g_m^\tau$ be defined as in \eqref{eq::g1delta}. Then for any $x, x' \in \mathcal{X}$, we have
	\begin{align*}
		|g_m^\tau(x)-g_m^\tau(x')| \leq \frac{\log \big(1+ c_L\|x-x'\|_2^{\alpha}\big)}{p^\tau(m|x) \wedge p^\tau(m|x')}.
	\end{align*}
\end{lemma}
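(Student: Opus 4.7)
The plan is to bound $|g_m^\tau(x)-g_m^\tau(x')| = |\log p^\tau(m|x)-\log p^\tau(m|x')|$ in three stages: rewrite $p^\tau(m|\cdot)$ in a form whose $x$-dependence is transparent, establish an additive H\"older bound $|p^\tau(m|x)-p^\tau(m|x')| \leq c_L\|x-x'\|_2^\alpha$, and then convert this additive bound into the multiplicative (logarithmic) form claimed.

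First, I would unify the two cases in \eqref{eq::etaPdelta}. Let $A(x):=\sum_{j:p(j|x)<\tau}(\tau-p(j|x))$ and $B(x):=\sum_{\ell:p(\ell|x)\geq\tau}(p(\ell|x)-\tau)$; since $\sum_m p(m|x)=1$, we have $B(x)=A(x)+(1-M\tau)$, so $B(x)\geq 1-M\tau>1/2$ uniformly because $\tau<1/(2M)$. A direct check of both branches of \eqref{eq::etaPdelta} shows the unified formula
\[
p^\tau(m|x)=\tau+(p(m|x)-\tau)_+\cdot\frac{1-M\tau}{B(x)},
\]
which eliminates the set-dependent branching.

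Second, since $u\mapsto(u-\tau)_+$ is $1$-Lipschitz, Assumption~2(i) gives $|(p(m|x)-\tau)_+-(p(m|x')-\tau)_+|\leq c_\alpha\|x-x'\|_2^\alpha$ and likewise $|B(x)-B(x')|\leq M c_\alpha\|x-x'\|_2^\alpha$. Writing $p^\tau(m|x)-p^\tau(m|x')=(1-M\tau)\bigl(a/B(x)-b/B(x')\bigr)$ with $a=(p(m|x)-\tau)_+$, $b=(p(m|x')-\tau)_+$, and using the standard estimate $|a/B(x)-b/B(x')|\leq|a-b|/B(x)+b|B(x)-B(x')|/(B(x)B(x'))$ together with $B,B'>1/2$, $(1-M\tau)\leq 1$, $b\leq 1$, one obtains a constant $c_L$ depending only on $M$ and $c_\alpha$ such that
\[
|p^\tau(m|x)-p^\tau(m|x')|\leq c_L\|x-x'\|_2^\alpha.
\]

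Third, WLOG assume $p^\tau(m|x)\geq p^\tau(m|x')$, so
\[
|g_m^\tau(x)-g_m^\tau(x')|=\log\!\left(1+\frac{p^\tau(m|x)-p^\tau(m|x')}{p^\tau(m|x')}\right)\leq\log\!\left(1+\frac{c_L\|x-x'\|_2^\alpha}{p^\tau(m|x')}\right).
\]
To finish, I invoke the elementary inequality $\log(1+u/v)\leq v^{-1}\log(1+u)$ for $u\geq 0$ and $v\in(0,1]$. This is immediate from Bernoulli's inequality $(1+u)^{1/v}\geq 1+u/v$ when $1/v\geq 1$; applied with $v=p^\tau(m|x')\leq 1$ and $u=c_L\|x-x'\|_2^\alpha$, it gives the claimed bound, with the minimum appearing by the WLOG choice.

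The main obstacle is Step~1: the original piecewise formula for $p^\tau(m|x)$ depends on the sets $S_\pm(x)=\{m:p(m|x)\gtrless\tau\}$, which vary discontinuously in $x$, so a naive attempt to bound the two branches separately runs into boundary issues. The unified expression above sidesteps this entirely by pushing all the $x$-dependence into the globally $\alpha$-H\"older quantities $(p(m|x)-\tau)_+$ and $B(x)$; after that, the remaining computations are routine. The conversion in Step~3 is the only place where the $\log(1+\cdot)$ form (rather than a linear $\|x-x'\|_2^\alpha$ form) is needed, and it is exactly Bernoulli's inequality that makes this convenient for the later use in Proposition~\ref{prop::approx}.
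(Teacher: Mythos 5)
Your proof is correct, and its overall skeleton matches the paper's: first show that the truncated probability $p^\tau(m|\cdot)$ is $\alpha$-H\"older with a constant $c_L$ of order $M c_\alpha$ (the paper's Lemma~\ref{lem::holder}), then convert the additive bound into the logarithmic one via the elementary inequality $\log(1+cz)\le c\log(1+z)$ for $c\ge 1$ (the paper's Lemma~\ref{lem::ln}, which you reprove through Bernoulli's inequality rather than a monotonicity argument --- same content). Where you genuinely diverge, and improve on the paper, is the H\"older step: the paper handles the truncation by a case analysis over the sets $\{m : p(m|x)<\tau\}$ and $\{m : p(m|x')<\tau\}$, leading to the somewhat involved computation culminating in \eqref{eq::4K+1}, whereas your unified formula $p^\tau(m|x)=\tau+(p(m|x)-\tau)_+\,(1-M\tau)/B(x)$, with $B(x)=\sum_{\ell}(p(\ell|x)-\tau)_+ = A(x)+1-M\tau$, removes the branching entirely and reduces everything to the $1$-Lipschitzness of $u\mapsto(u-\tau)_+$; this is cleaner and less error-prone. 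One small remark on constants: with the crude bounds you state ($B,B'>1/2$, $b\le 1$, $1-M\tau\le 1$) you obtain $(4M+2)c_\alpha$, slightly larger than the paper's $c_L=(4M+1)c_\alpha$; but using $B(x)\ge 1-M\tau$ and $b\le B(x')$ your same decomposition gives $(M+1)c_\alpha$, so the statement with the paper's $c_L$ holds a fortiori and nothing downstream is affected.
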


To prove Lemma \ref{lem::lipg1}, we need
the following Lemmas \ref{lem::ln} and \ref{lem::holder}.

\begin{lemma}\label{lem::ln}
	For any $c>1$ and $z>0$, we have
	$\log (1+cz) \leq c\log (1+z)$.
\end{lemma}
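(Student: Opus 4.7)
My plan is to prove Lemma \ref{lem::ln} by a direct one-variable monotonicity argument. Define
\[
h(z) := c\log(1+z) - \log(1+cz), \qquad z \geq 0,
\]
and show that $h(z) \geq 0$ on $[0,\infty)$, which is exactly the claim.

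First I would observe the boundary value $h(0) = c \cdot 0 - 0 = 0$. Then I would differentiate to get
\[
h'(z) = \frac{c}{1+z} - \frac{c}{1+cz} = c \cdot \frac{(1+cz) - (1+z)}{(1+z)(1+cz)} = \frac{c(c-1)\,z}{(1+z)(1+cz)}.
\]
Since $c > 1$ and $z > 0$, each factor in the numerator and denominator is strictly positive, so $h'(z) > 0$ for all $z > 0$. Combined with $h(0) = 0$, this gives $h(z) > 0$ for $z > 0$, i.e. $\log(1+cz) < c\log(1+z)$, which in particular yields the non-strict inequality asserted in the lemma.

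There is no substantive obstacle here: the computation of $h'$ is a two-line algebraic manipulation, and the sign of $h'$ follows immediately from $c > 1$ and $z > 0$. The only mild care needed is to note that the argument uses only $c > 1$ and $z > 0$ (not $z \geq 0$), but since $h$ is continuous at $0$ with $h(0) = 0$, extending to the closed interval is automatic. An alternative route would be to fix $z$ and view $f(c) := c\log(1+z) - \log(1+cz)$ as a function of $c$, checking $f(1) = 0$ and $f'(c) = \log(1+z) - z/(1+cz) \geq \log(1+z) - z/(1+z) \geq 0$, but the derivative in $z$ is cleaner and avoids invoking an auxiliary inequality.
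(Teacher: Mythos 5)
Your proof is correct and is essentially the same as the paper's: the paper defines $h(z) := \log(1+cz) - c\log(1+z)$, shows $h'(z) < 0$ and concludes $h(z) < h(0) = 0$, which is exactly your argument with the sign of $h$ flipped. No further comments are needed.
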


\begin{proof}[Proof of Lemma \ref{lem::ln}]
	For any $c>1$ and $z>0$, define 
	$h(z) := \log (1+cz) - c\log (1+z)$.
	Then we have
	\begin{align*}
		h'(z) = \frac{c}{1+cz} - \frac{c}{1+z} = \frac{c(1-c)z}{(1+cz)(1+z)} < 0
	\end{align*}
	and thus $h$ is decreasing on $[0,\infty]$. Therefore, for any $z > 0$, there holds $h(z) < h(0) = 0$. Then the definition of $h$ yields the conclusion.
\end{proof}

\begin{lemma}\label{lem::holder}
	Let Assumptions \ref{ass:labelshift} and \ref{ass::predictor} hold. Moreover, let $p^\tau(m|x)$ be defined as in \eqref{eq::etaPdelta}. Then for any $x,x'\in\mathcal{X}$, there holds
	\begin{align*}%\label{eq::holderetaPdelta2}
		|p^\tau(m|x)-p^\tau(m|x')| 
		\leq (4M+1)|p(m|x) - p(m|x')| 
		\leq c_L\|x-x'\|_2^{\alpha},
	\end{align*} 
	where $c_L := (4 M + 1) c_{\alpha}$.
\end{lemma}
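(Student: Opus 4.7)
The second inequality is immediate from Assumption \ref{ass::predictor}(i), so the real work is the first inequality, bounding the Lipschitz behaviour of the truncation map $p(\cdot|\cdot) \mapsto p^\tau(\cdot|\cdot)$. My plan is to first rewrite \eqref{eq::etaPdelta} in a single uniform expression so that the case distinction disappears, and then decompose the difference $p^\tau(m|x) - p^\tau(m|x')$ into pieces that are each easily controlled by the increments $|p(j|x) - p(j|x')|$.

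Concretely, let $A(x) := \sum_{j : p(j|x) < \tau}(\tau - p(j|x))$ and $B(x) := \sum_{\ell : p(\ell|x) \geq \tau}(p(\ell|x) - \tau)$, and set $\lambda(x) := A(x)/B(x)$. A short direct check shows that \eqref{eq::etaPdelta} can be written for all $m \in [M]$ as
\begin{align*}
p^\tau(m|x) = (1 - \lambda(x)) \max(\tau, p(m|x)) + \tau \lambda(x),
\end{align*}
which is just a convex combination. Taking the difference with the same expression at $x'$ and regrouping gives
\begin{align*}
p^\tau(m|x) - p^\tau(m|x') = (1-\lambda(x))[\max(\tau, p(m|x)) - \max(\tau, p(m|x'))] + (\max(\tau, p(m|x')) - \tau)[\lambda(x') - \lambda(x)].
\end{align*}
The first factor is controlled by $|p(m|x) - p(m|x')|$ since $u \mapsto \max(\tau, u)$ is $1$-Lipschitz; the coefficient $1-\lambda(x)$ lies in $[0,1]$; and the factor $\max(\tau, p(m|x')) - \tau$ is bounded by $1$. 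So the whole bound reduces to controlling $|\lambda(x) - \lambda(x')|$.

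For this, observe that because $\sum_j p(j|x) = 1$ and $M\tau < 1/2$, the identity $B(x) - A(x) = 1 - M\tau$ (coming from splitting $\sum_j (p(j|x) - \tau)$ into positive and negative parts) yields the crucial lower bound $B(x) \geq 1 - M\tau > 1/2$. Writing
\begin{align*}
\lambda(x) - \lambda(x') = \frac{A(x) - A(x')}{B(x)} + \lambda(x') \cdot \frac{B(x') - B(x)}{B(x)}
\end{align*}
and using that $A$ and $B$ are each Lipschitz in $p(\cdot|x)$ (with Lipschitz constant $1$ per coordinate, since $u \mapsto (u-\tau)_+$ and $u \mapsto (\tau-u)_+$ are both $1$-Lipschitz) and that $\lambda(x') \leq 1$, I get $|\lambda(x) - \lambda(x')| \leq 4 \sum_j |p(j|x) - p(j|x')|$. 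Combining with the earlier display and Assumption \ref{ass::predictor}(i) applied to every coordinate gives
\begin{align*}
|p^\tau(m|x) - p^\tau(m|x')| \leq |p(m|x) - p(m|x')| + 4 \sum_j |p(j|x) - p(j|x')| \leq (4M + 1) c_\alpha \|x - x'\|_2^\alpha,
\end{align*}
which is the claim with $c_L = (4M+1)c_\alpha$.

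The main obstacle is the division in $\lambda = A/B$: without a lower bound on $B(x)$ there is no reason for $\lambda$ to be stable. The key trick is the identity $B(x) = A(x) + 1 - M\tau$, which, together with $A(x) \geq 0$ and the standing assumption $\tau < 1/(2M)$, gives a uniform lower bound $B(x) > 1/2$ independent of $x$. Everything else is a routine $1$-Lipschitz argument.
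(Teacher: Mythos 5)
Your argument is correct for the bound that matters, but it takes a genuinely different route from the paper. The paper proves the lemma by a four-way case analysis according to whether $p(m|\cdot)$ lies below or above $\tau$ at $x$ and at $x'$; in the main case it works with the explicit formula $p^\tau(m|x)-\tau=(p(m|x)-\tau)\,\frac{1-M\tau}{\sum_{\ell:p(\ell|x)\ge\tau}(p(\ell|x)-\tau)}$, bounds the change of the normalizing sum by $Mc_\alpha\|x-x'\|_2^\alpha$, and uses the lower bound $\sum_{\ell:p(\ell|x)\ge\tau}(p(\ell|x)-\tau)\ge 1-M\tau>1/2$ — the same bound you extract from the identity $B(x)-A(x)=1-M\tau$. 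Your uniform representation $p^\tau(m|x)=(1-\lambda(x))\max(\tau,p(m|x))+\tau\lambda(x)$ (which I checked against \eqref{eq::etaPdelta} in both cases) eliminates the case distinction and reduces everything to $1$-Lipschitz maps $u\mapsto\max(\tau,u)$, $u\mapsto(u-\tau)_+$, $u\mapsto(\tau-u)_+$ together with the stability of $\lambda=A/B$; the constant $(4M+1)c_\alpha$ comes out identical. This is cleaner and easier to verify than the paper's computation.

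One caveat: what your chain delivers is $|p^\tau(m|x)-p^\tau(m|x')|\le |p(m|x)-p(m|x')|+4\sum_{j}|p(j|x)-p(j|x')|\le (4M+1)c_\alpha\|x-x'\|_2^\alpha$, i.e.\ the end-to-end H\"older bound, not the literal middle inequality $|p^\tau(m|x)-p^\tau(m|x')|\le(4M+1)|p(m|x)-p(m|x')|$, which involves the increment of class $m$ alone. That pointwise inequality in fact fails in general: take $p(m|x)=p(m|x')$ while another class crosses the threshold $\tau$, so that $\lambda(x)\neq\lambda(x')$ and $p^\tau(m|\cdot)$ moves although the right-hand side is zero. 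The paper's own justification of that middle step is no stronger — it is obtained only under the artificial hypothesis that $|p(j|x)-p(j|x')|$ is the same for every $j$. Since all downstream uses (Lemma \ref{lem::lipg1}, Proposition \ref{prop::convolution}) only require $|p^\tau(m|x)-p^\tau(m|x')|\le c_L\|x-x'\|_2^\alpha$, your proof covers everything that is actually needed, and your sum-over-classes intermediate bound is the correct form of the statement.
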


\begin{proof}[Proof of Lemma \ref{lem::holder}]
	Given $\tau \in (0,1/(2M))$, we define the label set $\mathcal{M}_x^\tau := \{ m \in [M] : p_f(m|x) < \tau\} \subset [M]$.
	By \eqref{eq::etaPdelta} and Assumption \ref{ass::predictor} \textit{(i)}, for any $m \in \mathcal{M}_{p, x}^\tau$, there holds 
	\begin{align*}
		|p^\tau(m|x)-p^\tau(m|x')| 
		= 
		\begin{cases}
			0, & \text{ if } m \in \mathcal{M}_{p, x'}^\tau,
			\\
			p(m|x') - \tau 
			< p(m|x') - p(m|x) 
			\leq c_{\alpha} \|x - x'\|_2^{\alpha},
			& \text{ if } m \notin \mathcal{M}_{p, x'}^\tau.
		\end{cases}
	\end{align*}
	For any $m \notin \mathcal{M}_{p, x}^\tau$ and $m \in \mathcal{M}_{p, x'}^\tau$, by using Assumption \ref{ass::predictor} \textit{(i)}, we get 
	$$
	|p^\tau(m|x) - p^\tau(m|x')| 
	= p^\tau(m|x) - \tau 
	\leq p(m|x) - p(m|x')
	\leq c_{\alpha} \|x - x'\|_2^{\alpha}.
	$$
	Otherwise, for any $m \notin \mathcal{M}_{p, x}^\tau$ and $m \notin \mathcal{M}_{p, x'}^\tau$, since $\tau \in (0, 1 / (2 M))$, then for any $x \in \mathcal{X}$, we have 
	\begin{align}\label{eq::residuelower}
		\sum_{\ell \notin \mathcal{M}_{p,x}^\tau}(p(\ell|x)
		- \tau) - \sum_{j \in \mathcal{M}_{p,x}^\tau}(\tau- p(j|x)) 
		= \sum_{\ell=1}^M  (p(\ell|x) - \tau) 
		= 1 - M \tau 
		> 1/2.
	\end{align}
	Therefore, by the triangle inequality and Assumption \ref{ass::predictor} \textit{(i)}, there holds
	\begin{align}\label{eq::diffxx'}
		& \bigl| p^\tau(m|x) - p^\tau(m|x') \bigr| 
		\nonumber\\
		& = \bigl| \bigl( p^\tau(m|x) - \tau \bigr) - \bigl( p^\tau(m|x') - \tau \bigr) \bigr| 
		\nonumber\\
		& = \biggl| \bigl( p(m|x) - \tau \bigr) \biggl( 1 -\frac{\sum_{j \in \mathcal{M}_{p,x}^\tau} \bigl( \tau - p(j|x) \bigr)}{\sum_{\ell \notin \mathcal{M}_{p,x}^\tau} \bigl( p(\ell|x) - \tau \bigr)} \biggr)
		- \bigl( p(m|x') - \tau) \biggl( 1 - \frac{\sum_{j \in \mathcal{M}_{p,x'}^\tau} \bigl( \tau - p(j|x') \bigr)}{\sum_{\ell \notin \mathcal{M}_{p,x'}^\tau} \bigl( p(\ell|x') - \tau)} \biggr) \biggr|
		\nonumber\\
		& = \biggl| \bigl( p(m|x) - \tau \bigr) \cdot \frac{1 - M \tau}{\sum_{\ell \notin \mathcal{M}_{p,x}^\tau} \bigl( p(\ell|x) - \tau)} - \bigl( p(m|x') - \tau \bigr) \cdot \frac{1 - M \tau}{\sum_{\ell \notin \mathcal{M}_{p,x'}^\tau} \bigl( p(\ell|x') - \tau \bigr)} \biggr|
		\nonumber\\
		& = \biggl| \bigl( p(m|x) - \tau \bigr) \cdot \frac{1 - M \tau}{\sum_{\ell \notin \mathcal{M}_{p,x}^\tau} \bigl( p(\ell|x) - \tau \bigr)} - \bigl( p(m|x') - \tau \bigr) \cdot \frac{1 - M \tau}{\sum_{\ell \notin \mathcal{M}_{p,x'}^\tau} \bigl( p(\ell|x) - \tau \bigr)}  
		\nonumber\\
		& \phantom{=}
		+ \bigl( p(m|x') - \tau \bigr) \cdot \frac{1 - M \tau}{\sum_{\ell \notin \mathcal{M}_{p,x}^\tau} \bigl( p(\ell|x) - \tau \bigr)} - \bigl( p(m|x') - \tau \bigr) \cdot \frac{1 - M \tau}{\sum_{\ell \notin \mathcal{M}_{p,x'}^\tau} \bigl( p(\ell|x') - \tau \bigr)} \biggr|
		\nonumber\\
		& \leq \bigl| p(m|x) - p(m|x') \bigr| 
		+ \Biggl| \bigl( p(m|x') - \tau \bigr) \cdot \frac{\sum_{\ell \notin \mathcal{M}_{p,x'}^\tau} \bigl( p(\ell|x') - \tau \bigr) - \sum_{\ell \notin \mathcal{M}_{p,x}^\tau} \bigl( p(\ell|x) - \tau \bigr)}{\bigl( \sum_{\ell \notin \mathcal{M}_{p,x}^\tau} \bigl( p(\ell|x) - \tau \bigr) \bigr) \cdot \bigl( \sum_{\ell \notin \mathcal{M}_{p,x'}^\tau} \bigl( p(\ell|x') - \tau \bigr) \bigr)} \Biggr|
		\nonumber\\
		& \leq c_{\alpha} \|x - x'\|_2^{\alpha}
		+ \frac{\bigl| \sum_{\ell \notin \mathcal{M}_{p,x'}^\tau} \bigl( p(\ell|x') - \tau \bigr) - \sum_{\ell \notin \mathcal{M}_{p,x}^\tau} \bigl( p(\ell|x) - \tau \bigr)|}{\bigl( \sum_{\ell \notin \mathcal{M}_{p,x}^\tau} \bigl( p(\ell|x) - \tau \bigr) \cdot \bigl( \sum_{\ell \notin \mathcal{M}_{p,x'}^\tau} \bigl( p(\ell|x') - \tau \bigr) \bigr)}.
	\end{align}
	By the triangle inequality, we have 
	\begin{align}\label{eq::notindiff}
		& \Biggl| \sum_{\ell \notin \mathcal{M}_{p,x'}^\tau} \bigl( p(\ell|x') - \tau \bigr) - \sum_{\ell \notin \mathcal{M}_{p,x}^\tau} \bigl( p(\ell|x) - \tau \bigr) \Biggr|
		\nonumber\\
		& \leq \sum_{\ell \notin (\mathcal{M}_{p,x}^\tau \cup \mathcal{M}_{p,x'}^\tau)} \bigl| p(\ell|x') - p(\ell|x) \bigr| + \sum_{\ell \notin \mathcal{M}_{p,x}^\tau, \ell\in \mathcal{M}_{p,x'}^\tau} \bigl| p(\ell|x') - \tau \bigr| + \sum_{\ell \in \mathcal{M}_{p,x}^\tau, \ell\notin \mathcal{M}_{p,x'}^\tau} \bigl| p(\ell|x) - \tau \bigr|.
	\end{align}
	For $\ell \in \mathcal{M}_{p,x'}^\tau$ and $\ell \notin \mathcal{M}_{p,x}^\tau$, by Assumption \ref{ass::predictor} \textit{(i)}, we have 
	\begin{align*}
		|p(\ell|x')-\tau| = \tau - p(\ell|x') \leq p(\ell|x) - p(\ell|x') \leq c_{\alpha} \|x-x'\|_2^{\alpha}. 
	\end{align*}
	Similarly, for $\ell \in \mathcal{M}_{p,x}^\tau$ and $\ell \notin \mathcal{M}_{p,x'}^\tau$, by Assumption \ref{ass::predictor} \textit{(i)}, we have 
	\begin{align*}
		|p(\ell|x)-\tau| = \tau - p(\ell|x) \leq p(\ell|x') - p(\ell|x) \leq c_{\alpha} \|x-x'\|_2^{\alpha}. 
	\end{align*}
	Therefore, combining \eqref{eq::notindiff} and Assumption \ref{ass::predictor} \textit{(i)}, we obtain
	\begin{align*}
		\bigg|\sum_{\ell \notin \mathcal{M}_{p,x'}^\tau}(p(\ell|x')-\tau)-\sum_{\ell \notin \mathcal{M}_{p,x}^\tau}(p(\ell|x)-\tau)\bigg|
		\leq c_{\alpha} \sum_{\ell \notin (\mathcal{M}_{p,x'}^\tau\cup \mathcal{M}_{p,x}^\tau)} \|x-x'\|_2^{\alpha} 
		\leq c_{\alpha} M \|x-x'\|_2^{\alpha}.
	\end{align*}
	This together with \eqref{eq::residuelower} and \eqref{eq::diffxx'} yields
	\begin{align}\label{eq::4K+1}
		|p^\tau(m|x) - p^\tau(m|x')| 
		& \leq c_{\alpha} \|x-x'\|_2^{\alpha}
		+ \frac{c_{\alpha}  M \|x-x'\|_2^{\alpha}}{(1/2)\cdot (1/2)}
		\nonumber\\
		& = (4 M + 1) c_{\alpha} \|x-x'\|_2^{\alpha} 
		=: c_L \|x-x'\|_2^{\alpha},
	\end{align}
	where $c_L := (4 M + 1) c_{\alpha}$. 
	Therefore, we finish the proof of the second inequality. Moreover, if we assume that for any $m = 1, \ldots, M$, $|p(m|x)-p(m|x')|= c_L\|x-x'\|_2^{\alpha}$, then similar to the analysis of \eqref{eq::4K+1}, we can prove that
	\begin{align*}
		|p^\tau(m|x) - p^\tau(m|x')| 
		\leq (4 M + 1) c_{\alpha} \|x - x'\|_2^{\alpha} 
		= (4 M + 1) |p(m|x) - p(m|x')|,
	\end{align*}
	which yields the first inequality.
\end{proof}

\begin{proof}[Proof of Lemma \ref{lem::lipg1}]
	Lemma \ref{lem::holder} yields that for any $x,x'\in\mathcal{X}$, there holds
	\begin{align}\label{eq::holderetaPdelta}
		|p^\tau(m|x)-p^\tau(m|x')|\leq c_L\|x-x'\|_2^{\alpha}.
	\end{align} 
	Lemma \ref{lem::ln} together with \eqref{eq::holderetaPdelta} implies that if $p^\tau(m|x)\geq p^\tau(m|x')$, then we have
	\begin{align*}
		|g_m^\tau(x)-g_m^\tau(x')|&=\log  p^\tau(m|x)-\log  p^\tau(m|x')
		=\log \bigg(1+\frac{p^\tau(m|x)-p^\tau(m|x')}{p^\tau(m|x')}\bigg)
		\\
		&\leq \frac{\log \big(1+p^\tau(m|x)-p^\tau(m|x')\big)}{p^\tau(m|x')} 
		\leq \frac{\log \big(1+ c_L\|x-x'\|_2^{\alpha}\big)}{p^\tau(m|x')} .
	\end{align*}
	Otherwise, if $p^\tau(m|x)< p^\tau(m|x')$, by using Lemma \ref{lem::ln} and inequality \eqref{eq::holderetaPdelta} again, we get
	\begin{align*}
		|g_m^\tau(x)-g_m^\tau(x')|&=\log  p^\tau(m|x') - \log  p^\tau(m|x)
		=\log \bigg(1+\frac{p^\tau(m|x')-p^\tau(m|x)}{p^\tau(m|x)}\bigg)
		\\
		&\leq \frac{\log \big(1+p^\tau(m|x')-p^\tau(m|x)\big)}{p^\tau(m|x)} 
		\leq \frac{\log \big(1+ c_L\|x-x'\|_2^{\alpha}\big)}{p^\tau(m|x)}.
	\end{align*}
	Therefore, we have 
	\begin{align*}
		|g_m^\tau(x)-g_m^\tau(x')| \leq \frac{\log \big(1+ c_L\|x-x'\|_2^{\alpha}\big)}{p^\tau(m|x) \wedge p^\tau(m|x')},
	\end{align*}
	which yields the assertion.
\end{proof}

With the aid of Lemma \ref{lem::lipg1}, we are able to present the following proposition concerning $\widetilde{f}_{m}^\tau(x)$, which is crucial to establish the approximation error bound of $p_{\widetilde{f}^\tau}(m|x)$.

\begin{proposition}\label{prop::convolution}
	Let Assumptions \ref{ass:labelshift} and \ref{ass::predictor} hold. 
	Furthermore, for $\tau \in (0, 1/(2 M))$, let $p^\tau(m|x)$, $f_m^{*\tau}$ and  $\widetilde{f}_{m}^\tau$ be defined as in \eqref{eq::etaPdelta}, \eqref{eq::fdelta}, and \eqref{eq::ftildem}, respectively. 
	Moreover, let $\gamma \in (0, 2^{-1/\alpha})$
	and $\tau := \gamma^{\alpha} \in (0,1/2)$.
	Then for any $m\in [M]$, there holds $\widetilde{f}_m^\tau \in  H$ and 
	\begin{align*} %\label{equ::tildefdelta}
		\bigl| \widetilde{f}_{m}^\tau(x) -  f_m^{*\tau} \bigr|
		\leq
		c_1 \gamma^{\alpha}(p^\tau(m|x)^{-1} + p^\tau(M|x)^{-1}), 
		\qquad m \in [M],
	\end{align*}
	where $c_1$ is a constant which will be specified in the proof.
\end{proposition}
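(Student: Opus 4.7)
My approach has two parts: establishing $\widetilde{f}_m^\tau \in H$, and bounding the pointwise error. For the RKHS membership, I would first note that $f_m^{*\tau}(x) = \log(p^\tau(m|x)/p^\tau(M|x))$ is uniformly bounded by $-\log\tau$, since $p^\tau(\cdot|x)$ takes values in $[\tau, 1-(M-1)\tau]$. Then I would invoke the standard fact that the convolution of any bounded measurable function with the Gaussian density $K$ of \eqref{Conv} lies in the Gaussian RKHS $H$ induced by $k$, which immediately yields $\widetilde{f}_m^\tau \in H$ for each $m \in [M]$.

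For the pointwise bound, I would start from the identity
\[
\widetilde{f}_m^\tau(x) - f_m^{*\tau}(x) = \int_{\mathbb{R}^d} K(u)\bigl(f_m^{*\tau}(x-u) - f_m^{*\tau}(x)\bigr)\,du,
\]
which uses $\int K(u)\,du = 1$. Via the decomposition $f_m^{*\tau} = g_m^\tau - g_M^\tau$ from \eqref{eq::gk-gK} and the triangle inequality, it suffices to bound, for $j \in \{m, M\}$, the integral $I_j := \int K(u)\bigl|g_j^\tau(x-u) - g_j^\tau(x)\bigr|\,du$ by $\lesssim \gamma^\alpha/p^\tau(j|x)$. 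Lemma \ref{lem::lipg1}, combined with the elementary inequality $\log(1+v) \leq v$, controls the pointwise integrand by $c_L\|u\|_2^\alpha / (p^\tau(j|x) \wedge p^\tau(j|x-u))$.

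The main technical step is handling the minimum in the denominator, which I would do by splitting the integration domain at $R_x := (p^\tau(j|x)/(2c_L))^{1/\alpha}$. In the near region $\{\|u\|_2 \leq R_x\}$, Lemma \ref{lem::holder} gives $p^\tau(j|x-u) \geq p^\tau(j|x) - c_L\|u\|_2^\alpha \geq p^\tau(j|x)/2$, so the integrand is at most $2c_L\|u\|_2^\alpha/p^\tau(j|x)$, and the contribution is $\lesssim \gamma^\alpha/p^\tau(j|x)$ since a direct computation via the substitution $v = u/\gamma$ shows $\int K(u)\|u\|_2^\alpha\,du \asymp \gamma^\alpha$. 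In the far region $\{\|u\|_2 > R_x\}$, I use the uniform lower bound $p^\tau(j|\cdot) \geq \tau = \gamma^\alpha$, which reduces the integrand to $c_L\|u\|_2^\alpha/\gamma^\alpha$.

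The hard part is then showing the far-region contribution is also $\lesssim \gamma^\alpha/p^\tau(j|x)$, and I would handle it by sub-cases on $p^\tau(j|x)$. If $p^\tau(j|x) \leq 2c_L\gamma^\alpha$, then $\gamma^\alpha/p^\tau(j|x) \geq 1/(2c_L)$ is bounded below by a positive constant, so the crude universal bound $(c_L/\gamma^\alpha)\int K(u)\|u\|_2^\alpha\,du = O(1)$ already suffices. If $p^\tau(j|x) > 2c_L\gamma^\alpha$, then $r := R_x/\gamma > 1$, and the substitution $v = u/\gamma$ together with the Gaussian tail estimate $\int_{\|v\| > r}\exp(-2\|v\|_2^2)\|v\|_2^\alpha\,dv \lesssim \exp(-r^2)$ yields super-polynomial decay in $p^\tau(j|x)/\gamma^\alpha$, which easily dominates $\gamma^\alpha/p^\tau(j|x)$ up to constants depending only on $c_L, \alpha, d$. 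Summing the near and far contributions for $j \in \{m, M\}$ gives the asserted bound with $c_1$ depending only on $c_L$, $\alpha$, $d$, and $M$.
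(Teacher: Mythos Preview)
Your argument is correct, but the route for the pointwise bound differs from the paper's. Both proofs reduce to bounding $\int K(h)\,c_L\|h\|_2^{\alpha}\,/\,\bigl(p^\tau(j|x)\wedge p^\tau(j|x+h)\bigr)\,dh$ for $j\in\{m,M\}$ via Lemma~\ref{lem::lipg1}. You split the integration domain at radius $R_x=(p^\tau(j|x)/(2c_L))^{1/\alpha}$: on the near ball the minimum is at least $p^\tau(j|x)/2$, while on the far region you use the crude floor $p^\tau(j|\cdot)\geq\tau=\gamma^\alpha$ together with Gaussian tail decay and a case split on whether $p^\tau(j|x)\lesssim\gamma^\alpha$. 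The paper instead splits according to the \emph{sign} of $g_j^\tau(x+h)-g_j^\tau(x)$, so that on one piece the minimum is exactly $p^\tau(j|x)$; on the other piece it applies the algebraic inequality
\[
\frac{1}{p^\tau(j|x+h)}\;\leq\;\frac{1}{p^\tau(j|x)}+\frac{c_L\|h\|_2^{\alpha}}{\tau\,p^\tau(j|x)},
\]
which reduces everything to Gaussian moments of orders $\alpha$ and $2\alpha$ and makes the choice $\tau=\gamma^\alpha$ cancel the extra $\tau^{-1}$ exactly. The paper's trick avoids your domain splitting and case analysis and produces an explicit closed-form $c_1$; your approach is more geometric but yields a less explicit constant. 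One small precision point: your RKHS-membership step invokes ``convolution of a bounded function with $K$ lies in $H$'', but the standard result (Proposition~4.46 in \cite{steinwart2008support}, used in the paper) requires $L_2(\mathbb{R}^d)$ rather than mere boundedness---you are implicitly using that $\mathcal{X}$ is bounded so that the zero-extension of $f_m^{*\tau}$ has compact support, exactly as the paper does for $g_m^\tau$.
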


\begin{proof}[Proof of Proposition \ref{prop::convolution}]
	Let the function $K : \mathbb{R}^d \to \mathbb{R}$ be defined as in \eqref{Conv}. Then for any $x \in \mathcal{X}$ and $m \in [M]$, there holds
	\begin{align*}
		K * g_m^\tau(x) 
		&=  \int_{\mathbb{R}^d}
		\biggl( \frac{2}{\gamma^2 \pi} \biggr)^{d/2} 
		\exp \biggl( - \frac{2 \|x - z\|_2^2}{\gamma^2} \biggr)  g_m^\tau(z) \, dz
		\\
		&= \int_{\mathbb{R}^d} \biggl( \frac{2}{\gamma^2 \pi} \biggr)^{d/2} 
		\exp \biggl( - \frac{2 \|h\|_2^2}{\gamma^2} \biggr) g_m^\tau(x+h) \, dh.
	\end{align*}
	Since the functions $g_m^\tau$, have a compact support and are bounded, we have $g_m^\tau\in L_2(\mathbb{R}^d)$. This together with Proposition 4.46 in \cite{steinwart2008support} yields
	\begin{align}\label{eq::KconvhiinRKHS}
		K * g_m^\tau \in  H.
	\end{align} 
	Moreover, we have
	\begin{align*}
		g_m^\tau(x) = \int_{\mathbb{R}^d} \biggl( \frac{2}{\gamma^2 \pi} \biggr)^{d/2} 
		\exp \biggl( - \frac{2 \|h\|_2^2}{\gamma^2} \biggr) g_m^\tau(x) \, dh.
	\end{align*}
	Then for any $x\in \mathcal{X}$, there holds
	\begin{align*}
		\bigl| K * g_m^\tau(x) - g_m^\tau(x) \bigr| 
		& = \biggl| 
		\int_{\mathbb{R}^d}  \biggl( \frac{2}{\gamma^2 \pi} \biggr)^{\frac{d}{2}} 
		\exp \biggl( - \frac{2\|h\|_2^2}{\gamma^2} \biggr) 
		\bigl(g_m^\tau(x+ h) - g_m^\tau(x) \bigr) \, dh \biggr|
		\\
		& \leq
		\int_{\mathbb{R}^d}  \biggl( \frac{2}{\gamma^2 \pi} \biggr)^{\frac{d}{2}} 
		\exp \biggl( - \frac{2\|h\|_2^2}{\gamma^2} \biggr) 
		\bigl| g_m^\tau(x+ h) - g_m^\tau(x) \bigr| \, dh.
	\end{align*}
	For $m \in [M]$, let 
	$A_{m,x} := \{h\in \mathbb{R}^d : g_m^\tau(x+h) \geq g_m^\tau(x)\}$.
	Using Lemma \ref{lem::lipg1} and the fact that $\log (1+x) \leq x$, $x>0$, we get
	\begin{align}\label{eq::g1deltadiff}
		\bigl| K * g_m^\tau(x) - g_m^\tau(x) \bigr| 
		& = 
		\int_{A_{m,x}}  \biggl( \frac{2}{\gamma^2 \pi} \biggr)^{\frac{d}{2}} 
		\exp \biggl( - \frac{2\|h\|_2^2}{\gamma^2} \biggr) 
		\bigl( g_m^\tau(x+ h) - g_m^\tau(x) \bigr) \, dh 
		\nonumber\\
		& \phantom{=} 
		+  \int_{\mathbb{R}^d\setminus A_{m,x}}  \biggl( \frac{2}{\gamma^2 \pi} \biggr)^{\frac{d}{2}} 
		\exp \biggl( - \frac{2\|h\|_2^2}{\gamma^2} \biggr) 
		\bigl( g_m^\tau(x) - g_m^\tau(x+h) \bigr) \, dh
		\nonumber\\
		& \leq 
		\int_{A_{m,x}}  \biggl( \frac{2}{\gamma^2 \pi} \biggr)^{\frac{d}{2}} 
		\exp \biggl( - \frac{2\|h\|_2^2}{\gamma^2} \biggr) 
		\frac{\log \big(1+ c_L\|h\|_2^{\alpha}\big)}{p^\tau(m|x)} \, dh 
		\nonumber\\
		& \phantom{=} 
		+ \int_{\mathbb{R}^d\setminus A_{m,x}}  \biggl( \frac{2}{\gamma^2 \pi} \biggr)^{\frac{d}{2}} 
		\exp \biggl( - \frac{2\|h\|_2^2}{\gamma^2} \biggr) 
		\frac{\log \big(1+ c_L\|h\|_2^{\alpha}\big)}{p^\tau(m|x+h)} \, dh
		\nonumber\\
		& \leq
		\int_{\mathbb{R}^d}  \biggl( \frac{2}{\gamma^2 \pi} \biggr)^{\frac{d}{2}} 
		\exp \biggl( - \frac{2\|h\|_2^2}{\gamma^2} \biggr) 
		\frac{c_L\|h\|_2^{\alpha}}{p^\tau(m|x)} \, dh 
		\nonumber\\
		& \phantom{=} 
		+ \int_{\mathbb{R}^d}  \biggl( \frac{2}{\gamma^2 \pi} \biggr)^{\frac{d}{2}} 
		\exp \biggl( - \frac{2\|h\|_2^2}{\gamma^2} \biggr) 
		\frac{c_L\|h\|_2^{\alpha}}{p^\tau(m|x+h)} \, dh
		=: (I) + (II).
	\end{align}
	For the first term $(I)$ in \eqref{eq::g1deltadiff}, using the rotation invariance of $x \mapsto \exp(- 2 \|x\|_2^2 / \gamma^2)$ and $\Gamma(1+c) = c \Gamma(c)$, $c > 0$, we get
	\begin{align}\label{eq::1/eta}
		(I)
		& = \frac{c_L}{p^\tau(m|x)} \biggl( \frac{\gamma}{\sqrt{2}} \biggr)^{\alpha} \int_{\mathbb{R}^d}  \biggl( \frac{1}{\pi} \biggr)^{\frac{d}{2}} \exp (- \|h\|_2^2) \|h\|_2^{\alpha} \, dh
		\nonumber\\
		& = \frac{c_L}{p^\tau(m|x)} \Big(\frac{\gamma}{\sqrt{2}}\Big)^{\alpha} \frac{2}{\Gamma(d/2)} \int_{0}^{\infty} e^{-r^2} r^{\alpha+d-1} dr
		\nonumber\\
		& = \frac{c_L}{p^\tau(m|x)} \Gamma(d/2)^{-1} \Gamma \biggl( \frac{d + \alpha}{2} \biggr) 2^{-\alpha/2} \gamma^{\alpha}.
	\end{align}
	Using \eqref{eq::holderetaPdelta} and $p^\tau(m|x+h) \geq \tau$, for any $x \in \mathcal{X}$ and $h \in \mathbb{R}^d$, we get 
	\begin{align}\label{eq::1/etax+h}
		p^\tau(m|x+h)^{-1} 
		& \leq p^\tau(m|x)^{-1} +  \bigl| p^\tau(m|x+h)^{-1} - p^\tau(m|x)^{-1} \bigr|
		\nonumber\\
		& \leq p^\tau(m|x)^{-1} +  \frac{\bigl| p^\tau(m|x+h) -  p^\tau(m|x) \bigr|}{p^\tau(m|x+h) p^\tau(m|x)}
		\leq p^\tau(m|x)^{-1} + \frac{c_L \|h\|_2^{\alpha}}{\tau p^\tau(m|x)}.
	\end{align}
	For the second term $(II)$ of \eqref{eq::g1deltadiff}, using \eqref{eq::1/eta} and \eqref{eq::1/etax+h}, we obtain
	\begin{align*}
		(II)
		& \leq c_L \int_{\mathbb{R}^d} \biggl( p^\tau(m|x)^{-1} +  \frac{c_L \|h\|_2^{\alpha}}{\tau p^\tau(m|x)} \biggr) \biggl( \frac{2}{\gamma^2 \pi} \biggr)^{\frac{d}{2}} \exp \biggl( - \frac{2 \|h\|_2^2}{\gamma^2} \biggr) \|h\|_2^{\alpha} \, dh 
		\nonumber\\
		& = \int_{\mathbb{R}^d} \frac{c_L}{p^\tau(m|x)} \biggl( \frac{2}{\gamma^2 \pi} \biggr)^{\frac{d}{2}} \exp \biggl( - \frac{2\|h\|_2^2}{\gamma^2} \biggr) \|h\|_2^{\alpha} \, dh 
		\nonumber\\
		& \phantom{=} 
		+ \int_{\mathbb{R}^d} \frac{c_L^2}{\tau p^\tau(m|x)} \biggl( \frac{2}{\gamma^2 \pi} \biggr)^{\frac{d}{2}} \exp \biggl( - \frac{2 \|h\|_2^2}{\gamma^2} \biggr) \|h\|_2^{2 \alpha} \, dh
		\nonumber\\
		& = \frac{c_L}{p^\tau(m|x)} \cdot \Gamma(d/2)^{-1} \Gamma \biggl( \frac{d + \alpha}{2} \biggr) 2^{- \alpha/2} \gamma^{\alpha}
		\nonumber\\
		& \phantom{=} 
		+ \frac{c_L^2}{\tau p^\tau(m|x)} \biggl( \frac{\gamma}{\sqrt{2}} \biggr)^{2 \alpha} \int_{\mathbb{R}^d} \pi^{- \frac{d}{2}} 
		\exp (- \|h\|_2^2) \|h\|_2^{2 \alpha} \, dh.
	\end{align*}
	The rotation invariance of $x \mapsto \exp(- 2 \|x\|_2^2 / \gamma^2)$ together with $\Gamma(1 + c) = c \Gamma(c)$, $c > 0$, yields
	\begin{align*}
		\int_{\mathbb{R}^d} \pi^{- \frac{d}{2}} \exp (- \|h\|_2^2) \|h\|_2^{2 \alpha} \, dh  
		& = \int_0^{\infty} \frac{2}{\Gamma(d/2)} \exp(- r^2) r^{2\alpha+d-1} \, dr
		\\
		& = \Gamma(d/2)^{-1} \int_0^{\infty} \exp(- r) r^{\alpha+d/2-1} \, dr 
		= \frac{\Gamma(\alpha+d/2)}{\Gamma(d/2)}
	\end{align*}
	and consequently we have
	\begin{align}\label{eq::1/eta'}
		(II)
		\leq \frac{c_L}{p^\tau(m|x)} \Gamma(d/2)^{-1} \Gamma \biggl( \frac{d + \alpha}{2} \biggr) 2^{- \alpha/2} \gamma^{\alpha} 
		+ \frac{c_L^2}{\tau p^\tau(m|x)} \biggl( \frac{\gamma}{\sqrt{2}} \biggr)^{2\alpha} \frac{\Gamma(\alpha+d/2)}{\Gamma(d/2)}.
	\end{align}
	Combining \eqref{eq::1/eta}, \eqref{eq::1/eta'} and \eqref{eq::g1deltadiff},  and taking $\tau := \gamma^{\alpha}$, we obtain
	\begin{align}\label{eq::g1deltaerror}
		& |K * g_m^\tau(x) - g_m^\tau(x)| 
		\nonumber\\
		& \leq \frac{c_L}{p^\tau(m|x)} \Gamma(d/2)^{-1} \Gamma \biggl( \frac{d + \alpha}{2} \biggr) 2^{1-\alpha/2} \gamma^{\alpha} 
		+ \frac{c_L^2}{\tau p^\tau(m|x)} \frac{\Gamma(\alpha+d/2)}{\Gamma(d/2)} \biggl( \frac{\gamma}{\sqrt{2}} \biggr)^{2\alpha}
		\nonumber\\
		&:= c_1 (p^\tau(m|x))^{-1}\gamma^{\alpha},
	\end{align}
	where $c_1 :=  2^{1-\alpha/2} c_L \Gamma(d/2)^{-1} \Gamma \bigl( \frac{d+\alpha}{2} \bigr) + 2^{-\alpha} c_L^2 \Gamma(\alpha+d/2) \Gamma(d/2)^{-1}$.
	By the definition of $\widetilde{f}_m^\tau$, we have 
	\begin{align}\label{eq::tildefdelta}
		\widetilde{f}_m^\tau 
		= K * g_m^\tau - K * g_M^\tau 
		= K * f_m^{*\tau}.
	\end{align} 
	Then by \eqref{eq::KconvhiinRKHS} and the linearity of the RKHS, we have $\widetilde{f}_m^\tau \in  H$.
	Using the triangle inequality \eqref{eq::g1deltaerror}, we obtain that for any $x\in\mathcal{X}$, there holds
	\begin{align*}
		|\widetilde{f}_m^\tau(x) - f_{m}^{*\tau}(x)| 
		& = |(K * g_m^\tau - K * g_M^\tau) - (g_m^\tau - g_M^\tau)|
		\\
		& \leq \bigl| K * g_m^\tau(x) - g_m^\tau(x) \bigr| +  \bigl| K * g_M^\tau(x) - g_M^\tau(x) \bigr| 
		\\
		& \leq c_1 \bigl( p^\tau(m|x)^{-1} + p^\tau(M|x)^{-1} \bigr) \gamma^{\alpha},
	\end{align*}
	which finishes the proof.
\end{proof}

Based on the upper bound of the pointwise distance between the score functions $\widetilde{f}_m^\tau(x)$ and $ f_{m}^{*\tau}(x)$, we derive the approximation error bound for the conditional probability estimator $p_{\widetilde{f}^\tau}(\cdot|x)$ in the following proposition.

\begin{proposition}\label{prop::etaPapprox}
	Let Assumptions \ref{ass:labelshift} and \ref{ass::predictor} hold. Furthermore, let $ H$ be the Gaussian RKHS with the bandwidth parameter $\gamma$ and $p(m|x)$ be the true predictor of the distribution $P$. Moreover, let $c_1$ be the constant as in Proposition \ref{prop::convolution}. In addition, let $\gamma \in \big(0,2^{-1/\alpha}\big)$ and $\tau:=\gamma^{\alpha}$. Finally, let $\widetilde{f}^\tau:=(\widetilde{f}_m^\tau)_{m\in [M]}$ be as in \eqref{eq::ftildem}. Then its induced estimator $p_{\widetilde{f}^\tau}(m|x)$ in \eqref{eq::pfmx} satisfies 
	\begin{enumerate}
		\item[(i)] 
		$p_{\widetilde{f}^\tau}(m|x) \geq \gamma^{\alpha} / M$;
		\item[(ii)] 
		$|p(m|x) - p_{\widetilde{f}^\tau}(m|x)| 
		\leq c_2 \gamma^{\alpha}$,
		where $c_2 := M + M e^{2 c_1} c_1 (1 + 2 c_1 e^{2 c_1})^2$.
	\end{enumerate}
\end{proposition}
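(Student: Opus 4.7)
The plan is to establish both parts by combining three ingredients: the identity $p_{f^{*\tau}}(m|x)=p^\tau(m|x)$, the pointwise bound on $\widetilde{f}^\tau_m - f^{*\tau}_m$ from Proposition \ref{prop::convolution}, and — crucially for (ii) — the sharper pointwise bound \eqref{eq::g1deltaerror} on the individual convolution error $K*g_m^\tau - g_m^\tau$. First I would verify that $\widetilde{f}^\tau\in\mathcal{F}$: since $f^{*\tau}_M(x)=\log(p^\tau(M|x)/p^\tau(M|x))=0$, we have $\widetilde{f}^\tau_M=K*0=0$; the remaining components lie in $H$ by Proposition \ref{prop::convolution}. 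I would also record the identity $p_{f^{*\tau}}(m|x)=p^\tau(m|x)$, obtained by cancelling the common factor $1/\exp(g_M^\tau(x))$ in the softmax.

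For part (i), the key is that the soft\-max of $\widetilde{f}^\tau$ inherits the boundedness of $f^{*\tau}$ under convolution with the probability kernel $K$. Since $f^{*\tau}_m - f^{*\tau}_j = \log(p^\tau(m|\cdot)/p^\tau(j|\cdot))$ is pointwise bounded by $\log(1/\tau)$ (because $p^\tau\in[\tau,1-(M-1)\tau]$), so is its convolution: $|\widetilde{f}^\tau_m(x)-\widetilde{f}^\tau_j(x)|\le\log(1/\tau)=-\log\gamma^\alpha$. Hence $p_{\widetilde{f}^\tau}(m|x)/p_{\widetilde{f}^\tau}(j|x)=\exp(\widetilde{f}^\tau_m-\widetilde{f}^\tau_j)\ge\gamma^\alpha$ for every $j$. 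Taking $j\in\arg\max_k p_{\widetilde{f}^\tau}(k|x)$, which satisfies $p_{\widetilde{f}^\tau}(j|x)\ge 1/M$, yields $p_{\widetilde{f}^\tau}(m|x)\ge\gamma^\alpha/M$.

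For part (ii) I would split $|p(m|x)-p_{\widetilde{f}^\tau}(m|x)|$ into $|p(m|x)-p^\tau(m|x)|$ and $|p^\tau(m|x)-p_{\widetilde{f}^\tau}(m|x)|$. The first is bounded by $M\tau=M\gamma^\alpha$ directly from the definition \eqref{eq::etaPdelta} (the redistributed mass is at most $M\tau$). For the second, I would avoid the tempting shortcut that uses $|\widetilde{f}^\tau_m-f^{*\tau}_m|\le 2c_1$ — this pointwise difference does not vanish with $\gamma$ and gives only an $O(1)$ error. Instead I would write the softmax in the multiplicative form
\begin{equation*}
p_{\widetilde{f}^\tau}(m|x)=\frac{u_m(x)}{\sum_j u_j(x)},\qquad u_m(x):=p^\tau(m|x)\exp(s_m(x)),\ \ s_m(x):=K*g_m^\tau(x)-g_m^\tau(x),
\end{equation*}
obtained by factoring the common terms $\exp(K*g_M^\tau-g_M^\tau)$ out of numerator and denominator. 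Then from \eqref{eq::g1deltaerror} I have $|s_m(x)|\le c_1\gamma^\alpha/p^\tau(m|x)\le c_1$, so combining $|e^x-1|\le|x|e^{|x|}$ with the crucial cancellation $p^\tau(m|x)\cdot c_1\gamma^\alpha/p^\tau(m|x)=c_1\gamma^\alpha$ gives $|u_m(x)-p^\tau(m|x)|\le c_1 e^{c_1}\gamma^\alpha$, hence $|\sum_j u_j(x)-1|\le Mc_1e^{c_1}\gamma^\alpha$. Plugging these into
\begin{equation*}
p_{\widetilde{f}^\tau}(m|x)-p^\tau(m|x)=\frac{\bigl(u_m(x)-p^\tau(m|x)\bigr)-p^\tau(m|x)\bigl(\textstyle\sum_j u_j(x)-1\bigr)}{\sum_j u_j(x)}
\end{equation*}
and lower-bounding the denominator by a constant for $\gamma$ sufficiently small yields the desired $O(\gamma^\alpha)$ bound, with constants that can be packaged into the claimed $c_2$.

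The main obstacle is step 6: one must resist using the aggregate bound on $|\widetilde{f}^\tau_m-f^{*\tau}_m|$ and instead exploit the fact that the individual convolution error $|s_m|$ scales like $1/p^\tau(m|x)$, since only then does the factor $p^\tau(m|x)$ in $u_m$ absorb the troublesome $1/p^\tau(m|x)$ and produce a genuinely $O(\gamma^\alpha)$ estimate. The precise form of $c_2$ (with the $e^{2c_1}$ factors and the squared bracket) will fall out of being a bit more careful in the denominator and in the application of $|e^x-1|\le|x|e^{|x|}$.
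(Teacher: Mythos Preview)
Your proof is correct. Part (i) is essentially the paper's argument (convolution with the probability kernel $K$ preserves the pointwise range of $g_m^\tau$, hence of the softmax). For part (ii) you take a cleaner algebraic route than the paper while relying on the same key estimate \eqref{eq::g1deltaerror}. The paper expands $p^\tau(m|x)-p_{\widetilde f^\tau}(m|x)$ through a long chain of identities ending in a sum over $j\ne m$ of factors $(\exp(\widetilde f_j^\tau-f_j^{*\tau}-(\widetilde f_m^\tau-f_m^{*\tau}))-1)\exp(\widetilde f_m^\tau-f_m^{*\tau})\,p^\tau(m|x)p^\tau(j|x)$, and then separately bounds each piece and the ratio $p_{\widetilde f^\tau}/p^\tau$; this is where the factor $(1+2c_1e^{2c_1})^2$ in $c_2$ comes from. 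Your multiplicative representation $p_{\widetilde f^\tau}(m|x)=u_m/\sum_j u_j$ with $u_m=p^\tau(m|x)e^{s_m}$ exposes the crucial cancellation $p^\tau(m|x)\cdot|s_m|\le c_1\gamma^\alpha$ in one step and avoids the extra ratio bound, yielding a smaller constant (of order $(M+1)c_1e^{2c_1}$), so the stated inequality follows a fortiori. Two small points: instead of ``for $\gamma$ sufficiently small'', bound the denominator uniformly via $\sum_j u_j\ge e^{-c_1}\sum_j p^\tau(j|x)=e^{-c_1}$, which covers the full range $\gamma\in(0,2^{-1/\alpha})$; and be aware that your constant will not reproduce the specific $c_2$ defined in the statement, though it is no larger.
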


\begin{proof}[Proof of Proposition \ref{prop::etaPapprox}]
	Using the definition of $p_{\widetilde{f}^\tau}(m|x)$ and \eqref{eq::gk-gK}, we get 
	\begin{align*}
		p_{\widetilde{f}^\tau}(m|x) 
		:= \frac{\exp( \widetilde{f}_m^\tau )}{\sum_{j=1}^M \exp( \widetilde{f}_m^\tau )} 
		= \frac{\exp( K * f_m^{*\tau} )}{\sum_{j=1}^M \exp( K * f_m^{*\tau} )} 
		= \frac{\exp ( K * g_m^\tau )}{\sum_{j=1}^M \exp (K * g_m^\tau)}.
	\end{align*}
	Using \eqref{eq::g1delta} and $p^\tau(m|x) \in [\tau, (1-(M-1)\tau)]$, we get $g_m^\tau \in [\log  \tau, \log  (1-(M-1)\tau)]$ and thus $K * g_m^\tau \in [\log  \tau, \log  (1-(M-1)\tau)]$. 
	Therefore, we have
	\begin{align*}
		p_{\widetilde{f}^\tau}(m|x) 
		\geq \frac{\tau}{\sum_{j=1}^M  (1-(M-1)\tau)} 
		\geq \frac{\tau}{M},
	\end{align*}
	which prove the first assertion \textit{(i)}.

	Using the definitions of $p_{\widetilde{f}^\tau}(m|x)$ and $f_m^{*\tau}$ in \eqref{eq::fdelta}, we get
	\begin{align*}
		& p^\tau(m|\cdot) - p_{\widetilde{f}^\tau}(m|\cdot) 
		= \frac{\exp(f_m^{*\tau})}{\sum_{j=1}^M  \exp(f_j^{*\tau})} - \frac{\exp(\widetilde{f}_m^\tau)}{\sum_{j=1}^M  \exp(\widetilde{f}_j^\tau)}
		\nonumber\\
		& = \frac{\exp(f_m^{*\tau}) (\sum_{j=1}^M  \exp(\widetilde{f}_j^\tau)) - \exp(\widetilde{f}_m^\tau)(\sum_{j=1}^M  \exp(f_j^{*\tau})) }{(\sum_{j=1}^M  \exp(f_j^{*\tau}))(\sum_{j=1}^M  \exp(\widetilde{f}_j^\tau))}
		\nonumber\\
		& =\frac{\exp(f_m^{*\tau}) \sum_{j=1}^M  (\exp(\widetilde{f}_j^\tau) - \exp(f_j^{*\tau})) + (\exp(f_m^{*\tau}) - \exp(\widetilde{f}_m^\tau))(\sum_{j=1}^M  \exp(f_j^{*\tau})) }{(\sum_{j=1}^M  \exp(f_j^{*\tau}))(\sum_{j=1}^M  \exp(\widetilde{f}_j^\tau))}
		\nonumber\\
		&= p_{\widetilde{f}^\tau}(m|\cdot) \bigg(\sum_{j=1}^M  (\exp(\widetilde{f}_j^\tau) - \exp(f_j^{*\tau}))p^\tau(m|\cdot) +\exp(f_m^{*\tau}) - \exp(\widetilde{f}_m^\tau)\bigg)
		\nonumber\\
		& = p_{\widetilde{f}^\tau}(m|\cdot) \bigg(\sum_{j \neq m} (\exp(\widetilde{f}_j^\tau) - \exp(f_j^{*\tau}))p^\tau(m|\cdot) 
		\nonumber\\
		& \qquad \qquad \qquad 
		+  (\exp(f_m^{*\tau}) - \exp(\widetilde{f}_m^\tau))(1-p^\tau(m|\cdot))\bigg)
		\nonumber\\
		&= p_{\widetilde{f}^\tau}(m|\cdot) \bigg(\sum_{j \neq m} (\exp(\widetilde{f}_j^\tau - f_j^{*\tau}) - 1) \exp(f_j^{*\tau}) p^\tau(m|\cdot) 
		\nonumber\\
		& \qquad \qquad \qquad 
		+ (1 - \exp(\widetilde{f}_m^\tau - f_m^{*\tau})) \exp(f_m^{*\tau}) (1 - p^\tau(m|\cdot))\bigg)
		\nonumber\\
		&= p_{\widetilde{f}^\tau}(m|\cdot) \bigg(\sum_{j \neq m} (\exp(\widetilde{f}_j^\tau - f_j^{*\tau}) - 1) \exp(f_m^{*\tau})p^\tau(j|\cdot) 
		\nonumber\\
		& \qquad \qquad \qquad 
		+  (1 - \exp(\widetilde{f}_m^\tau - f_m^{*\tau})) \exp(f_m^{*\tau})(1-p^\tau(m|\cdot))\bigg)
		\nonumber\\
		&= p_{\widetilde{f}^\tau}(m|\cdot) \bigg(\sum_{j \neq m} ( \exp(\widetilde{f}_j^\tau - f_j^{*\tau}) - \exp(\widetilde{f}_m^\tau - f_m^{*\tau}) ) \exp(f_m^{*\tau})p^\tau(j|\cdot)\bigg)
		\nonumber\\
		&= \frac{p_{\widetilde{f}^\tau}(m|\cdot)}{p^\tau(m|\cdot)} \bigg(\sum_{j \neq m} ( \exp(\widetilde{f}_j^\tau - f_j^{*\tau}) - \exp(\widetilde{f}_m^\tau - f_m^{*\tau}) ) p^\tau(m|\cdot) p^\tau(j|\cdot) \bigg)
		\nonumber\\
		&= \frac{p_{\widetilde{f}^\tau}(m|\cdot)}{p^\tau(m|\cdot)} \bigg(\sum_{j \neq m} (\exp( \widetilde{f}_j^\tau - f_j^{*\tau} - ( \widetilde{f}_m^\tau - f_m^{*\tau})) - 1) \exp(\widetilde{f}_m^\tau - f_m^{*\tau}) p^\tau(m|\cdot)p^\tau(j|\cdot)\bigg).
	\end{align*}
	By the triangle inequality, we have 
	\begin{align}\label{eq::etaPdeltadiff}
		& |p^\tau(m|\cdot) - p_{\widetilde{f}^\tau}(m|\cdot)|
		\nonumber\\
		& \leq \frac{p_{\widetilde{f}^\tau}(m|\cdot)}{p^\tau(m|\cdot)} \bigg(\sum_{j \neq m} \bigl| \exp(\widetilde{f}_j^\tau - f_j^{*\tau} - (\widetilde{f}_m^\tau - f_m^{*\tau})) - 1 \bigr| p^\tau(m|\cdot) p^\tau(j|\cdot) \biggr) \exp(\widetilde{f}_m^\tau - f_m^{*\tau}).
	\end{align}
	Using \eqref{eq::gk-gK} and $\widetilde{f}_m^\tau = K * f_m^{*\tau}$, we get
	\begin{align}\label{eq::diffkernel}
		\widetilde{f}_j^\tau-f_j^{*\tau}-(\widetilde{f}_m^\tau-f_m^{*\tau}) = K*g_j^\tau - g_j^\tau - (K*g_m^\tau - g_m^\tau).
	\end{align}
	Then by using the triangle inequality, \eqref{eq::g1deltaerror} and $\tau := \gamma^{\alpha}$, for any $k=1,\ldots, K$, we obtain
	\begin{align}
		|\widetilde{f}_j^\tau-f_j^{*\tau}-(\widetilde{f}_m^\tau-f_m^{*\tau})|  
		&\leq |K*g_j^\tau - g_j^\tau| + |K*g_m^\tau - g_m^\tau|
		\nonumber\\
		&\leq  c_1 \gamma^{\alpha}(p^\tau(j|\cdot)^{-1} + p^\tau(m|\cdot)^{-1})
		\label{eq::fdeltadiffvalue}
		\\
		&\leq 2c_1 \gamma^{\alpha}\tau^{-1} = 2c_1.
		\label{eq::jkbounded}
	\end{align}
	For any function $h_1$ and $h_2$ satisfying $|h_1 - h_2| \leq 2c_1$, if $h_1>h_2$, then by using the Lagrange mean value theorem, there exists $p \in (0,h_1-h_2)$ such that
	\begin{align}\label{eq::fdeltalarge}
		|\exp(h_1 - h_2) - 1| 
		& = \exp(h_1 - h_2) - 1 
		= e^p (h_1 - h_2) 
		\nonumber\\
		& \leq \exp(h_1 - h_2) \cdot (h_1 - h_2) 
		\leq e^{2 c_1} \cdot (h_1 - h_2).
	\end{align}
	Otherwise if $h_1<h_2$, then
	by using the Lagrange mean value theorem once again, there exists $p \in (h_1-h_2,0)$ such that
	\begin{align}\label{eq::fdeltasmall}
		|\exp(h_1 - h_2) - 1| 
		= 1 - \exp(h_1 - h_2) 
		= e^p (h_2 - h_1) 
		\leq h_2 - h_1.
	\end{align}
	Combining \eqref{eq::fdeltalarge} and \eqref{eq::fdeltasmall}, we find 
	\begin{align}\label{eq::efdeltadiff1}
		|\exp(h_1 - h_2) - 1|  
		\leq e^{2 c_1} |h_1 - h_2|.
	\end{align}
	Applying \eqref{eq::efdeltadiff1} with $h_1 := \widetilde{f}_j^\tau-f_j^{*\tau}$ and $h_2 := \widetilde{f}_m^\tau-f_m^{*\tau}$, we obtain 
	\begin{align}\label{eq::lipschitz}
		\bigl| \exp(\widetilde{f}_j^\tau - f_j^{*\tau} - (\widetilde{f}_m^\tau - f_m^{*\tau})) - 1 \bigr| 
		& \leq e^{2 c_1} |\widetilde{f}_j^\tau - f_j^{*\tau} - (\widetilde{f}_m^\tau - f_m^{*\tau})|
		\nonumber\\
		& \leq e^{2 c_1} c_1 \gamma^{\alpha} \bigl( p^\tau(j|\cdot)(x)^{-1} + p_{\widetilde{f}^\tau}(m|x)^{-1} \bigr),
	\end{align}
	where the last inequality is due to \eqref{eq::fdeltadiffvalue}.
	Similar to the analysis in \eqref{eq::diffkernel} and \eqref{eq::jkbounded}, we have
	\begin{align*}
		|\widetilde{f}_m^\tau-f_m^{*\tau}| &= |K*g_m^\tau - K*g_M^\tau - (g_m^\tau - g_M^\tau)| 
		\leq |K*g_m^\tau - g_m^\tau| + |K*g_M^\tau - g_M^\tau| 
		\nonumber\\
		&\leq  c_1 \gamma^{\alpha}(p_{\widetilde{f}^\tau}(m|x)^{-1} + p_{\widetilde{f}^\tau}(M|x)^{-1})
		\leq 2c_1 \gamma^{\alpha}\tau^{-1} = 2c_1.
	\end{align*}
	Applying \eqref{eq::efdeltadiff1} once again with $h_1 := \widetilde{f}_m^\tau$ and $h_2 := f_m^{*\tau}$, we obtain 
	$$
	|\exp(\widetilde{f}_m^\tau - f_m^{*\tau}) - 1| 
	\leq e^{2 c_1} |\widetilde{f}_m^\tau - f_m^{*\tau}|
	\leq 2 c_1 e^{2 c_1},
	$$
	which implies
	\begin{align}\label{eq::ekbounded}
		\exp(\widetilde{f}_m^\tau - f_m^{*\tau})
		\leq 1 + 2 c_1 e^{2 c_1}.
	\end{align}
	Combining \eqref{eq::etaPdeltadiff}, \eqref{eq::lipschitz} and \eqref{eq::ekbounded}, we obtain
	\begin{align}\label{eq::diffetahat}
		& |p^\tau(m|x) - p_{\widetilde{f}^\tau}(m|x)| 
		\nonumber\\
		&\leq (1 + 2c_1 e^{2 c_1}) \frac{p_{\widetilde{f}^\tau}(m|x)}{p^\tau(m|x)}
		\sum_{j \neq m} \bigl( e^{2 c_1} c_1 \gamma^{\alpha}(p^\tau(j|x)(x)^{-1} + p_{\widetilde{f}^\tau}(m|x)^{-1}) \big) p^\tau(m|x) p^\tau(j|x)
		\nonumber\\
		& = (1 + 2 c_1 e^{2 c_1}) e^{2 c_1} c_1 \gamma^{\alpha} 
		\frac{p_{\widetilde{f}^\tau}(m|x)}{p^\tau(m|x)} \sum_{j \neq m}\big(p^\tau(j|x)(x) + p_{\widetilde{f}^\tau}(m|x) \big)
		\nonumber\\
		& = M (1 + 2 c_1 e^{2 c_1}) e^{2 c_1} c_1 \gamma^{\alpha} p_{\widetilde{f}^\tau}(m|x) / p^\tau(m|x).
	\end{align}
	Thus, we show that $p_{\widetilde{f}^\tau}(m|x)/ p^\tau(m|x)$ is bounded. Similar to \eqref{eq::ekbounded}, we can also show that $\exp(f_m^{*\tau} - \widetilde{f}_m^\tau) \leq 1 + 2 c_1 e^{2 c_1}$ and consequently we have 
	\begin{align*}
		\frac{p_{\widetilde{f}^\tau}(m|x)}{p^\tau(m|x)} 
		= \frac{\sum_{m=1}^M  \exp(f_m^{*\tau})}{\sum_{m=1}^M  \exp(\widetilde{f}_m^\tau)} 
		\leq \bigvee_{m=1}^M  \exp(f_m^{*\tau} - \widetilde{f}_m^\tau)
		\leq 1 + 2 c_1 e^{2 c_1}.
	\end{align*}
	This together with \eqref{eq::diffetahat} yields $|p^\tau(m|x) - p_{\widetilde{f}^\tau}(m|x)| 
	\leq M e^{2 c_1} c_1 (1 + 2 c_1 e^{2 c_1})^2 \gamma^{\alpha}$.
	By the definition of $p^\tau(m|x)$ in \eqref{eq::etaPdelta}, we have $|p^\tau(m|x) - p(m|x)| \leq M\tau$.
	Using the triangle inequality, we then get
	\begin{align*}
		|p_{\widetilde{f}^\tau}(m|x) - p(m|x)|  
		& \leq |p_{\widetilde{f}^\tau}(m|x) - p^\tau(m|x)| + |p^\tau(m|x) - p(m|x)| 
		\\
		& \leq M (1 + 2 c_1 e^{2 c_1}) e^{2 c_1} c_1 (1 + 2 c_1 e^{2 c_1}) \gamma^{\alpha} + M\tau 
		\\
		& = \bigl(M + M (1 + 2 c_1 e^{2 c_1}) e^{2 c_1} c_1 (1 + 2 c_1 e^{2 c_1}) \bigr) \gamma^{\alpha},
	\end{align*}
	which finishes the proof.
\end{proof}

Before we present the proof of Proposition \ref{prop::approx} that presents the upper bound of the excess CE risk for $p^t_{f_0}(m|x)$ with $f_0 := \widetilde{f}^{\tau}$, we need the following proposition that gives an upper bound of the excess CE risk for any estimator $p_f(\cdot|x)$.

\begin{proposition} \label{prop::riskbound}
	Let $P$ be the probability distribution on $\mathcal{X} \times \mathcal{Y}$. 
	Moreover, let $f:=(f_m)_{m\in [M]}$ with $f_m:\mathcal{X} \to \mathbb{R}$ be the score function and its corresponding conditional probability estimator be $p_f(m|\cdot)$ as in \eqref{eq::pfmx}. Then we have
	\begin{align*}
		\mathcal{R}_{L_{\mathrm{CE}},P}(p_f(\cdot|x)) - \mathcal{R}_{L_{\mathrm{CE}},P}^*
		\leq \mathbb{E}_{X \sim p} \sum_{m=1}^M \frac{(p(m|X)-p_f(m|X))^2}{p_f(m|X)}.
	\end{align*}
\end{proposition}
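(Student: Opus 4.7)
The plan is to first rewrite the excess CE risk as a conditional Kullback--Leibler divergence, and then bound this KL divergence pointwise by the chi-square divergence, which is exactly the right-hand side of the claimed inequality.

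First, I would use the well-known fact (stated in the paper just after the definition of $L_{\mathrm{CE}}$) that $p(\cdot|x)$ is the minimizer of the conditional CE risk at $X = x$, so that
\begin{align*}
	\mathcal{R}_{L_{\mathrm{CE}},P}(p_f(\cdot|x)) - \mathcal{R}_{L_{\mathrm{CE}},P}^*
	&= \mathbb{E}_{X \sim p} \sum_{m=1}^M p(m|X) \bigl( - \log p_f(m|X) + \log p(m|X) \bigr) \\
	&= \mathbb{E}_{X \sim p} \sum_{m=1}^M p(m|X) \log \frac{p(m|X)}{p_f(m|X)}.
\end{align*}
Thus it suffices to prove the pointwise bound, for any fixed $x\in\mathcal{X}$,
\begin{align*}
	\sum_{m=1}^M p(m|x) \log \frac{p(m|x)}{p_f(m|x)}
	\leq \sum_{m=1}^M \frac{(p(m|x)-p_f(m|x))^2}{p_f(m|x)}.
\end{align*}

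Next, I would apply the elementary inequality $\log u \leq u - 1$, valid for all $u > 0$. Writing $u = p(m|x)/p_f(m|x)$ and using $\log(1/u) = -\log u \geq 1 - u$ (or equivalently $-\log u \geq (1-u)$), one obtains
\begin{align*}
	-\log \frac{p_f(m|x)}{p(m|x)}
	\leq \frac{p(m|x)}{p_f(m|x)} - 1
	= \frac{p(m|x) - p_f(m|x)}{p_f(m|x)}.
\end{align*}
Multiplying by $p(m|x)$ and summing over $m$ gives
\begin{align*}
	\sum_{m=1}^M p(m|x) \log \frac{p(m|x)}{p_f(m|x)}
	\leq \sum_{m=1}^M \frac{p(m|x) \bigl( p(m|x) - p_f(m|x) \bigr)}{p_f(m|x)}.
\end{align*}
Since both $p(\cdot|x)$ and $p_f(\cdot|x)$ are probability vectors summing to one, we have $\sum_{m=1}^M (p(m|x) - p_f(m|x)) = 0$, so we may subtract $\sum_{m} (p(m|x) - p_f(m|x))$ from the right-hand side without changing its value, yielding
\begin{align*}
	\sum_{m=1}^M \frac{p(m|x)(p(m|x) - p_f(m|x))}{p_f(m|x)}
	- \sum_{m=1}^M (p(m|x) - p_f(m|x))
	= \sum_{m=1}^M \frac{(p(m|x) - p_f(m|x))^2}{p_f(m|x)}.
\end{align*}
Taking the expectation with respect to $X \sim p$ finishes the proof.

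There is no real obstacle here: this is the standard inequality $\mathrm{KL}(p\|q) \leq \chi^2(p\|q)$ applied conditionally, and the only thing to check carefully is that the normalization identity $\sum_m p(m|x) = \sum_m p_f(m|x) = 1$ indeed allows us to rewrite the linear-in-$p-p_f$ term into the squared form.
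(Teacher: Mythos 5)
Your proof is correct and follows essentially the same route as the paper: both rewrite the excess CE risk as the conditional Kullback--Leibler divergence $\mathbb{E}_{X\sim p}\sum_m p(m|X)\log\frac{p(m|X)}{p_f(m|X)}$ and then bound it by the chi-square term. The only difference is that the paper invokes Lemma 2.7 of Tsybakov's book for the step $\mathrm{KL}\leq\chi^2$, whereas you prove that inequality directly via $\log u \leq u-1$ together with the normalization $\sum_m p(m|x)=\sum_m p_f(m|x)=1$, which is a valid inline derivation of the same cited fact.
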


\begin{proof}[Proof of Proposition \ref{prop::riskbound}]
	By the definition of $L_{\mathrm{CE}}$ and $p_f(m|\cdot)$, we have
	\begin{align*}
		\mathcal{R}_{L_{\mathrm{CE}},P}(p_f(\cdot|x))
		&=-\int_{\mathcal{X}} \sum_{m=1}^M  p(m|x)\log p_f(m|x)\,dP_X(x).
	\end{align*}
	Then we have
	$\mathcal{R}_{L_{\mathrm{CE}},P}^*= - \int_{\mathcal{X}} \sum_{m=1}^M  p(m|x)\log p(m|x) \,dP_X(x)$.
	Consequently, we obtain
	\begin{align*}	
		\mathcal{R}_{L_{\mathrm{CE}},P}(p_f(\cdot|x)) - \mathcal{R}_{L_{\mathrm{CE}},P}^* 
		= \mathbb{E}_{X \sim p} \sum_{m=1}^M  p(m|X) \log \frac{p(m|X)}{p_f(m|X)}.
	\end{align*}
	Using Lemma 2.7 in \cite{tsybakov2008introduction}, we get
	\begin{align*}
		\mathbb{E}_{X \sim p} \sum_{m=1}^M p(m|X) \log \frac{p(m|X)}{p_f(m|X)} 
		\leq \mathbb{E}_{X \sim p} \sum_{m=1}^M  \frac{(p(m|X)-p_f(m|X))^2}{p_f(m|X)},
	\end{align*}
	which finishes the proof.
\end{proof}

The following proposition is needed in deriving the approximation error bound under the small value bound assumption on $p(y|x)$.

\begin{proposition}\label{prop::excessriskupperbpund}
	Let the probability distribution $P$ satisfy Assumption \ref{ass::predictor}  \textit{(ii)}. Then for any $s \in (0,1]$ and any $\beta \geq 0$, we have          
	\begin{align*}
		\int_{\{ p(m|x) \geq s \}} \frac{1}{p(m|x)} \, dP_X(x)
		\leq  
		\begin{cases}
			c_{\beta}(1-\beta)^{-1} s^{\beta-1}, & \text{ for } 0 \leq \beta < 1;
			\\
			c_{\beta} s^{-1}, & \text{ for }  \beta \geq 1.
		\end{cases}
	\end{align*}
\end{proposition}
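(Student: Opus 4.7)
The plan is to represent the integral via the cumulative distribution function $F(t) := P_X(p(m|X) \leq t)$, then integrate by parts and plug in the small value bound $F(t) \leq c_\beta t^\beta$ coming from Assumption \ref{ass::predictor}(ii). Concretely, since $p(m|x) \in [0,1]$, I rewrite
\begin{align*}
\int_{\{p(m|x) \geq s\}} \frac{1}{p(m|x)} \, dP_X(x)
= \int_s^1 \frac{1}{t} \, dF(t),
\end{align*}
and then integration by parts yields
\begin{align*}
\int_s^1 \frac{1}{t} \, dF(t)
= F(1) - \frac{F(s)}{s} + \int_s^1 \frac{F(t)}{t^2} \, dt.
\end{align*}
The term $F(s)/s$ is nonnegative and can be dropped, while $F(1) \leq c_\beta$ by evaluating the small value bound at $t = 1$. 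It is worth noting that this same evaluation shows $c_\beta \geq 1$, which is used at the end.

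For the regime $0 \leq \beta < 1$, I substitute $F(t) \leq c_\beta t^\beta$ into the integral on the right and compute
\begin{align*}
\int_s^1 \frac{F(t)}{t^2} \, dt \leq c_\beta \int_s^1 t^{\beta - 2} \, dt = \frac{c_\beta}{1 - \beta}\bigl(s^{\beta - 1} - 1\bigr).
\end{align*}
Adding the $F(1) \leq c_\beta$ piece and combining over a common denominator gives
\begin{align*}
\int_s^1 \frac{1}{t} \, dF(t) \leq c_\beta + \frac{c_\beta (s^{\beta - 1} - 1)}{1 - \beta} = \frac{c_\beta(s^{\beta-1} - \beta)}{1 - \beta} \leq \frac{c_\beta}{1-\beta} s^{\beta-1},
\end{align*}
where the last inequality uses $\beta \geq 0$. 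This is precisely the desired bound in the first case.

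For the regime $\beta \geq 1$ the delicate integration-by-parts computation is unnecessary: I just use the trivial pointwise bound $1/p(m|x) \leq 1/s$ on the set $\{p(m|x) \geq s\}$, so
\begin{align*}
\int_{\{p(m|x) \geq s\}} \frac{1}{p(m|x)} \, dP_X(x) \leq \frac{1}{s} P_X(p(m|X) \geq s) \leq \frac{1}{s} \leq \frac{c_\beta}{s},
\end{align*}
where the last step invokes $c_\beta \geq 1$. The only real subtlety in the proof is the bookkeeping that converts $1 + c_\beta(s^{\beta-1} - 1)/(1-\beta)$ into the clean form $c_\beta (1-\beta)^{-1} s^{\beta-1}$; replacing the trivial $F(1) \leq 1$ by the slightly less obvious $F(1) \leq c_\beta$ is what makes the constant come out exactly as stated.
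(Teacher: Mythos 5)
Your proof is correct, but it takes a different route from the paper's. The paper represents the integral via the layer-cake formula $\int h\, dP_X=\int_0^\infty P_X(h\geq u)\,du$, bounds the tail by $P_X(p(m|x)\leq 1/u)$ for $u\leq 1/s$, and then integrates the small value bound in the level variable $u$; for $0\leq\beta<1$ this yields $c_\beta(1-\beta)^{-1}s^{\beta-1}$ in one step, with no analogue of your $F(1)\leq c_\beta$ term or the final constant massaging, while for $\beta\geq 1$ it weakens $c_\beta t^{\beta}$ to $c_\beta t$, splits the $u$-integral at $c_\beta$, and uses $1+\log(1/s)\leq 1/s$. You instead integrate by parts against the distribution function $F(t)=P_X(p(m|X)\leq t)$, which is the same underlying Fubini-type idea in different clothing, and for $\beta\geq 1$ you bypass all of this with the pointwise bound $1/p(m|x)\leq 1/s$ plus $c_\beta\geq 1$ (obtained, as in the paper, by evaluating the assumption at $t=1$) — this is genuinely simpler and cleaner than the paper's argument for that regime. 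One small piece of bookkeeping worth tightening: if the law of $p(m|X)$ has an atom at $s$, your opening identity and the integration-by-parts formula should be stated over $[s,1]$ versus $(s,1]$ with $F(s^-)$ rather than $F(s)$ at the left endpoint; this does not affect your conclusion, since the boundary term you drop is nonnegative in either convention and absorbs the atom, but as written the intermediate equality is only exact when $P_X(p(m|X)=s)=0$.
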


\begin{proof}[Proof of Proposition \ref{prop::excessriskupperbpund}]
	Since $p(m|x)$ is a probability, we have $p(m|x)\leq 1$ and consequently $C\geq 1$. For any nonnegative function $h$ and random variable $Z\sim P_Z$, there holds
	$\int h(Z) \, dP(Z)
	= E[h(Z)]
	= \int_0^{\infty} P_Z(h_Z \geq u) \, du$. 
	Hence we have
	\begin{align*}
		\int_{\{ p(m|x) \geq s \}} \frac{1}{p(m|x)} \,dP_X(x)
		& = \int_0^{\infty} P_X \biggl( \frac{\eins \{ p(m|x) \geq s \}}{p(m|x)} \geq u \biggr) \, du 
		\\
		& \leq \int_0^{1/s} P_X (p(m|x) \leq 1/u) \, du,
	\end{align*}
	where the last inequality follows from the fact that 
	$\eins\{p(m|x)\geq s\}/p(m|x) \geq u$ implies $s<p(m|x)\leq 1/u$ and $u\leq 1/s$.
	By Assumption \ref{ass::predictor} \textit{(ii)} with $0<\beta<1$, we have
	\begin{align}\label{eq::beta01}
		\int_0^{1/s} P_X \bigl( p(m|x) \leq 1/u \bigr) \, du 
		\leq c_{\beta} \int_0^{1/s} u^{-\beta} \, du 
		= \frac{c_{\beta} s^{\beta-1}}{1 - \beta}.
	\end{align}
	Since $P_X(p(m|x) \leq t) \leq 1$, we have for all $t \in [0, 1]$, 
	\begin{align*}
		\int_0^{1/s} P_X(p(m|x) \leq 1/u) \, du 
		\leq \int_0^{1/s} 1 \, du
		= 1/s 
		\leq c_{\beta} s^{-1}.
	\end{align*}
	Therefore, \eqref{eq::beta01} also holds if $\beta = 0$ and thus we obtain the first assertion.

	For $\beta > 1$, we have $P_X(p(x|k) \leq t) \leq c_{\beta} t^{\beta} \leq c_{\beta} t$, $t\in [0,1]$. If $C \leq s^{-1}$, then we have 
	\begin{align*}
		\int_0^{1/s} & P_X (p(m|x) \leq 1/u) \, du = \int_0^{c_{\beta}} P_X (p(m|x) \leq 1/u) \, du + \int_{c_{\beta}}^{1/s} P_X (p(m|x) \leq 1/u) \, du
		\\
		&\leq \int_0^{c_{\beta}} 1 \, du + \int_{c_{\beta}}^{1/s} c_{\beta}/u \, du = c_{\beta} + c_{\beta}(\log s^{-1} - \log c_{\beta})
		\leq c_{\beta}(1 + \log s^{-1}) \leq c_{\beta} s^{-1},
	\end{align*}
	where the last inequality is due to $1+\log(x^{-1})\leq x^{-1}$ for any $x\in (0,1]$.
	For $c_{\beta} \geq s^{-1}$, the integral can be upper bounded by $c_{\beta}$. Then Proposition \ref{prop::excessriskupperbpund} follows from simplifying the expressions using that $c_{\beta} \geq 1$.
\end{proof}

With all the above preparations, now we are able to establish the approximation error for $p^t_{f_0}(y|x)$ with $f_0 := \widetilde{f}^\tau$ w.r.t.~the CE loss.

\begin{proof}[Proof of Proposition \ref{prop::approx}]
	Let $p_{\widetilde{f}^\tau}(m|x) :=\exp(\widetilde{f}_m^\tau)/(\exp(\widetilde{f}_m^\tau)+1)$ and $\tau :=\gamma^{\alpha}$.
	Moreover, let $c_1$ and $c_2$ be the constants as in Propositions \ref{prop::convolution} and \ref{prop::etaPapprox}, respectively.
	By Proposition \ref{prop::riskbound}, we have
	\begin{align}\label{eq::approxexcessrisk}
		& \mathcal{R}_{L_{\mathrm{CE}},P}(p_{\widetilde{f}^\tau}(y|x)) - \mathcal{R}_{L_{\mathrm{CE}},P}^* 
		\leq \mathbb{E}_{x \sim p} \sum_{m=1}^M  \frac{(p(m|x) - p_{\widetilde{f}^\tau}(m|x))^2}{p_{\widetilde{f}^\tau}(m|x)}
		\nonumber\\
		& = \sum_{m=1}^M \mathbb{E}_{x \sim p} \biggl( \frac{(p(m|x) - p_{\widetilde{f}^\tau}(m|x))^2}{p_{\widetilde{f}^\tau}(m|x)} \cdot \eins \{ p(m|x) \leq (1/M + c_2) \gamma^{\alpha} \} \biggr)
		\nonumber\\
		& \phantom{=}
		+ \sum_{m=1}^M \mathbb{E}_{x \sim p} \bigg( \frac{(p(m|x) - p_{\widetilde{f}^\tau}(m|x))^2}{p_{\widetilde{f}^\tau}(m|x)} \cdot \eins \{ p(m|x)\geq (1/M + c_2) \gamma^{\alpha} \} \biggr)
		=: (I) + (II).
	\end{align}
	By Proposition \ref{prop::etaPapprox}, we have $|p(m|x)-p_{\widetilde{f}^\tau}(m|x)| \leq c_2 \gamma^{\alpha}$ and $p_{\widetilde{f}^\tau}(m|x) \geq \gamma^{\alpha} / M$. Thus 
	for the first term $(I)$ in \eqref{eq::approxexcessrisk}, 
	by Assumption \ref{ass::predictor} \textit{(ii)}, we have 
	\begin{align}\label{eq::approxsmall}
		(I)
		\leq \sum_{m=1}^M M c_2^2 \gamma^{\alpha} \cdot P_X \bigl( p(m|x) 
		\leq (1/M  + c_2) \gamma^{\alpha} \bigr)
		\leq M^2 c_2^2 c_{\beta} (1/M  + c_2)^{\beta} \gamma^{\alpha(1+\beta)}.
	\end{align}
	For the second term $(II)$ in \eqref{eq::approxexcessrisk}, if $p(m|x)\geq (1/M  + c_2)\gamma^{\alpha} = ( (1/M +c_2) / c_2 ) \cdot c_2 \gamma^{\alpha}$, then we have
	$p(m|x) - c_2 \gamma^{\alpha} 
	\geq p(m|x) \bigl( 1 - c_2 / (1/M +c_2) \bigr) 
	= p(m|x) / (1+Mc_2)$.
	Consequently, applying Proposition \ref{prop::etaPapprox}, we get 
	\begin{align}\label{eq::hatlowerPk}
		p_{\widetilde{f}^\tau}(m|x) 
		\geq p(m|x) - c_2 \gamma^{\alpha}
		\geq p(m|x) / (1 + M c_2).
	\end{align}
	Using Assumption \ref{ass::predictor} \textit{(ii)} and Proposition \ref{prop::excessriskupperbpund}, we get 
	\begin{align*}
		\int_{\{ p(m|x) \geq s \}} \frac{1}{p(m|x)} \, dP_X(x)
		\leq c_{\beta} \cdot \frac{s^{\beta \wedge 1-1}}{1-\beta\eins \{ \beta < 1 \}}.
	\end{align*}
	This together with \eqref{eq::hatlowerPk} and Proposition \ref{prop::etaPapprox} yields
	\begin{align}\label{eq::approxlarge}
		(II)
		& \leq \sum_{m=1}^M (1 + M c_2) c_2^2 \gamma^{2 \alpha} \int_{\{ p(m|x) \geq (1/M  + c_2) \gamma^{\alpha} \}} \frac{1}{p(m|x)} \, dP_X(x)
		\nonumber\\
		& \leq M^2 \cdot \frac{c_2^2 c_{\beta} (1/M + c_2)^{\beta \wedge 1} }{1 - \beta \eins \{ \beta < 1 \}} \cdot \gamma^{\alpha(\beta\wedge 1+1)}.
	\end{align}
	Combining \eqref{eq::approxexcessrisk}, \eqref{eq::approxsmall} and \eqref{eq::approxlarge}, we obtain
	\begin{align*}
		& \mathcal{R}_{L_{\mathrm{CE}},P}(p_{\widetilde{f}^\tau}(y|x)) - \mathcal{R}_{L_{\mathrm{CE}},P}^*
		\\
		&\leq M^2 \biggl(c_2^2 c_{\beta} (1/M +c_2)^{\beta}+ \frac{c_2^2c_{\beta}(1/M +c_2)^{\beta\wedge 1}}{1-\beta\eins \{ \beta < 1 \}}\biggr) \gamma^{\alpha(\beta\wedge 1+1)}
		\lesssim \gamma^{\alpha(\beta\wedge 1+1)}.
	\end{align*}
	This together with $f_0 := \widetilde{f}^\tau$ and $t \leq \tau$ yields the assertion.
\end{proof}

\subsubsection{Proofs Related to Section \ref{sec::sampleerror}}

Before we proceed, we need to introduce the following concept of entropy numbers \cite{vandervaart1996weak} to measure the capacity of a function set.
\begin{definition}[Entropy Numbers] \label{def::entropy numbers}
	Let $(\mathcal{X}, d)$ be a metric space, $A \subset \mathcal{X}$ and $i \geq 1$ be an integer. The $i$-th entropy number of $(A, d)$ is defined as
	\begin{align*}
		e_i(A, d) 
		= \inf \biggl\{ \varepsilon > 0 : \exists x_1, \ldots, x_{2^{i-1}} \in \mathcal{X} 
		\text{ such that } A \subset \bigcup_{j=1}^{2^{i-1}} B_d(x_j, \varepsilon) \biggr\}.
	\end{align*}
\end{definition}

The following lemma gives the upper bound of the entropy number for Gaussian kernels.

\begin{lemma}\label{lem::entropygaussian}
	Let $\mathcal{X} \subset \mathbb{R}^d$, $p(x)$ be a distribution on $\mathcal{X}$ and let $\mathrm{supp}(P_X)\subset \mathcal{X}$ be the support of $P_X$.
	Moreover, for $\gamma> 0$, let $ H(A)$ be the RKHS of the Gaussian RBF kernel $k_{\gamma}$ over the set $A$. Then, for all $N \in \mathbb{N}^*$, there exists a constant $c_{N,d} > 0$ such that
	\begin{align*}
		e_i(\mathrm{id}: H(\mathcal{X})\to L_2(P_X)) \leq 2^N c_{N,d} \gamma^{-N}i^{-\frac{N}{d}}, \qquad i>1.
	\end{align*}
\end{lemma}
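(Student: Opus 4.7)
The plan is to reduce the claim to a well-known entropy number bound for Gaussian RKHSs on bounded subsets of $\mathbb{R}^d$, following the approach of Steinwart and Christmann. The statement compares the RKHS norm on $\mathcal{X}$ to the $L_2(P_X)$ norm, so the strategy is to pass through the intermediate space $\ell_\infty(\mathrm{supp}(P_X))$.

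First, I would observe that since $P_X$ is a probability measure, for any function $f : \mathcal{X} \to \mathbb{R}$,
\[
\|f\|_{L_2(P_X)}^2 = \int_{\mathrm{supp}(P_X)} |f(x)|^2 \, dP_X(x) \leq \|f\|_{\ell_\infty(\mathrm{supp}(P_X))}^2.
\]
Hence the identity map factors as a composition $H(\mathcal{X}) \to H(\mathrm{supp}(P_X)) \to \ell_\infty(\mathrm{supp}(P_X)) \hookrightarrow L_2(P_X)$. Moreover, the restriction map $H(\mathcal{X}) \to H(\mathrm{supp}(P_X))$ sending $f \mapsto f|_{\mathrm{supp}(P_X)}$ is a norm-decreasing operator on RKHSs (a standard consequence of the reproducing property and the restriction principle). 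Using the multiplicativity of entropy numbers under composition with norm-$\leq 1$ operators, I would conclude
\[
e_i\bigl(\mathrm{id}: H(\mathcal{X}) \to L_2(P_X)\bigr) \leq e_i\bigl(\mathrm{id}: H(\mathrm{supp}(P_X)) \to \ell_\infty(\mathrm{supp}(P_X))\bigr).
\]

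Second, I would invoke the entropy bound for the Gaussian RKHS on bounded subsets of $\mathbb{R}^d$, as established in Theorem 6.27 of \cite{steinwart2008support}: for every integer $N \geq 1$ there exists $c_{N,d} > 0$ such that for any bounded $A \subset \mathbb{R}^d$,
\[
e_i\bigl(\mathrm{id}: H(A) \to \ell_\infty(A)\bigr) \leq 2^N c_{N,d} \gamma^{-N} i^{-N/d}, \qquad i > 1.
\]
The factor $2^N$ comes from the standard covering argument used in that result (discretizing the analytic extension of Gaussian-kernel functions via Taylor-type expansions of order $N$). Combining the two steps yields the claimed inequality.

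The main potential obstacle is ensuring that the constant $c_{N,d}$ can be chosen independently of the particular set $\mathrm{supp}(P_X)$ on which we apply the entropy bound; this is handled by noting that $\mathcal{X} \subset \mathbb{R}^d$ is the fixed ambient domain in our setting (in particular bounded, as is standard for the learning-theoretic framework of this paper), so the constant depends only on $N$ and $d$ and not on $P_X$. Beyond that, the proof is essentially a citation chain, with the factorization through $\ell_\infty(\mathrm{supp}(P_X))$ being the only nontrivial piece of bookkeeping.
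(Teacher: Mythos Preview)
Your proposal is correct and follows essentially the same route as the paper: factor the identity $H(\mathcal{X}) \to L_2(P_X)$ through $H(\mathrm{supp}(P_X)) \to \ell_\infty(\mathrm{supp}(P_X))$, use that the outer maps have operator norm at most one, and apply Theorem~6.27 of \cite{steinwart2008support} together with the multiplicativity of entropy numbers. The paper additionally cites Corollary~4.43 of \cite{steinwart2008support} to note that the restriction $H_\gamma(\mathcal{X}) \to H_\gamma(\mathrm{supp}(P_X))$ is in fact an isometric isomorphism (not merely norm-decreasing), but either statement suffices here.
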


\begin{proof}[Proof of Lemma \ref{lem::entropygaussian}]
	Let us consider the commutative diagram
	\begin{align*}
		\xymatrix{
			H(\mathcal{X}) \ar[rr]^{\mathrm{id}} \ar[d]_{\mathcal{I}_{\mathrm{supp}(P_X)}} & & L_2(P_X) 
			\\
			H(\mathrm{supp}(P_X)) \ar[rr]_{\mathrm{id}} & & \ell_{\infty}(\mathrm{supp}(P_X)) \ar[u]_{\mathrm{id}}
		}
	\end{align*}
	where the extension operator $\mathcal{I}_{\text{supp}(\mathcal{X})} : H_{\gamma}(\mathcal{X}) \to H_{\gamma}(\mathrm{supp}(P_X))$ 
	given by Corollary 4.43 in \cite{steinwart2008support} are isometric isomorphisms such that 
	$\|\mathcal{I}_{\mathrm{supp}(P_X)}: H_{\gamma}(\mathcal{X}) \to H_{\gamma}(\mathrm{supp}(P_X))\| = 1$.
	
	Let $\ell_{\infty}(B)$ be the space of all bounded functions on $B$. Then
	for any $f \in \ell_{\infty}(B)$, we have
	$\|f\|_{L_2(P_X)} 
	= ( \frac{1}{n}\sum_{i=1}^n |f(x_i)|^2 )^{1/2}
	\leq \|f\|_{\infty}$
	and thus
	$\|\mathrm{id} : \ell_{\infty}(\text{supp}(\mathcal{X})) \to L_2(D)\| \leq 1$. 
	This together with (A.38), (A.39) and Theorem 6.27 in \cite{steinwart2008support} implies that for all $i \geq 1$ and $N \geq 1$, there holds
	\begin{align*}
		& e_i(\mathrm{id} : H_{\gamma}(\mathcal{X}) \to L_2(p(x)))
		\\
		& \leq \|\mathcal{I}_{\mathrm{supp}(P_X)} : H_{\gamma}(\mathcal{X}) \to H_{\gamma}(\text{supp}(\mathcal{X}))\|
		\cdot e_i(\mathrm{id} :  H(\mathrm{supp}(P_X)) \to \ell_{\infty}(\mathrm{supp}(P_X)) )
		\\ &\qquad \cdot \|\mathrm{id} : \ell_{\infty}(\text{supp}(\mathcal{X})) \to L_2(P_X)\|
		\\
		& \leq  2^N c_{N,d} \gamma^{-N}i^{-\frac{N}{d}},
	\end{align*}
	where $c_{N,d}$ is the constant as in \cite[Theorem 6.27]{steinwart2008support}.
\end{proof}

Before we proceed, we need to introduce some notations. To this end, let us define $(L_{\mathrm{CE}} \circ p_f^t) (x,y) := L_{\mathrm{CE}}(y,p_f^t(x))$ and $h_{p_f^t} := L_{\mathrm{CE}} \circ p_f^t - L_{\mathrm{CE}} \circ p$. 
Similarly, for the upper part $L^u_{\mathrm{CE}}$ \eqref{eq::Llarge} and the lower part $L^l_{\mathrm{CE}}$ \eqref{eq::Lsmall} of the CE loss, we define $(L^u_{\mathrm{CE}} \circ p_f^t)(x, y) := L^u_{\mathrm{CE}}(y, p_f^t(x))$ and $(L^l_{\mathrm{CE}} \circ p_f^t)(x, y) := L^l_{\mathrm{CE}}(y, p_f^t(x))$.

Let the function space $\mathcal{F}$ be as in \eqref{def::SpaceF} and $r^* := \inf_{f \in \mathcal{F}}  (\lambda\sum_{m=1}^{M-1} \|f_m\|_{H}^2 + \mathbb{E}_P h_{p_f^t})$. For any $r\geq r^*$, we define the function space 
\begin{align*}
	\mathcal{F}_r := \biggl\{ f \in \mathcal{F} : \lambda \sum_{m=1}^{M-1} \|f_m\|_{H}^2 + \mathbb{E}_P h_{p_f^t} \leq r \biggr\}
\end{align*}
and denote the upper part of the loss difference of the functions in $\mathcal{F}_r$ as
\begin{align}\label{eq::Glr}
	\mathcal{G}^u_r := \bigl\{L^u_{\mathrm{CE}} \circ p_f^t - L^u_{\mathrm{CE}} \circ p: f\in \mathcal{F}_r\}.
\end{align}

Let $r_l^* := \inf_{f \in \mathcal{F}}  (\lambda\sum_{m=1}^{M-1} \|f_m\|_{H}^2 + \mathbb{E}_P (L^l_{\mathrm{CE}} \circ p_f^t)$. 
For any $r\geq r_l^*$, we define the function space concerning the lower part by
\begin{align*}
	\mathcal{F}^l_r := \biggl\{ f \in \mathcal{F} : \lambda \sum_{m=1}^{M-1} \|f_m\|_{H}^2 + \mathbb{E}_P (L^l_{\mathrm{CE}} \circ p_f^t) \leq r \biggr\}
\end{align*}
and denote the lower part of the loss of the functions in $\mathcal{F}^l_r$ as
\begin{align}\label{eq::Gsr}
	\mathcal{G}^l_r := \bigl\{L^l_{\mathrm{CE}} \circ p_f^t: f\in \mathcal{F}^l_r\}.
\end{align}

\begin{lemma}\label{lem::entropyGr}
	Let $\mathcal{G}^u_r$ and $\mathcal{G}^l_r$ be defined as in \eqref{eq::Glr} and \eqref{eq::Gsr}, respectively. 
	Then we have 
	\begin{align*}
		e_i (\mathcal{G}^u_r, L_2(D_p)) 
		& \leq 2c_{\xi,d} M^{1+1/(2\xi)} (r/\lambda)^{1/2} \gamma^{-d/(2\xi)} i^{-1/(2\xi)},
		\nonumber\\
		e_i (\mathcal{G}^l_r, L_2(D_p)) 
		& \leq 2c_{\xi,d} M^{1+1/(2\xi)} (r/\lambda)^{1/2} \gamma^{-d/(2\xi)} i^{-1/(2\xi)},
	\end{align*}
	where $c_{\xi,d}$ is a constant depending only on $\xi$ and $d$.
\end{lemma}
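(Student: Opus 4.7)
The plan is to bound the entropy numbers of $\mathcal{G}^u_r$ and $\mathcal{G}^l_r$ by reducing to Lemma~\ref{lem::entropygaussian} in four steps: (i) extract an RKHS norm bound from membership in $\mathcal{F}_r$; (ii) apply Lemma~\ref{lem::entropygaussian} coordinate-wise; (iii) aggregate across the $M-1$ free coordinates via the standard product-space entropy rule; and (iv) transfer to the loss class via a Lipschitz composition whose Lipschitz constant will be shown to be independent of the truncation threshold $t$.

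For step (i), since $\mathbb{E}_P h_{p_f^t}$ in the definition of $\mathcal{F}_r$ is an excess risk (hence non-negative), every $f\in\mathcal{F}_r$ satisfies $\|f_m\|_H^2 \leq r/\lambda$ for all $m\in [M-1]$, so each coordinate lies in the scaled ball $\sqrt{r/\lambda}\cdot B_H$, where $B_H$ is the unit ball of the Gaussian RKHS $H$. For step (ii), Lemma~\ref{lem::entropygaussian} with $N := d/(2\xi)$ (the proof there is distribution-free and applies with the empirical measure $D_p$ in place of $P_X$) gives
\begin{align*}
	e_i\bigl(\sqrt{r/\lambda}\cdot B_H,\, L_2(D_p)\bigr)
	\leq \sqrt{r/\lambda}\cdot 2^{d/(2\xi)}\, c_{d/(2\xi),d}\,\gamma^{-d/(2\xi)}\, i^{-1/(2\xi)}.
\end{align*}

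The key technical step is (iv). Writing $S := \sum_{j:p_f(j|x)<t}(t-p_f(j|x))$ and $R := \sum_{\ell:p_f(\ell|x)\geq t}(p_f(\ell|x)-t)$ for the numerator and denominator of the redistribution ratio in \eqref{eq::etakdelta}, I compute in the region where truncation at $y$ is inactive (in the active region $p_f^t(y|x)\equiv t$ and the derivative vanishes):
\begin{align*}
	\frac{\partial p_f^t(y|x)}{\partial f_j(x)}
	= (1-S/R)\,p_f(y|x)(\delta_{yj}-p_f(j|x)) - (p_f(y|x)-t)\,\frac{\partial(S/R)}{\partial f_j(x)}.
\end{align*}
The first term has absolute value at most $(1-S/R)p_f(y|x) = p_f^t(y|x)-t(S/R) \leq p_f^t(y|x)$. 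For the second, using the softmax identity $\partial p_f(k|x)/\partial f_j(x) = p_f(k|x)(\delta_{kj}-p_f(j|x))$ and the fact that $p_f(k|x)<t$ on the support of $S$, one shows $|\partial(S/R)/\partial f_j(x)|\lesssim Mt$, where the constraint $t<1/(2M)$ supplies the needed lower bound $R\geq 1-Mt\geq 1/2$; combined with $(p_f(y|x)-t)\leq 1$ and $t\leq p_f^t(y|x)$, the second term is also bounded by $O(M)\cdot p_f^t(y|x)$. Dividing by $p_f^t(y|x)$ then yields $|\partial \log p_f^t(y|x)/\partial f_j(x)|\lesssim M$, \emph{uniformly in $t$}. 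Consequently both $f\mapsto L^u_{\mathrm{CE}}\circ p_f^t - L^u_{\mathrm{CE}}\circ p$ and $f\mapsto L^l_{\mathrm{CE}}\circ p_f^t$ are Lipschitz in $f$ with a constant of order $M$; the indicators $\eins\{p(y|x)\geq \delta\}$ and $\eins\{p(y|x)<\delta\}$ only restrict supports, and the shift by the $f$-independent function $L^u_{\mathrm{CE}}\circ p$ preserves Lipschitzness.

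Finally, step (iii) combines the per-coordinate entropy bound over the $M-1$ free coordinates by the standard product-space rule, which contributes a factor of order $M^{1/(2\xi)}$ after rescaling the index. Composing with the $O(M)$-Lipschitz loss map from step (iv) yields the claimed $M^{1+1/(2\xi)}$ prefactor, and the constant $c_{\xi,d}$ absorbs $2^{d/(2\xi)}c_{d/(2\xi),d}$ together with the universal constants arising from the Lipschitz analysis. The identical argument with $L^l_{\mathrm{CE}}\circ p_f^t$ in place of $L^u_{\mathrm{CE}}\circ p_f^t - L^u_{\mathrm{CE}}\circ p$ proves the bound for $\mathcal{G}^l_r$. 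The main obstacle is the Lipschitz bound in step (iv), specifically the pointwise inequality $|\partial p_f^t/\partial f_j|\lesssim M\cdot p_f^t$ rather than the weaker $\lesssim M$ (which would produce a spurious $1/t$ factor upon applying $\log$); this requires careful bookkeeping of how the derivative of $S/R$ scales linearly with $t$ and uses the lower bound on $R$ guaranteed by $t<1/(2M)$.
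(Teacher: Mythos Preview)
Your four-step outline matches the paper's proof exactly: both extract the RKHS ball bound, invoke Lemma~\ref{lem::entropygaussian} with $N=d/(2\xi)$, pass to the product net over the $M-1$ free coordinates (yielding the $M^{1/(2\xi)}$ factor after index rescaling), and then transfer to the loss class via a Lipschitz bound that contributes the remaining factor $M$. The only genuine difference is how the Lipschitz step is carried out. The paper first reduces the truncated log-loss difference to the untruncated one by appealing to the $(4M+1)$-Lipschitzness of the truncation operator established in Lemma~\ref{lem::holder}, and then bounds $|\log p_f(y|x)-\log p_g(y|x)|$ by a telescoping argument on the log-sum-exp, obtaining $\|L^u_{\mathrm{CE}}\circ p_f^t - L^u_{\mathrm{CE}}\circ p_g^t\|_{L_2(D_p)}\le M\epsilon$. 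You instead differentiate $\log p_f^t(y|x)$ with respect to $f_j(x)$ directly and show the gradient is uniformly $O(M)$, which requires the additional observation that $|\partial(S/R)/\partial f_j|\lesssim Mt$ so that the derivative scales with $p_f^t$ rather than with $1$. Both routes yield the same $M^{1+1/(2\xi)}$ prefactor; your derivative computation makes the $t$-independence of the Lipschitz constant explicit and avoids relying on how Lemma~\ref{lem::holder} transfers from probabilities to log-probabilities, while the paper's route is more modular in that it reuses an already-proved lemma.
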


\begin{proof}[Proof of Lemma \ref{lem::entropyGr}]
	Since for any $f \in \mathcal{F}_r$, we have $\lambda \|f_m\|_{H}^2 \leq r$, $m\in [M-1]$. Therefore,
	\begin{align*}
		\mathcal{F}_r \subset \{ f \in \mathcal{F} :  f_m \in (r/\lambda)^{1/2}  B_{H}, m \in [M-1] \},
	\end{align*}
	where $B_{H}:= \{h \in  H: \|h\|_{H} \leq 1\}$ is the unit ball in the space $ H$.
	By applying Lemma \ref{lem::entropygaussian} with $\xi:=d/(2N)$, we obtain
	$e_i \big(\mathrm{id}: H(\mathcal{X})\to L_2(\mathrm{D_p})\big) 
	\leq a i^{-1/(2\xi)}$,
	where $a := c_{\xi,d} \gamma^{-d/(2\xi)}$ with the constant $c_{\xi,d}$ depending only on $\xi$ and $d$. Thus we have
	\begin{align*}
		e_i ((r/\lambda)^{1/2}  B_{H}, L_2(D_p)) 
		\leq (r/\lambda)^{1/2} a i^{-1/(2\xi)}.
	\end{align*}
	By Definition \ref{def::entropy numbers} and Lemma \ref{lem::entropygaussian}, there exists an $\epsilon:= (r/\lambda)^{1/2} a i^{-1/(2\xi)}$-net  $\mathcal{N}$ of $(r/\lambda)^{1/2}B_{H}$ w.r.t.~$L_2(D_p)$ with $|\mathcal{N}| = 2^{i-1}$. 
	Define the function set 
	\begin{align*}
		\mathcal{B} := \{ g := (g_m)_{m=1}^M: g_M = 0, g_m \in \mathcal{N}, m \in [M-1] \}.
	\end{align*} 
	Then we have $|\mathcal{B}| = 2^{(i-1)(M-1)}$. Moreover, for any function $f \in \mathcal{F}_r$, there exists a $g \in \mathcal{B}$ such that $\|f_m -g_m\|_{L_2(D_p)} \leq \epsilon$ for $m \in [M-1]$. 
	Let us define 
	\begin{align*}
		p_f(y|x) := \frac{\exp(f_y(x))}{\sum_{m=1}^M  \exp(f_m(x))}
	\end{align*} 
	and truncate $p_f(y|x)$ to obtain $p_f^t(y|x)$ as in \eqref{eq::etakdelta}. By Lemma \ref{lem::holder}, we get
	\begin{align}\label{eq::lipLcircf}
		\|L^u_{\mathrm{CE}} \circ p_f^t & - L^u_{\mathrm{CE}} \circ p - (L^u_{\mathrm{CE}} \circ p_g^t - L^u_{\mathrm{CE}} \circ p ) \|_{L_2(D_p)} 
		= \|L^u_{\mathrm{CE}} \circ p_f^t - L^u_{\mathrm{CE}} \circ p_g^t \|_{L_2(D_p)} 
		\nonumber\\
		& \leq \big\| - \log p_f^t + \log p_g^t \big\|_{L_2(D_p)} 
		\leq (4 M + 1) \big\| - \log p_f + \log p_g \big\|_{L_2(D_p)} 
		\nonumber\\
		& \leq \|f_Y(X) - g_Y(X)\|_{L_2(D_p)} + \bigg\| \log  \frac{\sum_{m=1}^M  \exp(f_m(X))}{\sum_{m=1}^M  \exp(g_m(X))} \biggr\|_{L_2(D_p)}.
	\end{align}
	For any $a > 0$ and $z \in \mathbb{R}$, the derivative function of the function $h(z) := \log (a + \exp(z))$ is $h'(z) = \exp(z) / (a + \exp(z)) \in (0,1)$. Therefore, by the Lagrange mean value theorem, we have 
	$|h(z) - h(z')| 
	= | h'(\theta z + (1-\theta) z') \cdot (z - z')| 
	\leq |z - z'|$. 
	Applying this to $a := \sum_{m=1}^{\ell-1} \exp(g_m(X)) + \sum_{m=\ell+1}^M \exp(f_m(X))$, $z = f_{\ell}(X)$ and $z' = g_{\ell}(X)$ for $\ell \in [K]$, we get
	\begin{align*}
		& \biggl| \log \frac{\sum_{m=1}^M \exp(f_m(X))}{\sum_{m=1}^M  \exp(g_m(X))} \biggr| 
		= \biggl| \sum_{\ell=1}^M \log \frac{\sum_{m=1}^{\ell-1} \exp(g_m(X)) + \sum_{m=\ell}^M \exp(f_m(X))}{\sum_{m=1}^{\ell} \exp(g_m(X)) + \sum_{m=\ell+1}^M \exp(f_m(X))} \biggr|
		\\
		& \leq \sum_{\ell=1}^M \biggl| \log \frac{\sum_{m=1}^{\ell-1} \exp(g_m(X)) + \sum_{m=\ell}^M \exp(f_m(X))}{\sum_{m=1}^{\ell} \exp(g_m(X)) + \sum_{m=\ell+1}^M \exp(f_m(X))} \biggr|
		\leq \sum_{\ell=1}^M |f_{\ell}(X) - g_{\ell}(X)|.
	\end{align*}
	This together with \eqref{eq::lipLcircf} yields
	\begin{align}\label{eq::diffGr}
		&\|L^u_{\mathrm{CE}} \circ p_f^t - L^u_{\mathrm{CE}} \circ p - (L^u_{\mathrm{CE}} \circ p_g^t - L^u_{\mathrm{CE}} \circ p) \|_{L_2(D_p)} 
		\nonumber\\
		& \leq \|f_Y(X) - g_Y(X)\|_{L_2(D_p)}  + \sum_{\ell=1}^M \|f_{\ell}(X) - g_{\ell}(X)\|_{L_2(D_p)} 
		\leq M \epsilon.
	\end{align}
	Therefore, we get 
	$\| L^u_{\mathrm{CE}} \circ p_f^t - L^u_{\mathrm{CE}} \circ p - (L^u_{\mathrm{CE}} \circ p_g^t - L^u_{\mathrm{CE}} \circ p ) \|_{L_2(D_p)} \leq M \epsilon$.
	Thus, the function set $
	\{L^u_{\mathrm{CE}} \circ p_f^t - L^u_{\mathrm{CE}} \circ p: f \in \mathcal{B}\}$
	is a $(M \epsilon)$-net of $\mathcal{G}^u_r$. Similar analysis yields that the function set $
	\{L^l_{\mathrm{CE}} \circ p_f^t: f \in \mathcal{B}\}$
	is a $(M \epsilon)$-net of $\mathcal{G}^l_r$.
	These together with (A.36) in \cite{steinwart2008support} yield
	\begin{align*}
		e_{(M-1)i}(\mathcal{G}_r^u, L_2(D_p)) 
		\leq 2M\varepsilon 
		= 2M (r/\lambda)^{1/2} a i^{-1/(2\xi)}, 
		\\
		e_{(M-1)i}(\mathcal{G}_r^l, L_2(D_p)) 
		\leq 2M\varepsilon 
		= 2M (r/\lambda)^{1/2} a i^{-1/(2\xi)},
	\end{align*}
	which are equivalent to 
	\begin{align*}
		e_i (\mathcal{G}_r^u, L_2(D_p)) 
		\leq 2M\varepsilon 
		& = 2M^{1+\frac{1}{2\xi}} (r/\lambda)^{\frac{1}{2}} a i^{-\frac{1}{2\xi}} 
		= c_{\xi,d} 2M^{1+\frac{1}{2\xi}} (r/\lambda)^{\frac{1}{2}} \gamma^{-\frac{d}{2\xi}} i^{-\frac{1}{2\xi}},
		\nonumber\\
		e_i (\mathcal{G}_r^l, L_2(D_p)) 
		\leq 2 M \varepsilon 
		& = 2 M^{1+\frac{1}{2\xi}} (r/\lambda)^{\frac{1}{2}} a i^{-\frac{1}{2\xi}} 
		= c_{\xi,d} 2M^{1+\frac{1}{2\xi}} (r/\lambda)^{\frac{1}{2}} \gamma^{-\frac{d}{2\xi}} i^{-\frac{1}{2\xi}}.
	\end{align*}
	This finishes the proof.
\end{proof}

\begin{lemma}\label{lem::variancebound}
	Let $P$ be a probability distribution on $\mathcal{X} \times \mathcal{Y}$. Let $t \in (0, 1 / (2 M))$ and $p_f^t$ be the truncation of $p_f$ as in \eqref{eq::etakdelta}. Then for any $V \geq -2 \log  t + 2$, there holds
	\begin{align*}
		\mathbb{E}_{P} \bigl( L_{\mathrm{CE}}(Y, p_f^t(\cdot|X)) - L_{\mathrm{CE}}(Y, p(\cdot|X)) \bigr)^2
		\leq V \cdot \mathbb{E}_{P} \bigl( L_{\mathrm{CE}}(Y, p_f^t(\cdot|X)) - L_{\mathrm{CE}}(Y, p(\cdot|X)) \bigr).
	\end{align*}
\end{lemma}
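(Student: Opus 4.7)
The plan is to reduce the claim to a pointwise (in $X$) statement and then to a one-variable calculus inequality, since the cross-entropy excess loss has the closed form $L_{\mathrm{CE}}(Y,p_f^t(\cdot|X)) - L_{\mathrm{CE}}(Y,p(\cdot|X)) = \log(p(Y|X)/p_f^t(Y|X))$. First I would condition on $X$ and write $a_y := p(y|X)$, $b_y := p_f^t(y|X)$, noting $\sum_y a_y = \sum_y b_y = 1$ and $b_y \geq t$ by the truncation in \eqref{eq::etakdelta}. The conditional first and second moments become
\begin{align*}
\mathbb{E}[Z \mid X] = \sum_y a_y \log(a_y/b_y),
\qquad
\mathbb{E}[Z^2 \mid X] = \sum_y a_y (\log(a_y/b_y))^2,
\end{align*}
so it suffices to prove $\sum_y a_y u_y^2 \leq V \sum_y a_y u_y$ pointwise, with $u_y := \log(a_y/b_y) \in (-\infty, -\log t]$, and then take expectation in $X$.

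The core step is the scalar inequality
\begin{align*}
x(\log x)^2 \leq V\bigl(x \log x - x + 1\bigr) \qquad \text{for all } x \in (0, 1/t],
\end{align*}
valid whenever $V \geq -2\log t + 2$ (indeed $V \geq -\log t + 2$ already suffices, and the lemma's hypothesis is stronger). I would prove this by studying $h(x) := V(x\log x - x + 1) - x(\log x)^2$. Direct computation gives $h(1) = 0$ and $h'(x) = \log x \cdot (V - \log x - 2)$, so $h$ is decreasing on $(0,1)$ with $h(0^+) = V > 0$, and increasing on $(1, e^{V-2}]$ from $h(1)=0$. The condition on $V$ guarantees $e^{V-2} \geq 1/t$, so the whole interval $(0, 1/t]$ sits inside the region where $h \geq 0$.

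To conclude, I apply the scalar inequality to $x = a_y/b_y$ (which lies in $(0, 1/t]$ by $b_y \geq t$ and $a_y \leq 1$), then multiply by $b_y$ to obtain
\begin{align*}
a_y(\log(a_y/b_y))^2 \leq V\bigl(a_y \log(a_y/b_y) - a_y + b_y\bigr).
\end{align*}
Summing in $y$ makes the $-\sum a_y + \sum b_y$ term vanish, yielding the pointwise bound, and taking $\mathbb{E}_{X\sim P_X}$ finishes the proof. The main obstacle is isolating the right scalar inequality and pinning down the calculus analysis of $h$; once that is done, converting back to the KL-style sum is automatic thanks to the cancellation $\sum_y a_y = \sum_y b_y = 1$.
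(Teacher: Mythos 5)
Your proposal is correct. The reduction to a conditional-on-$X$ statement, the identity $L_{\mathrm{CE}}(Y,p_f^t(\cdot|X))-L_{\mathrm{CE}}(Y,p(\cdot|X))=\log\bigl(p(Y|X)/p_f^t(Y|X)\bigr)$, the scalar inequality $x(\log x)^2\leq V(x\log x-x+1)$ on $(0,1/t]$, its verification via $h'(x)=\log x\,(V-\log x-2)$ together with $h(1)=0$, and the final cancellation using $\sum_y a_y=\sum_y b_y=1$ all check out (terms with $a_y=0$ are harmless since such labels occur with probability zero). Your route differs from the paper's: the paper fixes $x$ and treats the whole vector $p_f^t(\cdot|x)$ as the variable, minimizing $\sum_m a_m(\log(a_m/b_m))^2 - V\sum_m a_m\log(a_m/b_m)$ over the probability simplex by a Lagrange-multiplier argument, showing via monotonicity of $g(u)=u(-2\log u+V)+\theta$ on $[0,1/t]$ that the stationary point forces all ratios $a_m/b_m$ to coincide, hence $b=a$ and the minimum value is zero. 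Your argument replaces this constrained optimization by a one-variable convexity-type inequality applied coordinatewise, with the simplex constraint entering only through the telescoping of $-a_y+b_y$; this is more elementary, avoids having to argue that the stationary point is a global minimizer, and in fact shows the weaker threshold $V\geq-\log t+2$ already suffices (the paper's monotonicity step is what forces the factor $2$ in $V\geq-2\log t+2$). The paper's formulation, in exchange, exhibits the conceptual fact that the "variance minus $V$ times mean" functional is minimized exactly at $p_f^t=p$, but both yield the same pointwise bound and the lemma follows by taking expectation in $X$ either way.
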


\begin{proof}[Proof of Lemma \ref{lem::variancebound}]
	By definition of the CE loss, we have
	\begin{align*}
		\mathbb{E}_{P}\big(L_{\mathrm{CE}}(Y,p_f^t(\cdot|X)) - L_{\mathrm{CE}}(Y,p(\cdot|X))\big)^2
		& = \mathbb{E}_{x \sim p} \sum_{m=1}^M  p(m|x) \biggl( \log \frac{p(m|x)}{p^t_f(m|x)} \biggr)^2,
		\\
		\mathbb{E}_{P}\big(L_{\mathrm{CE}}(Y,p_f^t(\cdot|X)) - L_{\mathrm{CE}}(Y,p(\cdot|X))\big)
		& = \mathbb{E}_{x \sim p} \sum_{m=1}^M  p(m|x) \biggl( \log \frac{p(m|x)}{p^t_f(m|x)} \biggr).
	\end{align*}
	For $\theta\in\mathbb{R}$, we define the function $h$ by
	\begin{align*}
		h(p^t_f(\cdot|x)) 
		& := \sum_{m=1}^M p(m|x) \biggl( \log \frac{p(m|x)}{p^t_f(m|x)} \biggr)^2  
		\nonumber\\
		& \phantom{=}
		- V \sum_{m=1}^M p(m|x) \biggl( \log \frac{p(m|x)}{p^t_f(m|x)} \biggr) + \theta  \biggl( \sum_{m=1}^M  p^t_f(m|x) - 1 \biggr).
	\end{align*}
	Then we have 
	\begin{align*}
		\frac{\partial h(p^t_f(\cdot|x))}{\partial p^t_f(m|x)}  
		&= 2 \cdot \frac{p(m|x)}{p^t_f(m|x)} \cdot \log  \frac{p^t_f(m|x)}{p(m|x)} + V \cdot \frac{p(m|x)}{p^t_f(m|x)} + \theta
		\nonumber\\
		&= \frac{p(m|x)}{p^t_f(m|x)} \cdot \biggl( - 2 \log  \frac{p(m|x)}{p^t_f(m|x)} + V \biggr) + \theta.
	\end{align*}
	Let $g(x) := x(-2\log  x + V) + \theta$ for $x \in [0, 1/t]$ and $V \geq -2 \log  t + 2$. Since the derivative of $g$ is $g'(x) = -2(\log  x+1) + V \geq 0$, the function $g(x)$ is non-decreasing w.r.t.~$x$. Therefore, the zero point of $\partial h(p^t_f(\cdot|x)) / \partial p^t_f(m|x)$ is the same for all $m \in [M]$. In other words, $p(m|x) / p^t_f(m|x)$ should be the same and thus we have $p^t_f(m|x) = p(m|x)$ due to the constraint $\sum_{m=1}^M  p^t_f(m|x) = 1$. Therefore, $p^t_f(m|x) := p(m|x)$ attains the minimum of 
	\begin{align*}
		\sum_{m=1}^M  p(m|x) \biggl( \log \frac{p(m|x)}{p^t_f(m|x)} \biggr)^2  - V \biggl( \sum_{m=1}^M  p(m|x) \biggl( \log \frac{p(m|x)}{p^t_f(m|x)} \biggr) \biggr)
	\end{align*} 
	which turns out to be zero. Consequently, for any $p^t_f(m|x)$ satisfying $\sum_{m=1}^M  p^t_f(m|x) = 1$, there holds 
	\begin{align*}
		\sum_{m=1}^M  p(m|x) \biggl( \log \frac{p(m|x)}{p^t_f(m|x)} \biggr)^2  
		\geq V \biggl( \sum_{m=1}^M  p(m|x) \biggl( \log \frac{p(m|x)}{p^t_f(m|x)} \biggr) \biggr), 
	\end{align*}
	which finishes the proof.
\end{proof}

The following lemma provides the variance bound for the lower part of the CE loss function and the upper bound for the lower part of the CE risk of the truncated estimator.

\begin{lemma}\label{lem::Epsmall}
	Let $L^l_{\mathrm{CE}}$ be the lower part of the CE loss function as in \eqref{eq::Llarge} with $\delta \in (0, 1)$. Then for any $f : \mathcal{X} \to \mathbb{R}^M$ and any $t \in (0, 1/(2M))$, we have 
	\begin{align*}
		\mathbb{E}_P (L^l_{\mathrm{CE}} \circ p_f^t)^2 
		& \leq (-\log t) \cdot \mathbb{E}_P (L^l_{\mathrm{CE}} \circ p_f^t), 
		\\
		\mathbb{E}_P (L^l_{\mathrm{CE}} \circ p_f^t) 
		& \leq -M \delta \log t.
	\end{align*}
\end{lemma}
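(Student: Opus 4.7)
The plan is to observe that both inequalities follow directly from two elementary pointwise bounds on the truncated loss, combined with the indicator restricting attention to the regime $p(y|x) < \delta$.

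First I would record the two key pointwise estimates. Since the truncation \eqref{eq::etakdelta} guarantees $p_f^t(m|x) \geq t$ for every $m\in[M]$ and $x\in\mathcal{X}$, we have $0 \leq -\log p_f^t(m|x) \leq -\log t$. Consequently the loss satisfies $0 \leq (L^l_{\mathrm{CE}} \circ p_f^t)(x,y) \leq -\log t$ everywhere, because the indicator factor $\eins\{p(y|x)<\delta\}$ is at most one.

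For the first inequality, I would square the loss and apply the supremum bound to one of the factors:
\begin{align*}
    (L^l_{\mathrm{CE}} \circ p_f^t)(x,y)^2
    \leq (-\log t) \cdot (L^l_{\mathrm{CE}} \circ p_f^t)(x,y).
\end{align*}
Taking expectation under $P$ yields the first claim. This is the standard ``Bernstein-type'' variance bound that is available precisely because the truncation makes the loss bounded.

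For the second inequality, I would expand the expectation by conditioning on $X$ and summing over $y\in[M]$:
\begin{align*}
    \mathbb{E}_P (L^l_{\mathrm{CE}} \circ p_f^t)
    = \mathbb{E}_{X \sim p} \sum_{m=1}^M p(m|X)\, \eins\{p(m|X)<\delta\}\, \bigl(-\log p_f^t(m|X)\bigr).
\end{align*}
Now on the set $\{p(m|X)<\delta\}$ the factor $p(m|X)$ is strictly less than $\delta$, and the log factor is at most $-\log t$, so each summand is bounded by $-\delta \log t$. Summing over $m\in[M]$ and taking the outer expectation gives the bound $-M\delta\log t$. No obstacles are expected here; the proof is purely a matter of combining the truncation bound $p_f^t \geq t$ with the defining indicator $\eins\{p(y|x)<\delta\}$.
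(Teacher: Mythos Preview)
Your proposal is correct and follows essentially the same argument as the paper: both inequalities come from the pointwise bound $0 \leq -\log p_f^t(m|x) \leq -\log t$ guaranteed by the truncation, applied once to square the loss and once together with $p(m|x) < \delta$ on the indicator set. The paper's proof is virtually identical in structure and detail.
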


\begin{proof}[Proof of Lemma \ref{lem::Epsmall}]
	By the definition of $L^l_{\mathrm{CE}} \circ p_f^t$, we have 
	\begin{align*}
		\mathbb{E}_P (L^l_{\mathrm{CE}} \circ p_f^t)^2 
		& = \mathbb{E}_{x \sim p} \sum_{m=1}^M p(m|x) \eins \{ p(m|x) < \delta \} (- \log p^t_f(m|x))^2,
		\nonumber\\
		\mathbb{E}_P (L^l_{\mathrm{CE}} \circ p_f^t) 
		& = \mathbb{E}_{x \sim p} \sum_{m=1}^M p(m|x) \eins \{ p(m|x) < \delta \} (- \log p^t_f(m|x)).
	\end{align*}
	Since $p^t_f(m|x) \in [t, 1)$ for any $m \in [M]$, we have $- \log p^t_f(m|x) \in (0, - \log t]$. Thus we obtain
	\begin{align*}
		\mathbb{E}_P & (L^l_{\mathrm{CE}} \circ p_f^t)^2 \leq \mathbb{E}_{x\sim p} \sum_{m=1}^M p(m|x) \eins\{p(m|x) < \delta\} (-\log t) \cdot (-\log p^t_f(m|x))
		\nonumber\\
		& \leq (-\log t) \cdot \mathbb{E}_{x\sim p} \sum_{m=1}^M p(m|x) \eins\{p(m|x) < \delta\}  (-\log p^t_f(m|x))
		= (-\log t) \mathbb{E}_P (L^l_{\mathrm{CE}} \circ p_f^t),
	\end{align*}
	which proves the first assertion. 
	Moreover, we have
	\begin{align*}
		\mathbb{E}_P (L^l_{\mathrm{CE}} \circ p_f^t) &= \mathbb{E}_{x \sim p} \sum_{m=1}^M \eins \{ p(m|x) < \delta \} p(m|x) (-\log p_f^t(m|x))
		\\
		& \leq \mathbb{E}_{x \sim p} \sum_{m=1}^M \eins \{ p(m|x) < \delta \} \delta ( - \log t) 
		\leq - M \delta \log t,
	\end{align*}
	which proves the second assertion.
\end{proof}

Before we proceed, we need to introduce another concept to measure the capacity of a function set, which is a type of expectation of superma with repect to the Rademacher sequence, see e.g., Definition 7.9 in \cite{steinwart2008support}.

\begin{definition}
	[Empirical Rademacher Average] \label{def::RademacherDefinition}
	Let $\{\varepsilon_i\}_{i=1}^m$ be a Rademacher sequence with respect to some distribution $\nu$, that is, a sequence of i.i.d.~random variables, such that $\nu(\varepsilon_i = 1) = \nu(\varepsilon_i = -1) = 1/2$. The $n$-th empirical Rademacher average of $\mathcal{F}$ is defined as
	\begin{align*}
		\mathrm{Rad}_D (\mathcal{F}, n)
		:= \mathbb{E}_{\nu} \sup_{h \in \mathcal{F}} 
		\biggl| \frac{1}{n} \sum_{i=1}^n \varepsilon_i h(x_i) \biggr|.
	\end{align*}
\end{definition}

\begin{proof}[Proof of Theorem \ref{thm::oracle}]
	Let $h_{p_f^t} := L_{\mathrm{CE}} \circ p_f^t - L_{\mathrm{CE}} \circ p$. By the definition of $L^u_{\mathrm{CE}}$ in \eqref{eq::Llarge} and $L^l_{\mathrm{CE}}$ in \eqref{eq::Lsmall}, we have $L_{\mathrm{CE}} = L^u_{\mathrm{CE}} + L^l_{\mathrm{CE}}$. Let us denote $h^u_{p_f^t} := L^u_{\mathrm{CE}} \circ p_f^t - L^u_{\mathrm{CE}} \circ p$ and $g^l_{p_f^t} := L^l_{\mathrm{CE}} \circ p_f^t$. For the sake of notation simplicity, we write $f_D := f_{D_p}$ and $L := L_{\mathrm{CE}}$. By \eqref{eq::hatetaP}, we have $\mathcal{R}_{L_{\mathrm{CE}}, P}(\widehat{p}(y|x)) = \mathcal{R}_{L, P}(p_{f_{D}}^t(y|x))$. 
	Let the empirical risk $\mathcal{R}_{L, D_p}(p_{f}^t(y|x)) := \mathbb{E}_{(x,y) \sim D_p} L_{\mathrm{CE}}(y, p_{f}^t(\cdot|x)) := n_p^{-1} \sum_{i=1}^{n_p} L_{\mathrm{CE}}(Y_i, p_{f}^t(\cdot|X_i))$.
	Then by \eqref{eq::KLR}, for any $f_0 \in \mathcal{F}$, we have $\lambda \|f_{D}\|^2_{H} + \mathcal{R}_{L, D_p}(p_{f_{D}}^t(y|x)) \leq \lambda \|f_0\|^2_{H} + \mathcal{R}_{L, D_p}(p_{f_0}^t(y|x))$ and consequently 
	\begin{align}\label{eq::newdecomp}
		&\lambda \|f_{D}\|^2_{H} + \mathcal{R}_{L, P}(p_{f_{D}}^t(y|x))-\mathcal{R}_{L, P}^* = \lambda\|f_{D}\|^2_{H} + \mathbb{E}_{p} h_{p^t_{f_D}}
		\nonumber\\
		&= \lambda\|f_{D}\|^2_{H} + \mathbb{E}_{D_p} h_{p^t_{f_D}} - \mathbb{E}_{D_p} h_{p^t_{f_D}} + \mathbb{E}_p h_{p^t_{f_D}}
		\nonumber\\
		&\leq \lambda\|f_0\|^2_{H} + \mathbb{E}_{D_p} h_{p^t_{f_0}} - \mathbb{E}_{D_p} h_{p^t_{f_D}} + \mathbb{E}_p h_{p^t_{f_D}}
		\nonumber\\
		&= \lambda\|f_0\|^2_{H} + \mathbb{E}_{p} h_{p^t_{f_0}} + \mathbb{E}_{D_p} h_{p^t_{f_0}} - \mathbb{E}_{p} h_{p^t_{f_0}} + \mathbb{E}_{p} h_{p^t_{f_D}} - \mathbb{E}_{D_p} h_{p^t_{f_D}} 
		\nonumber\\
		&= \lambda\|f_0\|^2_{H} + \mathbb{E}_{p} h_{p^t_{f_0}} + (\mathbb{E}_{D_p} h^l_{p^t_{f_0}} - \mathbb{E}_{p} h^l_{p^t_{f_0}} + \mathbb{E}_{p} h^l_{p^t_{f_D}} - \mathbb{E}_{D_p} h^l_{p^t_{f_D}})
		\nonumber\\
		&\quad + (\mathbb{E}_{D_p} h^u_{p^t_{f_0}} - \mathbb{E}_{p} h^u_{p^t_{f_0}} + \mathbb{E}_{p} h^u_{p^t_{f_D}} - \mathbb{E}_{D_p} h^u_{p^t_{f_D}})
		\nonumber\\
		&= \lambda\|f_0\|^2_{H} + \mathbb{E}_{p} h_{p^t_{f_0}} + (\mathbb{E}_{D_p} L^l_{\mathrm{CE}} \circ p^t_{f_0} - \mathbb{E}_{p} L^l_{\mathrm{CE}} \circ p^t_{f_0} + \mathbb{E}_p L^l_{\mathrm{CE}} \circ p^t_{f_D} - \mathbb{E}_{D_p} L^l_{\mathrm{CE}} \circ p^t_{f_D})
		\nonumber\\
		&\quad + (\mathbb{E}_{D_p} h^u_{p^t_{f_0}} - \mathbb{E}_{p} h^u_{p^t_{f_0}} + \mathbb{E}_p h^u_{p^t_{f_D}} - \mathbb{E}_{D_p} h^u_{p^t_{f_D}})
		\nonumber\\
		&= \lambda\|f_0\|^2_{H} + \mathbb{E}_{p} h_{p^t_{f_0}} + \mathbb{E}_{D_p} (g^l_{p^t_{f_0}} - \mathbb{E}_{p} g^l_{p^t_{f_0}})  + \mathbb{E}_{D_p} (\mathbb{E}_{p} g^l_{p^t_{f_D}} - g^l_{p^t_{f_D}})
		\nonumber\\
		&\quad + \mathbb{E}_{D_p} (h^u_{p^t_{f_0}} - \mathbb{E}_{p} h^u_{p^t_{f_0}}) + \mathbb{E}_{D_p}(\mathbb{E}_p h^u_{p^t_{f_D}} -  h^u_{p^t_{f_D}}),
	\end{align}
	where $\mathbb{E}_{D_p} h_{p^t_{f}} := n_p^{-1} \sum_{i=1}^{n_p} h_{p^t_{f}}(X_i, Y_i)$.
	In the following, we provide the estimates for the last four terms in \eqref{eq::newdecomp}.

	For any $f\in \mathcal{F}$, we observe that $\|g^l_{p^t_f} - \mathbb{E}_{p} g^l_{p^t_f}\|_{\infty} = \|L^l_{\mathrm{CE}} \circ p_f^t - \mathbb{E}_{p} L^l_{\mathrm{CE}} \circ p_f^t\|_{\infty} \leq -2\log t$. 
	By Lemma \ref{lem::Epsmall}, we have 
	\begin{align}\label{eq::vbgs}
		\mathbb{E}_{p}(g^l_{p^t_f} - \mathbb{E}_{p} g^l_{p^t_f})^2  &\leq \mathbb{E}_{p}(g^l_{p^t_f})^2 \leq (-\log t) \cdot \mathbb{E}_{p}g^l_{p^t_f}.
	\end{align}
	Applying Bernstein's inequality in 
	\cite[Theorem 6.12]{steinwart2008support}
	to $\{g^l_{p^t_f}(X_i,Y_i) - \mathbb{E}_{p} g^l_{p^t_f}: i\in [n_p]\}$, we obtain that 
	\begin{align}\label{eq::concenf0s}
		\mathbb{E}_{D_p} (g^l_{p^t_{f_0}} - \mathbb{E}_{p} g^l_{p^t_{f_0}}) &\leq \sqrt{\frac{2\zeta(-\log t)\mathbb{E}_{p} g^l_{p^t_{f_0}}}{n_p}} + \frac{4(-2\log t)\zeta}{3n_p}
		\nonumber\\
		& \leq \frac{\zeta(-\log t/2)}{n_p} + \mathbb{E}_{p} g^l_{p^t_{f_0}} + \frac{-8\zeta \log t}{3n_p} \leq \mathbb{E}_{p} g^l_{p^t_{f_0}} - \frac{10\zeta \log t}{3n_p}
	\end{align}
	holds with probability $\mathrm{P}^{n_p}$ at least $1-e^{-\zeta}$, where the last inequality is due to $2ab \leq a^2 + b^2$. To estimate the term $\mathbb{E}_{D_p}g^l_{p^t_{f_0}} - \mathbb{E}_{p} g^l_{p^t_{f_0}}$, 
	let us define the function 
	\begin{align*}
		G_{f,r} := \frac{\mathbb{E}_p g^l_{p^t_f} - g^l_{p^t_f}}{\lambda \|f\|^2_{H} + \mathbb{E}_p g^l_{p^t_f} + r}, 
		\qquad 
		f \in \mathcal{F},
		\; 
		r > r_l^*.
	\end{align*}
	where $r_l^*:= \inf\{f\in \mathcal{F}: \lambda \|f\|^2_{H} + \mathbb{E}_p g^l_{p^t_f}\}$. Then we have $\|G_{f,r}\|_{\infty} \leq -2(\log t)/r$. By \eqref{eq::vbgs}, we have 
	\begin{align*}
		\mathbb{E}_p G_{f,r}^2 \leq \frac{\mathbb{E}_{p}(g^l_{p^t_f} - \mathbb{E}_{p} g^l_{p^t_f})^2}{(\mathbb{E}_p g^l_{p^t_f} + r)^2} \leq \frac{\mathbb{E}_p (g^l_{p^t_f})^2}{2r \mathbb{E}_p g^l_{p^t_f}} \leq \frac{-\log t}{2r}.
	\end{align*}
	Let $\mathcal{G}^l_r:=\{g^l_{p^t_f}: f\in \mathcal{F}^l_r\}$ and $\mathcal{F}^l_r:= \{f \in \mathcal{F}: \lambda \|f\|^2_{H} + \mathbb{E}_p g^l_{p^t_f} \leq r\}$. Symmetrization in Proposition 7.10 of \cite{steinwart2008support} yields 
	\begin{align*}
		\mathbb{E}_{D_p\sim p^{n_p}} \sup_{f\in\mathcal{F}_r^l} |\mathbb{E}_{D_p}(\mathbb{E}_p g^l_{p^t_f} - g^l_{p^t_f})| \leq 2\mathbb{E}_{D_p\sim p^{n_p}} \mathrm{Rad}_{D_p}(\mathcal{G}^l_r, n_p) \leq 2\psi_{n_p}(r).
	\end{align*}
	For any $L^l_{\mathrm{CE}} \circ p_f^t \in \mathcal{G}^l_r$, we have $\|L^l_{\mathrm{CE}} \circ p_f^t\|_{\infty} \leq -\log t$ and $\mathbb{E}_P (L^l_{\mathrm{CE}} \circ p_f^t)^2 \leq -\log t \cdot \mathbb{E}_P (L^l_{\mathrm{CE}} \circ p_f^t) \leq -r\log t$. By applying Theorem 7.16 in \cite{steinwart2008support} and Lemma \ref{lem::entropyGr}, we obtain 
	\begin{align}\label{eq::smallRadbound}
		& \mathbb{E}_{D_p \sim p^{n_p}} \mathrm{Rad}_{D_p}(\mathcal{G}^l_r, n_p) 
		\nonumber\\
		& \leq C \big(r^{\frac{1}{2}} \lambda^{-\frac{\xi}{2}} \gamma^{-\frac{d}{2}} (-\log t)^{\frac{1-\xi}{2}}n_p^{-\frac{1}{2}} 
		\vee (r/\lambda)^{\frac{\xi}{1+\xi}} \gamma^{-\frac{d}{1+\xi}} (-\log t)^{\frac{1-\xi}{1+\xi}} n_p^{-\frac{1}{1+\xi}}\big)
		=: \psi_{n_p}(r),
	\end{align}
	where $C := C_1(\xi) c_{\xi,d}^{\xi} M^{\xi/2+1} 2^{2-\xi} \vee C_2(\xi) c_{\xi,d}^{2\xi/(1+\xi)}  2 M^{(2\xi+1)/(1+\xi)} 2^{(1-\xi)/(1+\xi)}$.
	Thus we have 
	$$
	\mathbb{E}_{D_p\sim p^{n_p}} \sup_{f\in\mathcal{F}_r^l} |\mathbb{E}_{D_p}(\mathbb{E}_p g^l_{p^t_f} - g^l_{p^t_f})| \leq 2\psi_{n_p}(r).
	$$
	It is easy to verify that $\psi_n(4r) \leq 2\psi_n(r)$. Then by applying the peeling technique in Theorem 7.7 of \cite{steinwart2008support} on $\mathcal{F}^l_r$, we obtain
	\begin{align*}
		\mathbb{E}_{D_p\sim p^{n_p}} \sup_{f\in \mathcal{F}} |\mathbb{E}_{D_p} G_{f,r}| \leq \frac{8\psi_{n_p}(r)}{r}.
	\end{align*}
	Applying Talagrand’s inequality in Theorem 7.5 of \cite{steinwart2008support} to $\gamma := 1/4$, we obtain that for any $r > r_l^*$, with probability at least $1-e^{-\zeta}$, there holds
	\begin{align*}
		\sup_{f\in\mathcal{F}} \mathbb{E}_{D_p} G_{f,r} < \frac{10\psi_{n_p}(r)}{r} + \sqrt{\frac{-\log t \zeta}{n_p r}} + \frac{-28\zeta\log t}{3n_p r}.
	\end{align*}
	By the definition of $g_{f_D, r}$, we have 
	\begin{align}\label{eq::concenfDs}
		\mathbb{E}_p g^l_{p^t_{f_D}} - \mathbb{E}_{D_p} g^l_{p^t_{f_D}} &< \big(\lambda \|f_D\|^2_{H} + \mathbb{E}_p g^l_{p^t_{f_D}}\big)\biggl(\frac{10\psi_{n_p}(r)}{r} + \sqrt{\frac{- \zeta\log t}{n_p r}} + \frac{-28\zeta\log t}{3n_p r}\biggr)
		\nonumber\\
		&\quad\quad + 10\psi_{n_p}(r)  + \sqrt{\frac{-\zeta r\log t}{n_p}} + \frac{-28\zeta\log t}{3n_p}
	\end{align}
	with probability at least $1-e^{-\zeta}$. Subsequently, we estimate the term $\mathbb{E}_{D_p} h^u_{p^t_{f_0}} - \mathbb{E}_{p} h^u_{p^t_{f_0}}$ in \eqref{eq::newdecomp}. For any $f\in \mathcal{F}$, we observe that $\|h^u_{p^t_f} - \mathbb{E}_{p} h^u_{p^t_f}\|_{\infty} \leq -2\log t - 2\log \delta = -2\log(t\delta)$. Using the variance bound in Lemma \ref{lem::variancebound} and $t \leq 1/(2M) \leq 1/e$, we get  
	\begin{align*}
		\mathbb{E}_{p}(h^u_{p^t_f} - \mathbb{E}_{p} h^u_{p^t_f})^2 \leq \mathbb{E}_{p} (h^u_{p^t_f})^2 \leq \mathbb{E}_{p} (h_{p^t_f})^2 \leq (-2\log t + 2)\cdot \mathbb{E}_{p} (h_{p^t_f})  \leq -4\log t \mathbb{E}_{p} (h_{p^t_f}).
	\end{align*}
	Then by applying Bernstein's inequality in 
	\cite[Theorem 6.12]{steinwart2008support}
	to $\{h^u_{p^t_{f_0}}(X_i,Y_i) - \mathbb{E}_{p} h^u_{p^t_{f_0}}: i\in [n_p]\}$ and $2ab \leq a^2 + b^2$, we obtain  
	\begin{align}\label{eq::concenf0l}
		\mathbb{E}_{D_p} (h^u_{p^t_{f_0}} - \mathbb{E}_{p} h^u_{p^t_{f_0}}) &\leq \sqrt{\frac{2\zeta(-4\log t)\mathbb{E}_{p} h_{p^t_{f_0}}}{n_p}} + \frac{4(-2\log(t\delta))\zeta}{3n_p}
		\nonumber\\
		& \leq \frac{-2\zeta\log t}{n_p} + \mathbb{E}_{p} h_{p^t_{f_0}} + \frac{-8\zeta \log(t\delta)}{3n_p} = \mathbb{E}_{p} h_{p^t_{f_0}} - \frac{14\zeta\log(t\delta)}{3n_p}.
	\end{align}
	To estimate the term $\mathbb{E}_p h^u_{p^t_{f_D}} - \mathbb{E}_{D_p} h^u_{p^t_{f_D}}$ in \eqref{eq::newdecomp}, we define the function 
	\begin{align*}
		H_{f,r} := \frac{\mathbb{E}_p h^u_{p^t_f} - h^u_{p^t_f}}{\lambda \|f\|^2_{H} + \mathbb{E}_p h_{p^t_f} + r}, 
		\qquad 
		f \in \mathcal{F},
		\; 
		r > r^*.
	\end{align*}
	Then we have $\|H_{f,r}\|_{\infty} \leq -2\log(t\delta)/r$
	and the variance bound in Lemma \ref{lem::variancebound} yields
	\begin{align*}
		\mathbb{E}_p H_{f,r}^2 \leq \frac{\mathbb{E}_p (h^u_{p^t_f})^2}{(\mathbb{E}_p h_{p^t_f} + r)^2} \leq \frac{\mathbb{E}_p (h_{p^t_f})^2}{2r \mathbb{E}_p h_{p^t_f}} \leq \frac{1}{r}(-\log t + 1) \leq \frac{-2\log (t\delta)}{r}.
	\end{align*}
	Let $\mathcal{G}^u_r:=\{h^u_{p^t_f}: f\in \mathcal{F}_r\}$ and $\mathcal{F}_r:= \{f \in \mathcal{F}: \lambda \sum_{m\in[M-1]} \|f_m\|^2_{H} + \mathbb{E}_p h_{p^t_f} \leq r\}$. Symmetrization in Proposition 7.10 of \cite{steinwart2008support} yields  
	\begin{align*}
		\mathbb{E}_{D_p\sim P^{n_p}} \sup_{f\in\mathcal{F}_r} |\mathbb{E}_{D_p}(\mathbb{E}_p h^u_{p^t_f} - h^u_{p^t_f})| \leq 2\mathbb{E}_{D_p\sim P^{n_p}} \mathrm{Rad}_{D_p}(\mathcal{G}^u_r, n_p) \leq 2\psi_{n_p}(r),
	\end{align*}
	where the second inequality can be proved in a similar way as in proving \eqref{eq::smallRadbound}.
	Peeling in Theorem 7.7 of \cite{steinwart2008support} together with $\mathcal{F}_r$ hence gives
	\begin{align*}
		\mathbb{E}_{D_p\sim P^{n_p}} \sup_{f\in \mathcal{F}} |\mathbb{E}_{D_p} H_{f,r}| \leq \frac{8\psi_{n_p}(r)}{r}.
	\end{align*}
	By Talagrand’s inequality in the form of Theorem 7.5 of \cite{steinwart2008support} applied to $\gamma := 1/4$, we
	therefore obtain for any $r > r^*$, 
	\begin{align*}
		\sup_{f\in\mathcal{F}} \mathbb{E}_{D_p} H_{f,r} < \frac{10\psi_{n_p}(r)}{r} + \sqrt{\frac{2(-2\log t)\zeta}{n_p r}} + \frac{-28\zeta\log (t\delta)}{3n_p r} 
	\end{align*}
	holds with probability at least $1-e^{-\zeta}$. Using the definition of $H_{f_D, r}$, we obtain 
	\begin{align}\label{eq::concenfDl}
		\mathbb{E}_P h^u_{p^t_{f_D}} - \mathbb{E}_{D_p} h^u_{p^t_{f_D}} &< \big(\lambda \|f_D\|^2_{H} + \mathbb{E}_p h_{p^t_{f_D}}\big)\biggl(\frac{10\psi_{n_p}(r)}{r} + \sqrt{\frac{2(-2\log t)\zeta}{n_p r}} + \frac{-28\zeta\log (t\delta)}{3n_p r}\biggr)
		\nonumber\\
		&\quad\quad + 10\psi_{n_p}(r)  + \sqrt{\frac{2(-2\log t)r\zeta}{n_p}} + \frac{-28\zeta\log (t\delta)}{3n_p}
	\end{align}
	with probability at least $1-e^{-\zeta}$.
	Combining \eqref{eq::newdecomp}, \eqref{eq::concenf0s}, \eqref{eq::concenfDs}, \eqref{eq::concenf0l} and \eqref{eq::concenfDl}, we obtain 
	\begin{align*}
		\lambda \|f_D\|^2_{H} &+ \mathbb{E}_p h_{p^t_{f_D}} \leq \lambda\|f_0\|^2_{H} + 2\mathbb{E}_{p} h_{p^t_{f_0}} + \mathbb{E}_{p} g^l_{p^t_{f_0}} - \frac{8\zeta\log (t\delta)}{n_p}
		\nonumber\\
		& \phantom{=} 
		+ \bigl( 2 \lambda \|f_D\|^2_{H} + \mathbb{E}_p g^l_{p^t_{f_D}} + \mathbb{E}_p h_{p^t_{f_D}} \bigr) \cdot \biggl( \frac{10\psi_{n_p}(r)}{r} + \sqrt{\frac{-4\zeta\log t}{n_pr}} + \frac{-28\zeta\log (t\delta)}{3n_p r} \biggr)
		\nonumber\\
		& \phantom{=} 
		+ 20 \psi_{n_p}(r) + 3 \sqrt{\frac{- r \zeta \log t}{n_p}} - \frac{56 \zeta \log (t\delta)}{3n_p}
	\end{align*}
	with probability at least $1-4e^{-\zeta}$.

	Now, it suffices to bound the various terms. If we take $r \geq 900 C^2 \lambda^{-\xi} \gamma^{-d} (-\log t)^{1-\xi} n_p^{-1}$, then by elementary calculation, we get $\psi_{n_p}(r) \leq r/30$. Moreover, let $r \geq -2304\zeta\log t/n_p$ and thus we get 
	\begin{align*}
		\sqrt{-\frac{r\zeta\log t}{n_p}} \leq \frac{r}{48}, 
		\qquad 
		\frac{-28\zeta\log t}{3n_p r} \leq \frac{1}{60}, 
		\qquad 
		\frac{8\zeta\log t}{n_p} \leq \frac{r}{288}. 
	\end{align*}
	By Lemma \ref{lem::Epsmall}, we have $\mathbb{E}_p g^l_{p^t_{f_0}} \leq -M \delta \log t$. Therefore, by taking $\delta := t^2$,  for any $r \geq 900 C^2 \lambda^{-\xi} \gamma^{-d} (-\log t)^{1-\xi} n_p^{-1} \vee -2304\zeta\log t/n_p \vee r^* \vee r_l^*$, we get 
	\begin{align*}
		&\lambda \|f_D\|^2_{H} + \mathbb{E}_p h_{p^t_{f_D}} \leq 2(\lambda\|f_0\|^2_{H} + \mathbb{E}_{p} h_{p^t_{f_0}}) -M \delta \log t + \frac{r}{288} + \frac{r\log \delta}{288\log t}
		\nonumber\\
		&\quad + \big(2\lambda \|f_D\|^2_{H} + \mathbb{E}_p h_{p^t_{f_D}} -M \delta \log t \big)\bigg(\frac{1}{3}  + \frac{1}{24} + \frac{1}{60} + \frac{\log \delta}{60\log t}\bigg) + \frac{2r}{3}  +  \frac{r}{16} + \frac{r}{30} + \frac{\log \delta}{30\log t}
		\nonumber\\
		&\leq 2(\lambda\|f_0\|^2_{H} + \mathbb{E}_{p} h_{p^t_{f_0}}) + \frac{49}{60}\big(\lambda \|f_D\|^2_{H} + \mathbb{E}_p h_{p^t_{f_D}}\big) - \frac{167}{120}M t^2\log t  + \frac{9}{10}r
	\end{align*}
	with probability at least $1-4e^{-\zeta}$. 
	By the definition of $r^*_l$ and $r^*$, and Lemma \ref{lem::Epsmall}, we have
	$r_l^* \leq \lambda\|f_0\|^2_{H} + \mathbb{E}_{p} g^l_{p^t_{f_0}} \leq r^* - M t^2 \log t $ and $r^* \leq \lambda\|f_0\|^2_{H} + \mathbb{E}_{p} h_{p^t_{f_0}}$.
	By some elementary calculations and taking $r:= 900 C^2 \lambda^{-\xi} \gamma^{-d} (-\log t)^{1-\xi} n_p^{-1} -2304\zeta\log t/n_p + r^* + r_l^*$, we get
	\begin{align*}
		\lambda \|f_D\|^2_{H} &+ \mathbb{E}_p h_{p^t_{f_D}}
		\leq 18(\lambda\|f_0\|^2_{H} + \mathbb{E}_{p} h_{p^t_{f_0}}) - 8 M t^2\log t  + 5r
		\nonumber\\
		&\leq 18(\lambda\|f_0\|^2_{H} + \mathbb{E}_{p} h_{p^t_{f_0}}) - 8 M t^2\log t  + 4500 C^2 \lambda^{-\xi} \gamma^{-d} (-\log t)^{1-\xi} n_p^{-1} 
		\nonumber\\
		& \phantom{=}
		- 11520 \zeta\log t/n_p + 5r^* + 5r_l^*
		\nonumber\\
		&\leq 18(\lambda\|f_0\|^2_{H} + \mathbb{E}_{p} h_{p^t_{f_0}}) + C_0 (-\log t)\cdot (t^2  + \lambda^{-\xi} \gamma^{-d}  n_p^{-1} + \zeta/n_p)
	\end{align*}
	with probability at least $1-4e^{-\zeta}$, where $C_0 := (5+8 M) \vee 4500 C^2 \vee 11520$. This proves the assertion. 
\end{proof}

\subsection{Proofs Related to Section \ref{subsec::RatesSource}} \label{sec::proofrateP}

\begin{proof}[Proof of Theorem \ref{thm::convergencerate}]
	Taking $t := \gamma^{\alpha}$ in Proposition \ref{prop::approx}, we obtain 
	\begin{align}\label{eq::approerror}
		\mathcal{R}_{L_{\mathrm{CE}},P}(p^t_{\tilde{f}})-\mathcal{R}_{L_{\mathrm{CE}},P}^* \leq c_a \gamma^{\alpha(1+\beta\wedge 1)}.
	\end{align}
	The definition of $\tilde{f}_m^t \in H$ in \eqref{eq::tildefdelta} together with Proposition 4.46 in \cite{steinwart2008support} yields
	\begin{align}\label{eq::norm}
		\|\tilde{f}_m^t\|_H^2 
		& = \| ( \log p^t(m|x) - \log (p^t(M|x))) \eins_{\mathcal{X}} \|_{L_2}^2
		\nonumber\\
		& \leq \pi^{-d/2} \gamma^{-d} \log \bigl( (1-t) / t \bigr)
		\leq \pi^{-d/2} \gamma^{-d} \log (1/t).
	\end{align}
	Then, by \eqref{eq::approerror}, \eqref{eq::norm} and applying Proposition \ref{thm::oracle} to $f_0 := \tilde{f}^t$ in \eqref{eq::tildefdelta}, we obtain
	\begin{align*}
		&\lambda \|f_{D_P}\|_{H}^2+\mathcal{R}_{L_{\mathrm{CE}},P}(\widehat{p}(y|x))-\mathcal{R}_{L_{\mathrm{CE}},P}^*
		\nonumber\\
		&\lesssim \lambda\gamma^{-d} \log (\gamma^{-\alpha}) +  \gamma^{\alpha(1+\beta\wedge 1)} 
		+ (-\log t)\cdot (t^2  + \lambda^{-\xi} \gamma^{-d}  n_p^{-1} + \zeta/n_p).
	\end{align*}
	In order to minimize the right-hand side with respect to $\gamma$ and $\lambda$, we choose $\lambda = n_p^{-1}$, $\zeta = 2\log n_p$ and 
	$\gamma = n_p^{-1/((1+\beta\wedge 1)\alpha+d)}$ and thus obtain $t = n_p^{-\alpha/((1+\beta\wedge 1)\alpha+d)}$ and 
	\begin{align*}
		&\lambda \|f_{D_P}\|_{H}^2+\mathcal{R}_{L_{\mathrm{CE}},P}(\widehat{p}(y|x))-\mathcal{R}_{L_{\mathrm{CE}},P}^*
		\nonumber\\
		&\lesssim n_p^{-\frac{(1+\beta\wedge 1)\alpha}{(1+\beta\wedge 1)\alpha+d}} \log n_p + n_p^{-\frac{2\alpha}{(1+\beta\wedge 1)\alpha+d}}\log n_p + n_p^{-\frac{(1+\beta\wedge 1)\alpha}{(1+\beta\wedge 1)\alpha+d}} n_p^{\xi} \log n_p + n_p^{-1} \log^2 n_p.
	\end{align*}
	Therefore, there exists an $N \in \mathbb{N}$ such that for any $n_p \geq N$, we have $\log n_p \leq n_p^{\xi}$ and thus we get 
	\begin{align*}
		&\lambda \|f_{D_P}\|_{H}^2+\mathcal{R}_{L_{\mathrm{CE}},P}(\widehat{p}(y|x))-\mathcal{R}_{L_{\mathrm{CE}},P}^* 
		\lesssim n_p^{-\frac{(1+\beta\wedge 1)\alpha}{(1+\beta\wedge 1)\alpha+d} + 2\xi}
	\end{align*}
	with probability $P^{n_p}$ at least $1-1/n_p$. Replacing $2\xi$ by $\xi$, we obtain the assertion. 
\end{proof}

The proof of the lower bound (Theorem \ref{thm::lower}) is based on the construction of two families of distribution $P^{\sigma}$ and $Q^{\sigma}$ as well as Proposition \ref{prop::lower} \cite[Theorem 2.5]{tsybakov2008introduction} and the Varshamov-Gilbert bound in Lemma \ref{lem::VGbound} \cite{varshamov1957estimate}.

\begin{proposition}
	\label{prop::lower}
	Let $\{\Pi_h\}_{h\in H}$ be a family of distributions indexed over a subset $ H$ of a semi-metric $(\mathcal{F},\rho)$. 
	Assume that there exist $h_0, \ldots, h_L \in  H$ such that for some $L \geq 2$, 
	\begin{itemize}
		\item[(i)] 
		$\rho(h_j, h_i) \geq 2s > 0$ for all $0 \leq i < j \leq L$; 
		\item[(ii)] 
		$\Pi_{h_j} \ll \Pi_{h_0}$ for all $j \in [L]$;
		\item[(iii)] 
		the average KL divergence to $\Pi_{h_0}$ satisfies 
		$\frac{1}{L} \sum_{j=1}^L \mathrm{KL}(\Pi_{h_j}, \Pi_{h_0}) \leq \kappa \log L$
		for some $\kappa \in (0, 1/8)$.
	\end{itemize}
	Let $Z\sim \Pi_h$, and let $\widehat{h} : Z \to \mathcal{F}$ denote any improper learner of $h\in H$. Then we have 
	\begin{align*}
		\sup_{h\in H} \Pi_h \bigl( \rho(\widehat{h}(Z), h) \geq s \bigr) 
		\geq \bigl( \sqrt{L}/(1+\sqrt{L}) \bigr) \bigl( 1 - 2 \kappa - 2 \kappa / \log L \bigr) 
		\geq (3-2\sqrt{2})/8.
	\end{align*}
\end{proposition}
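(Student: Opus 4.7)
The plan is to follow the classical reduction-to-testing scheme, since this statement is essentially Theorem 2.5 of Tsybakov (2008).

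First I would pass from estimation to hypothesis testing via a minimum-distance classifier. Given any estimator $\widehat{h}(Z)$, define the test $\psi^\ast(Z) := \argmin_{0 \le j \le L} \rho(\widehat{h}(Z), h_j)$. Condition (i) together with the triangle inequality ensures that if $\rho(\widehat{h}(Z), h_i) < s$, then for every $j \ne i$ we have $\rho(\widehat{h}(Z), h_j) \ge \rho(h_i, h_j) - \rho(\widehat{h}(Z), h_i) > 2s - s = s$, so $\psi^\ast(Z) = i$. Taking contrapositives gives $\{\psi^\ast(Z) \ne i\} \subset \{\rho(\widehat{h}(Z), h_i) \ge s\}$, and consequently
\begin{align*}
\sup_{h \in H} \Pi_h\bigl(\rho(\widehat{h}(Z), h) \ge s\bigr) \;\ge\; \max_{0 \le i \le L} \Pi_{h_i}(\psi^\ast(Z) \ne i) \;\ge\; \inf_{\psi} \max_{0 \le i \le L} \Pi_{h_i}(\psi(Z) \ne i).
\end{align*}
This removes the semi-metric structure of $\mathcal{F}$ and reduces the task to lower bounding the minimax error of a finite multi-hypothesis test on $\{\Pi_{h_0}, \ldots, \Pi_{h_L}\}$.

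The main step, and the principal technical obstacle, is to convert the average KL divergence bound (iii) into a sharp lower bound on the maximum testing error of the form $\tfrac{\sqrt{L}}{1+\sqrt{L}}(1 - 2\kappa - 2\kappa/\log L)$. The elementary Fano inequality only delivers a bound of the shape $1 - (I + \log 2)/\log L$ on the \emph{average} error, which is too weak. The sharper prefactor $\sqrt{L}/(1+\sqrt{L})$ comes from a Birg\'e-type argument: one compares $\Pi_{h_i}$ against the mean measure $\bar{\Pi} := \tfrac{1}{L+1}\sum_{j=0}^L \Pi_{h_j}$, decomposes $\Pi_{h_i}(\psi \ne i) = 1 - \Pi_{h_i}(\psi = i)$, and applies Jensen's inequality to the convex function $-\log$ to convert KL-divergence bounds via condition (ii) into total-variation type control on $\bar{\Pi}$-probabilities. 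Combining this with a union bound over the $L+1$ hypotheses and optimizing the resulting inequality yields the claimed prefactor.

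Finally, the universal constant $(3 - 2\sqrt{2})/8$ is obtained by observing that the assumption $\kappa \in (0, 1/8)$ and $L \ge 2$ force $1 - 2\kappa - 2\kappa/\log L \ge 1 - 1/4 - 1/(4\log 2)$ (up to constants) and $\sqrt{L}/(1+\sqrt{L}) \ge \sqrt{2}/(1+\sqrt{2}) = 2 - \sqrt{2}$, whose product equals $(3 - 2\sqrt{2})/4$ times a factor bounded below by $1/2$ for the admissible range, giving the advertised bound. Steps 1 and 3 are essentially bookkeeping; the entire difficulty is concentrated in the Birg\'e-Fano step, for which I would cite and adapt the derivation of Theorem 2.5 in Tsybakov's monograph rather than re-prove the underlying information-theoretic inequality from scratch.
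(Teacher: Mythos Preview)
The paper does not supply its own proof of this proposition: it is stated verbatim as Theorem~2.5 of Tsybakov's monograph and used as a black box in the lower-bound arguments. Your proposal correctly identifies this source and outlines the standard reduction-to-testing plus Birg\'e--Fano derivation from that reference, so your approach coincides with what the paper does (namely, invoke Tsybakov rather than re-derive the inequality).
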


\begin{lemma}[Varshamov-Gilbert Bound] \label{lem::VGbound}
	Let $\ell \geq 8$ and $L \geq 2^{\ell/8}$.
	For all $0 \leq i < j \leq L$, let 
	$\overline{\rho}_H(\sigma^i,\sigma^j) := \#\{\ell \in [L] : \sigma^i_{\ell}\neq\sigma^j_{\ell}\}$ be the Hamming distance.
	Then there exists a subset $\{\sigma^0, \ldots, \sigma^L\}$ of $\{ -1, 1\}^{\ell}$ such that $
	\overline{\rho}_H(\sigma^i, \sigma^j) \geq \ell/8$,
	where $\sigma^0 := (1,\ldots, 1)$.
\end{lemma}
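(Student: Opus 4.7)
The plan is to establish the Varshamov--Gilbert bound through a greedy volume-counting argument. The core observation is that each vector in $\{-1,1\}^{\ell}$ has a limited number of Hamming neighbors at short distances, so one can iteratively select vectors that remain pairwise far apart from all previously chosen ones.

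First I would fix $\sigma^0 := (1,\ldots,1)$ and proceed greedily: at step $i \geq 1$, provided the union $\bigcup_{j=0}^{i-1} B_H(\sigma^j,\ell/8)$ of Hamming balls of radius $\ell/8$ around the previously selected vectors does not exhaust $\{-1,1\}^{\ell}$, I pick any $\sigma^i$ from the complement. By construction this forces $\overline{\rho}_H(\sigma^i,\sigma^j) \geq \ell/8$ for every $j < i$, and the crux is to verify that the procedure can run for at least $\lceil 2^{\ell/8}\rceil$ iterations when $\ell \geq 8$.

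Next I would control each Hamming-ball volume via the classical entropy inequality $\sum_{k=0}^{\lfloor \ell/8 \rfloor - 1}\binom{\ell}{k} \leq 2^{\ell H(1/8)}$, where $H(p) := -p\log_2 p - (1-p)\log_2(1-p)$ denotes the binary entropy. A direct numerical check yields $H(1/8) = 3/8 + (7/8)\log_2(8/7) < 7/8$, so each such ball has cardinality at most $2^{7\ell/8}$. After $i$ iterations the forbidden region therefore has size at most $i \cdot 2^{7\ell/8}$, and the greedy procedure can continue as long as $i \cdot 2^{7\ell/8} < 2^{\ell}$, equivalently $i < 2^{\ell/8}$. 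Consequently I can produce at least $L+1$ vectors satisfying the required separation whenever $L \leq 2^{\ell/8}$, matching the hypothesis of the lemma. The condition $\sigma^0 = (1,\ldots,1)$ is automatic from the initialization.

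The main obstacle is not conceptual but bookkeeping: one must handle the floor in $\ell/8$ carefully for the boundary value $\ell = 8$ and confirm the entropy estimate at $p = 1/8$ with enough slack so that $H(1/8) < 7/8$ strictly (a simple closed-form evaluation suffices). A clean alternative worth flagging is the probabilistic method, which draws $\sigma^1,\ldots,\sigma^L$ i.i.d.~uniformly from $\{-1,1\}^{\ell}$, observes that each pairwise distance $\overline{\rho}_H(\sigma^i,\sigma^j)$ is $\mathrm{Binomial}(\ell,1/2)$, and combines a Chernoff tail bound with a union bound over the $\binom{L+1}{2}$ pairs; this avoids the explicit entropy computation but produces a slightly weaker constant in the exponent.
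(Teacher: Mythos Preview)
The paper does not supply its own proof of this lemma; it is stated with a citation to the classical source and then used as a black box in the lower-bound arguments. Your greedy volume-counting argument is the standard proof (essentially the one in Tsybakov's book, Lemma 2.9), and it is correct in outline.

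One bookkeeping point deserves tightening. With the crude bound ``each ball has cardinality at most $2^{7\ell/8}$'' you only obtain that the greedy procedure can be run while $i < 2^{\ell/8}$, which yields $L+1 \leq \lceil 2^{\ell/8}\rceil$ vectors, i.e.\ $L < 2^{\ell/8}$ when $\ell$ is a multiple of $8$. The lemma as stated asks for some $L \geq 2^{\ell/8}$, so you need the strict slack you already noted: since $H(1/8)\approx 0.544$ is strictly below $7/8$, the ball volume is at most $2^{H(1/8)\ell}$ and the procedure runs while $i < 2^{(1-H(1/8))\ell}$, which comfortably exceeds $2^{\ell/8}$ for all $\ell\geq 8$. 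Also, your sentence ``whenever $L \leq 2^{\ell/8}$'' inverts the inequality in the lemma; the intended reading of the statement is that there \emph{exists} an $L$ with $L \geq 2^{\ell/8}$ and a separated family of that size, and that is what your argument (with the sharper entropy constant) delivers.
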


\begin{proof}[Proof of Theorem \ref{thm::lowerKLR}]
	Without loss of generality, we investigate the binary classification, i.e., $M=2$. Let the input space $\mathcal{X} := [0,1]^d$ and the output space as $\mathcal{Y} = \{-1,1\}$. Define $r:= c_r n_p^{-1/((\beta\wedge 1+1)\alpha+d)}$ with the constant $c_r>0$ to be determined later.
	In the unit cube $\mathcal{X}$, we find a grid of points with radius parameter $r$,
	\begin{align*}
		\mathcal{G} := \{(2k_1 r, 2k_2 r, \ldots, 2k_d r) : k_i = 1, 2, \ldots, (2r)^{-1}-1, i=1, 2, \ldots, d\}.
	\end{align*}
	Denote $\ell := |\mathcal{G}| = ((2r)^{-1} -1)^d$ and $\mathcal{G} = \{x_i\}_{i=1}^\ell$.  
	Without loss of generality, we let $(6r)^{-1}-1/3$ be an integer. 
	Define the set of grid points
	$\mathcal{G}_1 := \{(2k_1 r, 2k_2 r, \ldots, 2k_d r) : k_i = 1, \ldots, (6r)^{-1}-1/3, i \in [d]\} \subset \mathcal{G}$ and $\mathcal{G}_2 := \{(2k_1 r, 2k_2 r, \ldots, 2k_d r) : k_i = (r^{-1}-2)/3, \ldots, (2r)^{-1}-1, i \in [d]\} \subset \mathcal{G}$. Then we have $|\mathcal{G}_1| = |\mathcal{G}_2| =((6r)^{-1}-1/3)^d = 3^{-d} \ell$.

	\textit{Construction of the Conditional Probability Distribution $p(y|x)$}. 
	Since we consider the binary classification case $\mathcal{Y} = \{-1,1\}$, we denote the conditional probability of the positive class as $p(1|x) := p(y=1|x)$ and the nagative class as $p(-1|x) := 1 - p(1|x)$.
	Let the function $g_r(\cdot)$ on $[0,\infty]$ be defined by 
	\begin{equation*}
		g_r(z) := 
		\begin{cases}
			1 - z/r & \text{ if } 0 \leq z < r,
			\\
			0 & \text{ if } z > r.
		\end{cases}
	\end{equation*}
	Moreover, let $a_U := 1/3 + r/3$ and $a_L := 2/3 - 7r/3$, which are close to $1/3$ and $2/3$, respectively. 
	Given $\sigma \in \{-1,1\}^\ell$ and $c_{\alpha}>0$, we define 
	\begin{align*}
		p^{\sigma}(1|x) := 
		\begin{cases}
			1 -  c_{\alpha} r^{\alpha} + c_{\alpha} \eins_{\{\sigma_i=1\}} r^{\alpha} g_r^{\alpha}(\|x-x_i\|_2) & \text{ if } x \in \bigcup_{x_i \in \mathcal{G}_2}B(x_i, r),
			\\
			1 -  c_{\alpha} r^{\alpha} & \text{ if } x \in [a_L,1]^d \setminus \bigcup_{x_i \in \mathcal{G}_2} B(x_i, r),
			\\
			1/2 + c_{\alpha} \eins_{\{\sigma_i=1\}} r^{\alpha} g_r^{\alpha}(\|x-x_i\|_2) & \text{ if } x \in \bigcup_{x_i \in \mathcal{G}_1}B(x_i, r),
			\\
			1/2 & \text{ if } x \in [0, a_{U}]^d \setminus \bigcup_{x_i \in \mathcal{G}_1}B(x_i, r),
			\\
			\in [1/2, 1 -  c_{\alpha} r^{\alpha}] &  \text{ otherwise}.
		\end{cases}
	\end{align*}

	\textit{Construction of the Marginal Distribution $p(x)$}.
	First, we define the marginal density function $p(x)$ by
	\begin{align*}
		p(x) := 
		\begin{cases}
			r^{d+(\alpha-1)\beta} \|x-x_i\|_2^{\beta-d} & \text{ if } x \in \bigcup_{x_i \in \mathcal{G}_2}B(x_i, r)\setminus \{x_i\},
			\\
			\frac{1 - \sum_{x_i \in \mathcal{G}_2} P(B(x_i,r))}{\sum_{x_i \in \mathcal{G}_1}\mu(B(x_i,r))} & \text{ if } x \in \bigcup_{x_i \in \mathcal{G}_1}B(x_i, r),
			\\
			0 & \text{ otherwise}.
		\end{cases}
	\end{align*}
	Let us verify that $p$ is a density function by proving $\int_{\mathcal{X}} p(x)\, dx = 1$. To be specific, 
	\begin{align*}
		\int_{\mathcal{X}} p(x)\, dx 
		& = \int_{\bigcup_{x_i \in \mathcal{G}_2}B(x_i, r)} p(x) \,dx + \int_{\bigcup_{x_i \in \mathcal{G}_1} B(x_i, r)} p(x) \,dx
		\\
		&= |\mathcal{G}_2| \cdot P(B(x_1,r)) + 3^d\big(1 - 3^{-d} \ell P(B(x_1,r))\big) \cdot 3^{-d}
		\\
		&= 3^{-d} \ell \cdot P(B(x_1,r)) + 1 - 3^{-d} \ell P(B(x_1,r)) 
		= 1,
	\end{align*}
	where $x_1 \in \mathcal{G}_2$. 
	Finally, for any $\sigma^j \in \{-1,1\}^{\ell}$, we write $P_X^{\sigma^j} := P_X$.

	\textit{Verification of the H\"older Smoothness}.
	First, $g_r$ satisfies the Lipschitz continuity with $|g(x) - g(x')| \leq r^{-1} |x-x'|$. 
	Moreover, using the inequality $|a^{\alpha} - b^{\alpha}| \leq |a-b|^{\alpha}$, $\alpha\in (0,1)$, we obtain that
	for any $x,x' \in B(x_i, r)$, $x_i \in \mathcal{G}_1 \cup \mathcal{G}_2$, 
	there holds
	\begin{align*}
		|p^{\sigma}(1|x) - p^{\sigma}(1|x')| 
		& = c_{\alpha} r^{\alpha} \big|g_r^{\alpha}(\|x-x_i\|_2) - g_r^{\alpha}(\|x'-x_i\|_2) \big|
		\\
		& \leq c_{\alpha} r^{\alpha} \big|g_r(\|x-x_i\|_2) - g_r(\|x'-x_i\|_2)\big|^{\alpha}
		\\
		& \leq c_{\alpha} r^{\alpha} \big| \|x-x_i\|_2/r  - \|x'-x_i\|_2/r \big|^{\alpha}
		\leq c_{\alpha} \|x'-x\|_2^{\alpha}.
	\end{align*}
	Therefore, $p^{\sigma}(y|x)$ satisfies the H\"older smoothness assumption.

	\textit{Verification of the Small Value Bound Condition.} Using the inequality $1-(1-x)^{1/\alpha} \leq 1-(1-\alpha^{-1} x) = \alpha^{-1} x$ for any $x \in (0,1)$ and $\alpha\in (0,1)$, we obtain that for any $0< t \leq c_{\alpha} r^{\alpha}$, 
	\begin{align*}
		& P_X \bigl(p^{\sigma}(1|X) \geq 1-t \bigr) 
		\\
		&= \sum_{x_i \in\mathcal{G}_2} P_X \bigl(\{x\in B(x_i,r) : 1-c_{\alpha} r^{\alpha} + c_{\alpha} r^{\alpha} \eins \{ \sigma_i = 1 \} g_r^{\alpha}(\|x - x_i\|_2) \geq 1-t\} \bigr)
		\\
		& \leq |\mathcal{G}_2| \cdot P_X \bigl(\{x\in B(x_1,r) :1-c_{\alpha} r^{\alpha} + c_{\alpha} r^{\alpha} g_r^{\alpha}(\|x - x_1\|_2) \geq 1-t\} \bigr)
		\\
		& =3^{-d} \ell \cdot P_X \bigl(\{x\in B(x_1,r) :r^{\alpha} - (r-\|x - x_1\|_2)^{\alpha} \leq c_{\alpha}^{-1} t\} \bigr)
		\\
		&= 3^{-d} \ell \cdot P_X \bigl(\{x\in B(x_1,r) :\|x - x_1\|_2 \leq r\bigl(1-(1-c_{\alpha}^{-1}r^{-\alpha} t)^{1/\alpha} \bigr)\} \big)
		\\
		&\leq 3^{-d} \ell \cdot P_X \bigl(\{x\in B(x_1,r) :\|x - x_1\|_2 \leq \alpha^{-1}c_{\alpha}^{-1}r^{1-\alpha} t\} \bigr)
		\\
		& = 3^{-d} \ell \cdot P_X \bigl(B(x_1, \alpha^{-1}c_{\alpha}^{-1}r^{1-\alpha} t)\bigr)
		= 3^{-d} \ell \int_{B(x_1, \alpha^{-1}c_{\alpha}^{-1}r^{1-\alpha} t)} p(x) \, dx
		\\
		& = \frac{2\pi^{d/2}\ell r^{d+(\alpha-1)\beta}}{3^d \Gamma(d/2)} \int_{0}^{\alpha^{-1}c_{\alpha}^{-1}r^{1-\alpha} t} \rho^{d-1} \rho^{\beta-d} \,d\rho
		\\
		& = \frac{2\pi^{d/2}}{3^d\Gamma(d/2)\beta(c_{\alpha}\alpha)^{\beta}} \ell r^d t^{\beta}\leq \frac{2\pi^{d/2}}{6^d\Gamma(d/2)\beta(c_{\alpha}\alpha)^{\beta}} t^{\beta}.
	\end{align*}
	Choosing $c_{\beta} \geq 2\pi^{d/2}/(\Gamma(d/2)\beta(c_{\alpha}\alpha)^{\beta} 6^{d}) \vee 1$, the $\beta$-small value bound is satisfied.

	\textit{Verification of the Conditions in Proposition \ref{prop::lower}}.
	Let $L = 2^\ell-1$. For the sake of convenience, for any $\sigma^j \in \{-1,1\}^{\ell}$, $j=0, \ldots, L$, we write $P^j := P^{\sigma^j}$ and $Q^j := Q^{\sigma^j}$.
	Denote $\sigma^0:= (-1,\ldots,-1)$ and $P^0 = P^{\sigma^0}$.
	% \begin{align*}
		% f_j := \frac{e^{\eta_{Q^j}}}{1+e^{\eta_{Q^j}}},
		% \end{align*}
	Define the full sample distribution by 
	\begin{align}\label{eq::Pj}
		\Pi_{j} := P^{j\otimes n_p},
		\qquad 
		j = 0, \ldots, L.
	\end{align} 
	Moreover, we define the semi-metric $\rho$ in Proposition \ref{prop::lower} by
	\begin{align*}
		\rho(p^i(\cdot|x), p^j(\cdot|x)) 
		&:= \int_{\mathcal{X}} \bigg(p^i(1|x)\log \frac{p^i(1|x)}{p^j(1|x)} + p^i(-1|x)\log \frac{p^i(-1|x)}{p^j(-1|x)}\bigg) p(x) \, dx = \mathrm{KL}(P^i, P^j) . 
	\end{align*}
	Therefore, for any predictor $\widehat{p}(y|x)$, we have $\mathcal{R}_{L_{\mathrm{CE}},P}(\widehat{p}(y|x)) - \mathcal{R}_{L_{\mathrm{CE}},P}^* = \rho(p(\cdot|x),\widehat{p}(\cdot|x))$.
	Now, we verify the first condition in Proposition \ref{prop::lower}. 
	For sufficient large $n_p$, we have $r^{\alpha\beta}/(6^d \beta) \leq 1/2$. For any $x \in \bigcup_{x_k \in \mathcal{G}_1} B(x_k,r)$, there holds 
	\begin{align*}
		p(x) =  \frac{1 - \sum_{x_i \in \mathcal{G}_2} P(B(x_i,r))}{\sum_{x_i \in \mathcal{G}_1}\mu(B(x_i,r))} = \frac{1 - \beta^{-1} 3^{-d} \ell r^{d+\alpha\beta}}{3^{-d} \ell \pi^{d/2} r^d/\Gamma(d/2+1)} \leq \frac{6^d\Gamma(d/2+1)}{2\pi^{d/2}}.
	\end{align*}
	Denote the Hellinger distance between $P^i$ and $P^j$ as $H(P^i, P^j):= \int (\sqrt{dP^i} - \sqrt{dP^j})^2$. Using the inequality $\mathrm{KL}(P^i, P^j) \geq 2H^2(P^i, P^j)$, $\sqrt{a}- \sqrt{b} = (a-b)/(\sqrt{a}+\sqrt{b})$ and Lemma \ref{lem::VGbound}, we obtain that
	for any $0 \leq i < j \leq L$, there holds
	\begin{align*}
		& \rho \bigl( p^i(\cdot|x), p^j(\cdot|x) \bigr) 
		= \mathrm{KL}(P^i, P^j) 
		\geq 2 H^2(P^i, P^j)
		\nonumber\\
		& = 2 \int_{\mathcal{X}} \Bigl( \bigl( p^i(1|x)^{\frac{1}{2}} - p^j(1|x)^{\frac{1}{2}} \bigr)^2 + \bigl( p^i(-1|x)^{\frac{1}{2}} - p^j(-1|x)^{\frac{1}{2}} \bigr)^2 \Bigr) p(x) \, dx 
		\nonumber\\
		& = 2 \int_{\mathcal{X}} \bigl( p^j(1|x) - p^i(1|x) \bigr)^2 \Bigl( \bigl( p^j(1|x)^{\frac{1}{2}} 
		+ p^i(1|x)^{\frac{1}{2}} \bigr)^{-2} 
		+ \bigl( p^j(-1|x)^{\frac{1}{2}} + p^i(-1|x)^{\frac{1}{2}} \bigr)^{-2} \Bigr) p(x) \, dx 
		\nonumber\\
		& \geq \int_{\bigcup_{x_k \in \mathcal{G}_1} B(x_k, r)} \bigl( p^j(1|x) -  p^i(1|x) \bigr)^2 \bigl( p^j(1|x) \vee p^i(1|x) \bigr)^{-1} p(x) \, dx 
		\nonumber\\
		& \phantom{=} 
		+ \int_{\bigcup_{x_k \in \mathcal{G}_2} B(x_k, r)} \bigl( p^j(1|x) -  p^i(1|x) \bigr)^2 \bigl( p^j(-1|x) \vee p^i(-1|x) \bigr)^{-1} p(x) \, dx 
		\nonumber\\
		& \geq 2 \rho_H(\sigma^i, \sigma^j) \int_{B(x_1,r)} (c_{\alpha}(r-\|x-x_1\|_2)^{\alpha})^2 \cdot \frac{6^d\Gamma(d/2+1)}{2\pi^{d/2}} \, dx
		\nonumber\\ 
		& \phantom{=} 
		+ \rho_H(\sigma^i, \sigma^j) \int_{B(x_1,r)} (c_{\alpha}(r-\|x-x_1\|_2)^{\alpha})^2 (c_{\alpha} r^{\alpha})^{-1} \cdot r^{d+(\alpha-1)\beta} \|x-x_1\|_2^{\beta-d}  \, dx
		\nonumber\\
		&\geq \rho_H(\sigma^i, \sigma^j) \bigg(6^dc_{\alpha}^2 d\int_0^r (1-\rho)^{2\alpha} \rho^{d-1} \, d\rho + \frac{2\pi^{d/2}c_{\alpha}}{\Gamma(d/2)} r^{d+(\alpha-1)\beta-\alpha} \int_0^{r} (r - \rho)^{2\alpha} \rho^{\beta-d} \rho^{d-1} \, d\rho\bigg)
		\nonumber\\
		&= \rho_H(\sigma^i, \sigma^j) \bigg(6^dc_{\alpha}^2 d r^{2\alpha+d} \int_0^1 (1-t)^{2\alpha} t^{d-1} \,dt + \frac{2\pi^{d/2}c_{\alpha}}{\Gamma(d/2)}  r^{d+(\beta+1)\alpha}\int_0^{1} (1 - t)^{2\alpha} t^{\beta-1} \, d t\bigg)
		\nonumber\\
		&\geq \frac{\ell}{8} \bigg(6^dc_{\alpha}^2 d r^{2\alpha+d} \mathrm{Beta}(2\alpha+1,d) + \frac{2\pi^{d/2}c_{\alpha}}{\Gamma(d/2)}  \mathrm{Beta}(2\alpha+1,\beta) r^{d+\alpha(1+\beta)}\bigg)
		\nonumber\\
		&\geq 2^{-d-3} \bigg(6^dc_{\alpha}^2 d r^{2\alpha} \mathrm{Beta}(2\alpha+1,d) + \frac{2\pi^{d/2}c_{\alpha}}{\Gamma(d/2)}  \mathrm{Beta}(2\alpha+1,\beta) r^{\alpha(1+\beta)}\bigg)\geq C_4 r^{\alpha(1+\beta\wedge 1)},
	\end{align*}
	where $C_4 := 2^{-d-3} \big(6^dc_{\alpha}^2 d \mathrm{Beta}(2\alpha+1,d) \wedge 2 \pi^{d/2} c_{\alpha}\Gamma(d/2)^{-1}  \mathrm{Beta}(2\alpha+1,\beta)\big)$. 
	By taking
	\begin{align*}
		s:= 2^{-1} C_4 r^{\alpha(1+\beta\wedge 1)} = 2^{-1} C_4 n_p^{-\frac{(1+\beta \wedge 1)\alpha}{(1+\beta \wedge 1)\alpha+d}},
	\end{align*} 
	we obtain $\rho(p^i(\cdot|x), p^j(\cdot|x)) \geq 2s$.
	The second condition of Proposition \ref{prop::lower} holds obviously. 
	Therefore, it suffices to verify the third condition in Proposition \ref{prop::lower}, which requires to consider the KL divergence between $P^j$ and $P^0$. Using Lemma 2.7 in \cite{tsybakov2008introduction} and $1-c_{\alpha} r^{\alpha} \geq 7/8$, we get
	\begin{align}
		& \mathrm{KL} (P^j, P^0) 
		\leq \int_{\mathcal{X}} \frac{(p^{j}(1|x) - p^{0}(1|x))^2}{p^{0}(1|x)p^{0}(-1|x)} p(x) \, dx
		\nonumber\\
		& \leq \sum_{x_k \in \mathcal{G}_1} \eins \{ \sigma^j_k = 1 \} \int_{B(x_k,r)} \frac{(r-\|x-x_k\|_2)^{2\alpha}}{(1/2+c_{\alpha}r^{\alpha})(1/2-c_{\alpha}r^{\alpha})}\cdot \frac{\Gamma(d/2+1)3^d}{\pi^{d/2}}(m r^d)^{-1} \, dx
		\nonumber\\
		& \phantom{=} 
		+ \sum_{x_k \in \mathcal{G}_2} \eins \{ \sigma^j_k = 1 \} \int_{B(x_k,r)} \frac{(r-\|x-x_k\|_2)^{2\alpha}}{c_{\alpha}r^{\alpha}(1-c_{\alpha}r^{\alpha})}\cdot r^{d+(\alpha-1)\beta} \|x-x_k\|_2^{\beta-d}  \, dx
		\nonumber\\
		&\leq \frac{5\Gamma(d/2+1)3^d}{\pi^{d/2} r^d}\int_{B(x_k,r)} (r-\|x-x_k\|_2)^{2\alpha} \, dx
		\nonumber\\
		& \phantom{=} 
		+ \frac{m r^{d+(\alpha-1)\beta}}{c_{\alpha}r^{\alpha}(1-c_{\alpha}r^{\alpha})}
		\int_{B(x_k,r)} (r-\|x-x_k\|_2)^{2\alpha}\cdot  \|x-x_k\|_2^{\beta-d}  \, dx
		\nonumber\\
		&\leq \frac{5d 3^d}{r^d}\int_{0}^r (r-\rho)^{2\alpha} \rho^{d-1}\, dx + \frac{2\pi^{d/2}}{\Gamma(d/2)}\frac{m r^{d+(\alpha-1)\beta}}{c_{\alpha}r^{\alpha}(1-c_{\alpha}r^{\alpha})}\int_{0}^r (r-\rho)^{2\alpha}\cdot \rho^{\beta-d} \rho^{d-1}  \, dx
		\nonumber\\
		& = 5 d 3^d r^{2\alpha} \int_0^1 (1-t)^{2\alpha} t^{d-1} \, dt + \frac{2\pi^{d/2}c_{\alpha} \cdot m r^{d+\alpha(1+\beta)}}{\Gamma(d/2)(1-c_{\alpha}r^{\alpha})} 
		\int_0^1 (1-t)^{2\alpha} \cdot t^{\beta-1} \, dt
		\nonumber\\
		& \leq 5 d 3^d \mathrm{Beta}(2\alpha+1, d) r^{2\alpha} + \frac{2 \pi^{d/2} c_{\alpha} \cdot \mathrm{Beta}(2\alpha+1,\beta)}{2^d \Gamma(d/2) (1-c_{\alpha}r^{\alpha})} 
		\cdot r^{\alpha(1+\beta)}
		\leq C_3 r^{\alpha(1+\beta\wedge 1)},
		\label{eq::KLuppbound}
	\end{align}
	where $C_3 := 5d 3^d \mathrm{Beta}(2\alpha+1, d) + 4\pi^{d/2}c_{\alpha}\mathrm{Beta}(2\alpha+1,\beta)/(2^d \Gamma(d/2))$. By the independence of samples and \eqref{eq::KLuppbound}, we have 
	for any $j\in \{0,1,\ldots, L\}$, 
	\begin{align*}
		\mathrm{KL}(\Pi_{j}, \Pi_{0}) 
		& = n_p \mathrm{KL}(P^j, P^0) \leq  C_3 n_p r^{(1+\beta\wedge 1)\alpha}
		\\
		& = C_3 c_r^{(1+\beta\wedge 1)\alpha+d} r^{-d} \leq C_3 c_r^{(1+\beta\wedge 1)\alpha+d} 4^d \ell 
		\leq 2(\log 2)^{-1} C_3 c_r^{(1+\beta\wedge 1)\alpha+d} 4^d \log L.
	\end{align*}
	By choosing a sufficient small $c_r$ such that $2(\log 2)^{-1} C_3 c_r^{(1+\beta\wedge 1)\alpha+d} 4^d = 1/16$, we verify the third condition. 
	Applying Proposition \ref{prop::lower}, 
	we obtain that for any estimator $\widehat{p}(y|x)$ built on $D_p$, 
	with probability $P^{n_p}$ at least 
	$(3-2\sqrt{2})/8$, there holds
	\begin{align*}
		\sup_{P \in \mathcal{P}}  \mathcal{R}_{L_{\mathrm{CE}},P}(\widehat{p}(y|x)) - \mathcal{R}_{L_{\mathrm{CE}},P}^* \geq (C_4/2) \cdot n_p^{-\frac{(1+\beta \wedge 1)\alpha}{(1+\beta \wedge 1)\alpha+d}},
	\end{align*}
	which finishes the proof. 
\end{proof}

\subsection{Proofs Related to Section \ref{sec::priorestmation}} \label{sec::proofprior}

\begin{proof}[Proof of Theorem \ref{lem::hold}]
	``$\Rightarrow$'' Suppose that the equation system \eqref{eq::equationsystem} holds for some weight $w:=(w(y))_{y\in [K]}$. Then \eqref{eq::equationsystem} together with Bayes' formula yields
	\begin{align*}
		p(y) &= \mathbb{E}_{x \sim q} \frac{p(y | x)}{\sum_{m=1}^M w(m) p(m|x)}
		= \mathbb{E}_{x \sim q} \frac{p(x|y) p(y) / p(x)}{\sum_{m=1}^M w(m) p(x|m) p(m) / p(x)}
		\\
		&= \mathbb{E}_{x \sim q}  \frac{p(y) p(x|y)}{\sum_{m=1}^M w(m)p(m) p(x|m)}.
	\end{align*}
	By Assumption \ref{ass::marginal}  \textit{(i)}, we have $p(y) > 0$. 
	Dividing both sides of the above equation by $p(y)$, we get
	\begin{align*}
		\mathbb{E}_{x \sim q} \frac{p(x|y)}{\sum_{m=1}^M w(m) p(x|m) p(m)} = 1,
	\end{align*}
	which is equivalent to
	\begin{align} \label{eq::equPXY} 
		\int_{\mathcal{X}} \frac{p(x|y)}{\sum_{m=1}^M w(m) p(m) p(x|m)} \cdot q(x)\, dx 
		= \int_{\mathcal{X}} p(x|y) \, dx.
	\end{align}
	Using the law of total probability and $p(x|m) = q(x|m)$ from Assumption \ref{ass:labelshift}, we get
	\begin{align}\label{eq::two}
		q(x) = \sum_{m=1}^K q(k) q(x|k) = \sum_{k=1}^K q(k) p(x|k).
	\end{align}
	Plugging \eqref{eq::two} into \eqref{eq::equPXY}, we obtain
	\begin{align*}
		\int_{\mathcal{X}} p(x|y) \cdot \frac{\sum_{m=1}^M q(m) p(x|m)}{\sum_{m=1}^M w(m) p(m)p(x|m)} \, dx 
		= \int_{\mathcal{X}} p(x|y) \, dx,
	\end{align*}
	which is equivalent to 
	\begin{align}\label{eq::equalto01}
		\int_{\mathcal{X}} p(x|y) \cdot \frac{\sum_{m=1}^M (q(m) - w(m) p(m)) p(x|m)}{\sum_{m=1}^M w(m)p(m)p(x|m)} \, dx = 0.
	\end{align}
	Multiplying both sides of \eqref{eq::equalto01} by $(q(y) - w(y) p(y))$ and taking the summation from $y=1$ to $K$, we obtain
	\begin{align*}
		\int_{\mathcal{X}} \frac{\big(\sum_{m=1}^M (q(m) - w(m) p(m)) p(x|m)\big)^2}{\sum_{m=1}^M w(m) p(m)p(x|m)} \, dx = 0.
	\end{align*}
	By Assumption \ref{ass:labelshift}, we have
	$p(x|k) = q(x|k)$ and thus
	\begin{align*}
		\int_{\mathcal{X}} \frac{\big(\sum_{m=1}^M (q(m) - w(m) p(m)) q(x|m)\big)^2}{\sum_{m=1}^M w(m) p(m) q(x|m)} \, dx = 0.
	\end{align*}
	Since $\sum_{m=1}^M w(m) p(m) q(x|m) > 0$, there must hold $\sum_{m=1}^M (q(m) - w(m) p(m)) q(x|m) = 0$. By Assumption \ref{ass::LinearIndependence}, we get $q(m) = w(m) p(m)$ and thus $w(m) = q(m)/p(m) = w^*(m)$, $m \in [M]$, i.e., $w = w^*$.

	``$\Leftarrow$'' Plugging $w = w^*$ into the right-hand side of \eqref{eq::equationsystem}, we obtain
	\begin{align*}
		\mathbb{E}_{x \sim q} & \frac{p(y|x)}{\sum_{m=1}^M w^*(m) p(m|x)} = \mathbb{E}_{x \sim q} \frac{p(x|y) p(y) / p(x)}{\sum_{m=1}^M w^*(m)  p(x|m) p(m) / p(x)}
		\\
		&= \mathbb{E}_{x \sim q}  \frac{p(y) p(x|y)}{\sum_{m=1}^M w^*(m) p(m)  p(x|m)}
		= \int_{\mathcal{X}} \frac{p(y) p(x|y)}{\sum_{m=1}^M q(m)  q(x|m)} q(x)\, dx
		\nonumber\\
		&= \int_{\mathcal{X}}\frac{p(y) p(x|y)}{q(x)} \cdot q(x)\, dx = \int_{\mathcal{X}} p(y) p(x|y)\, dx = p(y),
	\end{align*}
	which proves the assertion.
\end{proof}

To derive the error bound of the class probability ratio estimation in Proposition \ref{prop::decompweighterror}, we need the following lemmas.

\begin{lemma}\label{lem::errorPY}
	Let $\widehat{p}(y)$ be the class probability estimator in \eqref{eq::directCounting}. Then 
	with probability at least $1-1/n_p$,
	there holds
	$\sum_{y=1}^M  |\widehat{p}(y) - p(y)|^2 
	\lesssim \log n_p / n_p$.
\end{lemma}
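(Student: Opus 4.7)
The plan is to exploit the fact that for each fixed $y \in [M]$, the estimator $\widehat{p}(y) = n_p^{-1} \sum_{i=1}^{n_p} \eins\{Y_i = y\}$ is an average of $n_p$ i.i.d.\ Bernoulli$(p(y))$ random variables, and then to sum the pointwise squared deviations using a union bound over the $M$ classes.

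First, I would apply Bernstein's inequality (e.g.~Theorem 6.12 in \cite{steinwart2008support}) to the bounded random variables $\eins\{Y_i = y\} - p(y) \in [-1,1]$, whose variance is $p(y)(1-p(y)) \leq p(y)$. This yields that for any $\zeta > 0$ and any fixed $y \in [M]$,
\begin{align*}
	\bigl| \widehat{p}(y) - p(y) \bigr| \leq \sqrt{\frac{2 p(y) \zeta}{n_p}} + \frac{2\zeta}{3 n_p}
\end{align*}
with probability at least $1 - 2 e^{-\zeta}$. Squaring both sides and using $(a+b)^2 \leq 2a^2 + 2b^2$ gives
\begin{align*}
	\bigl| \widehat{p}(y) - p(y) \bigr|^2 \leq \frac{4 p(y) \zeta}{n_p} + \frac{8 \zeta^2}{9 n_p^2}.
\end{align*}

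Next, I would take a union bound over all $y \in [M]$, so that the above holds simultaneously for every $y$ with probability at least $1 - 2 M e^{-\zeta}$. Summing the squared deviations and using $\sum_{y=1}^M p(y) = 1$ yields
\begin{align*}
	\sum_{y=1}^M \bigl| \widehat{p}(y) - p(y) \bigr|^2 \leq \frac{4 \zeta}{n_p} + \frac{8 M \zeta^2}{9 n_p^2}.
\end{align*}
Finally, choosing $\zeta := \log(2 M n_p)$ gives confidence at least $1 - 1/n_p$, and since $M$ is a fixed constant and the second term is of order $(\log n_p)^2 / n_p^2$, which is dominated by $\log n_p / n_p$ for sufficiently large $n_p$, the claimed bound $\sum_{y=1}^M |\widehat{p}(y) - p(y)|^2 \lesssim \log n_p / n_p$ follows.

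There is no substantial obstacle here; this is a routine concentration argument. The only mild subtlety is deciding whether to use Hoeffding's or Bernstein's inequality. Hoeffding would give $|\widehat{p}(y) - p(y)|^2 \lesssim \zeta/n_p$ for each $y$ and therefore a final bound of order $M \log n_p / n_p$, which is of the same order under the paper's convention that $M$ is constant. Using Bernstein saves the factor of $M$ by leveraging $\sum_y p(y) = 1$, which is cleaner and matches the scaling I would want in subsequent propositions, so that is the version I would adopt.
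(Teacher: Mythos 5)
Your proposal is correct and follows essentially the same route as the paper: Bernstein's inequality applied to the centered indicators $\eins\{Y_i=y\}-p(y)$ with variance bound $p(y)$, a union bound over the $M$ classes, summation using $\sum_y p(y)=1$, and the choice $\zeta:=\log(2Mn_p)$ to get confidence $1-1/n_p$. No gaps; the argument matches the paper's proof up to unimportant constants.
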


\begin{proof}[Proof of Lemma \ref{lem::errorPY}]
	Let us define 
	the random variables $\xi_{i,m} := \eins \{ Y_i = m \} - p(m)$
	for $i \in [n_p]$ and $m \in [M]$.
	Then we have $\mathbb{E} \xi_{i,m} = 0$, $\|\xi_{i,m}\|_{\infty} \leq 1$ and $\mathbb{E} \xi_{i,m}^2 = p(m)(1 - p(m)) \leq p(m)$.
	Applying Bernstein's inequality in 
	\cite[Theorem 6.12]{steinwart2008support}
	to $(\xi_{i,m})_{i \in [n_p]}$, we obtain
	\begin{align*}
		\biggl| \frac{1}{n_p} \sum_{i=1}^{n_p} \xi_{i,m} \biggr| 
		= |\widehat{p}(m) - p(m)| 
		\leq \sqrt{\frac{2 p(m) \zeta}{n_p}} + \frac{2 \zeta}{3 n_p}
	\end{align*}
	with probability at least $1 - 2 e^{- \zeta}$. 
	Using the union bound and $(a + b)^2 \leq 2 (a^2 + b^2)$, we get
	\begin{align*}
		\sum_{m=1}^M |\widehat{p}(m) - p(m)|^2 
		\leq \sum_{m=1}^M \biggl( \sqrt{\frac{2 p(m) \zeta}{n_p}} + \frac{2 \zeta}{3 n_p} \biggr)^2 
		\leq \sum_{m=1}^M \biggl( \frac{4 p(m) \zeta}{n_p} + \frac{4 \zeta^2}{9 n_p^2} \biggr)
	\end{align*}
	with probability at least $1 - 2 M e^{-\zeta}$.
	Taking $\zeta := \log(2 M n_p)$, we obtain
	\begin{align*}
		\sum_{m=1}^M  |\widehat{p}(m) - p(m)|^2 
		\leq \frac{4 \log(2 M n_p)}{n_p} + \frac{4 \log^2(2 M n_p)}{9 n_p^2} 
		\lesssim \frac{\log n_p}{n_p}
	\end{align*}
	with probability at least $1 - 1/n_p$. 
	This proves the assertion. 
\end{proof}

In order to establish the upper bound of $\|\widehat{w} - w^*\|_2$ in Proposition \ref{prop::decompweighterror}, we also need the following lemma.

\begin{lemma}\label{lem::integralerror}
	Let Assumptions \ref{ass:labelshift}, \ref{ass::predictor}, \ref{ass::marginal} and \ref{ass::regularity} hold. 
	Then for any $w \in \{\widehat{w}, w^*\}$, there holds
	\begin{align*}
		& \sum_{m=1}^M  \biggl| \frac{1}{n_q} \sum_{i=n_p+1}^{n_p+n_q} \frac{\widehat{p}(m|X_i)}{\sum_{j=1}^M  w(j) \widehat{p}(j|X_i)} - \mathbb{E}_{x \sim q} \frac{p(m|x)}{\sum_{j=1}^M  w(j) p(j|x)} \biggr|^2
		\nonumber\\
		& \lesssim \log n_q / n_q + \mathbb{E}_{x \sim p} \|p(\cdot |x) - \widehat{p}(\cdot |x)\|_2^2
	\end{align*}
	with probability at least $1 - 1/n_q$.
\end{lemma}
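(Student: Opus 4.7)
Let me abbreviate $\widehat{g}_m^w(x) := \widehat{p}(m|x)/\sum_{j=1}^M w(j)\widehat{p}(j|x)$ and $g_m^w(x) := p(m|x)/\sum_{j=1}^M w(j) p(j|x)$. The idea is to split via the triangle inequality:
\[
\tfrac{1}{n_q}\sum_{i} \widehat{g}_m^w(X_i) - \mathbb{E}_{X \sim q}\,g_m^w(X) = \Big[\tfrac{1}{n_q}\sum_{i} \widehat{g}_m^w(X_i) - \mathbb{E}_{X \sim q}\,\widehat{g}_m^w(X)\Big] + \mathbb{E}_{X \sim q}\big[\widehat{g}_m^w(X) - g_m^w(X)\big],
\]
apply $(a+b)^2 \le 2a^2 + 2b^2$, and sum over $m$. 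This reduces the task to bounding a stochastic (concentration) part $(\mathrm{I})$ and a deterministic (bias) part $(\mathrm{II})$.

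For $(\mathrm{I})$, condition on $D_p$ so that $\widehat{p}$ and $\widehat{w}$ are frozen, and observe that $X_{n_p+1},\ldots,X_{n_p+n_q}$ are i.i.d.\ from $Q_X$. Assumption \ref{ass::regularity} gives the uniform bound $0 \le \widehat{g}_m^w(x) \le 1/c_R$ on $\mathrm{supp}(Q_X)$ for both $w = w^*$ and $w = \widehat{w}$. Apply Hoeffding's inequality coordinate-wise in $m$, take a union bound over $m \in [M]$, and choose the confidence parameter $\zeta \asymp \log n_q$: simultaneously for all $m$, $|\mathrm{I}_m| \lesssim \sqrt{\log n_q / n_q}$ with $Q_X^{n_q}$-probability at least $1 - 1/n_q$, yielding $\sum_m |\mathrm{I}_m|^2 \lesssim \log n_q/n_q$.

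For $(\mathrm{II})$, use the algebraic identity $A/C - B/D = (A-B)/C + B(D-C)/(CD)$ with $A = \widehat{p}(m|x)$, $B = p(m|x)$, $C = \sum_j w(j) \widehat{p}(j|x) \ge c_R$ (Assumption \ref{ass::regularity}), and $D = \sum_j w(j) p(j|x)$. For $w = w^*$, by \eqref{eq::repre} and Assumption \ref{ass::marginal}(iii) we have $D = q(x)/p(x) \ge \underline{c}$, so
\[
|\widehat{g}_m^w(x) - g_m^w(x)| \lesssim |\widehat{p}(m|x) - p(m|x)| + \sum_{j=1}^M w(j)\,|\widehat{p}(j|x) - p(j|x)|.
\]
Jensen's inequality, Cauchy--Schwarz in $j$, and boundedness of $\|w\|_\infty$ then give $\sum_m |\mathrm{II}_m|^2 \lesssim \mathbb{E}_{X \sim q} \|\widehat{p}(\cdot|X) - p(\cdot|X)\|_2^2$. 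To finish, transfer the measure: under label shift, $q(x) = \sum_k q(k)\, p(x|k) \le \|w^*\|_\infty\, p(x)$, and $\|w^*\|_\infty \le 1/\min_k p(k) < \infty$ by Assumption \ref{ass::marginal}(i), so $\sum_m |\mathrm{II}_m|^2 \lesssim \mathbb{E}_{X \sim p} \|\widehat{p}(\cdot|X) - p(\cdot|X)\|_2^2$. Combining $(\mathrm{I})$ and $(\mathrm{II})$ yields the asserted bound.

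\textbf{Main obstacle.} The subtle step is the case $w = \widehat{w}$ in the bias analysis: Assumption \ref{ass::regularity} lower-bounds only $C = \sum_j \widehat{w}(j)\widehat{p}(j|x)$, not $D = \sum_j \widehat{w}(j)p(j|x)$. The fix is to observe $|D - C| \le \|\widehat{w}\|_\infty \sum_j |\widehat{p}(j|x) - p(j|x)|$ and argue that $\widehat{p}$ is uniformly close enough to $p$ (using the high-probability event on which the excess CE risk of KLR is small, via Theorem \ref{thm::convergencerate}) and $\|\widehat{w}\|_\infty$ is bounded (by a concentration-plus-triangle argument comparing $\widehat{w}$ to $w^*$) so that $D \ge c_R/2$ on the relevant event. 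This technical coupling between the estimators $\widehat{w}$ and $\widehat{p}$ is the delicate ingredient that allows the decomposition used for $w = w^*$ to be applied uniformly.
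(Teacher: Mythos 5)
Your overall skeleton matches the paper's proof: split each coordinate into an empirical-mean fluctuation plus a population bias, handle the fluctuation by concentration (the paper uses Bernstein rather than Hoeffding, with $\zeta\asymp\log n_q$, which is immaterial), bound the bias pointwise via the ratio-difference identity with the denominators bounded below, and transfer $L_2(Q_X)$ to $L_2(P_X)$ using $q(x)/p(x)\leq 1/\min_m p(m)$. You also correctly isolate the one delicate quantity, namely $D(x)=\sum_j \widehat{w}(j)\,p(j|x)$, which Assumption \ref{ass::regularity} does not control.

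However, your proposed fix for that quantity has a genuine gap. You argue that on the high-probability event of Theorem \ref{thm::convergencerate} the estimator $\widehat{p}$ is ``uniformly close enough'' to $p$, so that $|D-C|\leq \|\widehat{w}\|_\infty\sum_j|\widehat{p}(j|x)-p(j|x)|$ forces $D\geq c_R/2$. But the excess CE risk only yields (via Lemma \ref{lem::equilvalent}) an $L_2(P_X)$ bound on $\widehat{p}-p$; it gives no sup-norm control, and the needed inequality must hold pointwise for every $x\in\mathrm{supp}(Q_X)$. A small excess risk is perfectly compatible with $\widehat{p}$ being far from $p$ on a set of small $P_X$-measure, so the step ``$L_2$ small $\Rightarrow$ uniformly small'' does not follow. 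This is precisely why the paper proves the separate Lemma \ref{lem::newregular}: it combines the strong density assumption \ref{ass::marginal}\textit{(ii)} (to turn the $L_2(P_X)$ bound into the existence of a point $z_0$ near any given $x_0\in\mathrm{supp}(Q_X)$ where $|\widehat{p}(m|z_0)-p(m|z_0)|$ is small) with the H\"older smoothness in Assumption \ref{ass::predictor}\textit{(i)} (to relate $p(\cdot|z_0)$ to $p(\cdot|x_0)$), and then derives a contradiction with Assumption \ref{ass::regularity}; this is also why Assumptions \ref{ass::predictor} and \ref{ass::marginal}\textit{(ii)} appear in the lemma's hypotheses, whereas your argument never uses them. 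A secondary caution: if your ``concentration-plus-triangle argument comparing $\widehat{w}$ to $w^*$'' for bounding $\|\widehat{w}\|_\infty$ means invoking a bound on $\|\widehat{w}-w^*\|$, that would be circular, since the ratio-estimation error bound (Proposition \ref{prop::decompweighterror}) itself relies on this lemma; the non-circular route used in the paper is the normalization $\sum_m\widehat{w}(m)\widehat{p}(m)=1$ together with $\widehat{p}(m)\geq p(m)/2$ with high probability.
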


To prove Lemma \ref{lem::integralerror}, we need the following lemma.

\begin{lemma}\label{lem::newregular}
	Let Assumptions \ref{ass::predictor}, \ref{ass::marginal} and \ref{ass::regularity} hold. Then there exist some $N \in \mathbb{N}$ and $c'_R>0$ such that for 
	all $n_p \wedge n_q \geq N$
	and
	all $x$ with $q(x)>0$, there holds
	$\sum_{m \in [M]} \widehat{w}(m) p(m|x) \geq c'_R
	$ 
	with probability $P^{n_p}$ at least $1-1/n_p$.
\end{lemma}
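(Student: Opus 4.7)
The plan is to reduce the target lower bound on $\sum_{m=1}^M \widehat{w}(m) p(m|x)$ to one on $\sum_{m=1}^M w^*(m) p(m|x)$, the latter being an immediate consequence of representation \eqref{eq::repre} and Assumption \ref{ass::marginal} \textit{(iii)}. Indeed, \eqref{eq::repre} gives $\sum_{m=1}^M w^*(m) p(m|x) = q(x)/p(x)$ whenever $p(x), q(x) > 0$, and this quantity is at least $\underline{c}$ by Assumption \ref{ass::marginal} \textit{(iii)}. To transfer this bound from $w^*$ to $\widehat{w}$, I would apply Cauchy--Schwarz together with $\|p(\cdot|x)\|_2 \leq \bigl( \sum_{m=1}^M p(m|x) \bigr)^{1/2} = 1$ to obtain
\[
\Bigl| \sum_{m=1}^M \bigl( \widehat{w}(m) - w^*(m) \bigr) p(m|x) \Bigr| \leq \|\widehat{w} - w^*\|_2,
\]
so that uniformly in $x$ with $q(x) > 0$,
\[
\sum_{m=1}^M \widehat{w}(m) p(m|x) \geq \underline{c} - \|\widehat{w} - w^*\|_2.
\]

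Next, I would invoke Proposition \ref{prop::decompweighterror} together with Theorem \ref{thm::convergencerate} to control $\|\widehat{w} - w^*\|_2^2$, which is bounded by a quantity tending to zero as $n_p, n_q \to \infty$, with probability at least $1 - 1/n_p - 1/n_q$. Consequently, there exists $N \in \mathbb{N}$ such that for all $n_p \wedge n_q \geq N$ we have $\|\widehat{w} - w^*\|_2 \leq \underline{c}/2$ on this event. Setting $c'_R := \underline{c}/2$ yields the stated conclusion, with the probability absorbing the $1/n_q$ term into $N$ for a clean $1 - 1/n_p$ statement.

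The main obstacle is avoiding a circular dependency: Proposition \ref{prop::decompweighterror} is proved via Lemma \ref{lem::integralerror}, whose $w = \widehat{w}$ case requires precisely the conclusion of Lemma \ref{lem::newregular} to guarantee that $\sum_j \widehat{w}(j) p(j|x)$ is bounded away from zero so the limiting expectation $\mathbb{E}_{x \sim q}\bigl[ p(y|x) / \sum_j \widehat{w}(j) p(j|x) \bigr]$ is well-defined. My planned resolution is a stratified proof order: first prove Lemma \ref{lem::integralerror} only in the $w = w^*$ case, whose denominator $\sum_j w^*(j) p(j|x) = q(x)/p(x) \geq \underline{c}$ is controlled directly by \eqref{eq::repre} and Assumption \ref{ass::marginal} \textit{(iii)} without any appeal to Lemma \ref{lem::newregular}; combine this restricted version with Lemma \ref{lem::errorPY} and the minimality of $\widehat{w}$ in \eqref{eq::whatminimizer} to deduce a preliminary $L_2$-consistency $\|\widehat{w} - w^*\|_2 \to 0$ sufficient for the current Cauchy--Schwarz reduction; then conclude Lemma \ref{lem::newregular} as above; and only afterward use Lemma \ref{lem::newregular} to extend Lemma \ref{lem::integralerror} to $w = \widehat{w}$ and to obtain the sharp rate in Proposition \ref{prop::decompweighterror}. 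The key structural observation is that $w^*$ is deterministic, so its denominator admits an exact closed-form lower bound, whereas $\widehat{w}$ is data-dependent and its control must be routed through the consistency argument.
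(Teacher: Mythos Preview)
Your Cauchy--Schwarz reduction is clean, but the proposed resolution of the circularity does not close. The stratified order you describe still leaves a gap at the ``preliminary consistency'' step. From the $w^*$-only case of Lemma~\ref{lem::integralerror}, Lemma~\ref{lem::errorPY}, and minimality, you obtain that the \emph{empirical} quantity $\widehat{p}_q^{\widehat{w}}(\cdot)$ is close to $p(\cdot)$. But the only identifiability device available in the paper, Lemma~\ref{lem::weightsigma}, bounds $\|\widehat{w}-w^*\|_2$ in terms of the \emph{population} quantity $\bigl\|\mathbb{E}_{x\sim q}\,p(\cdot|x)/\sum_j \widehat{w}(j)p(j|x)-p(\cdot)\bigr\|_2$, whose denominator is precisely $\sum_j \widehat{w}(j)p(j|x)$. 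Bridging the empirical $\widehat{p}_q^{\widehat{w}}$ to this population expression is exactly term~(A) in \eqref{eq::PYXPY}, i.e.\ the $w=\widehat{w}$ case of Lemma~\ref{lem::integralerror}, which in turn invokes Lemma~\ref{lem::newregular}. So the circle is not broken: you need a lower bound on $\sum_j \widehat{w}(j)p(j|x)$ to even state that bridge, and that lower bound is the lemma you are proving. (A separate minor issue: your route imports Assumption~\ref{ass::LinearIndependence} via Proposition~\ref{prop::decompweighterror}, which is not among the lemma's hypotheses.)

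The paper avoids $\|\widehat{w}-w^*\|_2$ altogether and argues by contradiction, using only the $L_2(P_X)$-consistency of $\widehat{p}$ from Theorem~\ref{thm::convergencerate} and Lemma~\ref{lem::equilvalent}. If $\sum_m \widehat{w}_{n_k}(m)p(m|x_0)\to 0$ along a subsequence at some $x_0$ with $q(x_0)>0$, then for each $m$ either $\widehat{w}_{n_k}(m)\to 0$ or $p(m|x_0)=0$. The strong density assumption plus the $L_2$ bound on $\widehat{p}-p$ over a shrinking ball $B(x_0,r)$ yields a nearby point $z_0$ where $|\widehat{p}(m|z_0)-p(m|z_0)|$ is small; H\"older smoothness then forces $\widehat{p}(m|z_0)$ small for those $m$ with $p(m|x_0)=0$. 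Since $\widehat{w}$ is bounded, this makes $\sum_m \widehat{w}_{n_k}(m)\widehat{p}(m|z_0)\to 0$, contradicting Assumption~\ref{ass::regularity}. The key idea you are missing is that one can transfer between $\sum_m \widehat{w}(m)\widehat{p}(m|x)$ and $\sum_m \widehat{w}(m)p(m|x)$ \emph{locally} via the $L_2$-closeness of $\widehat{p}$ to $p$, without ever needing to know that $\widehat{w}$ is close to $w^*$.
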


\begin{proof}[Proof of Lemma \ref{lem::newregular}]
	We prove this by contradiction. Let $n := n_p \wedge n_q$ in the following proof. Since $\widehat{w}(m)$ depends on $n$, we rewrite $\widehat{w}(m)$ as $\widehat{w}_n(m)$. 
	Assume that for any $N_0\in \mathbb{N}$ and any $k\in \mathbb{N}$, there exists an $x_0$ satisfying $q(x_0) > 0$ such that $\sum_{y \in [M]} \widehat{w}_{n_k}(y) p(y|x_0) < 1/k$ for at least one $n_k \geq N_0$.  This implies that for any class index $m \in [M]$, either there exists a subsequence $\{ \widehat{w}_{n_k}(m) \}_{k \in \mathbb{N}}$ of the weight sequence $\{ \widehat{w}_n(m) \}_{n \in \mathbb{N}}$ corresponding to different sample size $n_k$ converging to zero, or $p(m|x_0) = 0$. Denote $\mathcal{M}_0$ as the set of class indices $m$ for which the subsquences $\{ \widehat{w}_{n_k}(m) \}_{k \in \mathbb{N}}$ converges to zero. Then for any $m \in [M]\setminus \mathcal{M}_0$, we have $p(m|x_0)=0$. 
	Let $r := n^{-\alpha/(2d((1+\beta\wedge 1)\alpha+d))}$. 
	By Assumption \ref{ass::marginal}, we have
	\begin{align}\label{eq::L2balllower}
		\int_{B(x_0, r)} |p(m|z) - \widehat{p}(m|z)|^2 p(z)\, dz &\geq P(B(x_0,r)) \inf_{z \in B(x_0, r)} |p(m|z) - \widehat{p}(m|z)|^2  \nonumber\\
		&\geq \frac{c_{-}\pi^{d/2} r^d}{\Gamma(d/2+1)} \inf_{z \in B(x_0, r)} |p(m|z) - \widehat{p}(m|z)|^2. 
	\end{align}
	Combining Lemma \ref{lem::equilvalent} and Theorem \ref{thm::convergencerate}, we obtain
	\begin{align*}
		\int_{B(x_0, r)} |p(m|z) - \widehat{p}(m|z)|^2 p(z)\, dz &\leq \int_{\mathcal{X}} |p(m|z) - \widehat{p}(m|z)|^2 p(z)\, dz
		\\
		&\leq \mathcal{R}_{L_{\mathrm{CE}},P}(\widehat{p}(\cdot|x)) - \mathcal{R}_{L_{\mathrm{CE}},P}^* \lesssim n^{-\frac{(1+\beta\wedge 1)\alpha}{(1+\beta\wedge 1) \alpha + d}+\xi}
	\end{align*}
	This together with \eqref{eq::L2balllower} implies 
	\begin{align*}
		\inf_{z \in B(x_0, r)} |p(m|z) - \widehat{p}(m|z)| \lesssim \frac{ \Gamma(d/2+1)}{c_{-}\pi^{d/2}} r^{-d} n^{-\frac{(1+\beta\wedge 1)\alpha}{(1+\beta\wedge 1) \alpha + d}+\xi} \lesssim n^{-\frac{(1/2+\beta\wedge 1)\alpha}{(1+\beta\wedge 1) \alpha + d}+\xi}.
	\end{align*}
	This implies that there exist some $z_0 \in B(x_0, r)$ such that 
	\begin{align}\label{eq::zbx0r}
		|p(m|z_0) - \widehat{p}(m|z_0)| \lesssim n^{-\frac{(1/2+\beta\wedge 1)\alpha}{(1+\beta\wedge 1) \alpha + d}+\xi}.
	\end{align}

	In the following, 
	we show that the sequence 
	$\sum_{m\in [M]} \widehat{w}_{n_k}(m) \widehat{p}(m|z_0)$ concerning $k$ converging to zero.
	To this end, we consider the following decomposition:
	\begin{align}\label{eq::sumde1}
		\sum_{m\in [M]} \widehat{w}_{n_k}(m) \widehat{p}(m|z_0) & = \sum_{m\in [M]\setminus\mathcal{M}_0} \widehat{w}_{n_k}(m)\widehat{p}(m|z_0) + \sum_{m\in\mathcal{M}_0} \widehat{w}_{n_k}(m)\widehat{p}(m|z_0)
		\nonumber\\
		& =: (I) + (II).
	\end{align}
	For the first term $(I)$ in \eqref{eq::sumde1}, using the triangle inequality, \eqref{eq::zbx0r}, and Assumption \ref{ass::predictor} \textit{(i)}, we obtain
	\begin{align*}
		\widehat{p}(m|z_0) &\leq |\widehat{p}(m|z_0) -p(m|z_0)| + p(m|z_0) 
		\nonumber\\
		&\lesssim n^{-\frac{(1/2+\beta\wedge 1)\alpha}{(1+\beta\wedge 1) \alpha + d}+\xi} + |p(m|z_0) - p(m|x_0)| + p(m|x_0),
		\nonumber\\
		&\lesssim n^{-\frac{(1/2+\beta\wedge 1)\alpha}{(1+\beta\wedge 1) \alpha + d}+\xi} + c_{\alpha} r^{\alpha} + 0
		\lesssim n^{-\frac{\alpha^2}{2d((1+\beta\wedge 1)\alpha+d)}}, 
		\qquad
		m\in [M]\setminus\mathcal{M}_0,
	\end{align*}
	where we used the fact that $p(m|x_0) = 0$ holds for $m \in [M] \setminus \mathcal{M}_0$. Therefore, for any $\varepsilon > 0$, there exists an $N_1$ such that for any integer $k \geq N_1$, we have $\widehat{p}(m|z_0) \leq \widehat{p}(m) \varepsilon/2$
	and thus 
	\begin{align}\label{eq::boundI}
		(I) \leq \sum_{m\in [M]\setminus\mathcal{M}_0} \widehat{w}_{n_k}(m) \widehat{p}(m) \varepsilon/2 
		\lesssim \varepsilon/2.
	\end{align}
	For the second term $(II)$ in \eqref{eq::sumde1}, since the subsequence $\{ \widehat{w}_{n_k}(m) \}_{k \in \mathbb{N}}$ converging to zero for any $m\in \mathcal{M}_0$, there exists an $N_2$ such that $(II) \leq \varepsilon/2$ for any $k\geq N_2$. 
	This together with \eqref{eq::sumde1} and \eqref{eq::boundI} yields that
	for any $k \geq N := N_1 \vee N_2$, we have $\sum_{m \in [M]} \widehat{w}_{n_k}(m) \widehat{p}(m|z_0) = (I) + (II) \lesssim \varepsilon$. 
	This contradicts the regularity Assumption 
	\ref{ass::regularity} which states that there exist some constant $c_R$ and $N'\in \mathbb{N}$ such that for any $n\geq N'$ and any $x$ with $q(x) > 0$, we have
	$\sum_{m \in [M]} \widehat{w}_n(m) \widehat{p}(m|x) \geq c_R$. 
	Thus we finish the proof.
\end{proof}

\begin{proof}[Proof of Lemma \ref{lem::integralerror}]
	Using the triangle inequality, we get 
	\begin{align}\label{eq::diffetawk}
		& \biggl| \frac{\widehat{p}(m|x)}{\sum_{j=1}^M  w(j) \widehat{p}(j|x)}
		- \frac{p(m|x)}{\sum_{j=1}^M  w(j) p(j|x)} \biggr|
		\nonumber\\
		& = \frac{|\widehat{p}(m|x) \sum_{j \neq m} w(j) p(j|x) - p(m|x) \sum_{j \neq m} w(j) \widehat{p}(j|x)|}{\bigl( \sum_{j=1}^M  w(j) \widehat{p}(j|x) \bigr) \cdot \bigl( \sum_{j=1}^M  w(j) p(j|x) \bigr)}
		\nonumber\\
		& \leq \frac{\sum_{j \neq m} w(j) \bigl| p(j|x) \widehat{p}(m|x) - p(m|x) \widehat{p}(j|x) \bigr|}{\bigl( \sum_{j=1}^M  w(j) \widehat{p}(j|x) \bigr) \cdot \bigl( \sum_{j=1}^M  w(j) p(j|x) \bigr)}
		\nonumber\\
		& \leq \frac{\sum_{j \neq m} w(j) \bigl| p(j|x) \widehat{p}(m|x) - p(j|x) p(m|x) + p(m|x) p(j|x) -p(m|x) \widehat{p}(j|x) \bigr|}{\bigl( \sum_{j=1}^M  w(j) \widehat{p}(j|x) \bigr) \cdot \bigl( \sum_{j=1}^M  w(j) p(j|x) \bigr)}
		\nonumber\\
		&\leq \frac{|\widehat{p}(m|x) - p(m|x)|}{\sum_{j=1}^M  w(j) \widehat{p}(j|x)} + \frac{p(m|x) \sum_{j \neq m} w(j) |p(j|x) -\widehat{p}(j|x)|}{\bigl( \sum_{j=1}^M  w(j) \widehat{p}(j|x) \bigr) \cdot \bigl( \sum_{j=1}^M  w(j) p(j|x) \bigr)}.
	\end{align}
	By Assumptions \ref{ass::marginal}, \ref{ass::regularity} and Lemma \ref{lem::newregular}, there exists a constant $
	c_{\tau} := c_R \wedge c_R' \wedge \underline{c}$
	such that $\sum_{j=1}^M  w(j) p(j|x) \geq 
	\zeta$ and $\sum_{j=1}^M  w(j) \widehat{p}(j|x) \geq 
	\zeta$ for any $x \in \mathcal{X}$ and $\{\widehat{w}, w^*\}$.
	This together with the triangle inequality, \eqref{eq::diffetawk}, and $(a+b)^2 \leq 2(a^2 + b^2)$ for $a,b\geq 0$, $\sum_{m=1}^M  a_M^2 \leq (\sum a_m)^2$ for $a_m \geq 0$, $m \in [M]$, yields
	\begin{align}\label{eq::leftsquaresum}
		& \sum_{j=1}^M  \biggl| \int_{\mathcal{X}} \biggl( \frac{\widehat{p}(m|x)}{\sum_{j=1}^M  w(j) \widehat{p}(j|x)} - \frac{p(m|x)}{\sum_{j=1}^M  w(j) p(j|x)} \biggr) q(x) \, dx \biggr|^2
		\nonumber\\
		& \leq \int_{\mathcal{X}} \sum_{j=1}^M  \biggl| \frac{\widehat{p}(m|x)}{\sum_{j=1}^M  w(j) \widehat{p}(j|x)} - \frac{p(m|x)}{\sum_{j=1}^M  w(j) p(j|x)} \biggr|^2 q(x)\, dx
		\nonumber\\
		& \leq \int_{\mathcal{X}} \sum_{m=1}^M  \biggl( \frac{|\widehat{p}(m|x) - p(m|x)|}{\sum_{j=1}^M  w(j) \widehat{p}(j|x)} + \frac{p(m|x) \sum_{j \neq m} w(j) |p(j|x) - \widehat{p}(j|x)|}{\bigl( \sum_{j=1}^M  w(j) \widehat{p}(j|x) \bigr) \cdot \bigl( \sum_{j=1}^M  w(j) p(j|x) \bigr)} \biggr)^2 q(x) \, dx
		\nonumber\\
		& \leq 2 \int_{\mathcal{X}} \sum_{m=1}^M  \biggl( \biggl( \frac{|\widehat{p}(m|x) - p(m|x)|}{\sum_{j=1}^M  w(j) \widehat{p}(j|x)} \biggr)^2 + \biggl( \frac{p(m|x) \sum_{j \neq m} w(j) |p(j|x) -\widehat{p}(j|x)|}{\bigl( \sum_{j=1}^M  w(j) \widehat{p}(j|x) \bigr) \cdot \bigl( \sum_{j=1}^M  w(j) p(j|x) \bigr)} \biggr)^2 \biggr) q(x)\, dx
		\nonumber\\
		& \leq \frac{2}{c_{\tau}^2} \sum_{m=1}^M  \int_{\mathcal{X}} |\widehat{p}(m|x) - p(m|x)|^2 q(x) \, dx + \frac{2}{c_{\tau}^4} \int_{\mathcal{X}} \sum_{m=1}^M  \biggl(\sum_{j \neq m} w(j) |p(j|x) - \widehat{p}(j|x)| \biggr)^2 q(x) \, dx
		\nonumber\\
		& \lesssim \mathbb{E}_{x \sim q} \|p(\cdot |x) - \widehat{p}(\cdot |x)\|_2^2 + \int_{\mathcal{X}} \sum_{m=1}^M \biggl( \sum_{j=1}^M  w(j) |p(j|x) - \widehat{p}(j|x)| \biggr)^2 q(x) \, dx
		\nonumber\\
		& \leq \mathbb{E}_{x \sim q} \|p(\cdot |x) - \widehat{p}(\cdot |x)\|_2^2 + \int_{\mathcal{X}} M \|w\|_2^2 \biggl( \sum_{j=1}^M  |p(j|x) - \widehat{p}(j|x)| \biggr)^2 q(x) \, dx
		\nonumber\\
		& \lesssim \mathbb{E}_{x \sim q} \|p(\cdot |x) - \widehat{p}(\cdot |x)\|_2^2 \lesssim \mathbb{E}_{x \sim p} \|p(\cdot |x) - \widehat{p}(\cdot |x)\|_2^2.
	\end{align}
	Here, the second last inequality follow from Cauchy-Schwarz inequality and
	the last inequality holds due to Assumption \ref{ass:labelshift} and 
	\begin{align}\label{eq::qxoverpx}
		\frac{q(x)}{p(x)} 
		= \frac{\sum_{m \in [M]} q(x|m) q(m)}{\sum_{m \in [M]} p(x|m) p(m)} 
		= \frac{\sum_{m \in [M]} q(x|m) q(m)}{\sum_{m \in [M]} q(x|m) p(m)} 
		\leq \frac{\bigvee_{m \in [M]} q(m)}{\bigwedge_{m \in [M]} p(m)} 
		\leq \frac{1}{p_{\min}(y)},
	\end{align}
	where $p_{\min}(y) := \bigwedge_{m\in [M]}p(m)$.
	For $i \in [n_q]$ and $m \in [M]$, define the random variables
	\begin{align*}
		\xi_{i,m} := \frac{\widehat{p}(m|X_i)}{\sum_{j=1}^M  w(j) \widehat{p}(j|X_i)} - \mathbb{E}_{x \sim  q}\frac{\widehat{p}(m|x)}{\sum_{j=1}^M  w(j) \widehat{p}(j|x)}. 
	\end{align*}
	Then we have $\mathbb{E}_{x \sim q} \xi_{i,m} = 0$, 
	\begin{align*}
		\|\xi_{i,m}\|_{\infty} 
		\leq \frac{1}{\sum_{j=1}^M  w(j) \widehat{p}(j|X_i)} + \mathbb{E}_{x \sim q} \frac{1}{\sum_{j=1}^M  w(j) \widehat{p}(j|x)} \leq \frac{2}{\zeta}
	\end{align*} 
	and 
	\begin{align*}
		\mathbb{E}_{X_i \sim q} \xi_{i,m}^2 
		\leq \mathbb{E}_{x \sim  q} \biggl( \frac{\widehat{p}(m|x)}{\sum_{j=1}^M  w(j) \widehat{p}(j|x)} \biggr)^2 
		\leq \mathbb{E}_{x \sim  q} \frac{1}{(\sum_{j=1}^M  w(j) \widehat{p}(j|x))^2} 
		\leq \frac{1}{\zeta^2}.
	\end{align*} 
	Applying Bernstein's inequality in 
	\cite[Theorem 6.12]{steinwart2008support}
	to $(\xi_{i,m})_{i \in [n_q]}$, we get
	\begin{align}\label{eq::errornQ}
		\biggl| \frac{1}{n_q} \sum_{i=1}^{n_q} \xi_{i,m} \bigg| 
		= \biggl| \frac{1}{n_q} \sum_{i=1}^{n_q} \frac{\widehat{p}(m|X_i)}{\sum_{j=1}^M  w(j) \widehat{p}(j|X_i)} - \mathbb{E}_{x \sim  q} \frac{\widehat{p}(m|x)}{\sum_{j=1}^M  w(j) \widehat{p}(j|x)} \biggr|
		\leq \sqrt{\frac{2 \zeta}{\zeta^2 n_q}} + \frac{4 \zeta}{3 \zeta n_q}
	\end{align}
	with probability at least $1-2e^{-\zeta}$. Taking $\zeta := \log (2n_q)$, we obtain
	\begin{align*}
		\frac{1}{n_q} \sum_{i=1}^{n_q} \frac{\widehat{p}(m|X_i)}{\sum_{j=1}^M  w(j) \widehat{p}(j|X_i)} - \mathbb{E}_{x \sim  q} \frac{\widehat{p}(m|x)}{\sum_{j=1}^M  w(j) \widehat{p}(j|x)}
		\leq \sqrt{\frac{4 \log n_q}{\zeta^2 n_q}} + \frac{8 \log n_q}{3\zeta n_q}
	\end{align*}
	with probability at least $1 - 1/n_q$. 
	Using $(a + b)^2 \leq 2 (a^2 + b^2)$, \eqref{eq::errornQ} and \eqref{eq::leftsquaresum}, we obtain 
	\begin{align*}
		& \sum_{m=1}^M  \biggl| \frac{1}{n_q} \sum_{i=n_p+1}^{n_p+n_q} \frac{\widehat{p}(m|X_i)}{\sum_{j=1}^M  w(j) \widehat{p}(j|X_i)} - \mathbb{E}_{x \sim q} \frac{p(m|x)}{\sum_{j=1}^M  w(j) p(j|x)} \biggr|^2 
		\\
		& \leq 2 \sum_{m=1}^M  \biggl| \frac{1}{n_q} \sum_{i=n_p+1}^{n_p+n_q} \frac{\widehat{p}(m|X_i)}{\sum_{j=1}^M  w(j) \widehat{p}(j|X_i)} - \mathbb{E}_{x \sim q} \frac{\widehat{p}(m|x)}{\sum_{j=1}^M  w(j) \widehat{p}(j|x)} \biggr|^2
		\\
		& \phantom{=} 
		+ 2 \sum_{m=1}^M  \biggl| \mathbb{E}_{x \sim q} \frac{\widehat{p}(m|x)}{\sum_{j=1}^M  w(j) \widehat{p}(j|x)} - \mathbb{E}_{x \sim q} \frac{p(m|x)}{\sum_{j=1}^M  w(j) p(j|x)} \biggr|^2
		\\
		& \lesssim 2 \sum_{m=1}^M  \biggl(\sqrt{\frac{4 \log n_q}{\zeta^2 n_q}} + \frac{8 \log n_q}{3\zeta n_q} \biggr)^2 + 2 \mathbb{E}_{x \sim p} \|p(\cdot |x) - \widehat{p}(\cdot |x)\|_2^2
		\\
		& \lesssim \log n_q/n_q + \mathbb{E}_{x \sim p} \|p(\cdot |x) - \widehat{p}(\cdot |x)\|_2^2,
	\end{align*}
	which finishes the proof.
\end{proof}

The following lemma presents an upper bound of the class probability ratio estimation error $\|\widehat{w} - w^*\|_2$, which is crucial to prove Proposition \ref{prop::decompweighterror}.

\begin{lemma}\label{lem::weightsigma}
	Let Assumptions \ref{ass:labelshift}, \ref{ass::LinearIndependence}, \ref{ass::marginal} and \ref{ass::regularity} hold. 
	Moreover, let the class probability ratio $w^*:=(w^*(y))_{y\in[M]}$ and its estimator $\widehat{w}$ be defined as in \eqref{eq::wstar} and \eqref{eq::whatminimizer}, respectively. Then we have
	\begin{align*}
		\|\widehat{w} - w^*\|_2 
		\lesssim \biggl\| \mathbb{E}_{x \sim q} \frac{p(\cdot|x)}{\sum_{j=1}^M  \widehat{w}(j) p(j|x)} - p(\cdot) \biggr\|_2
	\end{align*}
	with probability at least $1 - 1/n_p$.
\end{lemma}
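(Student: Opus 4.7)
The plan is to exploit the identifiability from Theorem~\ref{lem::hold} together with a quantitative invertibility of the linearization of the map $T(w)_y := \mathbb{E}_{X \sim q}[p(y|X)/\sum_{m} w(m) p(m|X)]$ around $w^*$. Since Theorem~\ref{lem::hold} yields $T(w^*) = (p(y))_{y \in [M]}$, the right-hand side of the claimed bound equals $\|T(\widehat{w}) - T(w^*)\|_2$. Writing $S(x;w) := \sum_m w(m) p(m|x)$ and using the identity $a^{-1} - b^{-1} = (b-a)/(ab)$, I would decompose the difference linearly in $\widehat{w} - w^*$ as $T(\widehat{w})_y - T(w^*)_y = -\sum_m A_{y,m}(\widehat{w}(m) - w^*(m))$, where $A$ is the symmetric positive semidefinite matrix with entries $A_{y,m} := \mathbb{E}_{X \sim q}[p(y|X) p(m|X)/(S(X;\widehat{w}) S(X;w^*))]$.

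Taking the inner product of this identity with $\widehat{w} - w^*$ and applying Cauchy--Schwarz on the right-hand side gives $(\widehat{w} - w^*)^\top A (\widehat{w} - w^*) \leq \|\widehat{w} - w^*\|_2 \cdot \|T(\widehat{w}) - p(\cdot)\|_2$, so it suffices to show $\lambda_{\min}(A) \gtrsim 1$ with probability at least $1 - 1/n_p$; dividing by $\|\widehat{w} - w^*\|_2$ then yields the claim. For the lower bound on $\lambda_{\min}(A)$ I would first pinch both denominators between positive constants. The bound $S(x;w^*) \geq \underline{c}$ follows from Assumption~\ref{ass::marginal}(iii) together with the representation $S(x;w^*) = q(x)/p(x)$ derived in \eqref{eq::repre}, while $S(x;\widehat{w}) \geq c_R'$ is exactly Lemma~\ref{lem::newregular}, which is the source of the $1-1/n_p$ probability; the upper bound $S(x;w^*) \leq 1/\min_m p(m)$ follows from $w^*(m) = q(m)/p(m)$ combined with Assumption~\ref{ass::marginal}(i).

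With both denominators sandwiched between constants, $\Delta^\top A \Delta$ is comparable to $\Delta^\top B \Delta$ for the fixed Gram-type matrix $B_{y,m} := \mathbb{E}_{X \sim q}[p(y|X) p(m|X)]$. Strict positive definiteness of $B$ then follows from Assumptions~\ref{ass:labelshift}, \ref{ass::marginal}(i), and \ref{ass::LinearIndependence}: Bayes' formula and the label-shift assumption give $\sum_m \Delta_m p(m|x) = p(x)^{-1} \sum_m \Delta_m p(m) q(x|m)$, so $\Delta^\top B \Delta = 0$ forces $\sum_m (\Delta_m p(m)) q(x|m) = 0$ on $\supp(Q_X)$. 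Linear independence of $\{q(\cdot|y)\}_{y \in [M]}$ makes every coefficient $\Delta_m p(m)$ vanish, and $p(m) > 0$ then forces $\Delta = 0$; strict positivity on the finite-dimensional sphere $\{\|\Delta\|_2 = 1\}$ translates by compactness into $\lambda_{\min}(B) \geq c_B > 0$.

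The main obstacle is obtaining a uniform upper bound on $S(x;\widehat{w})$, equivalently on $\|\widehat{w}\|_\infty$, since neither Assumption~\ref{ass::regularity} nor Lemma~\ref{lem::newregular} supplies one directly. I would resolve this by exploiting the minimization defining $\widehat{w}$ in \eqref{eq::whatminimizer}: evaluating the objective at $w^*$ together with the concentration arguments underlying Lemmas~\ref{lem::errorPY} and~\ref{lem::integralerror} shows the optimal value is small with probability at least $1 - 1/n_p$, and combining this with the pointwise lower bound $S(x;\widehat{w}) \geq c_R'$ prevents any component of $\widehat{w}$ from escaping to infinity, confining $\widehat{w}$ to a bounded neighbourhood of $w^*$ with high probability. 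Plugging this upper bound into the linearized identity then upgrades the qualitative identifiability of Theorem~\ref{lem::hold} into the quantitative stability estimate stated in the lemma.
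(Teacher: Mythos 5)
Your core argument coincides with the paper's own proof: your matrix $A$ is exactly the paper's matrix $\mathcal{C}$ (rewriting $q(x)/S(x;w^*)=p(x)$ via \eqref{eq::repre} turns your $\mathbb{E}_{x\sim q}$ expression into the paper's $c_{yj}=\mathbb{E}_{x\sim p}\bigl[p(y|x)p(j|x)/\sum_m\widehat{w}(m)p(m|x)\bigr]$), the exact linearization $T(\widehat w)-p(\cdot)=\mathcal{C}(w^*-\widehat w)$ is the same identity, and your quadratic-form/Cauchy--Schwarz step is equivalent to the paper's bound $\|\widehat w-w^*\|_2\le\sigma_{\mathcal{C}}^{-1}\|T(\widehat w)-p(\cdot)\|_2$. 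The reduction to $\lambda_{\min}\gtrsim 1$ by sandwiching the denominators and comparing with a Gram matrix of conditional probabilities, made positive definite through Assumption \ref{ass::LinearIndependence}, is also the paper's route (the paper compares with $\Sigma_p=\mathbb{E}_{x\sim p}[p(\cdot|x)p(\cdot|x)^{\top}]$ rather than your $B=\mathbb{E}_{x\sim q}[p(\cdot|x)p(\cdot|x)^{\top}]$; this is a minor difference, but note that working under $Q_X$ requires linear independence of $\{q(\cdot|y)\}$ restricted to $\mathrm{supp}(Q_X)$, which Assumption \ref{ass::LinearIndependence} on all of $\mathcal{X}$ does not automatically give when some $q(m)=0$, a case Assumption \ref{ass::marginal}(i) explicitly permits; the paper's $P_X$-weighted Gram matrix avoids this because $p(m)>0$ for every $m$).

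The one place where you diverge is the step you yourself flag as the main obstacle, the uniform upper bound on $S(x;\widehat w)$, i.e.\ on $\|\widehat w\|_\infty$, and here your proposed mechanism does not work as stated: knowing that the objective value at $\widehat w$ is small \emph{and} that $S(x;\widehat w)\ge c_R'$ gives no upper bound on $\widehat w$, since a pointwise \emph{lower} bound on the weighted sum cannot prevent individual weights from being arbitrarily large. What actually prevents escape to infinity is the per-class matching itself: if $\widehat w(m_0)$ were huge, then $\widehat p_q^{\widehat w}(m_0)\le \frac1{n_q}\sum_i \widehat p(m_0|X_i)/(\widehat w(m_0)\widehat p(m_0|X_i))=1/\widehat w(m_0)$ would be tiny, contradicting $\widehat p_q^{\widehat w}(m_0)\approx\widehat p(m_0)\ge p(m_0)/2$ (the last inequality from Lemma \ref{lem::errorPY}); but this repair consumes the target-sample concentration of Lemma \ref{lem::integralerror} and hence yields probability $1-1/n_p-1/n_q$, slightly weaker than the $1-1/n_p$ claimed in the lemma (though still enough for Proposition \ref{prop::decompweighterror}). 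The paper instead gets the bound directly and without touching $D_q^u$: it invokes the normalization $\sum_m\widehat w(m)\widehat p(m)=1$ together with $\widehat p(m)\ge p(m)/2$ (Lemma \ref{lem::errorPY}, the sole source of the $1-1/n_p$), giving $\widehat w_{\max}\le 2/p_{\min}$ and hence $S(x;\widehat w)\le \widehat w_{\max}\le 2/p_{\min}$. So: same skeleton as the paper, but the boundedness-of-$\widehat w$ step needs to be replaced by either the normalization argument or the corrected matching argument above; the "small objective plus lower bound on $S$" reasoning is a genuine gap.
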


In order to prove Lemma \ref{lem::weightsigma}, we need the following lemma concerning the minimum eigenvalue of matrix.

\begin{lemma}\label{lem::comparematrix}
	Let $A$ and $B$ be two $d \times d$ real symmetric positive semi-definite matrices and their minimum eigenvalues are $\sigma_A$ and $\sigma_B$, respectively. Assume that $A \geq B$, i.e. $A-B$ is positive semi-definite. Then we have $\sigma_A \geq \sigma_B$.
\end{lemma}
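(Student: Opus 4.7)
The plan is to invoke the variational (Rayleigh--Ritz) characterization of the minimum eigenvalue of a symmetric matrix. Specifically, for any real symmetric $d\times d$ matrix $M$, the minimum eigenvalue is
\[
\sigma_M = \min_{x\in\mathbb{R}^d,\,\|x\|_2=1} x^\top M x.
\]
This is standard and follows, for example, from the spectral decomposition $M = U\Lambda U^\top$ combined with a change of variables $y = U^\top x$.

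Given this, the key step is to translate the matrix inequality $A \geq B$ (meaning $A-B$ is positive semi-definite) into a pointwise inequality on the quadratic forms. Since $A-B \succeq 0$, we have $x^\top(A-B)x \geq 0$ for every $x \in \mathbb{R}^d$, and therefore $x^\top A x \geq x^\top B x$ for every unit vector $x$. Taking the minimum over the unit sphere on both sides preserves the inequality: if $f(x)\geq g(x)$ for all $x$ in a set $S$, then $\min_{x\in S} f(x) \geq \min_{x\in S} g(x)$, since any value of $f$ on $S$ is at least the minimum of $g$ on $S$. Applying this with $f(x) = x^\top A x$, $g(x) = x^\top B x$, and $S = \{x:\|x\|_2=1\}$ yields $\sigma_A \geq \sigma_B$.

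There is no serious obstacle here; the only thing to be careful about is making sure to invoke the variational characterization in the right direction (minimum, not maximum) and to note that both matrices are symmetric so that the Rayleigh quotient characterization applies. The proof is short and can be written in a few lines.
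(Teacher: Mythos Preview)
Your proof is correct and takes essentially the same approach as the paper: both use the variational (Rayleigh--Ritz) characterization of the minimum eigenvalue together with the pointwise inequality $x^\top A x \geq x^\top B x$ on the unit sphere. The paper spells out the spectral decomposition of $B$ to derive $v^\top B v \geq \sigma_B$ and then evaluates at the eigenvector $v_A$ corresponding to $\sigma_A$, whereas you invoke the minimum characterization directly and take infima on both sides; these are the same argument.
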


\begin{proof}[Proof of Lemma \ref{lem::comparematrix}]
	Since $A$ is symmetric and invertible, it can be decomposed as $
	B = Q^{\top} D Q$,
	where $Q$ is an orthonormal matrix and $D$ is a diagonal matrix. Then for any unit vector $v\in \mathbb{R}^d$, we have 
	$$
	v^{\top} B v 
	= v^{\top} Q^{\top} D Q v 
	= (Q v)^{\top} D (Q v) 
	= \sum_{j=1}^M D_{jj} (Q v)_j^2 
	\geq \sigma_B \sum_{j=1}^M (Q v)_j^2 
	= \sigma_B,
	$$
	where $D_{jj}$ is the $j$-th diagonal entry in the diagonal matrix $D$. 
	Since $A \geq B$, for any unit vector $v\in \mathbb{R}^d$, we have
	$v^{\top} A v  
	\geq v^{\top} B v 
	\geq \sigma_B$.
	Applying this to $v := v_A$, we get $v_A^{\top} A v_A \geq \sigma_B$. 
	On the other hand, for the unit eigenvector $v_A$ corresponding to the smallest eigenvalue of the matrix $A$, there holds
	$v_A^{\top} A v_A  
	= v_A^{\top} \sigma_A v_A 
	= \sigma_A$.
	Therefore, we have $\sigma_A \geq \sigma_B$, which finishes the proof.
\end{proof}

\begin{proof}[Proof of Lemma \ref{lem::weightsigma}]
	Using Bayes' Formula, the law of total probability and Assumption \ref{ass:labelshift}, we get
	\begin{align*}
		& \mathbb{E}_{x \sim q} \frac{p(y|x)}{\sum_{j=1}^M  \widehat{w}(j) p(j|x)}
		= \mathbb{E}_{x \sim q} \frac{p(x|y) p(y) / p(x)}{\sum_{j=1}^M  \widehat{w}(j) p(x|j) p(j) / p(x)}
		= \mathbb{E}_{x \sim q} \frac{p(y) p(x|y)}{\sum_{j=1}^M  \widehat{w}(j) p(j) p(x|j)}
		\\
		& = p(y) \int_{\mathcal{X}} \frac{p(x|y)}{\sum_{j=1}^M  \widehat{w}(j) p(j) p(x|j)} q(x) \, dx
		= p(y) \int_{\mathcal{X}} p(x|y) \frac{\sum_{j=1}^M  q(j) q(x|j)}{\sum_{j=1}^M  \widehat{w}(j)p(j) p(x|j)} \, dx
		\\
		& = p(y) \int_{\mathcal{X}} q(x|y) \frac{\sum_{j=1}^M  q(j) q(x|j)}{\sum_{j=1}^M  \widehat{w}(j) p(j) q(x|j)} \, dx
		\\
		&=  p(y) + p(y) \int_{\mathcal{X}} q(x|y) \frac{\sum_{j=1}^M  (q(j)-\widehat{w}(j) p(j)) q(x|j)}{\sum_{j=1}^M  \widehat{w}(j) p(j) q(x|j)} \, dx
		\\
		& = p(y) + p(y) \sum_{j=1}^M  (q(j) - \widehat{w}(j) p(j)) \int_{\mathcal{X}} \frac{q(x|y) q(x|j)}{\sum_{j=1}^M  \widehat{w}(j) p(j) q(x|j)} \, dx
		\\
		& = p(y) + \sum_{j=1}^M  (w^*(j) - \widehat{w}(j)) \int_{\mathcal{X}} \frac{p(j) p(y) q(x|j) q(x|y)}{\sum_{j=1}^M  \widehat{w}(j) p(j) q(x|j)} \, dx.
	\end{align*}
	Let the entries of the matrix $\mathcal{C} := (c_{yj})_{y, j \in [M]}$ be defined by
	\begin{align*}
		c_{yj} := \int_{\mathcal{X}} \frac{p(j) p(y) q(x|j) q(x|y)}{\sum_{m=1}^M  \widehat{w}(m) p(m) q(x|m)} \, dx,
		\qquad 
		y, j \in [M].
	\end{align*}
	Then we have 
	\begin{align}\label{eq::expecminusPY}
		\mathbb{E}_{x \sim q} \frac{p(y|x)}{\sum_{m=1}^M  \widehat{w}(m) p(m|x)} - p(y) 
		= \sum_{j=1}^M  c_{yj} (w^*(j) - \widehat{w}(j)), 
		\qquad 
		y \in [M].
	\end{align}
	Denote the vectors $p(\cdot) = (p(y=1), \ldots, p(y=M))^T$ and $p(\cdot|x) = (p(y=1|x), \ldots, p(y=M|x))^T$. Then we write \eqref{eq::expecminusPY} in  vector form as 
	\begin{align*}
		\mathbb{E}_{x \sim q} \frac{p(\cdot|x)}{\sum_{m=1}^M  \widehat{w}(m) p(m|x)} - p(\cdot) 
		= \mathcal{C} (w^* - \widehat{w}).
	\end{align*}
	Now, we prove that $\mathcal{C}$ is invertible by showing that the rows of $\mathcal{C}$ are linearly independent. 
	To this end, assume that there exist $\alpha_1, \ldots, \alpha_m \in \mathbb{R}$ such that $\sum_{y=1}^M  \alpha_y c_{yj} = 0$, $j \in [M]$. Then we have
	\begin{align*}
		0 
		= \sum_{y=1}^M  \alpha_y c_{yj} 
		& = \sum_{y=1}^M  \alpha_y \int_{\mathcal{X}} \frac{p(j) p(y) q(x|j) q(x|y)}{\sum_{j=1}^M  \widehat{w}(j) p(j) q(x|j)} \, dx 
		\nonumber\\
		& = \int_{\mathcal{X}} \frac{\sum_{y=1}^M  \alpha_y p(y) q(x|y)}{\sum_{m=1}^M  \widehat{w}(m) p(m) q(x|m)} \cdot p(j) q(x|j) \, dx.
	\end{align*}
	Multiplying both sides of the above equation by $\alpha_j$ and taking the summation over $j$ from $1$ to $M$, we get 
	\begin{align*}
		0 
		= \sum_{j=1}^M  \alpha_j \int_{\mathcal{X}} \frac{\sum_{y=1}^M  \alpha_y p(y) q(x|y)}{\sum_{m=1}^M  \widehat{w}(m) p(m) q(x|m)} \cdot p(j) q(x|j) \, dx
		= \int_{\mathcal{X}} \frac{\bigl( \sum_{y=1}^M  \alpha_y p(y) q(x|y) \bigr)^2}{\sum_{m=1}^M  \widehat{w}(m) p(m) q(x|m)} \, dx,
	\end{align*}
	which implies $\sum_{y=1}^M  \alpha_y p(y) q(x|y) = 0$ for all $x\in \mathcal{X}$. Due to Assumption \ref{ass::LinearIndependence}, $\{q(x|y):y\in [M]\}$ are linearly independent and therefore $\alpha_y p(y) = 0$. Since $p(y) > 0$ for any $y\in [M]$ by Assumption \ref{ass::marginal}, we have $\alpha_y = 0$, $y\in [M]$.
	Therefore, the rows of $\mathcal{C}$ are linearly independent and thus $\mathcal{C}$ is invertible. Denote the minimum eigenvalue of the matrix $\mathcal{C}$ as $\sigma_{\mathcal{C}}$.
	By the Cauchy-Schwarz inequality, we get
	\begin{align*}
		\|\widehat{w} - w^*\|_2^2
		& = (\widehat{w} - w^*)^{\top} (\widehat{w} - w^*)
		\leq \|\widehat{w} - w^*\|_2 \cdot \|\widehat{w} - w^*\|_2
		\\
		& = \|\widehat{w} - w^*\|_2 \cdot \biggl\|\mathcal{C}^{-1} \biggl( \mathbb{E}_{x \sim q} \frac{p(\cdot|x)}{\sum_{m=1}^M \widehat{w}(m) p(m|x)} - p(\cdot) \biggr) \biggr\|_2
		\\
		& \leq \|\widehat{w} - w^*\|_2 \cdot \|\mathcal{C}^{-1} \| \cdot \biggl\| \mathbb{E}_{x \sim q} \frac{p(\cdot|x)}{\sum_{m=1}^M  \widehat{w}(m) p(m|x)} - p(\cdot) \biggr\|_2
		\\
		& \leq \frac{1}{\sigma_{\mathcal{C}}} \cdot \|\widehat{w} - w^*\|_2 \cdot \biggl\| \mathbb{E}_{x \sim q} \frac{p(\cdot|x)}{\sum_{m=1}^M  \widehat{w}(m) p(m|x)} - p(\cdot) \biggr\|_2,
	\end{align*}
	which yields 
	\begin{align}\label{eq::weightsigmaC}
		\|\widehat{w} - w^*\|_2 
		\leq \frac{1}{\sigma_{\mathcal{C}}} \cdot \biggl\|\mathbb{E}_{x \sim q} \frac{p(\cdot|x)}{\sum_{m=1}^M  \widehat{w}(m) p(m|x)} - p(\cdot) \biggr\|_2.
	\end{align}
	By Assumption \ref{ass:labelshift} and Bayes Formula, the entries $c_{yj}$ in the matrix $\mathcal{C}$ can be written as 
	\begin{align*}
		c_{yj} &= \int_{\mathcal{X}} \frac{p(j) p(y) p(x|j) p(x|y)}{\sum_{m=1}^M  \widehat{w}(m) p(m) p(x|m)} \, dx =  \int_{\mathcal{X}} \frac{p(x) p(x) p(j|x) p(y|x)}{\sum_{m=1}^M  \widehat{w}(m) p(x) p(m|x)} \, dx
		\nonumber\\
		&= \int_{\mathcal{X}} \frac{p(j|x)p(y|x)}{\sum_{m=1}^M  \widehat{w}(m) p(m|x)} p(x)\, dx = \mathbb{E}_{x\sim p} \frac{p(j|x)p(y|x)}{\sum_{m=1}^M  \widehat{w}(m) p(m|x)}.
	\end{align*}
	By Lemma \ref{lem::errorPY}, we have $\widehat{p}(m) - p(m) \geq -c_p \sqrt{{\log n_p}/{n_p}}$
	with probability at least $1-1/n_p$, where $c_p$ is the constant depending on the probability distribution $P$. 
	Therefore, for sufficiently large $n_p$ such that $n_p/\log n_p \geq 4c_p^2 p_{\min}^{-2}$,
	where $p_{\min}:=\bigwedge_{m\in [M]} p(m)$,
	we have $
	\widehat{p}(m) \geq p(m)/2$.
	Since $\sum_{m\in [M]} \widehat{w}(m) \widehat{p}(m) = 1$, we have $\widehat{w}(m) \widehat{p}(m) \leq 1$ for any $m\in [M]$ and thus $\widehat{w}(m) \leq 1/\widehat{p}(m) \leq 2/p(m) \leq 2/p_{\min}$.
	Therefore, we get
	\begin{align*}
		\sum_{m=1}^M  \widehat{w}(m) p(m|x) \leq \sum_{m=1}^M  \widehat{w}_{\max} p(m|x) \leq \widehat{w}_{\max}\sum_{m=1}^M p(m|x) = \widehat{w}_{\max}\leq 2/p_{\min},
	\end{align*}
	where $\widehat{w}_{\max} := \bigvee_{m \in [M]} \widehat{w}(m)$. 
	By Assumption \ref{ass::regularity}, for any $v \in \mathbb{R}^M$, we have 
	\begin{align}\label{eq::CgeqSigmap}
		v^{\top} \mathcal{C} v &= \mathbb{E}_{x\sim p} \frac{(v^{\top} p(\cdot|x))^2}{\sum_{m=1}^M  \widehat{w}(m) p(m|x)} \geq (p_{\min}/2) \mathbb{E}_{x\sim p} (v^{\top} p(\cdot|x))^2
		\nonumber\\
		&= (p_{\min}/2)  v^{\top} \mathbb{E}_{x\sim p}( p(\cdot|x) p(\cdot|x)^{\top})v = v^{\top} ((p_{\min}/2) \Sigma_p )v,
	\end{align}
	where $\Sigma_p := \mathbb{E}_{x\sim p}(p(\cdot|x) p(\cdot|x)^{\top})$. Since $\Sigma_p$ is a positive semi-definite matrix, $\mathcal{C}$ is also positive semi-definite. \eqref{eq::CgeqSigmap} implies that $\mathcal{C} \geq \frac{1}{2} p_{\min}\Sigma_p$ and thus by Lemma \ref{lem::comparematrix}, we obtain $
	\sigma_{\mathcal{C}} \geq (p_{\min}/2) \sigma_p$.
	This together with \eqref{eq::weightsigmaC} yields that
	\begin{align*}
		\|\widehat{w} - w^*\|_2 
		& \leq \frac{2}{\bigwedge_{m\in [M]} p(m)} \cdot \frac{1}{\sigma_p} \cdot \biggl\| \mathbb{E}_{x \sim q} \frac{p(\cdot|x)}{\sum_{m=1}^M  \widehat{w}(m) p(m|x)} - p(\cdot) \biggr\|_2
		\\
		&\lesssim \sigma_p^{-1} \biggl\| \mathbb{E}_{x \sim q} \frac{p(\cdot|x)}{\sum_{m=1}^M  \widehat{w}(m) p(m|x)} - p(\cdot) \biggr\|_2
	\end{align*}
	holds with probability at least $1-1/n_p$.
\end{proof}

Before we prove Proposition \ref{prop::decompweighterror}, we still need the following lemma, which shows that the $L_2$-distance between $\widehat{p}(y|x)$ and $p(y|x)$ can be upper bounded by the excess CE risk of $\widehat{p}(y|x)$ in the source domain.
\begin{lemma}\label{lem::equilvalent}
	Let $\widehat{p}(y|x)$ be the estimator of $p(y|x)$. Then for any $y \in [M]$, we have 
	\begin{align*}
		\int_{\mathcal{X}} (\widehat{p}(y|x) - p(y|x))^2 p(x) \, dx 
		\leq \mathcal{R}_{L_{\mathrm{CE}},P}(\widehat{p}(\cdot|x)) - \mathcal{R}_{L_{\mathrm{CE}},P}^*.
	\end{align*}
\end{lemma}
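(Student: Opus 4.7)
Proof plan: The goal is to upper bound the squared $L_2$-distance between $\widehat{p}(y|x)$ and $p(y|x)$ (under $P_X$) by the excess cross-entropy risk. The key observation is that the excess CE risk is exactly the expected Kullback--Leibler divergence between the target conditional $p(\cdot|X)$ and the estimated conditional $\widehat{p}(\cdot|X)$; the rest is a clean application of Pinsker's inequality together with the fact that both vectors lie on the simplex $\Delta^{M-1}$.

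First I would rewrite the excess CE risk as a KL divergence. Using $\mathcal{R}_{L_{\mathrm{CE}},P}^* = -\mathbb{E}_{X\sim p}\sum_{m=1}^M p(m|X)\log p(m|X)$ and the definition of $\mathcal{R}_{L_{\mathrm{CE}},P}(\widehat{p}(\cdot|x))$, I obtain
\begin{align*}
\mathcal{R}_{L_{\mathrm{CE}},P}(\widehat{p}(\cdot|x)) - \mathcal{R}_{L_{\mathrm{CE}},P}^*
= \mathbb{E}_{X\sim p}\sum_{m=1}^M p(m|X)\log\frac{p(m|X)}{\widehat{p}(m|X)}
= \mathbb{E}_{X\sim p}\,\mathrm{KL}\bigl(p(\cdot|X)\,\|\,\widehat{p}(\cdot|X)\bigr).
\end{align*}

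Next, for each $x\in\mathcal{X}$, I would apply Pinsker's inequality, which gives $\|p(\cdot|x) - \widehat{p}(\cdot|x)\|_1^2 \leq 2\,\mathrm{KL}(p(\cdot|x)\,\|\,\widehat{p}(\cdot|x))$. Since both $p(\cdot|x)$ and $\widehat{p}(\cdot|x)$ sum to one, $\sum_{m=1}^M (p(m|x) - \widehat{p}(m|x)) = 0$, and splitting the index set into $A_x := \{m : p(m|x)\geq \widehat{p}(m|x)\}$ and its complement yields $\|p(\cdot|x) - \widehat{p}(\cdot|x)\|_1 = 2\sum_{m\in A_x}(p(m|x)-\widehat{p}(m|x))$. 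For any fixed $y\in[M]$, the single term $|p(y|x) - \widehat{p}(y|x)|$ is bounded by either $\sum_{m\in A_x}(p(m|x)-\widehat{p}(m|x))$ or $\sum_{m\in A_x^c}(\widehat{p}(m|x)-p(m|x))$, both of which equal half the $\ell_1$-norm. Thus $|p(y|x) - \widehat{p}(y|x)| \leq \tfrac{1}{2}\|p(\cdot|x) - \widehat{p}(\cdot|x)\|_1$.

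Combining these two inequalities gives the pointwise bound
\begin{align*}
(p(y|x) - \widehat{p}(y|x))^2 \leq \tfrac{1}{4}\|p(\cdot|x) - \widehat{p}(\cdot|x)\|_1^2 \leq \tfrac{1}{2}\,\mathrm{KL}\bigl(p(\cdot|x)\,\|\,\widehat{p}(\cdot|x)\bigr).
\end{align*}
Integrating against $p(x)\,dx$ and invoking the identity from the first step yields
\begin{align*}
\int_{\mathcal{X}} (\widehat{p}(y|x) - p(y|x))^2 p(x)\,dx \leq \tfrac{1}{2}\bigl(\mathcal{R}_{L_{\mathrm{CE}},P}(\widehat{p}(\cdot|x)) - \mathcal{R}_{L_{\mathrm{CE}},P}^*\bigr),
\end{align*}
which is even stronger than the claimed inequality. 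There is no real obstacle in this proof; the only subtlety is the elementary but non-trivial step of reducing a single coordinate $|p(y|x) - \widehat{p}(y|x)|$ to the total-variation distance, which relies on the simplex constraint.
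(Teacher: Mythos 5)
Your proof is correct, and it even yields the stronger bound with an extra factor of $\tfrac12$ on the right-hand side. It does, however, take a different route from the paper. The paper also starts from the identity that the excess CE risk equals $\mathbb{E}_{x\sim p}\sum_{j} p(j|x)\log\bigl(p(j|x)/\widehat p(j|x)\bigr)$, but then proves the pointwise inequality $\sum_{j} p(j|x)\log\bigl(p(j|x)/\widehat p(j|x)\bigr) \geq (\widehat p(y|x)-p(y|x))^2$ directly by a constrained-optimization argument: for fixed $x$ and $\ell$ it minimizes $h_\ell(u) := -\sum_j p(j|x)\log(u_j/p(j|x)) - |u_\ell - p(\ell|x)|^2$ over the simplex via Lagrange multipliers and checks that the minimum is $0$, attained at $u = p(\cdot|x)$. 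You instead invoke Pinsker's inequality, $\|p(\cdot|x)-\widehat p(\cdot|x)\|_1^2 \leq 2\,\mathrm{KL}\bigl(p(\cdot|x)\,\|\,\widehat p(\cdot|x)\bigr)$, and then reduce the single coordinate to the $\ell_1$-norm using $\sum_m\bigl(p(m|x)-\widehat p(m|x)\bigr)=0$, which gives $|p(y|x)-\widehat p(y|x)| \leq \tfrac12\|p(\cdot|x)-\widehat p(\cdot|x)\|_1$; the edge case $\widehat p(m|x)=0$ with $p(m|x)>0$ only makes the KL infinite, so the bound holds trivially. Your argument is shorter, leans on a classical inequality rather than a bespoke calculus computation, and produces the sharper constant $\tfrac12$; the paper's argument is self-contained and avoids citing Pinsker, at the cost of a longer stationarity analysis. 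Either proof suffices for the way the lemma is used downstream, since only the order of the bound matters there.
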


\begin{proof}[Proof of Lemma \ref{lem::equilvalent}]
	For any $m \in [M]$, there holds
	\begin{align} \label{eq::EP}
		& \mathcal{R}_{L_{\mathrm{CE}},P}(\widehat{p}(\cdot|x)) - \mathcal{R}_{L_{\mathrm{CE}},P}^* - \mathbb{E}_{x \sim p} |\widehat{p}(m|x) - p(m|x)|^2 
		\nonumber\\
		& = \mathbb{E}_{x \sim p} \Big(\sum_{j=1}^M - p(j|x) \log \frac{\widehat{p}(j|x)}{p(j|x)} - |\widehat{p}(m|x) - p(m|x)|^2\Big).
	\end{align}
	For a given $\ell \in [M]$ and for any vector $u := (u_1, \ldots, u_M) \in (0, 1)^M$, we define 
	\begin{align*}
		h_{\ell}(u) := - \sum_{j=1}^M  p(j|x) \log \frac{u_j}{p(j|x)} - |u_{\ell} - p(\ell|x)|^2.
	\end{align*} 
	To find the minimum of $h_{\ell}(u)$ under the constraints $\sum_{m=1}^M u_m = 1$, we consider the Lagrange function 
	$J_{\ell}(u) := h_{\ell}(u) + \alpha \bigl( \sum_{j=1}^M u_j - 1 \bigr)$
	with the multiplier $\alpha > 0$.
	Taking the derivative w.r.t.~$u_j$, $j = 1, \ldots, M$, and setting them to be zero, we have
	\begin{align}
		\frac{\partial J_{\ell}(u)}{\partial u_{\ell}} 
		& = - \frac{p(\ell|x)}{u_{\ell}} - 2 (u_{\ell} - p(\ell|x)) + \alpha = 0, 
		\label{eq::deruk}
		\\
		\frac{\partial J_{\ell}(u)}{\partial u_j} 
		& = - \frac{p(j|x)}{u_j} + \alpha = 0,
		\qquad 
		j \neq \ell.
		\label{eq::deruj}
	\end{align}
	Since $\sum_{j=1}^M p(j|x) = 1$ and $\sum_{j=1}^M u_j = 1$, \eqref{eq::deruj} yields $
	1 - p(\ell|x) 
	= \sum_{j \neq \ell} p(j|x)
	= \sum_{j \neq \ell} \alpha u_j 
	= \alpha (1 - u_{\ell}).
	$
	Therefore, we have 
	\begin{align}\label{eq::alphatemp}
		\alpha = (1 - p(\ell|x)) / (1 - u_{\ell}).
	\end{align}
	Plugging this into \eqref{eq::deruk}, we get
	$
	- p(\ell|x)/u_{\ell} - 2 (u_{\ell} - p(\ell|x)) + (1 - p(\ell|x))/(1 - u_{\ell}) = 0.
	$
	This implies that
	\begin{align}\label{eq::Aequ}
		\frac{(u_{\ell} - p(\ell|x))(1 - 2 u_{\ell} (1 - u_{\ell}))}{u_{\ell} (1 - u_{\ell})} 
		= 0
	\end{align}
	and thus $(u_{\ell} - p(\ell|x))(1 - 2 u_{\ell} (1 - u_{\ell})) = 0$.
	Since $u_{\ell} \in (0,1)$, we have 
	$u_{\ell} (1 - u_{\ell}) \leq 1/4$, which implies
	$2 u_{\ell} (1 - u_{\ell}) \leq 1/2 < 1$ and thus $1 - 2 u_{\ell} (1 - u_{\ell}) > 0$.
	Therefore, the solution of \eqref{eq::Aequ} is $u_{\ell} = p(\ell|x)$. This together with \eqref{eq::alphatemp} yields $\alpha = 1$. Then \eqref{eq::deruj} implies that $u_j = p(j|x)$ for $j \neq \ell$.
	Consequently, $u = (p(1|x), \ldots, p(M|x))$ is the minimizer of $h_{\ell}(u)$ for any $\ell \in [M]$.
	As a result, we have $h_m(u) \geq h_m(p(\cdot|x)) = 0$ for any $u \in (0,1)^M $ satisfying $\sum_{j=1}^M u_j = 1$. 
	Taking $u := \widehat{p}(\cdot|x)$, we get 
	\begin{align*}
		h_m(\widehat{p}(\cdot|x)) - h_m(p(\cdot|x)) 
		= \sum_{j=1}^M  - p(j|x) \log \frac{\widehat{p}(j|x)}{p(j|x)} - |\widehat{p}(m|x) - p(m|x)|^2 
		\geq 0. 
	\end{align*}
	This together with \eqref{eq::EP} yields the assertion.
\end{proof}

Now, with the aid of Lemmas \ref{lem::errorPY}, \ref{lem::integralerror}, \ref{lem::weightsigma} and \ref{lem::equilvalent}, we are able to prove Proposition \ref{prop::decompweighterror}.

\begin{proof}[Proof of Proposition \ref{prop::decompweighterror}]
	Using $(a + b + c)^2 \leq 3 (a^2 + b^2 + c^2)$, Lemma \ref{lem::integralerror} with $w := \widehat{w}$ and Lemma \ref{lem::errorPY}, we get
	\begin{align}\label{eq::PYXPY}
		& \sum_{m=1}^M  \biggl| \mathbb{E}_{x \sim q} \frac{p(m|x)}{\sum_{j=1}^M  \widehat{w}(j) p(j|x)} - p(m) \biggr|^2
		\nonumber\\
		& \leq 3 \sum_{m=1}^M  \biggl| \mathbb{E}_{x \sim q} \frac{p(m|x)}{\sum_{j=1}^M  \widehat{w}(j) p(j|x)} - \frac{1}{n_q} \sum_{i=n_p+1}^{n_p+n_q} \frac{\widehat{p}(m|X_i)}{\sum_{j=1}^M  \widehat{w}(j) \widehat{p}(j|X_i)} \biggr|^2 
		\nonumber\\
		& \phantom{=} 
		+ 3 \sum_{m=1}^M  \biggl| \frac{1}{n_q} \sum_{i=n_p+1}^{n_p+n_q} \frac{\widehat{p}(m|X_i)}{\sum_{j=1}^M  \widehat{w}(j) \widehat{p}(j|X_i)} - \widehat{p}(m) \biggr|^2 + 3 \sum_{m=1}^M  \biggl| \widehat{p}(m) - p(m) \biggr|^2
		\nonumber\\
		& \lesssim \frac{\log n_q}{n_q} + 
		\mathbb{E}_{x\sim p}\|p(\cdot|x)-\widehat{p}(\cdot|x)\|_2^2
		+ \sum_{m=1}^M  \biggl| \frac{1}{n_q} \sum_{i=n_p+1}^{n_p+n_q} \frac{\widehat{p}(m|X_i)}{\sum_{j=1}^M  \widehat{w}(j) \widehat{p}(j|X_i)} - \widehat{p}(m) \biggr|^2 + \frac{\log n_p}{n_p}
	\end{align}	
	with probability at least $1 - 1/n_p - 1/n_q$. Since $\widehat{w}$ is the minimizer of \eqref{eq::whatminimizer} and the inequality $(a + b + c)^2 \leq 3(a^2 + b^2 + c^2)$, we get
	\begin{align}\label{eq::hatPYXhatPY}
		& \sum_{m=1}^M  \biggl| \frac{1}{n_q} \sum_{i=n_p+1}^{n_p+n_q} \frac{\widehat{p}(m|X_i)}{\sum_{j=1}^M  \widehat{w}(j) \widehat{p}(j|X_i)} - \widehat{p}(m) \biggr|^2
		\leq \sum_{m=1}^M  \biggl| \frac{1}{n_q} \sum_{i=n_p+1}^{n_p+n_q} \frac{\widehat{p}(m|X_i)}{\sum_{j=1}^M  w^*(j) \widehat{p}(j|X_i)} - \widehat{p}(m) \biggr|^2 
		\nonumber\\
		& \leq 3 \sum_{m=1}^M  \biggl| \frac{1}{n_q} \sum_{i=n_p+1}^{n_p+n_q} \frac{\widehat{p}(m|X_i)}{\sum_{j=1}^M  w^*(j) \widehat{p}(j|X_i)} - \mathbb{E}_{x \sim q} \frac{p(m|x)}{\sum_{j=1}^M  w^*(j) p(j|x)} \biggr|^2 
		\nonumber\\
		& \phantom{=} 
		+ 3 \sum_{m=1}^M  \biggl| \mathbb{E}_{x \sim q} \frac{p(m|x)}{\sum_{j=1}^M  w^*(j) p(j|x)} - p(m) \biggr|^2 + 3 \sum_{m=1}^M  \bigl| p(m) - \widehat{p}(m) \bigr|^2
		\nonumber\\
		& \lesssim \log n_q/n_q + \mathbb{E}_{x \sim p} \|p(\cdot |x) - \widehat{p}(\cdot |x)\|_2^2 + (\log n_p)/n_p,
	\end{align}
	where the last inequality follows from Lemma \ref{lem::integralerror} 
	with $w := w^*$, Theorem \ref{lem::hold} and Lemma \ref{lem::errorPY}. Combining \eqref{eq::PYXPY} and \eqref{eq::hatPYXhatPY}, we obtain
	\begin{align*}
		\sum_{m=1}^M  \bigg|\mathbb{E}_{x \sim q} \frac{p(m|x)}{\sum_{j=1}^M  \widehat{w}(j) p(j|x)} - p(m)\bigg|^2 
		\lesssim (\log n_q)/n_q + \mathbb{E}_{x \sim p} \|p(\cdot |x) - \widehat{p}(\cdot |x)\|_2^2 + \log n_p / n_p
	\end{align*}
	with probability at least $1 - 1/n_p - 1/n_q$. This together with Lemma \ref{lem::weightsigma} and \ref{lem::equilvalent} yields that
	\begin{align*}
		\|\widehat{w} - w^*\|_2^2 
		& \lesssim \biggl\| \mathbb{E}_{x \sim q} \frac{p(\cdot|x)}{\sum_{j=1}^M  \widehat{w}(j) p(j|x)} - p(\cdot) \biggr\|_2^2
		\nonumber\\
		& \lesssim \mathbb{E}_{x \sim p} \|p(\cdot |x) - \widehat{p}(\cdot |x)\|_2^2 + \log n_q / n_q + \log n_p / n_p
		\nonumber\\
		& \lesssim \mathcal{R}_{L_{\mathrm{CE}},P}(\widehat{p}(y|x)) - \mathcal{R}_{L_{\mathrm{CE}},P}^* + \log n_q / n_q + \log n_p / n_p
	\end{align*}
	holds with probability at least $1 - 1/n_p - 1/n_q$. This finishes the proof. 
\end{proof}

\subsection{Proofs Related to Section \ref{sec::analysisQresult}} \label{sec::proofanalysisQ}

In order to prove Proposition \ref{prop::excesshattildediff}, we need the following lemma.

\begin{lemma}\label{lem::meanvalue}
	For any $a = (a_1, \ldots, a_M), z = (z_1, \ldots, z_M) \in \mathbb{R}^M$ with $a_m,z_m\in (0,1)$, $m \in [M]$ and $\sum_{m=1}^M a_m = \sum_{m=1}^M z_m = 1$, let $f(z) := - \sum_{m=1}^M a_m \log z_m$. Then for any $z' = (z'_1, \ldots, z'_M)$ with $z'_m\in(0,1)$, $m \in [M]$, and $\sum_{m=1}^M z'_m=1$, we have
	\begin{align*}
		|f(z) - f(z')| 
		\leq \sum_{m=1}^M \biggl( \frac{|a_m - z_m| \cdot |z'_m - z_m|}{z_m} + \frac{a_m (z'_m - z_m)^2}{z_m (z_m \wedge z'_m)} \biggr).
	\end{align*}
\end{lemma}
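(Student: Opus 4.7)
The plan is to start from the identity $f(z) - f(z') = \sum_{m=1}^M a_m \log(z'_m/z_m)$ and carry out a first-order Taylor expansion of each logarithm around $z_m$, keeping an explicit remainder. Writing $\log(z'_m/z_m) = \int_{z_m}^{z'_m} du/u$ and subtracting the linear approximation $(z'_m - z_m)/z_m$ gives
\begin{equation*}
\log(z'_m/z_m) = \frac{z'_m - z_m}{z_m} - R_m, \qquad R_m := \int_{z_m}^{z'_m} \frac{u - z_m}{u\, z_m}\, du.
\end{equation*}
For any $u$ lying between $z_m$ and $z'_m$ we have $|u - z_m| \leq |z'_m - z_m|$ and $u \geq z_m \wedge z'_m$, so the integrand is bounded in magnitude by $|z'_m - z_m|/(z_m(z_m \wedge z'_m))$; integrating over an interval of length $|z'_m - z_m|$ yields the pointwise bound $|R_m| \leq (z'_m - z_m)^2/(z_m (z_m \wedge z'_m))$.

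Next, I would substitute this expansion into $f(z) - f(z')$ and exploit the two normalization constraints $\sum_m z_m = \sum_m z'_m = 1$ to cancel the ``constant'' piece of the linear term. Concretely, adding and subtracting $z_m$ inside $a_m$ gives
\begin{equation*}
\sum_{m=1}^M a_m \frac{z'_m - z_m}{z_m}
= \sum_{m=1}^M (a_m - z_m) \frac{z'_m - z_m}{z_m} + \sum_{m=1}^M (z'_m - z_m)
= \sum_{m=1}^M (a_m - z_m) \frac{z'_m - z_m}{z_m},
\end{equation*}
the last sum vanishing by normalization. Combining the two displays yields the exact identity
\begin{equation*}
f(z) - f(z') = \sum_{m=1}^M (a_m - z_m) \frac{z'_m - z_m}{z_m} - \sum_{m=1}^M a_m R_m,
\end{equation*}
and applying the triangle inequality together with the remainder bound $|R_m| \leq (z'_m - z_m)^2/(z_m(z_m \wedge z'_m))$ produces exactly the stated inequality.

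There is no real obstacle here; the whole argument is a Taylor expansion followed by one cancellation. The only step requiring brief care is that the bound on $R_m$ holds regardless of the sign of $z'_m - z_m$, which is immediate once one notes that the integrand $(u - z_m)/(uz_m)$ has the same sign as $u - z_m$ on the interval of integration, so $|R_m|$ equals $\int |u - z_m|/(u z_m)\, du$ taken over $[z_m \wedge z'_m,\, z_m \vee z'_m]$, and the uniform bound then follows.
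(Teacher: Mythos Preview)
Your proof is correct and essentially identical to the paper's: the paper writes $f(z')-f(z)=\int_0^1 \nabla f(z+t(z'-z))^\top(z'-z)\,dt$, splits off the linear piece $\nabla f(z)^\top(z'-z)$, uses $\sum_m(z'_m-z_m)=0$ to reduce it to $\sum_m (a_m-z_m)(z'_m-z_m)/z_m$, and bounds the remainder $\int_0^1 \sum_m \frac{a_m t(z'_m-z_m)^2}{z_m(z_m+t(z'_m-z_m))}\,dt$ by $\sum_m \frac{a_m(z'_m-z_m)^2}{z_m(z_m\wedge z'_m)}$. Your coordinate-wise integral $\int_{z_m}^{z'_m} du/u$ is just the same computation after the change of variables $u=z_m+t(z'_m-z_m)$.
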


\begin{proof}[Proof of Lemma \ref{lem::meanvalue}]
	For any $z = (z_1, \ldots, z_M), z' = (z'_1, \ldots, z'_M)$ satisfying $z_m, z'_m\in (0,1)$, $m \in [M]$, and $\sum_{m=1}^M z_m = \sum_{m=1}^M z'_m = 1$, there holds
	\begin{align}\label{eq::twoterms}
		|f(z') - f(z)| 
		& = \biggl| \int_0^1 \bigl( \nabla f(z + t (z' - z)) \bigr)^{\top} (z' - z) \, dt \biggr|
		\nonumber\\
		& = \biggl| \int_0^1 \nabla f(z)^{\top} (z' - z) \, dt + \int_0^1 \bigl( \nabla f(z + t (z' - z)) - \nabla f(z)) \bigr)^{\top} (z' - z) \, dt \biggr|
		\nonumber\\
		& \leq \bigl| \nabla f(z)^{\top} (z' - z) \bigr| + \biggl| \int_0^1 \bigl( \nabla f(z + t (z' - z)) - \nabla f(z)) \bigr)^{\top} (z' - z) \, dt \biggr|.
	\end{align}
	Let us consider the first term in \eqref{eq::twoterms}. 
	By the definition of the function $f$, there holds
	\begin{align*}
		\nabla f(z) 
		= \nabla \biggl( - \sum_{m=1}^M a_m \log z_m \biggr)
		= \biggl( - \frac{a_1}{z_1}, \ldots, - \frac{a_M}{z_M} \biggr).
	\end{align*}
	Since $\sum_{m=1}^M z_m = \sum_{m=1}^M z'_m = 1$, we then have
	\begin{align}\label{eq::firstbound}
		\bigl| \nabla f(z)^{\top} (z' - z) \bigr| 
		& = \biggl| - \sum_{m=1}^M \frac{a_m}{z_m}(z'_m - z_m) \biggr| 
		= \biggl| \sum_{m=1}^M \biggl( 1 - \frac{a_m}{z_m} \biggr) (z'_m - z_m) \biggr|  
		\nonumber\\
		& \leq \sum_{m=1}^M \biggl| 1 - \frac{a_m}{z_m} \biggr| \cdot |z'_m - z_m| 
		= \sum_{m=1}^M \frac{|a_m - z_m| \cdot |z'_m - z_m|}{z_m}.
	\end{align}
	For the second term in \eqref{eq::twoterms}, there holds
	\begin{align}\label{eq::secondbound}
		& \biggl| \int_0^1 \bigl( \nabla f(z + t (z' - z)) - \nabla f(z) \bigr)^{\top} (z' - z) \, dt \biggr| 
		\nonumber\\
		& = \biggl| \int_0^1 \sum_{m=1}^M \biggl( - \frac{a_m}{z_m + t (z'_m - z_m)} + \frac{a_m}{z_m} \biggr) (z'_m - z_m) \, dt \biggr|
		\nonumber\\
		& = \biggl| \int_0^1 \sum_{m=1}^M \frac{a_m t (z'_m - z_m)^2}{z_m (z_m + t (z'_m - z_m))} \, dt \biggr|
		\leq \sum_{m=1}^M \frac{a_m (z'_m - z_m)^2}{z_m (z_m \wedge z'_m)}.
	\end{align}
	Combining \eqref{eq::twoterms}, \eqref{eq::firstbound}, and \eqref{eq::secondbound}, we obtain the assertion.
\end{proof}

\begin{proof}[Proof of Proposition \ref{prop::excesshattildediff}]
	By Assumption \ref{ass:labelshift} and \eqref{eq::qxoverpx}, we have $q(x) / p(x) 
	\leq 1 / p_{\min}(y)$. For $m\in [M]$ with $q(m)=0$, there holds $q(m|x) = 0$ for any $x\in \mathcal{X}$. This together with the definition of $\mathcal{R}_{L_{\mathrm{CE}},Q}$ implies
	\begin{align}\label{eq::truehattile}
		& \bigl| \mathcal{R}_{L_{\mathrm{CE}},Q}(\widetilde{q}(y|x)) - \mathcal{R}_{L_{\mathrm{CE}},Q}(\widehat{q}(y|x)) \bigr| 
		= \biggl| \mathbb{E}_{x \sim q} \sum_{m:q(m)>0} - q(m|x) \log \frac{\widehat{q}(m|x)}{\widetilde{q}(m|x)} \biggr|
		\nonumber\\
		&
		\leq \mathbb{E}_{x \sim q} \biggl| \sum_{m:q(m)>0} - q(m|x) \log \frac{\widehat{q}(m|x)}{\widetilde{q}(m|x)} \biggr|
		\lesssim \mathbb{E}_{x \sim p} \biggl| \sum_{m:q(m)>0} - q(m|x) \log \frac{\widehat{q}(m|x)}{\widetilde{q}(m|x)} \biggr|.
	\end{align}
	Applying Lemma \ref{lem::meanvalue} with $a := q(y|x)$, $z' := \widehat{q}(y|x)$ and $z := \widetilde{q}(y|x)$, we get
	\begin{align*}
		& \mathbb{E}_{x \sim p} \biggl| - \sum_{m:q(m)>0} q(m|x) \log \frac{\widehat{q}(m|x)}{\widetilde{q}(m|x)} \biggr|
		\nonumber\\
		& \leq \mathbb{E}_{x \sim p} \bigg(
		\sum_{m:q(m)>0} \frac{|q(m|x) - \widetilde{q}(m|x)|\cdot |\widetilde{q}(m|x) - \widehat{q}(m|x)|}{\widetilde{q}(m|x)}
		+ \frac{q(m|x)|\widetilde{q}(m|x) - \widehat{q}(m|x)|^2}{\widetilde{q}(m|x)(\widetilde{q}(m|x) \wedge \widehat{q}(m|x))}\bigg).
	\end{align*}
	This together with \eqref{eq::truehattile} yields
	\begin{align}\label{eq::ctildehat}
		& \bigl| \mathcal{R}_{L_{\mathrm{CE}},Q}(\widetilde{q}(y|x)) -  \mathcal{R}_{L_{\mathrm{CE}},Q}(\widehat{q}(y|x)) \bigr|
		\nonumber\\
		& \lesssim \mathbb{E}_{x \sim p} \bigg(\sum_{m:q(m)>0} \frac{|q(m|x) - \widetilde{q}(m|x)| \cdot |\widetilde{q}(m|x) - \widehat{q}(m|x)|}{\widetilde{q}(m|x)}
		+ \frac{q(m|x)|\widetilde{q}(m|x) - \widehat{q}(m|x)|^2}{\widetilde{q}(m|x)(\widetilde{q}(m|x)\wedge \widehat{q}(m|x))}\bigg).
	\end{align}
	Using the definition of $\widehat{q}(y|x)$ and $\widetilde{q}(y|x)$ as in \eqref{eq::hatetaQ} and \eqref{eq::tildeetaQ} and Assumption \ref{ass::regularity}, we then have 
	\begin{align}\label{eq::etaQhattildediff}
		\bigl| \widehat{q}(m|x) & - \widetilde{q}(m|x) \bigr| 
		= \biggl| \frac{\widehat{w}(m) \widehat{p}(m|x)}{\sum_{j=1}^M \widehat{w}(j) \widehat{p}(j|x)} -  \frac{w^*(m) \widehat{p}(m|x)}{\sum_{j=1}^M w^*(j) \widehat{p}(j|x)} \biggr|
		\nonumber\\
		& = \frac{|\sum_{j=1}^M \widehat{p}(j|x) (w^*(j) \widehat{w}(m) - w^*(m) \widehat{w}(j))| \cdot \widehat{p}(m|x)}{\bigl( \sum_{j=1}^M \widehat{w}(j) \widehat{p}(j|x)\bigr) \bigl( \sum_{j=1}^M w^*(j) \widehat{p}(j|x) \bigr)}
		\nonumber\\
		& \lesssim \sum_{j=1}^M (|w^*(j)\widehat{w}(m) - w^*(m) w^*(j)| + |w^*(m) w^*(j)-w^*(m) \widehat{w}(j)|) 
		\cdot \widehat{p}(m|x)
		\nonumber\\
		&\lesssim \widehat{p}(m|x) \|w^* - \widehat{w}\|_1.
	\end{align}
	In addition, we have 
	\begin{align}\label{eq::lowerhatetaQk}
		\widehat{q}(m|x)
		& = \frac{\widehat{w}(m) \widehat{p}(m|x)}{\sum_{j=1}^M \widehat{w}(j) \widehat{p}(j|x)}
		\geq \frac{\widehat{w}(m) \widehat{p}(m|x)}{\widehat{w}_{\max}}
		\gtrsim \widehat{p}(m|x),
	\end{align}
	where $\widehat{w}_{\max}:=\bigvee_{j=1}^M \widehat{w}(j)$.
	Similarly, for $m\in [M]$ with $q(m) > 0$, we have
	\begin{align}\label{eq::tildeetaQlower}
		\widetilde{q}(m|x)
		= \frac{w^*(m) \widehat{p}(m|x)}{\sum_{j=1}^M w^*(j) \widehat{p}(j|x)} 
		\geq \frac{w^*(m) \widehat{p}(m|x)}{w_{\max}^*} \gtrsim \widehat{p}(m|x),
	\end{align}
	where $w_{\max}^*:=\bigvee_{j=1}^M w^*(j)$. Plugging \eqref{eq::lowerhatetaQk} and \eqref{eq::tildeetaQlower} into \eqref{eq::ctildehat}, we obtain
	\begin{align}\label{eq::riskQtildehatdiff}
		& \bigl| \mathcal{R}_{L_{\mathrm{CE}},Q}(\widetilde{q}(y|x))- \mathcal{R}_{L_{\mathrm{CE}},Q}(\widehat{q}(y|x)) \bigr| 
		\nonumber\\
		& \lesssim  \mathbb{E}_{x \sim p} \bigg(\sum_{m:q(m)> 0} \frac{|q(m|x) - \widetilde{q}(m|x)| \cdot |\widetilde{q}(m|x) - \widehat{q}(m|x)|}{\widetilde{q}(m|x)} 
		+ \frac{q(m|x)|\widetilde{q}(m|x) - \widehat{q}(m|x)|^2}{\widetilde{q}(m|x)(\widetilde{q}(m|x) \wedge \widehat{q}(m|x))}\bigg)
		\nonumber\\
		&\lesssim \mathbb{E}_{x \sim p} \bigg(\sum_{m:q(m)> 0} \frac{|q(m|x) - \widetilde{q}(m|x)| \cdot |\widetilde{q}(m|x) - \widehat{q}(m|x)|}{ \widehat{p}(m|x)} 
		+ \frac{|\widetilde{q}(m|x) - \widehat{q}(m|x)|^2}{\widehat{p}(m|x)^2}\bigg).
	\end{align}
	Using the triangle inequality, Assumption \ref{ass::regularity} and \ref{ass::marginal} \textit{(iii)}, we get
	\begin{align}\label{eq::tildeetaQdiff}
		|\widetilde{q}(m|x) & - q(m|x)| 
		= \biggl| \frac{w^*(m) \widehat{p}(m|x)}{\sum_{j=1}^M w^*(j) \widehat{p}(j|x)} - \frac{w^*(m) p(m|x)}{\sum_{j=1}^M w^*(j)p(j|x)} \biggr|
		\nonumber\\
		& = \frac{w^*(m) |\sum_{j=1}^M w^*(j) ( p(j|x) \widehat{p}(m|x) - p(m|x) \widehat{p}(j|x) )|}{\bigl( \sum_{j=1}^M w^*(j) \widehat{p}(j|x) \bigr) \cdot \bigl( \sum_{j=1}^M w^*(j) p(j|x) \bigr)}
		\nonumber\\
		& \lesssim w^*(m) 
		\sum_{j=1}^M w^*(j) ( p(j|x) |\widehat{p}(m|x) - p(m|x)| + p(m|x) |p(j|x) - \widehat{p}(j|x)|)
		\nonumber\\
		& \lesssim \|\widehat{p}(\cdot|x) - p(\cdot|x)\|_1.
	\end{align}
	Combining \eqref{eq::riskQtildehatdiff}, \eqref{eq::etaQhattildediff} and \eqref{eq::tildeetaQdiff}, and using the inequality $ab \leq a^2 + b^2$ for $a,b>0$, and $\|v\|_1^2 \leq M \|v\|_2^2$ for the $M$-dimensional vector $v$, we obtain
	\begin{align*}
		& \bigl| \mathcal{R}_{L_{\mathrm{CE}},Q}(\widetilde{q}(y|x)) - \mathcal{R}_{L_{\mathrm{CE}},Q}(\widehat{q}(y|x)) \bigr|
		\\
		& \lesssim \mathbb{E}_{x \sim p} \bigg(\sum_{m:q(m)\neq 0} \|\widehat{p}(\cdot|x) - p(\cdot|x)\|_1 \cdot \|w^* - \widehat{w}\|_1
		+ \|w^* - \widehat{w}\|_1^2 \bigg)
		\\
		& \lesssim \mathbb{E}_{x \sim p} (\|\widehat{p}(\cdot|x) - p(\cdot|x)\|_1 \cdot \|w^* - \widehat{w}\|_1)
		+ \|w^* - \widehat{w}\|_1^2
		\\
		& \leq \mathbb{E}_{x \sim p} (\|\widehat{p}(\cdot|x) - p(\cdot|x)\|_1^2 + \|w^* - \widehat{w}\|_1^2)
		+ \|w^* - \widehat{w}\|_1^2 
		\\
		& \lesssim \mathbb{E}_{x \sim p} (\|\widehat{p}(\cdot|x) - p(\cdot|x)\|_2^2 + \|w^* - \widehat{w}\|^2_2)
		+ \|w^* - \widehat{w}\|_2^2
		\\
		& \lesssim \mathbb{E}_{x \sim p}\|\widehat{p}(\cdot|x) - p(\cdot|x)\|_2^2 + \|w^* - \widehat{w}\|^2_2
		\\
		& \lesssim \mathcal{R}_{L_{\mathrm{CE}},P}(\widehat{p}(y|x)) - \mathcal{R}_{L_{\mathrm{CE}},P}^* + \|w^* - \widehat{w}\|_2^2,
	\end{align*}
	where the last inequality follows from Lemma \ref{lem::equilvalent}, respectively.
	This finishes the proof.
\end{proof}

To prove Proposition \ref{prop::excesstildetruediff}, we need the following lemma.
\begin{lemma}\label{lem::etatransforminq}
	For any $m \in [M]$, let $a_m, b_m>0$, $c_m, z_m \in (0,1)$ satisfying $\sum_{m=1}^M c_m = 1$ and $\sum_{m=1}^M z_m = 1$. 
	Furthermore, let $a_{\max}:=\vee_{j=1}^M a_j$.
	Then we have
	\begin{align*}
		\sum_{m=1}^M \frac{a_m c_m}{a_{\max}} \log \frac{c_m / (\sum_{j=1}^M a_j c_j)}{z_m / (\sum_{j=1}^M a_j z_j)}
		\leq \sum_{m=1}^M \biggl( c_m \log \frac{c_m}{z_m} \biggr).
	\end{align*}
\end{lemma}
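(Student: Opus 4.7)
The plan is to recognize both sides of the inequality as (scaled) Kullback--Leibler divergences and then derive the bound from the data processing inequality together with the nonnegativity of binary relative entropy.

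First I would introduce the shorthands $\lambda_m := a_m/a_{\max} \in (0,1]$, $S_c := \sum_j a_j c_j$, $S_z := \sum_j a_j z_j$, and $\alpha := S_c/a_{\max}$, $\beta := S_z/a_{\max}$. Splitting the logarithm on the left-hand side and using the identity $\sum_m \lambda_m c_m = \alpha$ immediately yields
\begin{align*}
\mathrm{LHS} = \sum_{m=1}^M \lambda_m c_m \log(c_m/z_m) + \alpha \log(S_z/S_c).
\end{align*}
The right-hand side is just $\mathrm{KL}(c\|z)$ since both $c$ and $z$ are probability vectors on $[M]$.

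Next I would introduce the Markov kernel $K$ from $[M]$ to $[M]\cup\{\ast\}$ that sends $m\mapsto m$ with probability $\lambda_m$ and $m\mapsto\ast$ with probability $1-\lambda_m$. Because $K$ does not depend on the input distribution, the data processing inequality for KL divergence gives $\mathrm{KL}(Kc\|Kz) \leq \mathrm{KL}(c\|z)$. A direct computation (using $Kc(\{m\})=\lambda_m c_m$ and $Kc(\{\ast\}) = 1-\alpha$, and similarly for $Kz$) yields
\begin{align*}
\mathrm{KL}(Kc\|Kz) = \sum_{m=1}^M \lambda_m c_m \log(c_m/z_m) + (1-\alpha)\log\frac{1-\alpha}{1-\beta}.
\end{align*}

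To close the argument I would observe that the difference between $\mathrm{KL}(Kc\|Kz)$ and the rewritten LHS is precisely the binary relative entropy:
\begin{align*}
\mathrm{KL}(Kc\|Kz) - \mathrm{LHS} = \alpha\log(\alpha/\beta) + (1-\alpha)\log\frac{1-\alpha}{1-\beta} = \mathrm{KL}(\mathrm{Ber}(\alpha)\|\mathrm{Ber}(\beta)) \geq 0.
\end{align*}
Chaining this with the data processing bound gives $\mathrm{LHS} \leq \mathrm{KL}(Kc\|Kz) \leq \mathrm{RHS}$. The only point deserving a remark is the degenerate case $\alpha=1$ (equivalently $a_j \equiv a_{\max}$ on $\mathrm{supp}(c)$), which is handled by the convention $0\log 0 = 0$; in the fully degenerate case $a_j \equiv a_{\max}$ for all $j$, the asserted inequality is a trivial equality. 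There is no substantive obstacle: the only insight is the passage to the Markov kernel $K$, after which the rest is bookkeeping.
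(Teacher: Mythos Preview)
Your proof is correct and follows a genuinely different route from the paper. The paper proceeds by a direct Lagrangian argument: it defines
\[
h(z)=\sum_{m}\Bigl(\tfrac{a_m c_m}{a_{\max}}\log\tfrac{z_m/\sum_j a_j z_j}{c_m/\sum_j a_j c_j}+c_m\log\tfrac{c_m}{z_m}\Bigr),
\]
introduces a multiplier for the constraint $\sum_m z_m=1$, and shows by solving the first-order conditions that the unique stationary point is $z=c$, where $h(c)=0$. Your argument instead recasts the left-hand side via the ``thinning'' Markov kernel $K$ and obtains the inequality as a composition of two standard facts: the data processing inequality $\mathrm{KL}(Kc\|Kz)\le\mathrm{KL}(c\|z)$ and the nonnegativity of the binary relative entropy $\mathrm{KL}(\mathrm{Ber}(\alpha)\|\mathrm{Ber}(\beta))\ge0$. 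This information-theoretic route is more conceptual and arguably cleaner on the rigor side (the paper's Lagrangian computation identifies the stationary point but does not explicitly argue it is a global minimum or treat boundary behavior), while the paper's approach is fully self-contained and does not invoke the data processing inequality. Both buy the same conclusion; yours additionally explains \emph{why} the inequality holds as an instance of information contraction.
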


\begin{proof}[Proof of Lemma \ref{lem::etatransforminq}]
	Let the function $h : (0, 1)^M \to \mathbb{R}$ be defined by
	\begin{align*}
		h(z) 
		:= h(z_1, \ldots, z_M)
		:= \sum_{m=1}^M \biggl( \frac{a_m c_m}{a_{\max}} \log \frac{z_m/(\sum_{j=1}^M a_j z_j)}{c_m / (\sum_{j=1}^M a_j c_j)}
		+ c_m \log \frac{c_m}{z_m} \biggr) + \lambda \biggl( \sum_{m=1}^M z_m - 1 \biggr),
	\end{align*}
	where $\lambda > 0$ is the Lagrange multiplier. Then it suffices to prove that $h(z) \geq h(c) = 0$ for any $z$ satisfying $\sum_{j=1}^M z_j=1$ and $0< z_j < 1$, $j \in [M]$. Taking the partial derivative of $h(z)$ w.r.t.~$z_m$ and setting it to be zero, we have
	\begin{align}\label{eq::partial}
		\frac{\partial h(z)}{\partial z_m} = \frac{a_m c_m}{a_{\max}} \cdot \frac{1}{z_m} - \sum_{\ell=1}^M \frac{a_{\ell} c_{\ell}}{a_{\max}} \cdot \frac{a_{m}}{\sum_{j=1}^M a_j z_j} - \frac{c_m}{z_m} + \lambda = 0, 
		\qquad 
		m \in [M].
	\end{align}
	This implies 
	\begin{align*}
		\biggl( \frac{a_m}{a_{\max}} - 1 \biggr) c_m - \frac{a_m z_m}{a_{\max}} \cdot \frac{\sum_{j=1}^M a_j c_j}{\sum_{j=1}^M a_j z_j} + \lambda z_m = 0, 
		\qquad 
		m \in [M].
	\end{align*}
	Taking the summation over $m$ from $1$ to $M$ and using $\sum_{m=1}^M z_m = \sum_{m=1}^M c_m=1$, we get 
	\begin{align*}
		\sum_{m=1}^M \frac{a_m c_m}{a_{\max}} - 1 -  \frac{\sum_{j=1}^M a_j c_j}{a_{\max}} + \lambda = 0,
	\end{align*}
	which implies $\lambda = 1$. 
	This together with \eqref{eq::partial} yields 
	\begin{align}\label{eq::sumzero}
		\biggl( 1 - \frac{a_m}{a_{\max}} \biggr) \cdot \biggl( 1 -\frac{c_m}{z_m} \biggr) + \frac{a_m}{a_{\max}} \cdot \biggl( 1 - \frac{\sum_{j=1}^M a_j c_j}{\sum_{j=1}^M a_j z_j} \biggr) = 0, 
		\qquad 
		m \in [M].
	\end{align}
	If $\sum_{j=1}^M a_j c_j \neq \sum_{j=1}^M a_j z_j$, then there must exist some $\ell \in [M]$ such that $c_{\ell} \neq z_{\ell}$. Since $\sum_{j=1}^M z_j = \sum_{j=1}^M c_j = 1$, there exist some $i,j\in [M]$ such that $c_i > z_i$ and $c_j < z_j$. Without loss of generality, we assume that $\sum_{j=1}^M a_j c_j > \sum_{j=1}^M a_j z_j$. Therefore, we have both $1- c_i/z_i < 0$ and $1 - \sum_{j=1}^M a_j c_j / \sum_{j=1}^M a_j z_j <0$. 
	Thus we have
	\begin{align*}
		\biggl( 1 - \frac{a_i}{a_{\max}} \biggr) \cdot \biggl( 1 -\frac{c_i}{z_i} \biggr) + \frac{a_i}{a_{\max}} \cdot \biggl( 1 - \frac{\sum_{j=1}^M a_j c_j}{\sum_{j=1}^M a_j z_j} \biggr) < 0,
	\end{align*}
	which contradicts with \eqref{eq::sumzero} for $m=i$. 
	Therefore, we must have $\sum_{j=1}^M a_j c_j = \sum_{j=1}^M a_j z_j$, which together with \eqref{eq::sumzero} implies $c_m = a_m$ for any $m \in [M]$. 
	Thus, $h(z)$ attains its minimum at the point $z = (c_1, \ldots, c_m)$ under the constraint $\sum_{m=1}^M z_m = 1$, i.e., $h(z) \geq h(c) = 0$ holds for any $z$ with $\sum_{j=1}^M z_j=1$ and $z_j \in (0,1)$ for any $j \in [M]$. 
	This finishes the proof. 
\end{proof}

\begin{proof}[Proof of Proposition \ref{prop::excesstildetruediff}]
	Using Assumption \ref{ass:labelshift} and \eqref{eq::qxoverpx}, we obtain that for any $x \in \mathcal{X}$, there holds $q(x)/p(x) \leq 1/p_{\min}(y)$. Consequently we have 
	\begin{align}\label{eq::usingqx/px}
		\mathcal{R}_{L_{\mathrm{CE}},Q}(\widetilde{q}(y|x)) - \mathcal{R}_{L_{\mathrm{CE}},Q}^* 
		& = \mathbb{E}_{x \sim q} \sum_{m=1}^M q(m|x)\log \frac{q(m|x)}{\widetilde{q}(m|x)}
		\nonumber\\
		& \lesssim \mathbb{E}_{x \sim p} \sum_{m=1}^M q(m|x) \log \frac{q(m|x)}{\widetilde{q}(m|x)}.
	\end{align}
	For those $m \in [M]$ with $q(m)=0$, there holds $q(m|x)=0$ for any $x\in \mathcal{X}$. Using the definition of $\widetilde{q}(y|x)$ in
	\eqref{eq::tildeetaQ}, and applying Lemma \ref{lem::etatransforminq} with $a_m := q(m)/p(m)$, $b_m := (1-q(m))/(1-p(m))$, $c_m:=p(m|x)$, and $z_m:= \widetilde{p}(m|x)$, we obtain
	\begin{align*}
		& \sum_{m=1}^M q(m|x) \log \frac{q(m|x)}{\widetilde{q}(m|x)} = \sum_{m:q(m)>0} q(m|x) \log \frac{q(m|x)}{\widetilde{q}(m|x)}
		\\
		& = \sum_{m:q(m)>0} \frac{(q(m) / p(m)) p(m|x)}{\sum_{j=1}^M(q(j) / p(j)) p(j|x)} 
		\cdot \log \frac{(q(m) / p(m)) p(m|x)/(\sum_{j=1}^M(q(j) / p(j)) p(j|x))}{((q(m) / p(m)) \widehat{p}(m|x)/\sum_{j=1}^M(q(j) / p(j)) \widehat{p}(j|x))}
		\\
		& \leq \frac{\bigvee_{m=1}^M q(m)/p(m)}{\bigwedge_{m:q(m)>0} q(m)/p(m)}  
		\cdot 
		\\
		& \phantom{=} \qquad 
		\cdot \sum_{m:q(m)>0} \frac{(q(m) / p(m)) p(m|x)}{\bigvee_{m=1}^Mq(m)/p(m)} 
		\log \frac{(q(m) / p(m)) p(m|x)/(\sum_{j=1}^M(q(j) / p(j)) p(j|x))}{((q(m) / p(m)) \widehat{p}(m|x)/\sum_{j=1}^M(q(j) / p(j)) \widehat{p}(j|x))}
		\\
		& \lesssim  \sum_{m:q(m)>0}  p(m|x) \log \frac{p(m|x)}{\widehat{p}(m|x)} \leq \sum_{m\in [M]}  p(m|x) \log \frac{p(m|x)}{\widehat{p}(m|x)}
		\\
		& = \mathcal{R}_{L_{\mathrm{CE}},P}(\widehat{p}(y|x))-\mathcal{R}_{L_{\mathrm{CE}},P}^*.
	\end{align*}
	This together with \eqref{eq::usingqx/px} yields 
	the desired assertion. 
\end{proof}

\subsection{Proofs Related to Section \ref{subsec::RatesTarget}}\label{sec::proofRate}

\begin{proof}[Proof of Theorem \ref{thm::rateQ}]
	Combining \eqref{eq::ExcessRiskQdecomp},  Propositions \ref{prop::excesshattildediff} and \ref{prop::excesstildetruediff}, we get
	\begin{align*}
		\mathcal{R}_{L_{\mathrm{CE}},Q}(\widehat{q}(y|x)) - \mathcal{R}_{L_{\mathrm{CE}},Q}^* \lesssim \mathcal{R}_{L_{\mathrm{CE}},P}(\widehat{p}(y|x)) - \mathcal{R}_{L_{\mathrm{CE}},P}^* + \|w^* - \widehat{w}\|_2^2.
	\end{align*}
	Using Proposition \ref{prop::decompweighterror} and Theorem \ref{thm::convergencerate}, we obtain that for any $\xi\in (0,1/2)$, there holds
	\begin{align*}
		\mathcal{R}_{L_{\mathrm{CE}},Q}(\widehat{q}(y|x)) - \mathcal{R}_{L_{\mathrm{CE}},Q}^* &\lesssim  \mathcal{R}_{L_{\mathrm{CE}},P}(\widehat{p}(y|x)) - \mathcal{R}_{L_{\mathrm{CE}},P}^* + \log n_q / n_q + \log n_p / n_p 
		\nonumber\\
		&\lesssim n_p^{-\frac{(1+\beta)\alpha}{(1+\beta) \alpha + d}+\xi} + \log n_q / n_q 
	\end{align*}
	with probability at least $1-1/n_p-1/n_q$. 
\end{proof}

\begin{proof}[Proof of Theorem \ref{thm::lower}]
	Note that the lower bound of the excess risk consists of two parts depending on $n_p$ and $n_q$, respectively. Thus in the following, we prove the excess risk is larger than the two parts, respectively. First, we prove that the excess risk is larger than first part related to $n_p$. 
	To this end, we construct a sequence of the probability distribution $P$ as in Theorem \ref{thm::lowerKLR} and then we construct the probability distribution $Q$. To satisfy the label shift assumption, for any $\sigma^j \in \{-1,1\}^\ell$, $j=0, \ldots, 2^\ell-1$, we let $Q^{\sigma^j} := P^{\sigma^j}$. For the sake of convenience, we write $P^j := P^{\sigma^j}$ and $Q^j := Q^{\sigma^j}$. Correspondingly, we write $p^j(y|x) := P^{\sigma^j}(Y=y|X=x)$ and $q^j(y|x) := Q^{\sigma^j}(Y=y|X=x)$.

	\textit{Verification of the Conditions in Proposition \ref{prop::lower}}.
	Let $L = 2^\ell-1$, and we define the full sample distribution by
	$\Pi_{j} := P^{j\otimes n_p} \otimes Q_X^{j\otimes n_q}$,
	$j = 0, \ldots, L$.
	Moreover, we define the semi-metric $\rho$ in in Proposition \ref{prop::lower} by
	\begin{align*}
		\rho(q^i(\cdot|x), q^j(\cdot|x)) 
		&:= \int_{\mathcal{X}} \bigg(q^i(1|x)\log \frac{q^i(1|x)}{q^j(1|x)} + q^i(-1|x)\log \frac{q^i(-1|x)}{q^j(-1|x)}\bigg) q(x) \, dx = \mathrm{KL}(Q^i, Q^j) . 
	\end{align*}
	Therefore, for any predictor $\widehat{q}(y|x)$, we have $\mathcal{R}_{L_{\mathrm{CE}},Q}(\widehat{q}(y|x)) - \mathcal{R}_{L_{\mathrm{CE}},Q}^* = \rho(q(\cdot|x),\widehat{q}(\cdot|x))$.
	Since $Q^j = P^j$, the first and second condition in Proposition \ref{prop::lower} can be verified in the same way as in  Theorem \ref{thm::lowerKLR}. Thus it suffices to verify the third condition in Proposition \ref{prop::lower}. 
	By the independence of samples, $Q_X^j = Q_X^0 = P_X$ and \eqref{eq::KLuppbound},  we have for any $j\in \{0,1,\ldots, L\}$,
	\begin{align*}
		\mathrm{KL}(\Pi_{j}, \Pi_{0}) 
		& = n_p \mathrm{KL}(P^j, P^0) 
		+ n_q \mathrm{KL}(Q_X^j, Q_X^0) = n_p \mathrm{KL}(P^j, P^0) \leq  C_3 n_p r^{(1+\beta\wedge 1)\alpha}
		\\
		& = C_3 c_r^{(1+\beta\wedge 1)\alpha+d} r^{-d} \leq C_3 c_r^{(1+\beta\wedge 1)\alpha+d} 4^d \ell 
		\leq 2(\log 2)^{-1} C_3 c_r^{(1+\beta\wedge 1)\alpha+d} 4^d \log L,
	\end{align*}
	where the constant $C_3$ is defined in \eqref{eq::KLuppbound}. By choosing a sufficient small $c_r$ such that $2(\log 2)^{-1} C_3$ $c_r^{(1+\beta\wedge 1)\alpha+d} 4^d = 1/16$, we verify the third condition. Apply Proposition \ref{prop::lower}, 
	we obtain that for any estimator $\widehat{q}(y|x)$ built on $D_p \cup D_q^u$, with probability $P^{n_p} \otimes Q_X^{n_q}$ at least $(3-2\sqrt{2}) / 8$, there holds
	\begin{align}\label{eq::lower1part}
		\sup_{(P,Q) \in \mathcal{T}} \mathcal{R}_{L_{\mathrm{CE}},Q}(\widehat{q}(y|x)) - \mathcal{R}_{L_{\mathrm{CE}},Q}^* \geq (C_4/2) \cdot n_p^{-\frac{(1+\beta \wedge 1)\alpha}{(1+\beta \wedge 1)\alpha+d}}.
	\end{align}

	Next, we construct a new class of probability to prove the second part $n_Q^{-1}$ of the lower bound.
	Let $w := 1/16$ and $\delta>0$. Define the $1$-dimension class conditional densities:
	\begin{align*}
		q(x|1) := 
		\begin{cases}
			4w\delta & \text{ if } x \in [0,1/4],
			\\
			4(1-\delta) & \text{ if } x \in [3/8,5/8],
			\\
			4(1-w)\delta & \text{ if } x \in [3/4,1],
			\\
			0 & \text{ otherwise};
		\end{cases}
		\quad
		q(x|-1):= 
		\begin{cases}
			4(1-w)\delta & \text{ if } x \in [0,1/4],
			\\
			4(1-\delta) & \text{ if } x \in [3/8,5/8],
			\\
			4w\delta & \text{ if } x \in [3/4,1],
			\\
			0 & \text{ otherwise}.
		\end{cases}
	\end{align*}
	Let $\sigma \in \{-1,1\}$ and $\delta$ will be chosen later. We specify the class probabilities in the following way:
	$p(1) := p(y=1) := 1/2$, $q^{\sigma}(1) := q^{\sigma}(y=1) := (1+\sigma \theta) / 2$.
	Then we compute the conditional probability function $q^{\sigma}(1|x)$. 
	By the Bayes formula, we have 
	\begin{align*}
		q^{\sigma}(1|x) 
		& = \frac{q^\sigma(1) q(x|1)}{q^\sigma(1) q(x|1) + q^\sigma(-1) q(x|-1)} 
		\\
		& = \begin{cases}
			w (1 + \sigma \theta) / [w (1 + \sigma \theta) + (1 - w) (1 - \sigma \theta)] 
			& \text{ if } x \in [0,1/4], 
			\\
			(1 + \sigma \theta) / 2 & \text{ if } x \in [3/8,5/8],
			\\
			(1 - w) (1 + \sigma \theta) / [(1 - w) (1 + \sigma \theta) + w (1 - \sigma \theta)] 
			& \text{ if } x \in [3/4,1],
			\\
			1/2 & \text{ otherwise}.
		\end{cases}
	\end{align*}

	\textit{Verification of the Small Value Bound Condition.}
	Denote 
	$$
	t_1 := \frac{w(1- \theta)}{w(1 - \theta) + (1-w)(1 + \theta)}, 
	\qquad 
	t_2:= \frac{w(1+\theta)}{w(1+\theta) + (1-w)(1-\theta)}.
	$$
	For $t < t_1$, we have $Q^{\sigma}(q^{\sigma}(1|x) < t) = 0$. For $t \in [t_1, t_2)$, by taking $\theta := 1 / (16 \sqrt{n_q})$ and $\delta := c_{\beta} t_1^{\beta}$, we have 
	\begin{align*}
		Q^{\sigma}(q^{\sigma}(1|x) < t) 
		& = \eins\{\sigma = -1\}Q([0,1/4]) 
		\\
		& = \eins\{\sigma = -1\}\big((1-\theta)w\delta/2 + (1+\theta)(1-w)\delta/2 \big) 
		\\
		&= \eins\{\sigma = -1\}(1+\theta-2 \theta w)\delta/2 \leq \delta =   c_{\beta} t_1^{\beta} \leq c_{\beta}t^{\beta}. 
	\end{align*} 
	Moreover, for $t \in [t_2, (1-\theta)/2)$, we have 
	$$
	Q^{\sigma}(q^{\sigma}(1|x) < t) 
	= Q([0,1/4]) 
	= (1+\theta-2\theta w)\delta/2 
	\leq \delta 
	\leq  c_{\beta}t_1^{\beta} 
	\leq c_{\beta} t^{\beta}.
	$$
	Otherwise if $t \in [(1-\theta)/2, 1/2]$, by taking $c_{\beta} := 4^{\beta}$, there holds 
	$$
	Q^{\sigma}(q^{\sigma}(1|x) < t) = Q([0,1/4] \cup [3/8,5/8]) =  (1+\theta-2\theta w)\delta/2 + 1-\delta \leq 1 \leq  c_{\beta} t^{\beta}.
	$$

	We define our distribution class $\mathcal{K} := \{ \Pi^{\sigma}: \sigma \in \{-1,1\}\}$, where $\Pi^{\sigma}$ is defined as $\Pi^{\sigma} := P^{n_p} \otimes (Q^{\sigma})^{n_q}$.
	Then using the inequality $\log((1+x)/(1-x)) \leq 3x$ for $0\leq x \leq 1/2$, the Kullback-Leibler divergence between $\Pi^{-1}$ and $\Pi^{1}$ is 
	\begin{align*}
		& \mathrm{KL}(\Pi^{-1}|\Pi^{1}) = n_q D(Q^{1} | Q^{-1})
		\\
		&= n_q \big( \log \bigl( (1 + \theta) / (1 - \theta) \bigr) (1 + \theta) / 2  + \log \bigl( (1 - \theta) / (1 + \theta) \bigr) (1 - \theta) / 2 \big)
		\\
		&= 2\theta n_q \log \bigl( (1 + \theta) / (1 - \theta) \bigr)
		\leq 6\theta^2 n_q= 3/128.
	\end{align*}
	Since $|\mathcal{K}| = 2$, $\Pi^1 \ll \Pi^{-1}$ and $|\mathcal{K}|^{-1} \mathrm{KL}(\Pi^{-1}|\Pi^{1})  = 3/256  < (\log 2)/8$.
	Then we calculate the excess risk. Define $q^{\sigma}(y|x) := Q^{\sigma}(Y=y|X=x)$ and $q^{\sigma}(y) := Q^{\sigma}(Y=y)$. 
	The semi-metric $\rho$ is defined by
	\begin{align*}
		& \rho(q^{1}(y|x), q^{-1}(y|x)) 
		:= \int_{\mathcal{X}} \bigg(q^{1}(1|x)\log \frac{q^{1}(1|x)}{q^{-1}(1|x)} + q^{1}(-1|x) \log \frac{q^{1}(-1|x)}{q^{-1}(-1|x)}\bigg) q(x) \, dx
		\\
		& = \int_{[0,1/4]} \log(t_2/t_1)q^1(1) q(x|1) + \log((1-t_2)/(1-t_1))q^1(-1) q(x|-1) \,dx 
		\\
		& \phantom{=} 
		+ \int_{[3/8,5/8]} \log((1+\theta)/(1-\theta)) q^1(1) q(x|1) + \log((1-\theta)/(1+\theta)) q^1(-1) q(x|-1) \,dx
		\\
		& \phantom{=} 
		+ \int_{[3/4,1]} \log(t_2/t_1) q^1(1) q(x|1) + \log((1-t_2)/(1-t_1))q^1(-1) q(x|-1) \,dx 
		\\
		& = \theta\log((1+\theta)/(1-\theta)) + \theta t(-2w+1) \log((1+\theta-2\theta w)/(1-\theta+2\theta w))
		\\
		& \geq \frac{\theta^2}{1-\theta} + \frac{\theta^2 t (1-2w)^2}{1-\theta+2\theta w} \geq 256^{-1}(1+(7/8)^2 4^{\beta}) n_q^{-1} =: 2c_2 n_q^{-1},
	\end{align*}
	where the second last inequality is due to $\log(1+x) \geq x/2$ for $x \in (0,1)$ and $c_2 := 512^{-1}(1+(7/8)^2 4^{\beta})$. By Proposition \ref{prop::lower}, we then obtain that with probability $\Pi$ at least $(3-2\sqrt{2})/8$, there holds
	\begin{align}\label{eq::lower2part}
		\sup_{\Pi \in\mathcal{K}} \mathcal{R}_{L_{\mathrm{CE}}, Q}(\widehat{q}(y|x)) - \mathcal{R}_{L_{\mathrm{CE}}, Q}^* \geq c_2 n_q^{-1}.
	\end{align}     
	Combining \eqref{eq::lower1part} and \eqref{eq::lower2part}, we obtain that for any $\widehat{q}(y|x)$ built on $D_p \cup D_q$, with probability $P^{n_p} \otimes Q_X^{n_q}$ at least $(3-2\sqrt{2})/8$, there holds
	\begin{align*}
		\sup_{(P,Q) \in \mathcal{T}}  \mathcal{R}_{L_{\mathrm{CE}}, Q}(\widehat{q}(y|x)) - \mathcal{R}_{L_{\mathrm{CE}}, Q}^*
		\geq c_{\ell} \Bigl( n_p^{-\frac{(1+\beta\wedge 1)\alpha}{(1+\beta \wedge 1)\alpha+d}}+ n_q^{-1} \Bigr),
	\end{align*}
	where $c_{\ell} := C_4/2 \wedge c_2$. This finishes the proof.
\end{proof}

\section{Conclusion} \label{sec::Conclusion}

Domain adaptation involves two distinct challenges: covariate shift and label shift adaptation problems. Covariate shift adaptation, where data distribution differences are due to feature probability variations, is typically addressed by \textit{feature probability matching} (\textit{FPM}). In contrast, label shift adaptation, where variations in class probability solely cause distribution differences, traditionally also employs FPM in the multi-dimensional feature space to calculate class probability ratios in the one-dimensional label space. To more effectively tackle label shift adaptation, we introduce a new approach, \textit{class probability matching} (\textit{CPM}), inspired by a new representation of the source domain's class probability. This method aligns class probability functions in the one-dimensional label space, differing fundamentally from FPM's multi-dimensional feature space approach. Additionally, we integrate kernel logistic regression into the CPM framework for conditional probability estimation, resulting in a new algorithm, \textit{class probability matching using kernel methods} (\textit{CPMKM}), specifically for label shift adaptation. From a theoretical standpoint, we establish CPMKM's optimal convergence rates concerning the cross-entropy loss in multi-class label shift adaptation. Experimentally, CPMKM has shown superior performance over existing FPM-based and maximum-likelihood-based methods in real data comparisons.

\bibliographystyle{plain}
\small{\bibliography{CPMKM}}
\end{document}